\documentclass[11pt]{article}

\usepackage[margin=1in]{geometry}
\usepackage{amsthm,amsmath,amsfonts,amssymb}
\usepackage[numbers,sort&compress]{natbib}
\usepackage[colorlinks,citecolor=blue,urlcolor=blue]{hyperref}
\usepackage{graphicx}
\usepackage{authblk}

\theoremstyle{plain}

\newtheorem{theorem}{Theorem}[section]
\newtheorem{lemma}[theorem]{Lemma}
\newtheorem{corollary}[theorem]{Corollary}
\newtheorem{proposition}[theorem]{Proposition}

\theoremstyle{definition}
\newtheorem{definition}[theorem]{Definition}
\newtheorem{example}{Example}

\newtheorem{remark}{Remark}
\newtheorem{assumption}{Assumption}

\newcommand{\SGD}{\text{SGD}}

\newcommand{\Ridge}{\text{Ridge}}

\input{mystyle}

\title{Statistical Inference for Misspecified Contextual Bandits}

\author[1]{Yongyi Guo\thanks{Corresponding author. Email: \texttt{guo98@wisc.edu}.}}
\author[2]{Ziping Xu\thanks{Email: \texttt{zipingxu@unc.edu}. The work was partially done when Ziping was a postdoctoral fellow at Harvard University with Professor Susan A. Murphy.}}

\affil[1]{Department of Statistics, University of Wisconsin--Madison}
\affil[2]{School of Data Science and Society, University of North Carolina at Chapel Hill}

\date{} 

\begin{document}
\maketitle





\begin{abstract}
Contextual bandit algorithms have transformed modern experimentation by enabling real-time adaptation for personalized treatment. Yet these advantages create challenges for statistical inference due to adaptivity. We study inference with contextual-bandit data without assuming a well-specified outcome model. In this setting, we show a previously overlooked issue: standard algorithms such as LinUCB may fail to stabilize under misspecified working models, leading to non-Gaussian estimator behavior and invalid inference. This issue is practically important, as misspecified working models---such as approximations of complex dynamical systems---are often employed by online agents in real-world adaptive experiments to balance reward, computational tractability, and robustness.

We develop an inverse-probability-weighted Z-estimation framework for a broad class of marginal moment targets, including projection parameters, structural parameters with noisy contexts, and off-policy values. We identify a stability condition tailored to this framework, scaled inverse-propensity convergence, under which the IPW-Z estimator is consistent and asymptotically normal with a consistent sandwich variance estimator. We further establish sufficient conditions for scaled inverse-propensity convergence for several policy classes, including multi-armed bandit algorithms and smooth contextual allocation policies. Simulations and a HeartSteps V1 real-data-calibrated application show reliable coverage and competitive performance across multiple targets. Overall, our results highlight the importance of stability-aware adaptive design for valid post-experiment inference.
\end{abstract}




\section{Introduction}



Adaptive experimental designs play an increasingly prominent role across science and industry. In contrast to traditional randomized experiments, they update arm allocation probabilities sequentially in response to observed outcomes, thereby allowing real-time improvement in treatment strategies and more efficient sample usage. For example, in mobile health, adaptive designs personalize digital interventions---such as motivational messages or prompts tailored to users’ states---to promote healthy behaviors \citep{nahum2016just, hardeman2019systematic}. In online advertising and recommender systems, content and ad delivery are dynamically adjusted to maximize engagement or click-through rates \citep{li2010contextual, tang2013automatic}. In education technology, adaptive experimentation has been used to optimize tutoring systems and learning platforms by tailoring exercises to student performance \citep{kizilcec2020scaling, fischer2020mining}. 

A common approach to implementing such adaptive experiments is through contextual bandit algorithms \citep{li2010contextual, offer2021adaptive, coughlin2024mobile, lauffenburger2024impact, nahum2024optimizing}. Contextual bandits provide a principled framework for sequential decision-making by balancing the tradeoff between exploration and exploitation. At each round $t$, the agent observes a context $\bX_t$ (e.g., user features), selects an action $A_t$ according to a behavior policy $\pi_t(\cdot \mid \bX_t)$, and then receives an outcome $Y_t$. The goal is to maximize cumulative reward while learning to improve the decision rule over time. 
This continual updating of the policy based on observed outcomes captures the essence of adaptivity in practice. Algorithms such as LinUCB \citep{li2010contextual} and Thompson Sampling \citep{russo2014learning} are influential examples of this paradigm and enjoy strong sample-efficiency guarantees under standard modeling conditions. 

In practice, however, many contextual bandit algorithms operate under a \emph{misspecified outcome model}. Misspecification is common in adaptive decision-making, especially in complex environments where data are limited and outcomes are noisy. One source is that the true reward-generating process is rarely known, leaving adaptive policies subject to model errors, particularly when there is rich heterogeneity or when the outcome mechanism is poorly understood \citep{dimakopoulou2017estimation, trella2025deployed}. Misspecification may also arise by design: practitioners often adopt simpler working models to stabilize learning and balance bias against variance \citep{athey2022contextual, tewari2017ads}. It can further occur when covariates are measured with error or when relevant features are unobserved. Because of its prevalence, misspecification has motivated a growing literature on designing online policies that remain reward-efficient under imperfect models \citep{ghosh2017misspecified, foster2020adapting, lattimore2020learning, krishnamurthy2021tractable, krishnamurthy2021adapting}. 

In this work, we tackle a different problem: \textbf{post-experiment statistical inference with data collected by contextual bandit algorithms, without assuming a well-specified outcome model}. Such inference is important because practitioners and scientists often seek more than an effective online policy: they need valid confidence intervals, hypothesis tests, and replicable findings for downstream scientific or policy questions. A substantial literature has studied inference with adaptively collected data under various modeling assumptions, including generalized linear and partial linear models \citep{deshpande2018accurate, boruvka2018assessing, qian2021estimating, lin2023semi}, structured nested mean models \citep{syrgkanis2023post}, and general nonlinear regression or M-estimation frameworks \citep{klimko1978conditional, lai1994asymptotic, zhang2021statistical}. However, the majority of these works focus on inferring parameters in a correctly specified outcome model. Much less is know about inference when no such model is assumed. Existing results in this direction either focus on specific targets, such as least-false linear parameters or off-policy values \citep{chen2021statistical, zhan2021off, bibaut2021post}, or study a different asymptotic regime with bounded horizons and a diverging number of independent trajectories \citep{zhang2022statistical, zhang2024replicable}. This paper studies adaptive inference in contextual bandits under a growing time horizon, without requiring a well-specified outcome model either for decision-making or for post-study analysis. This setting is both practically important and theoretically challenging.

One key difficulty we uncover is that statistical inference under misspecification is hindered \emph{from the very start of data collection}: popular bandit algorithms such as LinUCB may fail to stabilize as a policy, and this lack of convergence can fundamentally compromise subsequent inference. Different forms of stability have been recognized as important in adaptive experimentation---for example, in establishing asymptotic inference \citep{zhang2022statistical, khamaru2024inference, halder2025stable} and in achieving replicable conclusions across repeated deployments \citep{zhang2024replicable}. Our results reveal that while policy stability is typically not hard to establish under a well-specified model, it may fail entirely in the absence of such a model---even for standard bandit algorithms---leading to pathological estimator behavior, breakdowns of asymptotic normality, and ultimately invalid inference (see an example in Section \ref{sec::example-policy-nonconvergence}).

In this work, we develop a general statistical inference framework based on an inverse-probability-weighted Z-estimator (IPW-Z), targeting a broad class of parameters (Eq. (\ref{eq::theta-a-*})). This class includes many practically relevant examples such as projection parameters, structural parameters with noisy contexts, and off-policy policy values. Without a well-specified outcome model, the inverse probability weighting (IPW) component plays an essential first-order role: it removes the policy-induced reweighting of contexts in the estimating equation and restores estimation consistency. To achieve asymptotic normality, we further identify a policy stability requirement tailored to our framework: \emph{scaled inverse-propensity convergence} (Definition \ref{aspt:policy_convergence}). This condition controls the second-order behavior of the weighted estimating equation by requiring the inverse propensities to stabilize while allowing the policy to have action-specific sampling rate. It is flexible enough to accommodate adaptive policies that sample some actions much more frequently than others, as is common in bandit algorithms. Under this condition and mild regularity assumptions, we establish asymptotic normality and provide consistent variance estimators for data-driven inference.

To better understand when policy stability is preserved in our setting, we further establish general sufficient conditions under which a behavior policy satisfies scaled inverse-propensity convergence without a well-specified outcome model. Our analysis identifies two broad principles for designing stable adaptive policies in complex environments. First, policies that avoid tightly coupling exploration to a potentially misspecified reward model tend to be more stable; for example, context-free bandit algorithms satisfy inverse-propensity stability under suitable exploration schedules. Second, for contextual policies that rely on estimated reward scores, smoothness matters: policies with continuous and sufficiently smooth allocation rules are substantially more stable than policies with sharp or discontinuous decision boundaries. This explains why smooth allocation rules, such as Boltzmann sampling with sufficiently large temperature, are more robust than argmax-based policies such as LinUCB. We verify these principles for several commonly used policy classes, including multi-armed bandit algorithms and smooth contextual allocation policies based on ridge and stochastic-gradient-descent estimators. Together, these results provide practical guidance for designing adaptive policies that support valid post-experiment inference in complex environments.

To complement these theoretical guarantees, we evaluate the IPW-Z estimator in a range of complex environments, including synthetic simulations and a HeartSteps V1 real-data-calibrated application. The empirical results show that our method delivers confidence intervals with reliable coverage. In the special case of off-policy evaluation, it is competitive with existing approaches while avoiding the need to estimate a stable reward regression.

\subsection{Our Contributions}

Our main contributions are summarized as follows:
\begin{itemize}
    \item \textbf{A new inference framework.} We develop an inverse-probability-weighted Z-estimation framework for post-experiment inference with adaptively collected contextual-bandit data, without assuming a correctly specified outcome model. The framework covers the broad class of marginal moment targets in (\ref{eq::theta-a-*}), including projection parameters, structural parameters with noisy contexts, and off-policy policy values. We establish consistency, asymptotic normality, and consistent sandwich variance estimation.

    \item \textbf{The role of policy stability.} We show that standard contextual bandit algorithms, such as LinUCB, may fail to stabilize in the absence of a well-specified outcome model, which can lead to estimator pathologies and invalid inference. This issue has been largely overlooked in prior work on adaptive inference for contextual bandits, despite its importance for both inferential validity and replicability. We identify scaled inverse-propensity convergence (Def. \ref{aspt:policy_convergence}) as the stability condition naturally associated with IPW-Z inference. This condition accommodates action-specific sampling rates, allowing some actions to be sampled less frequently over time, which is common for many bandit algorithms. The usual fixed-overlap setting is recovered as a special case.
    
    \item\textbf{Principles for policy stability.} We establish general sufficient conditions under which adaptive policies satisfy scaled inverse-propensity convergence, yielding practical design principles for stable data collection. For example, simpler policies and smoother decision rules are more robust than policies that tightly couple exploration to complex or discontinuous reward models. We verify these principles for several commonly used policy classes, including multi-armed bandit algorithms and smooth contextual allocation policies based on ridge and stochastic-gradient-descent estimators.
    
    \item\textbf{Technical contributions.} We develop several technical tools for inference and policy-stability analysis without a well-specified outcome model. On the inference side, we establish asymptotic normality of the IPW-Z estimator under scaled inverse-propensity stability, which accommodates action-specific sampling rates and yields nonstandard, arm-specific rates. We also establish consistent sandwich variance estimation without requiring knowledge of the limiting inverse-propensity function. On the policy side, we establish stability guarantees with several proof strategies that are largely novel in this context. One example is the application of stochastic approximation theory, which is nontrivial since the policies depend on summary statistics whose evolution dynamics may fail to contract; in such cases, we instead identify hidden parameters that yield contractive dynamics. Finally, we present numerical examples where standard contextual bandit algorithms interact with misspecified environments in irregular ways---such as oscillating or converging to multiple limits---that are both interesting in their own right and informative for research on policy behavior.
\end{itemize}

\subsection{Related work}

Statistical inference with adaptively collected data has been a long-standing but challenging problem. Although such data are common in practice, classical inference procedures designed for i.i.d. samples can break down under adaptive collection \citep{nie2018adaptively, zhang2020inference}. A growing body of work has therefore focused on establishing conditions on the data generating process or developing new methodologies to ensure valid inference. Most existing work focuses on estimating parameters in correctly specified outcome models. For example, \citep{anderson1979strong, christopeit1980strong, lai1982least} analyze adaptive linear regression under various conditions; \citep{klimko1978conditional, lai1994asymptotic} study least squares estimation in adaptive nonlinear regression; and \citep{chen1999strong} establishes asymptotic properties of maximum quasi-likelihood estimators for generalized linear models with adaptive designs.

With the emergence of modern data collection schemes such as reinforcement learning, recent work has developed inference procedures accommodating more general mechanisms such as contextual bandits, which are not covered by earlier results. \citep{deshpande2018accurate, khamaru2021near} propose online debiasing estimators for adaptive linear regression with weaker restrictions on the data collection process. \citep{zhang2020inference} study batched linear regression under general contextual bandit algorithms. \citep{chen2021statistical} develop inference for linear contextual bandits with $\epsilon$-greedy policies, and \citep{chen2021statistical1} extend their results to nonlinear reward models with parameter updates via weighted stochastic gradient descent. \citep{zhang2021statistical} establish inference for $M$-estimators under contextual bandit sampling, and \citep{lin2023semi} consider adaptive inference in generalized partial linear outcome models. \citep{syrgkanis2023post} study inference for structural parameters in structural nested mean models with data collected via reinforcement learning algorithms. 

In the absence of a well-specified outcome model, several works have examined statistical inference with adaptively collected data for specific target estimands, primarily various forms of average outcomes. \citep{khamaru2024inference, han2024ucb} study inference on arm means under data collected by UCB-type algorithms, while \citep{halder2025stable} consider the same estimand with Thompson sampling variants. \citep{zhan2021off, hadad2021confidence, bibaut2021post, waudby2024anytime} analyze off-policy evaluation in bandits with general adaptive behavior policies, targeting the average reward of a specified evaluation policy. \citep{liao2021off, liao2022batch} develop inference for the long-run average reward in Markov decision processes with adaptive policies. Additional targets arising in longitudinal and causal panel data settings include mean responses to dynamic treatment regimes and average causal effects; see, for example, \citep{laan2003unified, chakraborty2013statistical}. 

Literature on inference for general target parameters beyond average outcomes without a well-specified reward model is comparatively sparse. A result closely related to ours is from \citep{chen2021statistical} who study inference in linear contextual bandits under both well-specified and misspecified reward models. In the misspecified case, they propose a weighted least squares estimator for the least false parameter, defined as the best linear projection of the true reward, and establish its asymptotic properties under an $\epsilon$-greedy behavior policy constructed from the same estimator. \citep{zhang2022statistical, zhang2024replicable} analyze inference after adaptive sampling in a general longitudinal data setting, but in a different regime where the time horizon is fixed and the number of trajectories diverges. 
Finally, a concurrent work \citep{LeinerDunnRamdas2025}, which generalizes \citep{bibaut2021post} on off-policy evaluation with adaptively collected data, studies M-estimation with respect to a fixed target policy under model misspecification. Their approach is complementary to ours: it accommodates unstable behavior policies by assuming consistent estimation of the conditional moments of the score function, requires either independent auxiliary data or a trajectory-splitting construction, and assumes action-selection probabilities are uniformly bounded away from zero. Our framework instead directly imposes a policy stability condition. This avoids consistent estimation of conditional score moments, does not require auxiliary data or control over the data-collection process, and allows action probabilities to decay at action-specific rates. Thus, neither framework dominates the other; they apply to different regimes of adaptive inference.

This work also connects adaptive inference to the convergence analysis of online learning algorithms. 
Stochastic approximation has been the main framework for convergence analyses of value-based RL, including TD and Q-learning in finite state-action spaces, and extensions that cover linear function approximation under additional stability and sampling conditions \citep{borkar2000ode,lee2019unified,carvalho2020new,liu2025ode}. These analyses typically assume correctly specified models. Although works on robustness and misspecification exist, they are less unified. For example, \citep{roy2017reinforcement} propose a robust version of Q-learning under model mismatch. We instead explored a broad family of policies that are stable without a well-specified model.

\paragraph*{Section layout} 
The remainder of the paper is organized as follows. Section~\ref{sec::problem-setup} introduces the notation and problem setup for adaptive inference in contextual bandits, together with three motivating examples: projection targets under misspecified linear working models, structural targets with noisy contexts, and off-policy evaluation. Section~\ref{sec::inference-guarantee} introduces the proposed inverse-probability-weighted Z-estimator (IPW-Z), establishes its consistency and asymptotic normality under scaled inverse-propensity convergence, and provides consistent variance estimators. It also presents a numerical example showing how policy instability can induce nonnormality and pathological estimator behavior. Section~\ref{sec::policy-convergence} develops sufficient conditions for inverse-propensity stability and verifies them for several policy classes, including multi-armed bandit algorithms and smooth contextual allocation policies based on ridge and stochastic-gradient-descent estimators. Section~\ref{sec::simulation} presents simulation studies and a HeartSteps V1 real-data-calibrated application. In the off-policy evaluation setting, we also compare with \citep{bibaut2021post} and \citep{zhan2021off}.

\section{Problem Setup}\label{sec::problem-setup}

We consider the problem of statistical inference with an adaptively collected dataset $\cD = \{\bX_t, \pi_t, A_t, Y_t\}_{t=1}^T$ from a contextual bandit environment. The data collection process proceeds at each time $t$ as follows:
\begin{itemize}
    \item \textbf{Context:} The environment reveals a context $\bX_t\in \cX\subseteq\RR^{d_X}$.
    \item \textbf{Action Selection:} Based on the current context $\bX_t$ and past history $\cH_{t-1}:=  \{\bX_\tau, \pi_\tau,\\ A_\tau, Y_\tau\}_{\tau<t}$, the agent selects an action $A_t \in \cA$ according to a stochastic behavior policy $\pi_t(\cdot \mid \bX_t, \cH_{t-1}) \in \Delta(\cA)$, where $\Delta(\cA)$ denotes the set of probability distributions over the action space $\cA$. The realized selection probability is   recorded as $\pi_t := \pi_t(A_t \mid \bX_t, \cH_{t-1})$. 
    \item \textbf{Outcome:} After choosing the action, the agent observes outcome $Y_t\in \RR$. 
\end{itemize}

We consider a finite action space $\cA$ and, without loss of generality, write $\cA = \{1, \ldots, K\}$. Adopting the potential outcomes framework \citep{imbens2015causal}, we let $\{Y_t(a): a \in \cA\}$ denote the potential outcomes for each action, with the observed outcome satisfying $Y_t = Y_t(A_t)$. We assume a stochastic contextual bandit environment in which $\{\bX_t, Y_t(a): a\in\cA\}\overset{\text{i.i.d.}}{\sim}\cP$, for $t = 1, \ldots, T$. In addition, in this adaptive experimental setting, we assume the following unconfoundedness condition.
\begin{assumption}\label{aspt:unconfoundedness}
    $A_t\perp \{Y_t(a)\}_{a\in\cA}|(\cH_{t-1}, \bX_t)$, for $t = 1, \ldots, T$.
\end{assumption}
Note that even though the potential outcomes are i.i.d., the observations in $\cD$ are not. This is because each action $A_t$ is selected based on the evolving history $\cH_{t-1}$, introducing temporal dependence into the observations. This dependence poses additional challenges for valid estimation and inference.

\subsection{Inference Targets}

Our goal is to infer the parameter $\btheta_a^* \in \RR^d$ associated with a treatment arm $a \in \cA$, or jointly the collection $\{\btheta_a^*\}_{a \in \cA}$. Each $\btheta_a^*$ is defined as the solution to the equation
\begin{equation}\label{eq::theta-a-*}
    \EE [\bg(\bX, Y(a); \btheta_a^*)] = \mathbf{0}
\end{equation}
for some known score function $\bg: \cX\times \RR\times \RR^d\rightarrow \RR^d$.%
\footnote{For simplicity, we assume a common score function $\bg$ across actions. Our analysis extends to action-specific score functions; see Appendix \ref{apdx::proof-thm::asymptotic-normality-joint-general} for details.} Here, $(\bX, Y(a))$ denotes a generic observation drawn from the same distribution as $(\bX_t, Y_t(a))$. 

Unlike many prior works on statistical inference with adaptively collected data \citep{lai1982least, zhang2021statistical, chen2021statistical1, lin2023semi, zhang2024replicable}, we do not assume a well-specified outcome model. Specifically, the validity of our inference procedure does not depend on specifying the law of \(Y_t\) given \((A_t,\bX_t)\) under \(\cP\), either for constructing the behavior policy or for post-experiment analysis. Throughout the paper, ``misspecified'' refers to the absence of such a reward law specification, rather than to the inferential target itself. This model-agnostic perspective is well suited to adaptive experiments in complex environments, where data may be limited, outcomes may be noisy, and simplified working models are commonly used.

The following examples show how (\ref{eq::theta-a-*}) accommodates several inferential targets, including a projection target under a misspecified working model, a structural target with noisy observed contexts, and a model-free policy-value target.

\begin{example}[Projection target under a misspecified linear model \citep{chen2021statistical}]\label{ex::misspecified-linear-bandits}
Consider a target parameter $\btheta_a^*$ which solves (\ref{eq::theta-a-*}) with 
\begin{equation}\label{eq::target-parameter-misspecified-linear-bandits}
\bg(\bx, y;\btheta) := \bx(y - \bx^\top \btheta).
\end{equation}
This score function corresponds to the best linear approximation of $Y_t(a)$ based on $\bX_t$ for arm $a \in \cA$. When the true dependence of $Y_t(a)$ on $\bX_t$ is linear, (\ref{eq::target-parameter-misspecified-linear-bandits}) yields the true linear parameter. Otherwise, (\ref{eq::target-parameter-misspecified-linear-bandits}) defines the best linear projection of $Y_t(a)$ onto the covariates $\bX_t$ in the least squares sense. Such parameters are standard model-robust targets in the literature \citep{white1980using, buja2019models}.

One concrete application is the characterization of treatment effect heterogeneity. In a two-arm setting, the contrast \(\btheta_1^*-\btheta_0^*\) is the coefficient of the best linear approximation to the conditional treatment effect $\tau(\bx) := \EE[Y_t(1)-Y_t(0)\mid \bX_t=\bx]$ over the linear span of \(\bX_t\). Thus, even when the true reward surfaces are nonlinear, \(\bX_t^\top(\btheta_1^*-\btheta_0^*)\) provides an interpretable low-dimensional summary of how treatment effects vary with covariates. Related best-linear-predictor summaries of heterogeneous
treatment effects are used in works such as \citep{chernozhukov2018generic}.
\end{example}
\begin{example}[Structural target under noisy contexts \citep{guo2024online}]\label{ex::bandits-noisy-contexts}
Suppose the potential outcome $Y_t(a)$ follows a linear model based on the unobserved true covariates $\bS_t$:
$$
Y_t(a) = \bS_t^\top\btheta_a^* + \eta_t,
$$
where $\eta_t$ is a mean-zero noise term. Instead of observing $\bS_t$, the observed context $\bX_t$ is a noisy proxy $\bX_t = \bS_t + \bepsilon_t$, where $\bepsilon_t$ is mean zero, uncorrelated with $\eta_t$, and has covariance matrix $\bSigma_e$. Although the outcome model is linear in $\bS_t$, we do not assume any parametric form for the distribution of the measurement error $\bepsilon_t$, making it difficult to characterize the conditional distribution of $Y_t(a) \mid \bX_t$.

With a non-adaptive data collection process, this model is the classical measurement error model well studied in statistics literature \citep{carroll1995measurement, fuller2009measurement}. Assuming $\bSigma_e$ is known, then $\btheta_a^*$ solves (\ref{eq::theta-a-*}) with
\begin{equation}\label{eq::target-parameter-bandits-noisy-contexts}
    \bg(\bx, y; \btheta) := \bx y - (\bx \bx^\top - \bSigma_e)\btheta.
\end{equation}
\end{example}

The measurement error model is motivated by practice in behavioral psychology. For example, in a mobile health study about reducing negative affect to improve medication adherence \citep{xu2025reinforcement}, the negative affect can only be measured through a short survey, a noisy proxy of the latent negative affect. However, the scientists are truly interested in the relationship between medication adherence and the latent negative affect rather than the noisy proxy.

\begin{example}[Off-policy evaluation]\label{ex::ope}
Off-policy evaluation is a canonical inferential task in contextual bandits \citep{li2011unbiased}. Here our goal is to estimate the average outcome under a target policy $\pi^e: \cX \to \Delta(\cA)$, defined as $V^* = \EE_{\bX}\EE_{A\sim \pi^e(\cdot|\bX)}Y(A)$. This target can be expressed as $V^* = \sum_{a \in \cA} \btheta_a^*$, where each $\btheta_a^*$ solves (\ref{eq::theta-a-*}) with an arm-specific score function $\bg$:
\begin{equation}\label{eq::target-parameter-ope}
\bg(\bx, y; \btheta) = \bg_a(\bx, y; \btheta) := \pi^e(a|\bx)y - \btheta.
\end{equation}
Thus, \(V^*\) is a model-free policy-value target.
\end{example}

\subsection{Challenge}\label{sec::challenge}
A central challenge in the absence of a well-specified outcome model is that standard Z-estimation approaches \citep{zhang2021statistical, lin2023semi, zhang2024replicable}, which analyze estimators $\hat\btheta_a'$ satisfying
\begin{equation}\label{eq::naive-z-estimator}
\frac1T\sum_{t=1}^T 1_{\{A_t = a\}}\bg\big(\bX_t, Y_t, \hat\btheta_a'\big) \approx \boldsymbol{0},
\end{equation}
fail to yield valid inference. This failure essentially stems from the interaction between the policy and the complex environment. Without a well-specified outcome model, the conditional moment $\EE[\bg(\bX, Y(a);\btheta_a^*)|\bX = \bx]$ need not vanish for every $\bx$, even though its average under the population distribution $\cP$ is zero.
However, the estimating equation (\ref{eq::naive-z-estimator}) uses only observations assigned to action $a$, which are  collected under a context-dependent behavior policy. 
Consequently, the solution to (\ref{eq::naive-z-estimator}) generally depends on the policy used to collect the data and need not approach $\btheta_a^*$.
This issue, as it arises in Examples \ref{ex::misspecified-linear-bandits} and \ref{ex::bandits-noisy-contexts}, is discussed in detail in \citep{chen2021statistical} and \citep{guo2024online}, respectively.

In this work, we address this issue by studying the asymptotic properties of the inverse probability weighted Z-estimators $\{\hat\btheta_a^{(T)}\}_{a\in\cA}$, where $\hat\btheta_a^{(T)}$ satisfies 
\begin{equation}\label{eq::ipwz-estimation-eq-approx}
    \bG_T(\btheta) := \frac{1}{T} \sum_{t=1}^T \frac{1_{\{A_t = a\}}}{\pi_t(A_t)}  \bg(\bX_t, Y_t; \btheta) \approx \mathbf 0.
\end{equation}
Here, $\pi_t(a)$ abbreviates the action selection probability $\pi_t(a \mid \cH_{t-1}, \bX_t)$ for $a\in\cA$. The precise approximation rate will be specified in Section \ref{sec::inference-guarantee}.
Our goal is to show that, in our setting, the inverse probability weights effectively decouple the policy from the underlying environment, enabling valid inference. Specifically, we will establish the joint asymptotic normality of $\{\hat\btheta_a^{(T)}\}_{a \in \cA}$ under mild conditions on the environment and for a broad class of behavior policies.

\section{Statistical Inference Guarantees}
\label{sec::inference-guarantee}
The main results of this section establish the joint asymptotic normality of the proposed estimators $\{\hat\btheta_a^{(T)}\}_{a \in \cA}$, along with consistent estimators of the asymptotic variance, which enable statistical inference for $\{\btheta_a^*\}_{a \in \cA}$. We begin by intuitively explaining how inverse probability weighting in the estimating equation (\ref{eq::ipwz-estimation-eq-approx}) lead to consistent estimation of the target parameter. We then introduce \emph{scaled inverse-propensity convergence}
(Definition~\ref{aspt:policy_convergence}), a general stability condition on the behavior policy
that helps stabilize the conditional variances of the weighted estimating
equation. Under this condition, Theorems~\ref{thm::asymptotic-normality-general}
and \ref{thm::asymptotic-normality-joint-general} establish asymptotic normality, and
Proposition~\ref{thm::consistent-var-estimator-general} provides a consistent variance estimator. Finally, to further illustrate the importance of policy stability, Section \ref{sec::example-policy-nonconvergence} presents a concrete example demonstrating how an unstable policy leads to the breakdown of asymptotic normality.

\paragraph{The role of inverse probability weighting (IPW).} IPW is not primarily needed for consistency in a well-specified conditional-moment model, but becomes essential in our setting where no such model is assumed. Specifically, when $\EE[\bg(\bX, Y(a);\btheta_a^*)|\bX] = \boldsymbol{0}$ a.s., following the arguments of \citep{zhang2021statistical}, one can show that the solution $\hat\btheta_a'$ to the unweighted estimating equation (\ref{eq::naive-z-estimator}) consistently estimates of $\btheta_a^*$ under suitable regularity conditions. By contrast, in the absence of a well-specified conditional-moment model, $\hat\btheta_a'$ is generally inconsistent. Below we will give a heuristic explanation of IPW's role in this setting. A numerical illustration is given in Section~\ref{sec::example-policy-nonconvergence}.


Fix an action $a\in\cA$, and define $\bZ_t(\btheta): = w_a(A_t)\bg(\bX_t, Y_t; \btheta)$, where $w_a(A_t):= \frac{1}{\pi_t(A_t)}1_{\{A_t = a\}}$. The key point is that the conditional mean of \(Z_t(\btheta)\) equals the population moment that defines the target parameter. Indeed, we have 
\begin{align}
\EE[\bZ_t(\btheta)|\cH_{t-1}]
&\stackrel{(a)}{=} \EE_{\bX_t}\!\left[\EE_{A_t\sim \pi_t(\cdot), Y_t(a)}[\bZ_t(\btheta)|\cH_{t-1},\! \bX_t]\Big|\cH_{t-1}\right]\nonumber\\
& \stackrel{(b)}{=} \EE_{\bX_t}\Big[\EE_{A_t\sim \pi_t(\cdot)}\big[w_a(A_t)\big|\cH_{t-1}, \!\bX_t\big]\!\cdot \!\EE_{Y_t(a)}[\bg(\bX_t, Y_t(a);\btheta)|\cH_{t-1},\! \bX_t]\Big|\cH_{t-1}\Big]\nonumber\\
& \stackrel{(c)}{=} \EE_{\bX_t}\!\left[1\cdot \EE_{Y_t(a)}[\bg(\bX_t, Y_t(a);\btheta)|\cH_{t-1},\! \bX_t]\Big|\cH_{t-1}\right]\nonumber\\
& \stackrel{(d)}{=} \EE\left[\bg(\bX_t, Y_t(a);\btheta)\right].\label{eq::main-mtg}
\end{align}
Here, step (a) follows from the law of iterated expectations. Step (b) uses Assumption~\ref{aspt:unconfoundedness}. Step (c) follows from a direct computation:
$$
\EE_{A_t\sim \pi_t(\cdot)}\big[w_a(A_t)\big|\cH_{t-1}, \!\bX_t\big] = \sum_{a'\in\cA}\pi_t(a')\cdot \frac{1_{\{a'=a\}}}{\pi_t(a')} = 1.
$$
Step (d) again applies the law of iterated expectations along with the assumption that $\{\bX_t, Y_t(a): a\in\cA\}$ are i.i.d. over time. Thus, when \(\btheta=\btheta_a^*\), $\EE[Z_t(\btheta_a^*)\mid \mathcal H_{t-1}]=\EE[\bg(\bX,Y(a);\btheta_a^*)]=
\boldsymbol{0}$.

This calculation shows that IPW restores the population estimating equation at each time point: the adaptive action-selection probability is canceled by IPW. Therefore, the empirical average \(\bG_T(\btheta)\) fluctuates around the population moment
$\EE[\bg(\bX,Y(a);\btheta)]$, whose unique zero is \(\btheta_a^*\). Under the usual regularity conditions for Z-estimation, approximate roots of \(\bG_T(\btheta)\) are consequently driven toward \(\btheta_a^*\).

In contrast, without inverse probability weighting, the conditional mean of the summand in (\ref{eq::naive-z-estimator}) is
$$
\EE[\bZ_t(\btheta)|\cH_{t-1}]=\EE_{\bX_t}\!\left[\pi_t(a|\cH_{t-1},\! \bX_t)\cdot \EE_{Y_t(a)}[\bg(\bX_t, Y_t(a);\btheta)|\cH_{t-1},\! \bX_t]\Big|\cH_{t-1}\right].
$$
This is a policy-weighted moment rather than the population moment in (\ref{eq::theta-a-*}). When the conditional moment is not centered within each context, the extra factor \(\pi_t(a\mid \mathcal \cH_{t-1},\bX_t)\) reweighs different contexts unequally, so the estimating equation (\ref{eq::naive-z-estimator}) generally results in an inconsistent solution.


\paragraph{Stability of inverse propensities.} 
The preceding calculation explains the first-moment role of IPW: by canceling the action-selection probability, IPW makes the conditional mean of the weighted score equal to the population moment defining $\btheta_a^*$.
For asymptotic normality, a natural next requirement is that the conditional variances of the weighted summands stabilize after an appropriate normalization. 
For the IPW-Z estimating equation, these variances depend on the inverse action-selection probabilities $1/\pi_t(a\mid \bX_t,\cH_{t-1})$. Thus, intuitively, it is desirable for these inverse propensities to converge in an appropriate sense. 

Importantly, under adaptive bandit policies, different actions may be sampled at different deterministic scales, so their inverse propensities may grow at action-specific rates. For example, a policy may increasingly favor empirically superior actions while sampling other actions less frequently. To keep the stability condition general, we allow each action to have its own sampling scale and use a context-dependent limit to describe the asymptotic behavior of the corresponding scaled inverse propensity. This motivates the following condition.

\begin{definition}[Scaled inverse-propensity convergence] 
\label{aspt:policy_convergence} 
The behavior policy $\pi = \{\pi_t(\cdot)\}_{t\geq 1}$ is said to satisfy scaled inverse-propensity convergence at action $a \in \cA$ with rate $\{r_{a, t}\}_t$ if there exists an integrable function $\rho: \cA\times \cX \mapsto \RR$ and a deterministic positive sequence $\{r_{a, t}\}_{t\geq 1}$ such that 
\begin{equation}\label{eq::def-policy-convergence}
\frac{r_{a, t}}{\pi_t(a|\bX_t, \cH_{t-1})}-\rho(a, \bX_t)\xrightarrow{L_1} 0\quad\text{ as }t\rightarrow\infty.
\end{equation}
We say that the behavior policy satisfies scaled inverse-propensity convergence with rates
\(\{r_{a,t}\}_{a\in\cA,t\ge 1}\) if the above condition holds for every action
\(a\in\cA\), possibly with action-specific rates \(\{r_{a,t}\}_{t\ge 1}\).
\end{definition}

In the above definition, $r_{a,t}$ serves as an action-specific deterministic sampling scale, making the condition flexible enough to normalize inverse propensities that may evolve at different rates across actions. 
For brevity, we sometimes refer to scaled inverse-propensity convergence as inverse-propensity stability. In Section~\ref{sec::policy-convergence}, we investigate sufficient conditions for inverse-propensity stability in general environments without a well-specified reward model, providing concrete guidance for designing adaptive policies that support valid inference.

\begin{remark}\label{rmk::policy-convergence-min-prob}
The condition in Definition~\ref{aspt:policy_convergence} is implied by a simpler convergence
condition when action probabilities are uniformly bounded away from zero. Specifically, suppose
there exists a policy \(\bar\pi:\cX\times \cA\to[0,1]\) such that
\[
    \pi_t(a\mid \bX_t,\cH_{t-1})-\bar\pi(a|\bX_t)
    \xrightarrow{p}0,
\]
and $\pi_t(a\mid \bX_t,\cH_{t-1})\geq \pi_{\min}$ a.s. for a constant \(\pi_{\min}>0\).
Then it is straightforward to verify that Definition~\ref{aspt:policy_convergence} holds with \(r_{a,t}\equiv 1\) and
\(\rho(a,\bx)=1/\bar\pi(a|\bx)\). Thus, the scaled formulation is more general as it also covers settings where the sampling probability of an action may decay over time, with \(r_{a,t}\) capturing the sampling scale for each arm. 
\end{remark}

\begin{remark}
Stability of the adaptive sampling process is a recurring requirement in inference with adaptively collected data, particularly in the absence of a well-specified model. For instance, in misspecified linear bandits (Example~\ref{ex::misspecified-linear-bandits}), \citep{chen2021statistical} study inference under an $\epsilon$-greedy algorithm with a weighted online least squares (LS) estimator, which is a special case of Definition~\ref{aspt:policy_convergence} with
$r_{a,t}\equiv 1$ (see Section~\ref{sec::policy-convergence}). Similar stability conditions are also assumed in other adaptive inference problems under model misspecification \citep{zhang2022statistical, zhan2021off, zhang2024replicable}. In settings without model misspecification, prior work has shown that various forms of stability conditions lead to valid statistical inference. These conditions may be weaker or different in form from Definition~\ref{aspt:policy_convergence},
but they play a similar role in stabilizing the adaptive sampling process for asymptotic inference. For example, \citep{lai1982least} studies stochastic linear regression and derives asymptotic normality of the OLS estimator under a stability condition that requires the design matrix to behave regularly over time. \citep{khamaru2024inference, han2024ucb} analyze the UCB algorithm in multi-armed bandits and show valid inference for the arm means under stability conditions where the ratio between the number of arm pulls and a diverging sequence converges to one.
\end{remark} 

\paragraph{Main results.} We first introduce the technical assumptions required to establish the asymptotic properties of the estimator $\hat\btheta_a^{(T)}$ for a single action $a \in \cA$.

\begin{assumption}[Well-separated solution]\label{aspt:identifiability-general}
    $\forall \epsilon>0$, $\inf_{\|\btheta - \btheta_a^*\|_2>\epsilon}\|\EE[\bg(\bX_t, Y_t(a);\btheta)]\|_2>0$.
\end{assumption}

\begin{assumption}[Bounded moments]\label{aspt:boundedness-general}
    There exist constants $R_\theta$, $M_2$ such that \\
    (i) $\|\EE[\bg(\bX_t, Y_t(a);\btheta_a^*)\bg(\bX_t, Y_t(a);\btheta_a^*)^\top|\bX_t]\|_2\leq M_2$, a.e. $\bX_t$; \\
    (ii) $\|\btheta_a^*\|_2<R_\theta$, $\sup_{\|\btheta\|_2\leq R_\theta}\EE[\|\bg(\bX_t, Y_t(a);\btheta)\|_2^2]<\infty$;
    (iii) $\EE [\|\bg(\bX_t, Y_t(a);\btheta_a^*)\|_2^4]<\infty$.
\end{assumption}

\begin{assumption}[Smoothness]\label{aspt:smoothness-general}
(i) The function $\bg(\bx, y; \btheta)$ is twice differentiable with respect to $\btheta$, with $\EE[\nabla\bg(\bX_t, Y_t(a);\btheta_a^*)]$ nonsingular; 
(ii) There exists a function $\phi: \RR^{d_X}\times \RR\mapsto \RR$ such that $\forall \bx, y$, $\sup_{\|\btheta\|_2\leq R_\theta}\|\nabla\bg(\bx, y; \btheta)\|_2\leq \phi(\bx, y)$, and $\EE[\phi(\bX_t, Y_t(a))^2|\bX_t]\leq M_2'$ a.e. $\bX_t$ for a constant $M_2'>0$; 
(iii) There exists a constant $\epsilon_0>0$ and a function $\Phi: \RR^{d_X}\times \RR\mapsto \RR$ such that $\sup_{\|\btheta - \btheta_a^*\|_2\leq \epsilon_0, i\in[d]}\|\nabla^2\bg^{(i)}(\bx, y; \btheta)\|_2\leq\Phi(\bx, y)$ and $\EE[\Phi(\bX_t, Y_t(a))]<\infty$. Here $\bg^{(i)}(\bx, y; \btheta)$ denotes the $i$-th entry of $\bg(\bx, y; \btheta)$.
\end{assumption}

\begin{assumption}[Minimum sampling probability] \label{aspt:min-sampling-prob}
There exists a deterministic sequence of positive constants $\{\pi_{\min,t}\}_{t\geq 1}$ such that $\pi_t(a)\geq \pi_{\min,t}$ almost surely.
\end{assumption}

\begin{remark}
We provide a few comments on these assumptions. First, Assumption \ref{aspt:identifiability-general} is a standard condition in Z-estimation that ensures the identifiability of the target parameter $\btheta_a^*$ \citep{van2000asymptotic}. Assumption \ref{aspt:boundedness-general} imposes mild regularity conditions on the boundedness of moments and conditional moments of the score function $\bg(\bX_t, Y_t(a); \btheta_a^*)$ evaluated at the true parameter $\btheta_a^*$. Notably, it does not require the potential outcomes $Y_t(a)$ themselves to be bounded or to satisfy sub-Gaussian tail conditions. Similar assumptions appear in related works, such as \citep{zhan2021off, chen2021statistical, zhang2021statistical, zhang2024replicable}. Assumption \ref{aspt:smoothness-general} imposes smoothness conditions on the score function $\bg$, a standard requirement in classical Z-estimation as well as in recent work on Z- and M-estimation with adaptively collected data \citep{zhang2021statistical, zhang2022statistical}. Finally,  Assumption~\ref{aspt:min-sampling-prob} imposes a time-varying lower bound on the action-selection probability, allowing exploration to decay while maintaining some randomization in the behavior policy. This type of randomized exploration is common in adaptive experiments and is useful for preserving information about all actions.

\end{remark}

We now state the first main result of this section, which establishes the asymptotic properties of the estimator $\hat\btheta_a^{(T)}$ for a single action $a \in \cA$. The proof is provided in Appendix~\ref{apdx::proof-thm::asymptotic-normality-general}.

\begin{theorem}\label{thm::asymptotic-normality-general}
Suppose Assumptions \ref{aspt:unconfoundedness} and  \ref{aspt:identifiability-general}--\ref{aspt:min-sampling-prob} hold for a given action $a\in\cA$. Suppose the behavior policy $\pi$ satisfies scaled inverse-propensity convergence at action $a$ with rate $\{r_{a, t}\}_{t\geq 1}$ and limit function $\rho$ in the sense of Definition~\ref{aspt:policy_convergence}. 
Define $b_{a, T} = T/S_{a, T}$, where $S_{a, T} = \sqrt{\sum_{t=1}^T 1/r_{a, t}}$. If in addition,  
\begin{itemize}
    \item[(i)] $\lim_{T\rightarrow \infty} S_{a, T} = \lim_{T\rightarrow \infty}b_{a, T} = \infty$,
    \item[(ii)] $\lim_{T\rightarrow \infty}\frac{\sum_{t=1}^T \pi_{\min,t}^{-1}}{T^2} = \lim_{T\rightarrow \infty}\frac{\sum_{t=1}^T \pi_{\min,t}^{-3}}{S_{a, T}^4} = 0$,
\end{itemize}
then there exists an estimator sequence $\{\hat\btheta_a^{(T)}\}_{T\geq 1}$ satisfying 
\begin{equation}\label{eq::estimating-equation-general}
    \bG_T(\btheta) = o_p\big(1/b_{a, T}\big)
\end{equation}
with $\|\hat\btheta_a^{(T)}\|_2 \leq R_\theta$ for all $T$. Moreover, for any such sequence, as $T\rightarrow \infty$, 
\begin{equation}\label{eq::asymptotic-normality-general}
    b_{a, T}(\hat\btheta_a^{(T)} - \btheta_a^*)\xrightarrow{d} \cN\left(\mathbf{0}, \bSigma_{a}^*\right).
\end{equation}
Here $\bSigma_{a}^*:= \bJ_a^{-1}\bar \bI_a \bJ_a^{-1, \top}$, with
$\bJ_a:= \EE[\nabla\bg(\bX_t, Y_t(a);\btheta_a^*)]$.
\end{theorem}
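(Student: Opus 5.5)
The plan follows the classical Z-estimation template (consistency, then a Taylor expansion of $\bG_T$ around $\btheta_a^*$), with the usual i.i.d. limit theorems replaced by martingale versions built from the conditional-mean identity (\ref{eq::main-mtg}). Presumably $\bar\bI_a:=\EE[\rho(a,\bX)\,\bg(\bX,Y(a);\btheta_a^*)\bg(\bX,Y(a);\btheta_a^*)^\top]$; this will come out of the conditional variance computation below.

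\textbf{Step 1: uniform LLN and consistency.} Write $\bG_T(\btheta)=\bar\bG(\btheta)+\bM_T(\btheta)$, where $\bar\bG(\btheta):=\EE[\bg(\bX,Y(a);\btheta)]$ and $\bM_T(\btheta)=T^{-1}\sum_t\{\bZ_t(\btheta)-\EE[\bZ_t(\btheta)\mid\cH_{t-1}]\}$ is a martingale average by (\ref{eq::main-mtg}). Its second moment is bounded by
\[
T^{-2}\sum_t \EE\big[\pi_t(a)^{-1}\|\bg\|_2^2\big]\le T^{-2}\sum_t\pi_{\min,t}^{-1}\sup_{\|\btheta\|_2\le R_\theta}\EE\|\bg\|_2^2 ,
\]
which tends to zero by Assumption~\ref{aspt:boundedness-general}(ii) and condition (ii). For uniformity over the ball $\{\|\btheta\|_2\le R_\theta\}$, cover it by a finite net. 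Between net points, use the Lipschitz envelope $\phi$ from Assumption~\ref{aspt:smoothness-general}(ii): the weighted average $T^{-1}\sum_t w_a(A_t)\phi(\bX_t,Y_t)$ has conditional mean $\EE\phi$ and vanishing variance by the same bound. This gives $\sup_{\btheta}\|\bG_T(\btheta)-\bar\bG(\btheta)\|_2=o_p(1)$. Existence of an approximate root follows: take $\hat\btheta$ minimizing $\|\bG_T\|_2$ over the ball and show that the minimum is $o_p(1/b_{a,T})$ by comparing with the Newton-type point $\btheta_a^*-\bJ_a^{-1}\bG_T(\btheta_a^*)$, using Step 3. Consistency of any such sequence then follows from Assumption~\ref{aspt:identifiability-general} via the standard argument (van der Vaart, Thm 5.9).

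\textbf{Step 2: martingale CLT for the score.} Set $\xi_t=S_{a,T}^{-1}\bZ_t(\btheta_a^*)$, a martingale difference array, so that $b_{a,T}\bG_T(\btheta_a^*)=\sum_t\xi_t$. The conditional variance is
\[
\sum_t\EE[\xi_t\xi_t^\top\mid\cH_{t-1}]=S_{a,T}^{-2}\sum_t r_{a,t}^{-1}\,\EE_{\bX_t}\Big[\tfrac{r_{a,t}}{\pi_t(a)}\,\bV(\bX_t)\Big]-o(1),
\]
with $\bV(\bx)=\EE[\bg\bg^\top\mid\bX=\bx]$; the subtracted mean-squared term is $O(T/S^2_{a,T})\to0$ by condition (i). Assumption~\ref{aspt:boundedness-general}(i) bounds $\|\bV\|_2\le M_2$, so Definition~\ref{aspt:policy_convergence} gives $\EE\big|\EE_{\bX_t}[(r_{a,t}/\pi_t(a)-\rho)\bV]\big|\to0$. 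A Toeplitz/Cesàro argument with weights $r_{a,t}^{-1}/S_{a,T}^2$ (legitimate since $S_{a,T}\to\infty$) then yields $L_1$ convergence of the conditional variance to $\bar\bI_a$. For the conditional Lindeberg condition I will use a Lyapunov fourth-moment bound:
\[
\sum_t\EE\|\xi_t\|_2^4\le S_{a,T}^{-4}\sum_t\pi_{\min,t}^{-3}\,\EE\|\bg\|_2^4\to0
\]
by Assumption~\ref{aspt:boundedness-general}(iii) and condition (ii). The martingale CLT gives $b_{a,T}\bG_T(\btheta_a^*)\Rightarrow\cN(0,\bar\bI_a)$.

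\textbf{Step 3: linearization.} Expand
\[
\bG_T(\hat\btheta)=\bG_T(\btheta_a^*)+\nabla\bG_T(\btheta_a^*)(\hat\btheta-\btheta_a^*)+R_T .
\]
We have $\nabla\bG_T(\btheta_a^*)\to\bJ_a$ in probability by the same martingale variance bound, now using $M_2'$ and $\sum\pi_{\min,t}^{-1}/T^2\to0$. The remainder satisfies $\|R_T\|_2\le\|\hat\btheta-\btheta_a^*\|_2^2\cdot T^{-1}\sum_t w_a(A_t)\Phi(\bX_t,Y_t)$. The last factor is $O_p(1)$ because its expectation equals $\EE\Phi<\infty$ by (\ref{eq::main-mtg}) and the estimate is valid on the consistency event. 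Hence $R_T=o_p(\|\hat\btheta-\btheta_a^*\|_2)$. Solving, $b_{a,T}(\hat\btheta-\btheta_a^*)=-\bJ_a^{-1}b_{a,T}\bG_T(\btheta_a^*)+o_p(1+b_{a,T}\|\hat\btheta-\btheta_a^*\|_2)$. This gives $b_{a,T}$-tightness first, and then (\ref{eq::asymptotic-normality-general}) by Slutsky.

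\textbf{Main obstacle.} The delicate step is the conditional-variance convergence in Step 2. There $L_1$ convergence of the scaled inverse propensity, which is only assumed pointwise in $t$, must be transferred through the random weights $\bV(\bX_t)$ and aggregated with nonuniform rates $r_{a,t}$. I will rely on boundedness of $\bV$ and on the Cesàro weights summing to one with each individual weight vanishing; that last property should follow from $S_{a,T}\to\infty$ together with the lower bound on $\pi_{\min,t}$.
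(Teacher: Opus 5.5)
Your proposal is correct and follows essentially the same route as the paper: a net-plus-envelope uniform LLN driven by $\sum_t\pi_{\min,t}^{-1}/T^2\to0$, existence via comparison with the Newton point, a martingale CLT (conditional variance via scaled inverse-propensity convergence plus Toeplitz, Lindeberg via the fourth-moment bound), and a Taylor expansion controlled by $\Phi$; the paper absorbs the quadratic remainder into the matrix $\nabla\bG_T(\btheta_a^*)+\tfrac12\tilde{\bm{\delta}}_a$ rather than first proving $b_{a,T}$-tightness, which is only cosmetic. Two small slips: there is no subtracted mean term in the conditional variance, since $\EE[\bZ_t(\btheta_a^*)\mid\cH_{t-1}]=\EE[\bg(\bX,Y(a);\btheta_a^*)]=\mathbf{0}$ exactly, and your stated justification $T/S_{a,T}^2\to0$ does not follow from condition (i) (it equals $1$ when $r_{a,t}\equiv1$); also, each Toeplitz weight $r_{a,t}^{-1}/S_{a,T}^2$ vanishes using only $S_{a,T}\to\infty$, so the lower bound $\pi_{\min,t}$ is not needed there.
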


\begin{remark}\label{rmk::thm-asympt-norm-related-work}
Theorem~\ref{thm::asymptotic-normality-general} allows the minimum action-selection probability
$\pi_{\min,t}$ to decay over time. For example, if $\pi_{\min,t}\ge c t^{-\alpha}$ and
$r_{a,t}\asymp t^{-\alpha}$, then Conditions (i) and (ii) are satisfied for any
$\alpha\in[0,1)$. This is substantially weaker than the uniform-positivity conditions commonly
imposed in adaptive inference under model misspecification, which correspond to the case
$\alpha=0$ \citep{chen2021statistical, zhang2021statistical, zhang2022statistical, LeinerDunnRamdas2025}. In the special case of off-policy evaluation, some recent works allow decaying
overlap, but typically only for slower decay rates such as $\alpha\leq 1/2$
\citep{zhan2021off,bibaut2021post}. To our knowledge, Theorem~\ref{thm::asymptotic-normality-general} allows one of the broadest decay regimes currently available for inference with adaptively collected data under model misspecification.
\end{remark}

\begin{remark}
In the setting of misspecified linear bandits (Example \ref{ex::misspecified-linear-bandits}), Theorem 4.1 of \citep{chen2021statistical} establishes the asymptotic normality of $\hat\btheta_a^{(T)}$ under a specific behavior policy: an $\epsilon$-greedy policy paired with a weighted online LS estimator, with action-selection probabilities bounded away from zero. This is a special case of the policies that satisfies scaled inverse-propensity convergence shown in Definition \ref{aspt:policy_convergence}, with $r_{a,t}\equiv 1$. More generally, Section \ref{sec::policy-convergence} shows that a broad class of behavior policies satisfy this property. Therefore, Theorem \ref{thm::asymptotic-normality-general} generalizes Theorem 4.1 in \citep{chen2021statistical} in terms of both the allowable data collection mechanisms and the inferential targets. This broader formulation is useful in practice, as  statisticians seeking to conduct post-study inference often do not control the algorithm used to collect the data. 
\end{remark}

In practice, it is often desirable to jointly infer the collection $\{\btheta_a^*\}_{a\in\cA}$, where for each arm $a\in\cA= \{1, \ldots, K\}$, $\btheta_a^*$ denotes the solution to (\ref{eq::theta-a-*}). Such joint inference is particularly relevant for tasks like estimating individual treatment effects or evaluating the value of a general target policy. To achieve this goal, Theorem \ref{thm::asymptotic-normality-joint-general} below establishes the joint asymptotic normality of the estimators $\{\hat\btheta_a^{(T)}\}_{a\in\cA}$, with the proof provided in Appendix \ref{apdx::proof-thm::asymptotic-normality-joint-general}. For simplicity, we assume a common score function $\bg$ across actions, though our analysis extends to action-specific score functions; see Appendix \ref{apdx::proof-thm::asymptotic-normality-joint-general} for details. 

\begin{theorem}\label{thm::asymptotic-normality-joint-general}
Suppose Assumption \ref{aspt:unconfoundedness} holds, and Assumptions \ref{aspt:identifiability-general}--\ref{aspt:min-sampling-prob} hold for every action $a\in\cA$. Suppose the behavior policy $\pi$ satisfies scaled inverse-propensity convergence with action-specific rates
$\{r_{a,t}\}_{t\ge 1}$ and limit function $\rho$ in the sense of Definition~\ref{aspt:policy_convergence}. 
In addition, let Conditions (i) and (ii) in Theorem \ref{thm::asymptotic-normality-general} hold for each $a\in\cA$. Define $\bD_T = \mathrm{Diag}\big(b_{1, T} \bI_{d}, \ldots, b_{K, T} \bI_{d}\big)\in\RR^{(Kd)\times (Kd)}$. Then there exist estimators $\{\hat\btheta_a^{(T)}\}_{a\in\cA, T\geq 1}$ such that (\ref{eq::estimating-equation-general}) holds for each $a\in\cA$, and $\|\hat\btheta_a^{(T)}\|_2\leq R_\theta$ for all $a\in\cA, T\geq 1$. In addition,  any such estimators satisfy
\begin{equation}\label{eq::asymptotic-normality-joint-general}
    \bD_T(\hat\btheta^{(T)} - \btheta^*)\xrightarrow{d}\cN
    \left(
    \mathbf{0},\bSigma^*
    \right)
\end{equation}
as $T\rightarrow \infty$. Here $\hat\btheta^{(T)} = \big((\hat\btheta_1^{(T)})^\top, \ldots, (\hat\btheta_K^{(T)})^\top\big)^\top$, $\btheta^* = \big(\btheta_1^{*, \top}, \ldots, \btheta_K^{*, \top}\big)^\top$, and $\bSigma^* = \diag(\bSigma_1^*, \ldots, \bSigma_K^*)$, with each $\bSigma_a^*$ defined as in Theorem \ref{thm::asymptotic-normality-general}.
\end{theorem}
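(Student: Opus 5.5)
Theorem~\ref{thm::asymptotic-normality-general} already handles existence of the estimators and the per-action linearization. The plan is therefore to reuse those steps for each $a\in\cA$ and prove joint asymptotic normality of the stacked leading terms with a martingale CLT plus the Cram\'er--Wold device. The diagonal form of $\bSigma^*$ comes from the disjointness of the indicators $1_{\{A_t=a\}}$ across actions.

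\textbf{Step 1 (per-action reduction).} Applying Theorem~\ref{thm::asymptotic-normality-general} to each $a$ gives estimators with $\bG_{a,T}(\hat\btheta_a^{(T)})=o_p(1/b_{a,T})$ and $\|\hat\btheta_a^{(T)}\|_2\le R_\theta$. Its proof also yields consistency and the Taylor expansion
\[
b_{a,T}(\hat\btheta_a^{(T)}-\btheta_a^*)=-\bJ_a^{-1}\,\frac{b_{a,T}}{T}\sum_{t=1}^T \bZ_{a,t}(\btheta_a^*)+o_p(1),\qquad \bZ_{a,t}(\btheta)=\frac{1_{\{A_t=a\}}}{\pi_t(a)}\bg(\bX_t,Y_t;\btheta).
\]
Here $\frac{b_{a,T}}{T}=\frac{1}{S_{a,T}}$. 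The Jacobian step uses Assumption~\ref{aspt:smoothness-general}, and the remainder step uses Condition (ii). The claim therefore reduces to joint convergence of
\[
\bM_T:=\Big(S_{a,T}^{-1}\textstyle\sum_t \bZ_{a,t}(\btheta_a^*)\Big)_{a\in\cA}
\]
to $\cN(\mathbf 0,\diag(\bar\bI_1,\dots,\bar\bI_K))$. The map $\bM_T\mapsto$ stacked estimator errors is block-diagonal with blocks $-\bJ_a^{-1}$, so the limit covariance becomes $\bSigma^*$.

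\textbf{Step 2 (martingale CLT via Cram\'er--Wold).} Fix vectors $\bu_1,\dots,\bu_K\in\RR^d$ and set
\[
\xi_{T,t}=\sum_a S_{a,T}^{-1}\bu_a^\top \bZ_{a,t}(\btheta_a^*).
\]
By \eqref{eq::main-mtg}, $\EE[\bZ_{a,t}(\btheta_a^*)\mid\cH_{t-1}]=\mathbf 0$, so $\{\xi_{T,t}\}_t$ is a martingale difference array. For the conditional variance, the cross terms vanish identically, because $1_{\{A_t=a\}}1_{\{A_t=a'\}}=0$ for $a\ne a'$. The variance therefore splits as
\[
\sum_a S_{a,T}^{-2}\sum_t \EE\!\left[\tfrac{1_{\{A_t=a\}}}{\pi_t(a)^2}(\bu_a^\top\bg)^2\,\Big|\,\cH_{t-1}\right]
=\sum_a S_{a,T}^{-2}\sum_t \EE_{\bX_t}\!\left[\tfrac{1}{\pi_t(a)}\,\bu_a^\top\bV_a(\bX_t)\bu_a\right],
\]
where $\bV_a(\bx)=\EE[\bg\bg^\top\mid\bX=\bx]$ is evaluated at $\btheta_a^*$. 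Each single-action sum converges in probability to $\bu_a^\top\bar\bI_a\bu_a$, exactly as in the proof of Theorem~\ref{thm::asymptotic-normality-general}. That argument writes $1/\pi_t(a)=r_{a,t}^{-1}\cdot(r_{a,t}/\pi_t(a))$, uses the $L_1$ convergence in Definition~\ref{aspt:policy_convergence} together with $\|\bV_a\|\le M_2$, and applies a Ces\`aro/Toeplitz argument with weights $r_{a,t}^{-1}/S_{a,T}^2$, which is valid because $S_{a,T}\to\infty$. So the joint conditional variance converges to $\sum_a\bu_a^\top\bar\bI_a\bu_a$.

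\textbf{Step 3 (Lindeberg, the main obstacle).} The conditional Lindeberg condition would be verified through a Lyapunov fourth-moment bound. Since $(\sum_a x_a)^4\le K^3\sum_a x_a^4$, it suffices to treat each action separately. For action $a$ the bound is
\[
S_{a,T}^{-4}\sum_t\EE\big[\pi_t(a)^{-3}\|\bg\|^4\big]\le S_{a,T}^{-4}\sum_t\pi_{\min,t}^{-3}\,\EE\|\bg\|^4\to0,
\]
using Assumption~\ref{aspt:boundedness-general}(iii) and Condition (ii). The genuine subtlety is that the normalizations $S_{a,T}$ differ across actions, so the array is not scaled by a single sequence. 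I would handle this by normalizing each summand inside $\xi_{T,t}$ by its own action's $S_{a,T}$, as above; a martingale CLT for triangular arrays (e.g.\ Hall and Heyde) then applies directly, because both the variance and Lindeberg conditions were checked per action. Cram\'er--Wold gives joint normality of $\bM_T$, and Slutsky combined with Step~1 yields \eqref{eq::asymptotic-normality-joint-general}.
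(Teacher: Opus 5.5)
Your proposal is correct and follows essentially the same route as the paper. The shared steps are: per-action linearization reusing the proof of Theorem~\ref{thm::asymptotic-normality-general}; Cram\'er--Wold with each action's term normalized by its own $S_{a,T}$, which is exactly the paper's $\frac1T\sum_t \bar Z_t$ with $\bar Z_t=\sum_a b_{a,T}\bbeta_a^\top\bJ_a^{-1}\bZ_{t,a}$; a martingale CLT (the paper cites Dvoretzky's Theorem~2.2 rather than Hall--Heyde); cross terms killed by the disjoint indicators $1_{\{A_t=a\}}$; and a fourth-moment Lyapunov bound for the Lindeberg condition. The only cosmetic difference is that the paper notes the cross terms in $\bar Z_t^4$ vanish exactly, so your $K^3$ factor is unnecessary but harmless.
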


Theorem \ref{thm::asymptotic-normality-joint-general} indicates that the estimators $\{\hat\btheta_a^{(T)}\}_{a\in\cA}$ are asymptotically uncorrelated. Intuitively, this is because each $\hat\btheta_a^{(T)}$ is constructed using data exclusively from time points when action $a$ is selected, and these sets of time points are disjoint across different actions.

The asymptotic variances in Theorems \ref{thm::asymptotic-normality-general} and \ref{thm::asymptotic-normality-joint-general} can be consistently estimated from data, as shown in the proposition below, thereby enabling valid statistical inference. The proof is provided in Appendix \ref{apdx::proof-thm::consistent-var-estimator-general}.

\begin{proposition}\label{thm::consistent-var-estimator-general}
Under the same conditions of Theorem \ref{thm::asymptotic-normality-general}, the asymptotic variance $\bSigma_a^*$ can be consistently estimated by 
\begin{equation}\label{eq::estimator-asympt-var}
    \hat\bSigma_a = \left[\hat{\dot{\bG}}_{a, T}\right]^{-1}\hat \bI_{a, T}\left[\hat{\dot{\bG}}_{a, T}\right]^{-1, \top},
\end{equation}
where 
\begin{align*}
\hat{\dot{\bG}}_{a, T} &:= \frac1T\sum_{t=1}^T\frac1{\pi_t(A_t)}1_{\{A_t = a\}}\nabla\bg(\bX_t, Y_t; \hat \btheta_a^{(T)}),\\
\hat\bI_{a, T} & := \frac1{S_{a, T}^2}\sum_{t=1}^T\frac{1}{\pi_t(A_t)^2}1_{\{A_t = a\}}\bg(\bX_t, Y_t; \hat \btheta_a^{(T)})\bg(\bX_t, Y_t; \hat \btheta_a^{(T)})^\top.
\end{align*}
\end{proposition}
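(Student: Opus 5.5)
We prove that $\hat{\dot{\bG}}_{a,T}\xrightarrow{p}\bJ_a$ and $\hat\bI_{a,T}\xrightarrow{p}\bar\bI_a$, where $\bar\bI_a=\EE[\rho(a,\bX)\,\bg(\bX,Y(a);\btheta_a^*)\bg(\bX,Y(a);\btheta_a^*)^\top]$ is the limiting normalized conditional variance from the proof of Theorem~\ref{thm::asymptotic-normality-general}. Consistency of $\hat\bSigma_a$ then follows by the continuous mapping theorem, because $\bJ_a$ is nonsingular by Assumption~\ref{aspt:smoothness-general}(i). Both terms are handled the same way. We first replace $\hat\btheta_a^{(T)}$ by $\btheta_a^*$ and bound the error using the smoothness envelopes. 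We then show that the resulting weighted average at $\btheta_a^*$ converges, by splitting it into a martingale difference part and a predictable (compensator) part.

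\textbf{Jacobian term.} Write $\hat{\dot{\bG}}_{a,T}=\frac1T\sum_t w_a(A_t)\nabla\bg(\bX_t,Y_t;\btheta_a^*)+\Delta_T$. The computation in (\ref{eq::main-mtg}) gives $\EE[w_a(A_t)\nabla\bg(\bX_t,Y_t;\btheta_a^*)\mid\cH_{t-1}]=\bJ_a$. The centered summands form a martingale difference sequence. Their conditional second moments are bounded by $\pi_{\min,t}^{-1}M_2'$, using $\EE[w_a^2\phi^2\mid\cH_{t-1},\bX_t]\le \pi_t(a)^{-1}M_2'$ and Assumption~\ref{aspt:smoothness-general}(ii). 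Hence the martingale average has variance $O(\sum_t\pi_{\min,t}^{-1}/T^2)\to0$ by condition (ii). For the remainder, a mean-value expansion gives $\|\Delta_T\|\le \|\hat\btheta_a^{(T)}-\btheta_a^*\|\cdot\frac1T\sum_t w_a(A_t)\Phi(\bX_t,Y_t)$. Theorem~\ref{thm::asymptotic-normality-general} gives $\hat\btheta_a^{(T)}-\btheta_a^*=O_p(1/b_{a,T})=o_p(1)$, so the estimator eventually lies in the $\epsilon_0$-ball. The average $\frac1T\sum_t w_a\Phi$ has expectation $\EE\Phi<\infty$ by the same IPW identity, so it is $O_p(1)$ by Markov's inequality. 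Therefore $\Delta_T=o_p(1)$.

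\textbf{Meat term.} At $\btheta_a^*$ we study $\frac1{S_{a,T}^2}\sum_t w_a(A_t)^2\bg_t\bg_t^\top$, where $\bg_t=\bg(\bX_t,Y_t(a);\btheta_a^*)$. Its compensator is
\[
\frac1{S_{a,T}^2}\sum_t\frac1{r_{a,t}}\,\EE\Big[\tfrac{r_{a,t}}{\pi_t(a)}\,\bm(\bX_t)\,\Big|\,\cH_{t-1}\Big],\qquad \bm(\bx)=\EE[\bg\bg^\top\mid\bX=\bx].
\]
By Definition~\ref{aspt:policy_convergence} and $\|\bm\|\le M_2$ (Assumption~\ref{aspt:boundedness-general}(i)), each conditional expectation converges in $L_1$ to $\EE[\rho(a,\bX)\bm(\bX)]=\bar\bI_a$. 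A Toeplitz/Cesàro argument with weights $r_{a,t}^{-1}/S_{a,T}^2$ then gives convergence of the compensator; this needs $S_{a,T}\to\infty$ so that no single term dominates. For the martingale part, the summands have conditional second moment at most $\pi_{\min,t}^{-3}\EE\|\bg\|^4$ by Assumption~\ref{aspt:boundedness-general}(iii). Its variance is therefore $O(\sum_t\pi_{\min,t}^{-3}/S_{a,T}^4)\to0$, which is exactly the second half of condition (ii).

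\textbf{Main obstacle.} The hardest step is the plug-in error in $\hat\bI_{a,T}$. Expanding $\bg(\cdot;\hat\btheta)\bg(\cdot;\hat\btheta)^\top-\bg_t\bg_t^\top$ produces cross terms bounded by $\|\hat\btheta-\btheta^*\|\,\frac{1}{S_{a,T}^2}\sum_t w_a^2\|\bg_t\|\phi_t$ and a quadratic term $\|\hat\btheta-\btheta^*\|^2\frac1{S_{a,T}^2}\sum_t w_a^2\phi_t^2$. We bound these by Cauchy--Schwarz against the meat at $\btheta_a^*$, which is $O_p(1)$, and the quantity $\frac1{S_{a,T}^2}\sum_t w_a^2\phi_t^2$. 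Taking expectations, the latter is at most $M_2'\sum_t\pi_{\min,t}^{-1}/S_{a,T}^2$. This is $O(1)$ only when the $\pi_{\min,t}$ and $r_{a,t}$ scales match. In general we multiply by $\|\hat\btheta-\btheta^*\|^2=O_p(S_{a,T}^2/T^2)$, which gives $O_p(\sum_t\pi_{\min,t}^{-1}/T^2)=o_p(1)$ by condition (ii). The plan is to carry out this cancellation carefully, applying the mean value theorem to $\bg$ with envelope $\phi$ over $\|\btheta\|\le R_\theta$, so that all remainders reduce to condition (ii).
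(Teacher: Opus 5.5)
Your proof is correct, and for the Jacobian term, the compensator, and the martingale part of the meat it is essentially the paper's proof. The paper handles those pieces by invoking Lemma~\ref{lem::convergence-true-derivative-general} and the conditional-variance check from the proof of Lemma~\ref{lem::asymptotic-normality-true-G-general}.

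The routes differ in the plug-in error for $\hat\bI_{a,T}$. The paper bounds the expectations of $\frac{1}{S_{a,T}^2}\sum_t w_a(A_t)^2\|\bg_t\|_2\phi_t$ and $\frac{1}{S_{a,T}^2}\sum_t w_a(A_t)^2\phi_t^2$ by a constant times $\frac{1}{S_{a,T}^2}\sum_t r_{a,t}^{-1}\,\EE[r_{a,t}/\pi_t(a)]$. This converges to $\EE[\rho(a,\bX)]<\infty$ by scaled inverse-propensity convergence and the same Toeplitz argument you already use for the compensator. Hence both envelope averages are $O_p(1)$ whatever the relation between $\pi_{\min,t}$ and $r_{a,t}$, and consistency of $\hat\btheta_a^{(T)}$ (Lemma~\ref{lem::consistency-general}) is enough to finish. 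Your worry that this quantity is $O(1)$ only when the scales match comes from the loose bound $\pi_t(a)^{-1}\le\pi_{\min,t}^{-1}$, not from the quantity itself.

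Your detour is nonetheless valid. Using $\|\hat\btheta_a^{(T)}-\btheta_a^*\|_2=O_p(S_{a,T}/T)$ from Theorem~\ref{thm::asymptotic-normality-general}, the quadratic term is $O_p\bigl(\sum_t\pi_{\min,t}^{-1}/T^2\bigr)$. Cauchy--Schwarz against the trace of the meat at $\btheta_a^*$ makes the cross terms $O_p\bigl((\sum_t\pi_{\min,t}^{-1})^{1/2}/T\bigr)$. Both are $o_p(1)$ by the first half of condition (ii).

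So you trade the stability-based bound for the $b_{a,T}$-rate of the estimator. The paper's version needs only consistency and never appeals to the asymptotic normality result. Yours controls these remainder terms with the crude floor $\pi_{\min,t}$ alone, but must import the convergence rate.
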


The above results show that valid inference only requires knowledge of the rate sequence
$\{r_{a,t}\}_{t\ge 1}$ up to a multiplicative constant. Indeed, for any $a\in\cA$, multiplying
$r_{a,t}$ by a constant changes both $b_{a,T}^2$ and the asymptotic variance
$\bSigma_a^*$ by the same factor, leaving the resulting confidence intervals unchanged.
Moreover, Proposition~\ref{thm::consistent-var-estimator-general} shows that consistent variance
estimation does not require knowledge of the limiting inverse-propensity function $\rho$.

The general results of this section are specialized in Appendix~\ref{seq::examples} to Examples~\ref{ex::misspecified-linear-bandits}-\ref{ex::ope}, where we derive the corresponding IPW-Z estimators, action-specific convergence rates, asymptotic variances, and consistent variance estimators.

\subsection{Policy Instability and Non-normality}\label{sec::example-policy-nonconvergence}

Previous work has shown that for off-policy evaluation with adaptively collected data, inverse probability weighted (IPW) estimators can exhibit non-normal asymptotic behavior when no constraints are imposed on the behavior policy \citep{hadad2021confidence, bibaut2021post}. In these works, the authors attribute such non-normality to the presence of unbounded inverse probability weights. In this section, we highlight another fundamental but less emphasized contributing factor to the non-normality of adaptive inference: \emph{policy instability}. Using a concrete numerical example, we show that under a misspecified environment, even a standard policy with bounded inverse probability weights may fail to stabilize at the natural scale $r_{a,t}\equiv 1$, \footnote{For a policy with bounded inverse propensities, one could
formally force Definition~\ref{aspt:policy_convergence} to hold by choosing a degenerate scale \(r_{a,t}\to 0\) and limit function \(\rho(a,x)\equiv 0\). Such a choice does not yield a nondegenerate
asymptotic distribution in Theorem~\ref{thm::asymptotic-normality-general}: the corresponding
asymptotic variance \(\bar{\bI}_a\) is zero and the resulting normalization does not provide meaningful
inference. We therefore refer here to stability at the natural, nondegenerate scale
\(r_{a,t}\equiv 1\), which is appropriate when the sampling probabilities are bounded away from
zero.} and such instability may in turn lead to non-normal asymptotic behavior of IPW-Z estimators. Similar phenomena have been observed in the context of estimating arm means with Thompson Sampling behavior policies \citep{halder2025stable}.



To illustrate, we consider a two-armed bandit environment with contexts generated as $\bX_t \sim \text{Uniform}(\{-4, 1\})$ and mean reward $y(\bx, a) = \mathbb{E}[Y_t(a) | \bX_t = \bx]$ given by 
$$
    y(-4, a_0) = y(1, a_0) = 1/2, \quad y(-4, a_1) = y(1, a_1) = 1/12.
$$
The rewards are subject to Gaussian noise. Under this setting, we independently run two algorithms---LinUCB with clipping and Random---across $2500$ replications, each for $10^4$ steps. LinUCB with clipping employs a working reward model $Y_t(a) = \btheta_{a}^{\top} \bX_t$ with unknown parameter $\btheta_{a} \in \mathbb{R}$, whose point estimate is the ridge regression estimator~(\ref{eq:ridge-estimator}) with $\lambda=1$. At each step LinUCB selects $\argmax_a\{\hat\bbeta_{t,a}^{\top}\bX_t + \alpha\,\|\bX_t\|_{\bPhi_{t,a}^{-1}}\}$ with exploration constant $\alpha=1$, and the resulting deterministic choice is then mixed with the uniform policy so that
its action probabilities are clipped at $0.01$ to ensure bounded inverse probability weights. The Random policy selects each action independently with equal probability $1/2$ at every time step, and by construction it naturally satisfies inverse-propensity stability as in Definition \ref{aspt:policy_convergence}.

In Figure \ref{fig:non-convergence}, we plot the distributions of key quantities across the $2500$ replications: the last-step ridge regression estimator for $\btheta_a$ (panel (a)), the sampling probabilities at context $\bx = -4$ (panel (b)), and the proposed IPW-Z estimator defined in (\ref{eq::estimating-equation-general}) for the target parameter in (\ref{eq::target-parameter-misspecified-linear-bandits}) (panel (c)). In panel (d), we present a QQ plot comparing the empirical distribution of the IPW-Z estimator with the standard Gaussian distribution. We observe that (i) the ridge regression estimator under clipped LinUCB exhibits two distinct convergence modes, evident from the bimodal histogram in panel (a), which does not occur under the Random algorithm; (ii) the clipped LinUCB policy itself fails to stabilize, as shown by the bimodal distribution of the sampling probability at $\bx=-4$ in panel (b), in contrast to the Random algorithm; and (iii) because the clipped LinUCB policy does not stabilize, the asymptotic distribution of the IPW-Z estimator substantially deviates from Gaussianity, whereas the estimator based on data collected by the convergent Random algorithm remains asymptotically normal (panels (c) and (d)). The bimodality in panel (a) also shows the inconsistency of the naive unweighted Z-estimator (\ref{eq::naive-z-estimator}) in this misspecified setting. The bimodality in panel (b) can be understood intuitively from the feedback between the misspecified working model and the LinUCB decision rule. Since the working model $Y=\btheta_a \bx$ has no intercept whereas each arm's true reward is constant in $\bx$, the least-squares slope for each arm depends on the empirical proportion of the two contexts among that arm's pulls. Under LinUCB, this proportion is itself affected by the current slope through the argmax rule. This feedback has two self-reinforcing fixed points, so the estimated ridge parameter and the induced sampling probability at $\bx=-4$ can settle near one of two values depending on early randomness. The IPW-Z estimator in panels (c)-(d) remains consistent because inverse-propensity weighting removes the first-order bias induced by adaptive sampling. However, because the behavior policy does not stabilize, the conditional variance of the weighted estimating equation also remains unstable, which produces the non-Gaussian mixture. This example illustrates the role of inverse-propensity stability in the IPW-Z estimator: it stabilizes the conditional variance of the weighted estimating equation and thereby enables asymptotic normality.


\begin{figure}[tb]
    \centering
    \includegraphics[width=\linewidth]{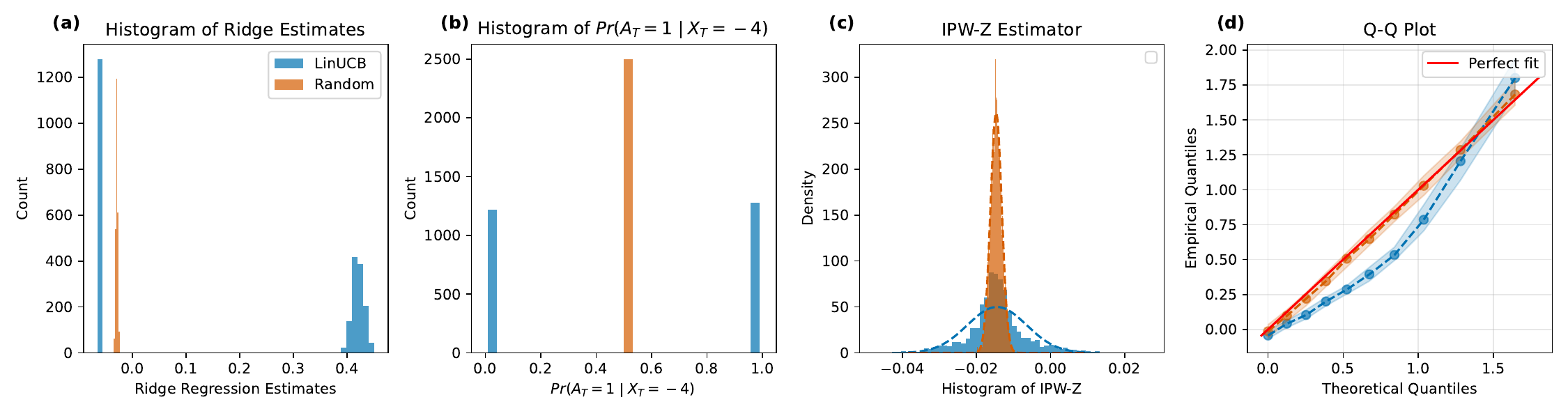}
    \caption{Example of policy non-convergence and non-normality of the IPW-Z estimator (\ref{eq::estimating-equation-general}). We independently run the LinUCB and Random algorithms for $10{,}000$ steps over $2{,}500$ replications. \textbf{(a)} Last-step ridge regression estimator of $\btheta_{1}$. \textbf{(b)} Last-step sampling probability at context $\bx = -4$. \textbf{(c)} The IPW-Z estimator (\ref{eq::estimating-equation-general}) for the inference target in (\ref{ex::misspecified-linear-bandits}). \textbf{(d)} QQ plot of the standardized empirical distribution of the IPW-Z estimator compared with the standard Gaussian distribution. Results shown in blue correspond to LinUCB, and results shown in orange correspond to Random.}
    \label{fig:non-convergence}
\end{figure}

\section{Sufficient Conditions for Inverse-Propensity Stability}\label{sec::policy-convergence}

In this section, we study sufficient conditions under which a policy satisfies the inverse-propensity stability condition in Definition~\ref{aspt:policy_convergence}. We mainly consider two complementary classes of policies. Specifically, Section~\ref{sec::MAB-policy-convergence} treats \emph{context-free multi-armed bandit} (MAB) algorithms, where the action-selection rule depends on the history only through arm-specific counts and sample means rather than on the context. We show that under mild conditions, three classical MAB algorithms---$\epsilon$-greedy, UCB, and Thompson sampling---all satisfy inverse-propensity stability under a minimum sampling probability that is either constant or decays to zero at a controlled rate.
Section~\ref{sec::smooth_allocation} then turns to contextual bandit algorithms and studies clipped smooth allocation policies, whose action-selection probabilities are smooth functions of an estimated score vector, clipped at a fixed minimum sampling probability. Unlike context-free MAB policies, these policies may rely on misspecified working score models; the fixed floor prevents over-commitment to such scores while preserving information across actions for post-study inference. We first prove a general result, Theorem~\ref{lem::statistics-converge-implies-policy-converge}, showing that convergence of the summary statistics used by the policy---the quantities through which the policy summarizes past data---implies inverse-propensity stability under a fixed minimum sampling probability. We then verify these conditions for smooth allocation policies based on two common summary statistics: ridge estimators and stochastic-gradient-descent estimators.

Throughout, our analysis applies to general environments under mild regularity conditions, without requiring a well-specified reward model. Together, these results offer a principled foundation for constructing stable bandit policies in practice, particularly in noisy and complex settings.

\paragraph{Policy parametrization by summary statistics.} We focus on behavior policies that belong to a parametric class of the form $\pi(\cdot|\bX_t, \bbeta)$, where $\bbeta\in\RR^{d_{\beta}}$. At each time $t$, the behavior policy $\pi_t(a|\bX_t, \cH_{t-1})$ takes the form 
\begin{equation}\label{eq::policy-statistics}
\pi_t(a|\bX_t, \cH_{t-1}) = \pi(a|\bX_t, \hat\bbeta_{t-1})
\end{equation}
for a fixed mapping $\pi: \cX\times \mathbb R^{d_{\beta}}\mapsto \Delta(\cA)$, where $\hat\bbeta_{t-1}\in\mathbb R^{d_{\beta}}$ is a $\cH_{t-1}$-measurable random vector. This vector can be interpreted as a \emph{summary statistic} that aggregates information from the past $t-1$ rounds and, together with the current context $\bX_t$, determines the action selection probability at time $t$. This policy class is broad and encompasses many commonly used bandit algorithms---such as $\epsilon$-greedy \citep{sutton1998reinforcement}, UCB \citep{auer2002finite, chu2011contextual}, and Thompson sampling \citep{agrawal2012analysis, russo2018tutorial}---and is widely adopted in adaptive experimental designs in practice \citep{liao2020personalized, athey2022contextual, xu2025reinforcement}.

Importantly, the summary statistic \(\hat\bbeta_{t-1}\) is part of the behavior policy and should not be confused with the inferential targets \(\{\btheta_a^*\}_{a\in\cA}\). The former is constructed by the agent during data collection to determine future action probabilities, whereas the latter are defined through the population estimating equations in (\ref{eq::theta-a-*}) for post-study statistical analysis.

\subsection{Multi-armed bandit ignoring context}\label{sec::MAB-policy-convergence} 
In real-world bandit deployments, simple algorithms like multi-armed bandits (MAB) are often preferred, especially in early trials when prior information is limited. By ignoring context, MAB avoids reward model misspecification, which helps to manage variance and prevent poor decisions in complex systems. In this section, we show that under mild conditions, common MAB algorithms additionally satisfy {scaled inverse-propensity convergence (Definition~\ref{aspt:policy_convergence})}, enabling valid post-study inference.

For any action $a\in\cA$ and time $t$, denote
\begin{equation}\label{eq::count-running-avg-reward-for-each-arm}
\hat\mu_{a, t}: =
\frac
{\sum_{\tau = 1}^t1_{\{A_\tau = a\}}Y_{\tau}}
{N_{a, t}},\quad N_{a, t} : = \sum_{\tau = 1}^t1_{\{A_\tau = a\}}.
\end{equation}
Consider the following MAB algorithms with a (possibly decaying) minimum sampling probability: 

\begin{itemize}
    \item \textbf{The $\epsilon$-greedy algorithm:}
        \begin{align}\label{eq::policy-eps-greedy-mab}
        \pi_t^{\epsilon\text{-greedy}}(a|\mathcal H_{t-1})=
        \begin{cases}
            1-(K-1)\pi_{\min,t},\quad &\text{if }a = \argmax_i \hat\mu_{i, t-1},\\
            \pi_{\min,t}, \quad &\text{otherwise.}
        \end{cases}
    \end{align}
    Here $\pi_{\min,t}\in(0, 1/K]$ is a (possibly time-varying) minimum sampling probability; equivalently, the standard $\epsilon_t$-greedy form with $\epsilon_t = K\pi_{\min,t}$.
    \item \textbf{The UCB algorithm:}
    \begin{align}\label{eq::policy-ucb-mab}
        \pi_t^{\text{UCB}}(a|\mathcal H_{t-1})= 
        \begin{cases}
            1-(K-1)\pi_{\min,t},\quad &\text{if }a \coloneq \argmax_i \left\{\hat\mu_{i, t-1} + \sqrt{\frac{C_t}{N_{i, t-1}}}\right\},\\
            \pi_{\min,t}, \quad &\text{otherwise.}
        \end{cases}
        \end{align}
        where $\{C_t\}_{t\geq 1}$ is a deterministic positive sequence. 
    \item \textbf{The Thompson Sampling algorithm:}   
\begin{align}\label{eq::policy-ts-mab-no-min-prob}
    \left(\pi_t^{\text{TS}}(a|\mathcal H_{t-1})\right)_{a\in\cA}=\mathrm{Clip}\left(\left(\bar\pi_t^{\text{TS}}(a|\mathcal H_{t-1})\right)_{a\in\cA};\pi_{\min,t}\right),
\end{align}
where    
\begin{itemize}
    \item
$
\bar\pi_t^{\text{TS}}(a|\mathcal H_{t-1}) \coloneq  \EE_{(\mu_i')_{i\in\cA}\sim \mathcal N(\bmu_{t-1}^{\mathrm{post}}, \bSigma_{t-1}^{\mathrm{post}})}1_{\{\forall i\neq a, \mu_i'<\mu_a'\}}
$
is the posterior probability of action $a$ being optimal under a Gaussian prior. Here
\begin{gather}
\bmu_{t-1}^{\mathrm{post}} = \big(\mu_{a, t-1}^{\mathrm{post}}\big)_{a\in\cA},\quad \bSigma_{t-1}^{\mathrm{post}} = \mathrm{diag}\big((\sigma_{a, t-1}^{\mathrm{post}})^2\big),\label{eq::ts-mab-posterior-mean-var-joint}\\
\mu_{a, t}^{\mathrm{post}}:=\bigg(\frac1{\sigma_0^2} +\frac{N_{a, t}}{\sigma^2}\bigg)^{-1}\bigg(\frac{\mu_0}{\sigma_0^2} +\frac{N_{a, t}\hat\mu_{a, t}}{\sigma^2}\bigg), \quad (\sigma_{a, t}^{\mathrm{post}})^2 := \bigg(\frac1{\sigma_0^2} +\frac{N_{a, t}}{\sigma^2}\bigg)^{-1},\label{eq::ts-mab-posterior-mean-var}
\end{gather}
    where $\mu_0\in\RR$ and $\sigma_0^2, \sigma^2>0$ are 
    fixed algorithm parameters representing the prior mean, prior variance, and observation noise variance, respectively.
    \item The mapping $\mathrm{Clip}: \RR^K\times \RR\rightarrow \RR^K$ adjusts a probability distribution over $K$ discrete actions to ensure that each coordinate is lower bounded by $\pi_{\min,t}$ (see Appendix \ref{apdx::clipping}).
    \end{itemize} 
\end{itemize}

All three policies above are special cases of the summary-statistic form~\eqref{eq::policy-statistics}, with degenerate dependence on the current context. For $\epsilon$-greedy, the statistic is the vector of sample means $(\hat\mu_{a,t-1})_{a\in\cA}$; for UCB, it can be taken as $(\hat\mu_{a,t-1}, C_t/N_{a,t-1})_{a\in\cA}$; and for Thompson sampling, it can be taken as the posterior mean and covariance $(\bmu^{\mathrm{post}}_{t-1},\bSigma^{\mathrm{post}}_{t-1})$.

The following proposition shows that the three MAB algorithms above satisfy the scaled inverse-propensity convergence condition in Definition~\ref{aspt:policy_convergence} when the minimum sampling probability decays to zero at a controlled rate. The proof is in Appendix~\ref{apdx::proof-prop::convergence-mab-no-pi-min}.

\begin{proposition}\label{prop::convergence-mab-no-pi-min}
Assume that the suboptimal gap $\Delta = \mu_{a^*}^* - \max_{a' \neq a^*}\mu_{a'}^* > 0$, and that $Y_t(a) - \mu_a$ is $\sigma_Y^2$-subgaussian $\forall t\geq 1, a\in\cA$. Suppose the exploration schedule satisfies:
\begin{itemize}
    \item[(i)] $\lim_{t\to \infty}\pi_{\min,t}= 0$, $\lim_{t\to \infty}\sum_{\tau\leq t}\pi_{\min,\tau} = \infty$,
    \item[(ii)] $\log (1/\pi_{\min,t}) = o(\sum_{\tau<t}\pi_{\min,\tau})$,
\end{itemize}
For the UCB policy, further assume that the sequence $\{C_t\}_{t\geq 1}$ satisfies $C_t = o(\sum_{\tau<t}\pi_{\min,\tau})$. 
Then the $\epsilon$-greedy, UCB, and Thompson sampling policies defined in \eqref{eq::policy-eps-greedy-mab}, \eqref{eq::policy-ucb-mab}, and \eqref{eq::policy-ts-mab-no-min-prob}, respectively, satisfy the scaled inverse-propensity convergence condition in Definition~\ref{aspt:policy_convergence}, with 
\begin{equation}\label{eq::mab-policy-limit-no-min-prob}
    r_{a, t} = 
\begin{cases}
1,\quad &\text{if } a = a^*,\\
\pi_{\min,t},\quad &\text{otherwise, } 
\end{cases}
\quad 
\rho(a, \bx) = 1.
\end{equation}

\end{proposition}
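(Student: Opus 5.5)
The plan is to reduce everything to one high-probability ``good event'' on which the policy puts its greedy mass on $a^*$ and every other arm receives exactly (or asymptotically) the floor $\pi_{\min,t}$. Let $L_t := \sum_{\tau<t}\pi_{\min,\tau}$, and let $\cE_t$ be the event that $N_{a,t-1}\ge L_t/2$ for all $a\in\cA$ and $|\hat\mu_{a,t-1}-\mu_a^*|<\Delta/4$ for all $a$. I will show that $\PP(\cE_t^c)=o(\pi_{\min,t})$, not merely $o(1)$. This stronger rate is what the inverse weight for $a^*$ requires, as the final step shows.

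\textbf{Step 1: counts.} Every arm is drawn with conditional probability at least $\pi_{\min,\tau}$ at each round. Hence $N_{a,t-1}-\sum_{\tau<t}\pi_\tau(a)$ is a martingale with bounded increments whose predictable variance is at most $\sum_{\tau<t}\pi_\tau(a)$. Freedman's inequality then gives $\PP(N_{a,t-1}<L_t/2)\le \exp(-cL_t)$.

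\textbf{Step 2: sample means.} Using the unconfoundedness Assumption~\ref{aspt:unconfoundedness} and the i.i.d.\ potential outcomes, I will pass to the standard ``reward tape'' representation. In it, $\hat\mu_{a,t-1}$ is the average of the first $N_{a,t-1}$ entries of an i.i.d.\ $\sigma_Y^2$-subgaussian sequence for arm $a$. Taking a union bound over $n\ge L_t/2$ gives
\[
\PP\big(|\hat\mu_{a,t-1}-\mu_a^*|\ge\Delta/4,\ N_{a,t-1}\ge L_t/2\big)\le \sum_{n\ge L_t/2}2e^{-n\Delta^2/(32\sigma_Y^2)}\lesssim e^{-c'L_t}.
\]
Combining Steps 1 and 2 gives $\PP(\cE_t^c)\le 2K e^{-c''L_t}$. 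Condition (ii), $\log(1/\pi_{\min,t})=o(L_t)$, then yields $e^{-c''L_t}/\pi_{\min,t}=\exp\{-c''L_t+\log(1/\pi_{\min,t})\}\to0$, which is the required rate $\PP(\cE_t^c)=o(\pi_{\min,t})$.

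\textbf{Step 3: the good event for each policy.} For $\epsilon$-greedy, on $\cE_t$ the empirical argmax is $a^*$. For UCB, on $\cE_t$ the bonus satisfies $\sqrt{C_t/N_{i,t-1}}\le\sqrt{2C_t/L_t}\to0$, using $C_t=o(L_t)$. So for large $t$ the bonus is below $\Delta/4$ and the UCB argmax is $a^*$. In both cases, on $\cE_t$ we get $\pi_t(a^*)=1-(K-1)\pi_{\min,t}$ and $\pi_t(a)=\pi_{\min,t}$ for $a\neq a^*$. For Thompson sampling, on $\cE_t$ the posterior means lie within $\Delta/4+O(1/L_t)$ of the $\mu_a^*$, and the posterior variances are $O(1/L_t)$. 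Gaussian tail bounds then give $\bar\pi_t^{\text{TS}}(a)\le K e^{-cL_t}=o(\pi_{\min,t})$ for $a\ne a^*$. From the clipping map of Appendix~\ref{apdx::clipping}, I will extract the fact that $\pi_t(a)/\pi_{\min,t}\to1$ uniformly on $\cE_t$ for $a\neq a^*$, and that $\pi_t(a^*)\ge1-K\pi_{\min,t}-o(\pi_{\min,t})$ there.

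\textbf{Step 4: $L_1$ convergence.} For $a\ne a^*$, the ratio $r_{a,t}/\pi_t(a)=\pi_{\min,t}/\pi_t(a)$ always lies in $[0,1]$ by the floor, and on $\cE_t$ it tends to $1$ uniformly. Hence $\EE|\pi_{\min,t}/\pi_t(a)-1|\le o(1)+\PP(\cE_t^c)\to0$. For $a^*$ with $r\equiv1$, the weight can be as large as $1/\pi_{\min,t}$ off $\cE_t$, so
\[
\EE\big|1/\pi_t(a^*)-1\big|\le \sup_{\cE_t}\big|1/\pi_t(a^*)-1\big|+\pi_{\min,t}^{-1}\PP(\cE_t^c).
\]
The first term is $O(\pi_{\min,t})\to0$ by condition (i), and the second term vanishes by Step 2. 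This gives Definition~\ref{aspt:policy_convergence} with $\rho\equiv1$ and the rates in \eqref{eq::mab-policy-limit-no-min-prob}.

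The main obstacle is obtaining the sharp rate $\PP(\cE_t^c)=o(\pi_{\min,t})$, which is forced by the unbounded weight for $a^*$. This needs the exponential count bound in Step 1 and the adaptive-sample-size concentration in Step 2 to be tight enough to beat $\log(1/\pi_{\min,t})$. For Thompson sampling, a further technical point is checking that the clipping map preserves the ratio $\pi_t(a)/\pi_{\min,t}\to1$.
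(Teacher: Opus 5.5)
Your proposal is correct and follows essentially the paper's argument: an event with failure probability $O(e^{-c\sum_{\tau<t}\pi_{\min,\tau}})$ on which each policy puts its greedy mass on $a^*$ (for Thompson sampling the clip in fact returns exactly $\pi_{\min,t}$ on the other arms once each unclipped probability falls below the floor), then condition (ii) to make that failure probability $o(\pi_{\min,t})$ against the $1/\pi_{\min,t}$ weight at $a^*$, then an $L_1$ split on the event. The differences are only in the tools. For the counts, the paper couples with explicit uniform-exploration coins and applies Chernoff, while you use Freedman's inequality; that needs a short peeling or exponential-supermartingale step, because the compensator $\sum_{\tau<t}\pi_\tau(a)$ is random. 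For the sample means, your reward-tape union bound over $n\ge L_t/2$ is a cleaner replacement for the paper's subgaussian bound taken conditionally on $A_{1:t}$.
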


Conditions~(i) and~(ii) are satisfied, for example, by the polynomial schedule $\pi_{\min,t} = c\, t^{-\alpha}$ for any $c>0$ and $\alpha\in(0, 1)$. The fixed-floor regime $\pi_{\min,t}\equiv \pi_{\min}>0$ is treated separately in Appendix~\ref{apdx::proof-lem::convergence-mab}, where we show that the three algorithms satisfy scaled inverse-propensity convergence with $r_{a,t}\equiv 1$ for all arms.


\begin{remark}
Although regret is not the main focus of this paper, the decaying-floor regime also retains reasonable reward performance in the context-free setting. Under the polynomial schedule $\pi_{\min,t}=ct^{-\alpha}$ with any $\alpha\in(0,1)$, the cumulative exploration mass is $O(T^{1-\alpha})$; hence, the resulting MAB policy has sublinear regret relative to the best fixed arm while still satisfying the inverse-propensity stability under nonzero optimality gap.
\end{remark}

\subsection{Smooth Allocation Policies}\label{sec::smooth_allocation}\label{sec::boltzmann-policy-convergence}

So far we have considered multi-arm bandit algorithms, for which the dependence on the current context in~(\ref{eq::policy-statistics}) is degenerate. We now turn to contextual policies of the form  (\ref{eq::policy-statistics}) that use $\bX_t$ for decision-making. A common way to construct such policies is to first assign each action an estimated score $s_a$, such as a fitted reward index obtained from a working model based on $(\bX_t,\widehat\bbeta_{t-1})$ \citep{abbasi2011improved, agrawal2013thompson, cesa2017boltzmann}. The resulting score vector, 
$$
\bs_{\hat\bbeta_{t-1}}(\bX_t) := (s_a)_{a\in\cA},
$$
is then converted into action-selection probabilities. Intuitively, the vector $\bs_{\widehat\bbeta_{t-1}}(\bX_t)\in\RR^K$ summarizes how favorable the current context makes each action appear under the policy's current summary statistic $\widehat\bbeta_{t-1}$. We refer to the entries of $\bs_{\widehat\bbeta_{t-1}}(\bX_t)$ as \emph{action scores}, to distinguish them from the estimating score function $\bg$ used to define the inferential target.

A key design choice is how to transform the action scores to action-selection probabilities. Hard decision-rules, such as argmax-based policies, can be unstable under misspecified working models (see Section \ref{sec::example-policy-nonconvergence}). In this section, we study \emph{smooth allocation policies}, which convert the score vector $\bs_{\widehat\bbeta_{t-1}}(\bX_t)$ into the probability simplex $\Delta(\cA)$ via a smooth allocation rule. To maintain persistent exploration and support post-study inference, we additionally clip the resulting probabilities at a fixed minimum sampling probability. Specifically, we consider policies of the form (\ref{eq::policy-statistics}), where
\begin{equation}\label{eq::clipped-allocation}
\pi(a\mid\bX,\bbeta) = \pi^{\brho}(a\mid\bX,\bbeta) := \bigl[\mathrm{Clip}\bigl(\brho(\bs_{\bbeta}(\bX));\,\pi_{\min}\bigr)\bigr]_a,
\end{equation}
where $\brho: \RR^K\to \Delta(\cA)$ belongs to a smooth allocation family $\{\brho_{\gamma}\}_{\gamma>0}$, defined formally in Definition~\ref{def:smooth-allocation} below. The operator $\operatorname{Clip}(\cdot;\pi_{\min})$ is defined in Appendix~\ref{apdx::clipping} and ensures that every action has probability at least $\pi_{\min}$, where $\pi_{\min}\in(0,1/K)$ is fixed.

Our analysis proceeds in two steps. We first develop general sufficient conditions that reduce verification of inverse-propensity stability to convergence of the policy's summary statistic (Theorem \ref{lem::statistics-converge-implies-policy-converge}). We then verify this convergence for two widely used statistics---the ridge estimator and a stochastic-gradient-descent (SGD) estimator---under the common linear-score implementations in Sections \ref{sec::ridge-convergence} and \ref{sec::SGD}.


\paragraph{Smooth allocation families.}
We now formalize the class of smooth allocation maps used to convert action scores into action-selection probabilities. Many commonly used allocation rules, such as Boltzmann exploration, are indexed by a tuning parameter \(\gamma\). Intuitively, larger values of \(\gamma\) generally lead to smoother action-selection probabilities and less sensitivity to differences in the action scores.

\begin{definition}[Smooth allocation family]\label{def:smooth-allocation}
    Let
    \[
    \Delta_K:=\Bigl\{p\in[0,1]^K:\sum_{a=1}^K p_a=1\Bigr\},
    \qquad
    \mathrm{int}(\Delta_K):=\Bigl\{p\in(0,1)^K:\sum_{a=1}^K p_a=1\Bigr\}.
    \]
    A family of maps $\{\brho_\gamma:\mathbb R^K\to\operatorname{int}(\Delta_K)\}_{\gamma>0}$ is called a smooth allocation family if the following two conditions hold:
    \begin{itemize}
        \item[(i)] For each $\gamma>0$, \(\brho_\gamma\) is continuously differentiable. Moreover, for any fixed $R > 0$, there exists a constant $L_{\brho}(\gamma,R)$ depending only on $\gamma$ and $R$ such that
        $\sup_{\bz \in \mathbb{R}^K: \|\bz\|_2 \leq R}\|\nabla\brho_{\gamma}(\bz)\|_{\mathrm{op}} \leq L_{\brho}(\gamma, R)$, and $\lim_{\gamma \to \infty} L_{\brho}(\gamma, R) \to 0$.
        \item[(ii)] \(\brho_\gamma(\mathbf{0}) = (1/K, \ldots, 1/K)\).
    \end{itemize}
    \end{definition}
    
Here Condition~(i) controls how sharply action probabilities can change as the action scores vary; as $\gamma$ increases, this sensitivity vanishes on every bounded score region. Condition~(ii) is a natural normalization: when all actions have the same score, no action is preferred, so the allocation is uniform. 

\paragraph{Fixed minimum sampling probability.}

Unlike the MAB analysis in Section~\ref{sec::MAB-policy-convergence}, where the minimum sampling probability may decay to zero, we focus here on the fixed-floor regime with $\pi_{\min,t}\equiv \pi_{\min}>0$. In classical well-specified bandit settings, reducing exploration over time is often motivated by regret minimization: once the best action can be reliably identified, continued exploration may incur unnecessary reward loss. In our setting, however, the policy is constructed without assuming a well-specified outcome model, so the usual regret-based motivation for vanishing exploration is less decisive. Shrinking exploration does not, by itself, guarantee that the policy approaches the true optimal contextual decision rule in any way, and in addition, it may substantially reduce the information available for post-study inference. A fixed minimum sampling probability therefore provides a simple way to maintain stable exploration and preserve sample size. This design is also well aligned with practice: adaptive interventions often reserve a fixed share of users for exploration to preserve statistical power for flexible post-hoc analysis \citep{yao2021power, lauffenburger2024impact}, particularly when the analysis objective is not pre-specified at the time of data collection, and to enable policy updates and re-optimization for future users under potential non-stationarity across trials \citep{liao2020personalized, trella2025deployed, yang2024targeting}.


\paragraph{Example: Boltzmann exploration.}
A motivating example throughout this section is \emph{Boltzmann exploration}, also known as softmax or Gibbs exploration \citep{kaelbling1996reinforcement, sutton1998reinforcement, sutton1999policy, vermorel2005multi, cesa2017boltzmann}. Given a temperature parameter $\gamma>0$ and an estimated linear score $\hat\bbeta_{t-1, a}^\top \bX_t$ for each action, the Boltzmann policy selects action $a$ with probability
\begin{align}\label{eq::Boltzmann-policy-unclipped}
\pi^{\gamma}(a \mid \bX_t, \hat\bbeta_{t-1}) = \frac{\exp\left(\big\langle \hat\bbeta_{t-1, a}, \bX_t\big\rangle / \gamma\right)}{\sum_{a'\in\cA}\exp\left( \big\langle \hat\bbeta_{t-1, a'}, \bX_t\big\rangle / \gamma\right)},
\end{align}
where $\hat\bbeta_{t-1}\! \!=\!\! (\hat\bbeta_{t-1, a})_{a\in\cA}$ is a summary statistic that aggregates information from past history $\cH_{t-1}$. 

This policy corresponds to the linear action-score vector
$
\bs_{\widehat\bbeta_{t-1}}(\bX_t)
=
\bigl(\widehat\bbeta_{t-1,a}^{\top}\bX_t\bigr)_{a\in\cA}
$
and allocation map $\brho = \brho_\gamma$ satisfying
$
[\brho_\gamma(\bs)]_a
=
\frac{\exp(s_a/\gamma)}
{\sum_{a'\in\cA}\exp(s_{a'}/\gamma)}
$. The temperature $\gamma$ provides a continuous way to tune the exploration–exploitation tradeoff: $\gamma\rightarrow\infty$ recovers uniform exploration, while $\gamma\rightarrow 0$ approaches a pure greedy rule. The softmax family $\{\brho_\gamma\}_{\gamma>0}$ satisfies Definition~\ref{def:smooth-allocation}, since it is continuously differentiable, assigns uniform probabilities at equal scores, and has derivative of order $1/\gamma$ on any bounded score region.

In our setting, we additionally apply the clipping operator to the softmax probabilities to enforce a fixed minimum sampling probability. Note that when the score vector remains in a bounded region on which $\min_{a\in\cA}[\brho_\gamma(s)]_a\ge \pi_{\min}$, the clipping step does not bind, and the resulting policy coincides with the standard Boltzmann policy.

\paragraph{From statistic convergence to policy stability.}
With the minimum-sampling-probability floor in hand (from the clip~\eqref{eq::clipped-allocation}), verifying inverse-propensity stability reduces to two simpler tasks: proving convergence of the policy's summary statistic $\widehat\bbeta_t$, and checking continuity of the policy mapping at the limit. The following theorem makes this reduction precise and is the main tool for the rest of the section; its proof is in Appendix~\ref{apdx::proof-lem::statistics-converge-implies-policy-converge}.

\begin{theorem}\label{lem::statistics-converge-implies-policy-converge}
Suppose the behavior policy $\pi_t(a|\bX_t, \cH_{t-1})$ takes the form (\ref{eq::policy-statistics}). For any $a\in\cA$, as long as
\begin{itemize}
    \item[(i)] $\hat\bbeta_t\xrightarrow{p} \bbeta^*$ for some $\bbeta^*\in \RR^{d_{\beta}}$,
    \item[(ii)] $\pi(a|\bX_t, \cdot)$ is continuous at $\bbeta^*$ a.e.\ $\bX_t$,
    \item[(iii)] there exists a constant $\pi_{\min}>0$ such that $\pi_t(a\mid \bX_t,\cH_{t-1})\ge \pi_{\min}$ a.s. for all $t\ge1$.
\end{itemize}
Then the behavior policy satisfies scaled inverse-propensity convergence at action $a$ in the sense of Definition~\ref{aspt:policy_convergence}, with $r_{a, t}\equiv 1$, $\rho(a, \bx) = 1/\bar\pi(a|\bx)$, where the limit policy $\bar\pi(a|\bx) = \pi(a|\bx, \bbeta^*)$.
\end{theorem}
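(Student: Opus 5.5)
The plan is to show directly that
\[
\frac{1}{\pi(a\mid\bX_t,\hat\bbeta_{t-1})}-\frac{1}{\pi(a\mid\bX_t,\bbeta^*)}\xrightarrow{L_1}0,
\]
which is exactly Definition~\ref{aspt:policy_convergence} with $r_{a,t}\equiv 1$ and $\rho(a,\bx)=1/\bar\pi(a\mid\bx)$. We establish convergence in probability first, and then upgrade it to $L_1$ convergence using the uniform lower bound in (iii).

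\textbf{Integrability of $\rho$.} First we check that $\rho$ is integrable, so that it is an admissible limit. By (iii), $\pi(a\mid\bX_t,\hat\bbeta_{t-1})\ge\pi_{\min}$ a.s. Since $\hat\bbeta_{t-1}\xrightarrow{p}\bbeta^*$, condition (ii) gives $\pi(a\mid\bX_t,\hat\bbeta_{t-1})\xrightarrow{p}\pi(a\mid\bX_t,\bbeta^*)$ (argued below). A limit in probability of random variables bounded below by $\pi_{\min}$ is itself bounded below by $\pi_{\min}$ a.s., so $\bar\pi(a\mid\bX)\ge\pi_{\min}$ for a.e.\ $\bx$. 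Hence $0<\rho\le 1/\pi_{\min}$, and $\rho$ is bounded and integrable. Alternatively, one can obtain this lower bound pointwise from continuity at $\bbeta^*$, provided the floor holds on a neighborhood of $\bbeta^*$ visited by $\hat\bbeta$.

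\textbf{Convergence in probability.} This is the main technical point, because $\bX_t$ changes with $t$ and only continuity at the single point $\bbeta^*$ is available. The key structural fact is that $\bX_t$ is independent of $\cH_{t-1}$, hence of $\hat\bbeta_{t-1}$, and $\bX_t\sim\cP_X$ for every $t$. Fix $\epsilon>0$ and define
\[
\delta(\bx)=\sup\bigl\{\delta'\in(0,1]:\ \|\bbeta-\bbeta^*\|\le\delta'\Rightarrow|\pi(a\mid\bx,\bbeta)-\pi(a\mid\bx,\bbeta^*)|\le\epsilon\bigr\},
\]
which is positive for a.e.\ $\bx$ by (ii). Choose $\eta>0$ with $\PP(\delta(\bX)<\eta)<\epsilon$. (If measurability of $\delta$ is a concern, restrict the supremum to rational $\delta'$ and a countable dense set of $\bbeta$, assuming mild measurability of $\pi$.) Then
\[
\PP\bigl(|\pi(a\mid\bX_t,\hat\bbeta_{t-1})-\bar\pi(a\mid\bX_t)|>\epsilon\bigr)\le\PP(\delta(\bX_t)<\eta)+\PP(\|\hat\bbeta_{t-1}-\bbeta^*\|>\eta)<2\epsilon
\]
for all large $t$. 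In this step the marginal law of $\bX_t$ is fixed; independence from $\hat\bbeta_{t-1}$ is not even required, since a union bound suffices.

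\textbf{Upgrade to $L_1$.} The map $u\mapsto 1/u$ is $\pi_{\min}^{-2}$-Lipschitz on $[\pi_{\min},1]$, and both propensities lie in this interval. Therefore the difference of inverse propensities is bounded by $\pi_{\min}^{-2}$ times the difference of propensities, which converges to $0$ in probability. The difference is also uniformly bounded by $2/\pi_{\min}$, so bounded convergence (uniform integrability) yields $L_1$ convergence.

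The expected main obstacle is the convergence-in-probability step, since continuity in $\bbeta$ holds only pointwise in $\bx$ while $\bX_t$ moves with $t$. The function $\delta(\bx)$ handles this by converting pointwise continuity into a uniform statement on a high-probability set of contexts.
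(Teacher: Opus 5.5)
Your proposal is correct and is essentially the paper's proof. Your modulus-of-continuity function $\delta(\bx)$ carries the same information as the paper's sets $\cD(f_{\bx})$ and $\cB_{\epsilon,\delta}(\bx)$, and both arguments use a union bound, the time-invariance of the law of $\bX_t$, and continuity of measure (dominated convergence) as the radius shrinks, then upgrade to $L_1$ via the $\pi_{\min}^{-2}$-Lipschitz property of $u\mapsto 1/u$ on $[\pi_{\min},1]$ and bounded convergence. Your explicit check that $\bar\pi(a\mid\bx)\ge\pi_{\min}$, so that $\rho$ is bounded and integrable, spells out a step the paper states in one line.
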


\begin{remark}
   Condition~(i) only requires $\hat\bbeta_t$ to converge to a deterministic limit $\bbeta^*$, which need not correspond to any parameter of a correctly specified model nor be optimal in any sense---a feature we will exploit when handling ridge and SGD estimators below. Condition~(ii) imposes only local continuity of the policy mapping at $\bbeta^*$, rather than global continuity; even if $\hat\bbeta_t\xrightarrow{p}\bbeta^*$, a discontinuity at $\bbeta^*$ can cause irregular long-run behavior and prevent convergence. Condition~(iii) imposes a  minimum sampling probability on the policy. For the clipped smooth allocation policies of this section the floor holds by construction~\eqref{eq::clipped-allocation}. 
   
\end{remark}

\begin{remark}
As a byproduct of Theorem~\ref{lem::statistics-converge-implies-policy-converge}, the $\epsilon$-greedy policy with the weighted online least-squares estimator studied by \citet{chen2021statistical} satisfies scaled inverse-propensity convergence with $r_{a,t}\equiv 1$ under their misspecified linear-bandit setting. Thus, their behavior policy falls within our stability framework; see Appendix~\ref{apdx::convergence-epsilon-greedy-weighted-LS} for details.
\end{remark}

By Theorem~\ref{lem::statistics-converge-implies-policy-converge}, it remains only to show that $\hat\bbeta_t\xrightarrow{p} \bar\bbeta$ for some deterministic $\bar\bbeta$. The next two subsections establish this for the smooth allocation policies under the two most common summary statistics: the ridge regression estimator and a stochastic-gradient-descent (SGD) estimator. In both cases, the summary statistic is arm-specific, taking the form $\hat\bbeta_t=(\hat\bbeta_{t,a})_{a\in\cA}$ with $\hat\bbeta_{t,a}\in\mathbb R^{d}$ {(so that the stacked summary statistic lies in $\RR^{d_{\beta}}$ with $d_{\beta} := Kd$)}. In these analyses, we use the common linear action-score specification
\begin{equation}\label{eq::linear-score}
\bs_{\bbeta}(\bx) = (\bbeta_a^\top \bx)_{a\in\cA},\quad \bbeta = (\bbeta_{a})_{a\in\cA}. 
\end{equation}
This linear specification is not essential; the same arguments in the propositions below apply to any action-score map that is bounded and Lipschitz in the summary statistic on bounded regions. The smooth allocation family $\{\brho_\gamma\}_{\gamma>0}$ remains general. 

For both ridge and SGD, our analysis views the joint sequence $\{(\hat\bbeta_t, \pi_t)\}_{t\geq 1}$ as a stochastic-approximation process and exploits the fact that, for a sufficiently smooth allocation function $\brho_\gamma$ (e.g., large enough temperature $\gamma$ in Boltzmann), the induced dynamics are a contraction. Crucially, neither argument requires the linear working model $\bX_t^\top\hat\bbeta_{t, a}\approx \EE[Y_t(a)|\bX_t]$ to be correctly specified: the limit of $\hat\bbeta_t$ is a deterministic vector that need not coincide with any optimal or population parameter, in keeping with the misspecified setting of this paper.

\subsubsection{The ridge estimator}\label{sec::ridge-convergence}

The ridge regression estimator at each time $t$ is defined by $\hat \bbeta_{t}^{\Ridge} = (\hat \bbeta_{t, a}^{\Ridge})_{a\in\cA}$, where
\begin{equation}
    \hat \bbeta_{t, a}^{\Ridge} = \left(\lambda \bI + \sum_{\tau=1}^{t-1} 1_{\{A_\tau = a\}} \bX_\tau \bX_\tau^\top\right)^{-1} \left(\sum_{\tau=1}^{t-1} 1_{\{A_\tau = a\}} \bX_\tau Y_\tau\right). \label{eq:ridge-estimator}
\end{equation}



To prove the convergence of the policy under the ridge estimator, we impose the regularity conditions below.
\begin{assumption}
\label{aspt:Ridge_Conv}
There exist positive constants $M$, $\sigma_{\min}$, $R_y$, $\sigma_{\eta}$ such that (i) $\|\bX_t\|_2 \leq M$ a.s.; (ii) $\EE[\bX_t\bX_t^\top] \succeq \sigma_{\min}^2 \bI$; (iii) For every $a\in\mathcal A$, $Y_t(a)=y(\bX_t,a)+\eta_t(a)$, where $\eta_t(a)$ are independent noise with $\EE[\eta_t(a)] = 0$ and $\EE[\eta_t(a)^2] \leq \sigma_{\eta}^2$. Also, $|y(x,a)|\le R_y<\infty$ for all $\|\bx\|_2 \leq M$ and $a \in \cA$.


\end{assumption}

Under Assumption~\ref{aspt:Ridge_Conv}, the next proposition shows that a policy with smooth allocation mapping (Definition~\ref{def:smooth-allocation}) paired with the ridge estimator satisfies inverse-propensity stability, provided $\gamma$ is sufficiently large.

\begin{proposition}
\label{theorem:convergence-of-Ridge}
Suppose Assumption~\ref{aspt:Ridge_Conv} holds. Consider the clipped smooth-allocation policy~\eqref{eq::clipped-allocation} with $\pi_{\min}\in(0,1/K)$ and summary statistic $\hat\bbeta_{t-1}=\hat\bbeta_t^{\Ridge}$, where $\hat\bbeta_t^{\Ridge}$ is the ridge estimator defined in~(\ref{eq:ridge-estimator}) and the action scores are linear as in~(\ref{eq::linear-score}). Then there exists $\gamma_0>0$ such that, for every $\gamma\ge\gamma_0$, there exists a deterministic vector $\bar\bbeta_\gamma\in\mathbb R^{d_{\beta}}$ satisfying
$$
\hat \bbeta_{t}^{\Ridge} \xrightarrow{p} \bar{\bbeta}_\gamma.
$$
Consequently, the behavior policy satisfies scaled inverse-propensity convergence in Definition~\ref{aspt:policy_convergence} with $r_{a,t}\equiv1$ for every $a\in\cA$.

More specifically, it suffices that $\gamma$ be large enough such that
\begin{align}
&L_{\brho}(\gamma, R)^{-1} \geq {4\sqrt{K}M (M R_y + M\sigma_\eta + M^2)\cdot \max\left\{\dfrac{2}{\pi_{\min}\sigma_{\min}^2},\; \dfrac{4 C_\varphi}{\pi_{\min}^2\sigma_{\min}^4} \right\}},\label{eq:Ridge-Lip-condition}
\end{align}
where $R := \sqrt{K}M (\frac{2 C_\varphi}{\pi_{\min}\sigma_{\min}^2} + 1)$, $C_\varphi := M R_{y} + 4M\sigma_{\eta}\sqrt{dK}$. Such $\gamma$ exists because the right-hand side is a fixed constant while $L_{\brho}(\gamma, R) \to 0$ as $\gamma \to \infty$ by Definition~\ref{def:smooth-allocation}(i).
\end{proposition}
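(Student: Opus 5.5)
The argument has two parts. First we show that the ridge statistic converges in probability to a deterministic limit. Then we invoke Theorem~\ref{lem::statistics-converge-implies-policy-converge}. Conditions (ii) and (iii) of that theorem are immediate: the clipped map $\bbeta\mapsto[\mathrm{Clip}(\brho_\gamma(\bs_{\bbeta}(\bx));\pi_{\min})]_a$ is continuous, because $\brho_\gamma$ is $C^1$, the scores are linear, and $\mathrm{Clip}$ is continuous (Appendix~\ref{apdx::clipping}). The floor $\pi_{\min}$ holds by construction. So everything reduces to condition (i).

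For convergence we treat the ridge statistic as a stochastic-approximation iterate. For each arm, write $\hat\bbeta_{t,a}^{\Ridge}=\hat\bPhi_{t,a}^{-1}\hat\bb_{t,a}$, where $\hat\bPhi_{t,a}=\frac1t(\lambda\bI+\sum_{\tau<t}1_{\{A_\tau=a\}}\bX_\tau\bX_\tau^\top)$ and $\hat\bb_{t,a}=\frac1t\sum_{\tau<t}1_{\{A_\tau=a\}}\bX_\tau Y_\tau$. We take the hidden parameter to be the pair $(\hat\bPhi_t,\hat\bb_t)$ rather than $\hat\bbeta_t$ itself. These averages satisfy the recursion $\bz_{t+1}=\bz_t+\frac1{t}(\bh(\bz_t)-\bz_t+\bxi_t)$, where $\bxi_t$ is a martingale difference with bounded second moments (Assumption~\ref{aspt:Ridge_Conv}(i),(iii)). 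The drift is
\[
\bh_a(\bPhi,\bb)=\Bigl(\EE\bigl[\bar\pi(a\mid\bX,\bPhi^{-1}\bb)\bX\bX^\top\bigr],\ \EE\bigl[\bar\pi(a\mid\bX,\bPhi^{-1}\bb)\bX y(\bX,a)\bigr]\Bigr),
\]
with $\bar\pi$ the clipped allocation. The floor gives $\bPhi_a\succeq\pi_{\min}\sigma_{\min}^2\bI$ in the limit region. It also gives the a priori bound $\|\bPhi_a^{-1}\bb_a\|\lesssim C_\varphi/(\pi_{\min}\sigma_{\min}^2)$, with the noise part handled via its second moment and $\sqrt{dK}$ coordinates. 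As a result the scores stay in the ball of radius $R$.

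On the resulting invariant set we show that $\bz\mapsto\bh(\bz)$ is a contraction. Its Lipschitz constant is at most $L_{\brho}(\gamma,R)$, since clipping is $1$-Lipschitz, times the Lipschitz constant of $\bz\mapsto\bs_{\bPhi^{-1}\bb}(\bX)$. That constant is bounded by $M$ times $\|\bPhi^{-1}\|(1+\|\bPhi^{-1}\|\,\|\bb\|)$ terms, which produces the two entries of the max in \eqref{eq:Ridge-Lip-condition}. Under \eqref{eq:Ridge-Lip-condition} the constant is at most $1/2$. Banach's theorem then gives a unique fixed point $\bz^*=(\bPhi^*,\bb^*)$, and we set $\bar\bbeta_\gamma=(\bPhi_a^{*-1}\bb_a^*)_a$.

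Convergence follows from a direct contraction argument in place of ODE machinery. Set $e_t=\|\bz_t-\bz^*\|$. Then
\[
e_{t+1}\le(1-\tfrac1t)e_t+\tfrac1t\cdot\tfrac12 e_t+\tfrac1t\|\bxi_t\|\text{-type terms},
\]
and combining this with a weighted martingale SLLN for $\sum_\tau\frac1\tau\bxi_\tau$ gives $e_t\to0$ in probability. Continuity of $(\bPhi,\bb)\mapsto\bPhi^{-1}\bb$ then yields $\hat\bbeta_t^{\Ridge}\xrightarrow{p}\bar\bbeta_\gamma$.

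The main obstacle is the invariance and a priori boundedness. The contraction constant only holds while the scores stay in $\{\|\bs\|\le R\}$ and $\hat\bPhi_t$ is well conditioned. However, early in the process $\hat\bPhi_t$ is only $\succeq\lambda\bI/t$. We plan to handle this by first showing that $\lambda_{\min}(\hat\bPhi_{t,a})\ge\pi_{\min}\sigma_{\min}^2/2$ eventually with high probability, using a matrix martingale concentration argument driven by the floor. On that event we bound $\|\hat\bbeta_{t,a}\|$ by the constant that yields $R$. The contraction recursion is then run only from a random but a.s. finite time onward, and the remaining error is absorbed with a high-probability argument.
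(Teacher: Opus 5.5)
Your plan matches the paper's proof in every structural step except the last. It uses the same hidden statistics, namely the normalized Gram matrices $\Phi_{t,a}$ and cross-moments $\varphi_{t,a}$ in place of $\hat\beta_t$. It uses the same a.s.\ eventual confinement to $\{\lambda_{\min}(\Phi_a)\ge\frac12\pi_{\min}\sigma_{\min}^2,\ \|\varphi_a\|_2\le C_\varphi\}$, obtained from matrix Azuma under the floor plus a maximal inequality for the noise. It uses the same contraction of the mean-field map on that self-mapped set, and Banach's theorem for the fixed point.

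The genuine difference is the final convergence step. The paper stops the error at the first exit from the confinement set and applies Robbins--Siegmund to its squared norm. You argue pathwise instead. That works, but not with the inequality as you wrote it, because a term $\frac1t\|\xi_t\|$ has nonzero mean and does not vanish on averaging.

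You need to remove the noise in vector form first:
\begin{itemize}
\item Define $W_{t+1}=(1-\frac1t)W_t+\frac1t\xi_t$, so $W_t$ is a running average of martingale differences.
\item Their conditional second moments are bounded globally, since $\pi\le1$, $\|X_t\|_2\le M$, $|y|\le R_y$ and $\mathbb{E}\eta^2\le\sigma_\eta^2$. Hence $W_t\to0$ a.s.\ by Kronecker's lemma.
\item After the confinement time, $u_t=z_t-z^*-W_t$ satisfies $\|u_{t+1}\|\le(1-\frac{1-\kappa}{t})\|u_t\|+\frac{\kappa}{t}\|W_t\|$, which forces $u_t\to0$.
\end{itemize}

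This route gives a.s.\ convergence directly from a random confinement time, with no stopping times and no Robbins--Siegmund. It does, however, rely on the noise bound holding off the confinement set. The paper's squared-norm argument needs the bound only along the stopped iterates. That is why the paper's argument carries over unchanged to the SGD case, where the noise grows with $\|\beta\|$.
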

The proof of Proposition \ref{theorem:convergence-of-Ridge} is in Appendix \ref{apdx::proof-thm::convergence-of-Ridge}. The key technical challenge is to identify a quantity tied to $\hat \bbeta_{t}^{\Ridge}$ whose evolution becomes a contraction when $L_{\brho}$ is small. We then apply tools from stochastic approximation theory to show the convergence of $\hat \bbeta_{t}^{\Ridge}$ to a fixed point of the underlying dynamics. 

\paragraph{Convergence considerations for ridge under alternative policies.} Many online contextual bandit algorithms, such as LinUCB \citep{chu2011contextual} and Thompson Sampling \citep{russo2018tutorial}, rely on the ridge estimator as summary statistics for their behavior policy. Proposition \ref{theorem:convergence-of-Ridge} suggests that the ridge estimator converges with data collected under smooth allocation functions with sufficiently small Lipschitz constant (e.g., Boltzmann exploration with large temperature). This conclusion need not hold for other policies. For example, if the action-selection mapping is too steep or noncontinuous, the induced dynamics may not be a contraction, and the sequence $\{\hat \bbeta_{t}^{\Ridge}\}_{t\geq 1}$ may not converge (see the LinUCB example in Section \ref{sec::example-policy-nonconvergence}). Appendix \ref{apdx::ridge-nonconvergence-with-no-fixed-point} provides a rigorous argument for one such mechanism: nonconvergence occurs if the policy mapping induces no fixed point in the dynamics of $\{\hat \bbeta_{t}^{\Ridge}\}_{t\geq 1}$, i.e., there exists an action $\bar a\in\cA$ so that no $\bbeta$ satisfies
\begin{align}
    \bm{\beta}_{\bar a}=\bSigma_{\bar a}^{-1}(\bm{\beta}) \bm{\varphi}_{\bar a}(\bm{\beta}).
\end{align}
Here $\bSigma_{a}(\bm{\beta}) = \EE_{\bX_t, A_t \sim \pi(\cdot \mid \bX_t, \bm{\beta})}[ 1_{\{A_t = a\}} \bX_t \bX_t^\top]$,  $\bm{\varphi}_a(\bm{\beta}) = \EE_{\bX_t, A_t\sim \pi(\cdot \mid \bX_t, \bm{\beta}), Y_t}[ 1_{\{A_t = a\}} \bX_t Y_t ]$. Guided by these considerations, in Section \ref{sec::simulation}, we present a number of challenging environments where common behavior policies that use $\{\hat \bbeta_{t}^{\Ridge}\}_{t\geq 1}$ as summary statistics fail to converge.

\subsubsection{Stochastic gradient descent}\label{sec::SGD}

A second natural choice of summary statistic for a smooth allocation policy is a stochastic gradient descent (SGD) estimator, which is appealing whenever fully refitting a ridge regression at every step is too expensive or when the working model goes beyond linear regression (e.g., logistic regression or a neural-network loss). Let $\hat \bbeta_{t}^{\SGD} = (\hat \bbeta_{t, a}^{\SGD})_{a\in\cA}$ denote the SGD estimator, where for each $a\in\cA$, $\hat \bbeta_{t, a}^{\SGD}$ admits the update rule:
\begin{align}\label{eq::SGD-update}
    \hat \bbeta_{t, a}^{\SGD} = \hat \bbeta_{t-1, a}^{\SGD} + \eta_t 1_{\{A_t = a\}}\bh\left(\bX_t, Y_t; \hat \bbeta_{t-1, a}^{\SGD}\right),
\end{align}
where $\eta_t$ is the learning rate at time $t$ and $\bh: \RR^d \times \RR \to \RR^d$ is a function parameterized by $\hat \bbeta_{t, a}^{\SGD}$. The role of $\bh$ is to specify the stochastic update direction, and in practice it can take the form of the gradient of a loss function (e.g., squared loss in linear models, logistic loss in classification, or a general neural-network loss function). This SGD estimator is widely used in practice especially when complex models such as neural networks are involved in decision-making \citep{zhou2020neural}. 



In this section, we study the stochastic convergence of the SGD estimator via stochastic approximation theory in Proposition \ref{theorem:convergence-of-SGD}. The proof of Proposition \ref{theorem:convergence-of-SGD} is given in Appendix \ref{apdx::proof-thm::convergence-of-SGD}. Proposition \ref{prop::SGD-boundedness} further demonstrates that the requirements of Proposition \ref{theorem:convergence-of-SGD} are satisfied in a broad class of settings.

\begin{proposition}
    \label{theorem:convergence-of-SGD}
    Consider the clipped smooth-allocation policy~\eqref{eq::clipped-allocation} with $\pi_{\min}\in(0,1/K)$ and summary statistic $\hat\bbeta_t=\hat\bbeta_t^{\SGD}$, where $\hat\bbeta_t^{\SGD}$ is the SGD estimator defined recursively from~(\ref{eq::SGD-update}) with $\sum_{t} \eta_t = \infty$, $\sum_{t} \eta_t^2 < \infty$, and the action scores are linear as in~(\ref{eq::linear-score}). Write $\tilde\bh_a(\bbeta):=\EE\big[\pi(a\mid\bX_t,\bbeta)\,\bh(\bX_t, Y_t(a);\bbeta_a)\big]$ and $\tilde\bh:=(\tilde\bh_a)_{a\in\cA}$, and assume:
    \begin{itemize}
        \item[(i)] There is a deterministic constant $R_{\bbeta}<\infty$ such that $\limsup_{t\to\infty}\|\hat \bbeta_{t, a}^{\SGD}\|_2 < R_{\bbeta}$ almost surely, for all $a \in \cA$.
        \item[(ii)] There is a constant $\rho_{\bh}$ such that $\EE[\|\bh(\bX_t, Y_t(a); \bu)\|_2^2] \leq \rho_{\bh} (1+\|\bu\|_2^2) < \infty$ for all $\|\bu\|_2 \leq R_{\bbeta}$ and $a \in \cA$.
        \item[(iii)] $\bu\mapsto\bh(\bx, y;\bu)$ is $L_{\bh}$-Lipschitz, uniformly over $(\bx, y)\in\cX\times\RR$.
        \item[(iv)] \emph{(Dissipativity)} There is $\mu>0$ such that $\langle\bbeta'-\bbeta'',\,\tilde\bh(\bbeta')-\tilde\bh(\bbeta'')\rangle\le -\mu\|\bbeta'-\bbeta''\|_2^2$ for all $\bbeta', \bbeta''\in\cB_{\bbeta}: = \{\bbeta = (\bbeta_a)_{a\in\cA}: \max_a\|\bbeta_a\|_2\leq R_{\bbeta}\}$; the o.d.e.\ $\dot\bbeta=\tilde\bh(\bbeta)$ has an equilibrium $\bar\bbeta_\gamma\in \cB_{\bbeta}$.
    \end{itemize}
    Then there exists a deterministic vector $\bar{\bbeta}_\gamma\in\RR^{d_\beta}$ satisfying
    \[
        \hat \bbeta_{t}^{\SGD} \xrightarrow{p} \bar{\bbeta}_\gamma,
    \]
    and the behavior policy satisfies scaled inverse-propensity convergence in Definition \ref{aspt:policy_convergence}. 
\end{proposition}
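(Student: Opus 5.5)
The argument has two parts. First we show almost-sure convergence of the SGD iterates to the equilibrium $\bar\bbeta_\gamma$ using the ODE method of stochastic approximation. Then Theorem~\ref{lem::statistics-converge-implies-policy-converge} converts this into inverse-propensity stability.

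\textbf{Step 1: noise decomposition.} Stack the per-arm updates~(\ref{eq::SGD-update}) as $\hat\bbeta_t=\hat\bbeta_{t-1}+\eta_t\,\bH_t$ with
\[
\bH_t:=\bigl(1_{\{A_t=a\}}\bh(\bX_t,Y_t;\hat\bbeta_{t-1,a})\bigr)_{a\in\cA}.
\]
By Assumption~\ref{aspt:unconfoundedness}, the i.i.d.\ structure, and the form~(\ref{eq::policy-statistics}), the same calculation as in~(\ref{eq::main-mtg}) (without the weight) gives $\EE[\bH_t\mid\cH_{t-1}]=\tilde\bh(\hat\bbeta_{t-1})$. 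Hence $\hat\bbeta_t=\hat\bbeta_{t-1}+\eta_t\{\tilde\bh(\hat\bbeta_{t-1})+\bM_t\}$, where $\bM_t$ is a martingale difference sequence. By condition (ii), $\EE[\|\bM_t\|_2^2\mid\cH_{t-1}]\le C(1+\|\hat\bbeta_{t-1}\|_2^2)$ whenever $\hat\bbeta_{t-1}\in\cB_{\bbeta}$. Condition (i) says the iterates eventually stay in a bounded set almost surely. Localizing on the events $\{\sup_{s\ge t_0}\|\hat\bbeta_s\|\le R_{\bbeta}\}$, whose probabilities tend to one, together with $\sum_t\eta_t^2<\infty$, shows that $\sum_t\eta_t\bM_t$ converges almost surely.

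\textbf{Step 2: ODE argument.} We need $\tilde\bh$ to be Lipschitz on $\cB_{\bbeta}$. Write
\[
\tilde\bh_a(\bbeta)-\tilde\bh_a(\bbeta')=\EE\bigl[\pi(a\mid\bX,\bbeta)\{\bh(\bX,Y(a);\bbeta_a)-\bh(\bX,Y(a);\bbeta'_a)\}\bigr]+\EE\bigl[\{\pi(a\mid\bX,\bbeta)-\pi(a\mid\bX,\bbeta')\}\,\bh(\bX,Y(a);\bbeta'_a)\bigr].
\]
The first term is controlled by condition (iii). For the second term, $\mathrm{Clip}\circ\brho_\gamma$ is Lipschitz on bounded score sets by Definition~\ref{def:smooth-allocation}(i). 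The linear scores are Lipschitz in $\bbeta$ when $\bX$ is bounded; if $\bX$ is not bounded, we use a Cauchy--Schwarz bound with the moment in (ii). This makes the second term Lipschitz as well. The Kushner--Clark / Borkar ODE theorem then shows that $\hat\bbeta_t$ tracks the trajectories of $\dot\bbeta=\tilde\bh(\bbeta)$.

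Dissipativity (iv) makes $V(\bbeta)=\|\bbeta-\bar\bbeta_\gamma\|_2^2$ a strict Lyapunov function on $\cB_{\bbeta}$, since $\dot V\le-2\mu V$ using $\tilde\bh(\bar\bbeta_\gamma)=0$. It also makes $\bar\bbeta_\gamma$ the unique equilibrium in $\cB_{\bbeta}$, and hence globally asymptotically stable there. We conclude $\hat\bbeta_t\to\bar\bbeta_\gamma$ almost surely, and therefore in probability.

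An alternative that avoids the ODE machinery is a direct Robbins--Siegmund argument. Expanding gives
\[
\EE\bigl[\|\hat\bbeta_t-\bar\bbeta_\gamma\|^2\mid\cH_{t-1}\bigr]\le(1-2\mu\eta_t+C\eta_t^2)\,\|\hat\bbeta_{t-1}-\bar\bbeta_\gamma\|^2+C\eta_t^2,
\]
valid while the iterates remain in $\cB_{\bbeta}$. This yields the same conclusion. I would present this version, localized via condition (i).

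\textbf{Step 3: policy stability.} Apply Theorem~\ref{lem::statistics-converge-implies-policy-converge} with $\bbeta^*=\bar\bbeta_\gamma$. Condition (i) of that theorem is Step 2. Condition (ii) holds because $\bbeta\mapsto\mathrm{Clip}(\brho_\gamma((\bbeta_a^\top\bx)_a);\pi_{\min})$ is continuous for every $\bx$: $\brho_\gamma$ is $C^1$ and the clipping map in Appendix~\ref{apdx::clipping} is continuous. Condition (iii) holds because the clip guarantees $\pi_t(a)\ge\pi_{\min}$. This gives scaled inverse-propensity convergence with $r_{a,t}\equiv1$ and $\rho(a,\bx)=1/\pi(a\mid\bx,\bar\bbeta_\gamma)$.

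\textbf{Main obstacle.} The localization in Step 1 and the Robbins--Siegmund version of Step 2 are the delicate parts. Condition (i) bounds only the $\limsup$ of the iterates, with no uniform-in-time control, and dissipativity holds only on $\cB_{\bbeta}$. I would handle this by working on the events $\Omega_{t_0}=\{\|\hat\bbeta_{s,a}\|_2<R_{\bbeta}\ \forall s\ge t_0,\ a\in\cA\}$, whose probabilities increase to one. On $\Omega_{t_0}$, I would run the supermartingale argument with stopped processes, so that the stopped iterates stay in $\cB_{\bbeta}$ where (ii) and (iv) apply.
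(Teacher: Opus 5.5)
Your proposal is correct, and the Robbins--Siegmund version you say you would present is essentially the paper's proof: the paper also writes the update in Robbins--Monro form with mean field $\tilde\bh$, localizes via condition (i) with the stopping time $\sigma_{t_0}=\inf\{t\ge t_0:\hat\bbeta_t^{\SGD}\notin\cB_{\bbeta}\}$, applies Robbins--Siegmund to the stopped error $\|\be_{t\wedge\sigma}\|_2^2$ with dissipativity supplying the $-2\mu\eta_t$ drift, and concludes via Theorem~\ref{lem::statistics-converge-implies-policy-converge}. The ODE detour is not needed, and your aside on unbounded $\bX$ is not justified by Definition~\ref{def:smooth-allocation}, which controls $\nabla\brho_\gamma$ only on bounded score sets; the paper, like your main line, relies on $\|\bX_t\|_2\le M$.
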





Proposition~\ref{theorem:convergence-of-SGD} gives a general convergence criterion for SGD-based smooth-allocation policies. Its conditions are standard in stochastic approximation: the stepsize conditions control stochastic noise, the eventual boundedness condition keeps the iterates in a deterministic region, the growth and Lipschitz assumptions ensure regularity of the updates, and the dissipativity condition gives a stable limiting mean field with a unique equilibrium in that region \citep{borkar2008stochastic, borkar2000ode, tsitsiklis1996analysis, bhandari2018finite}. These conditions hold in many practical settings. A common case arises when the policy statistics are intended to approximate target parameters $\{\btheta_a^*\}_{a\in\cA}$ satisfying (\ref{eq::theta-a-*}) for some score function $\bg$. In this scenario, it is natural to apply the update rule (\ref{eq::SGD-update}) with $\bh(\bx, y;\bbeta_a)=\bg(\bx, y;\bbeta_a)$. Proposition \ref{prop::SGD-boundedness} below verifies that all these conditions are met for the score functions $\bg$ in Examples \ref{ex::misspecified-linear-bandits}, \ref{ex::bandits-noisy-contexts}, and \ref{ex::ope}. The proof, which combines an inward-drift argument for a.s.\ eventual confinement with the affine structure of the example scores, is provided in Appendix \ref{appx::prop::SGD-boundedness}.

\begin{proposition}\label{prop::SGD-boundedness}
    Suppose the stepsizes satisfy $\sum_{t} \eta_t = \infty$ and $\sum_{t} \eta_t^2 < \infty$, Assumption \ref{aspt:Ridge_Conv} holds, and the data are collected by the clipped smooth-allocation policy~\eqref{eq::clipped-allocation} with global floor $\pi_{\min}\in(0,1/K)$. Let $\bh$ be the score function $\bg$ of Example \ref{ex::misspecified-linear-bandits}, \ref{ex::bandits-noisy-contexts}, or \ref{ex::ope}. For example \ref{ex::bandits-noisy-contexts}, additionally assume that 
    $$
    \inf_{a\in\mathcal A}\inf_{\beta}\lambda_{\min}\left\{ \EE\left[\pi^\rho_\gamma(a\mid \bX,\bbeta)(\bX\bX^\top-\bSigma_e)\right]\right\}\ge c_e
    $$
    for a positive constant $c_e$ for sufficiently large $\gamma$ \footnote{A clean sufficient condition is $\pi_{\min}\bSigma_S\succ(1-\pi_{\min})\bSigma_e$, i.e.\ the measurement-error covariance $\bSigma_e$ is small relative to $\bSigma_S$.}. Then for each example, there exist deterministic constants $R_{\bbeta}$ and $\gamma_0$ such that, with $\cB_{\bbeta}=\{\bbeta=(\bbeta_a)_{a\in\mathcal A}: \max_a\|\bbeta_a\|_2\le R_{\bbeta}\}$, for every $\gamma\ge \gamma_0$, there exists an equilibrium $\bar\bbeta_\gamma\in \cB_{\bbeta}$ and all conditions of Proposition \ref{theorem:convergence-of-SGD} hold. Consequently $\hat\bbeta_{t,a}^{\SGD}\xrightarrow{p}\bar\bbeta_{\gamma, a}$ for all $a\in\cA$, and the behavior policy satisfies scaled inverse-propensity convergence (Definition \ref{aspt:policy_convergence}).
\end{proposition}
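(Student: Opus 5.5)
The plan is to verify conditions (i)--(iv) of Proposition~\ref{theorem:convergence-of-SGD} for each of the three affine scores, and then invoke that proposition together with Theorem~\ref{lem::statistics-converge-implies-policy-converge}. In all three examples the score is affine in $\bbeta_a$:
\[
\bh(\bx,y;\bbeta_a)=\bb(\bx,y)-\bH(\bx)\bbeta_a ,
\]
with
\begin{itemize}
\item[] $\bH(\bx)=\bx\bx^\top$, $\bb=\bx y$ for Example~\ref{ex::misspecified-linear-bandits};
\item[] $\bH(\bx)=\bx\bx^\top-\bSigma_e$, $\bb=\bx y$ for Example~\ref{ex::bandits-noisy-contexts};
\item[] $\bH=1$, $\bb=\pi^e(a\mid \bx)y$ for Example~\ref{ex::ope}.
\end{itemize}
Under Assumption~\ref{aspt:Ridge_Conv} we have $\|\bX\|_2\le M$, $|y(\bx,a)|\le R_y$ and $\EE\eta^2\le\sigma_\eta^2$. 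Hence condition (iii) holds with $L_{\bh}=\sup_{\bx}\|\bH(\bx)\|\le M^2+\|\bSigma_e\|$, and condition (ii) holds with $\rho_{\bh}$ of order $M^2(R_y^2+\sigma_\eta^2)+L_{\bh}^2$.

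The mean field is $\tilde\bh_a(\bbeta)=\bphi_a(\bbeta)-\bSigma_a(\bbeta)\bbeta_a$, where
\[
\bSigma_a(\bbeta)=\EE[\pi(a\mid\bX,\bbeta)\bH(\bX)],\qquad \bphi_a(\bbeta)=\EE[\pi(a\mid\bX,\bbeta)\bb].
\]
For Examples~\ref{ex::misspecified-linear-bandits} and~\ref{ex::ope}, the clip gives $\bSigma_a\succeq \pi_{\min}\sigma_{\min}^2\bI$ (respectively $\succeq\pi_{\min}$). For Example~\ref{ex::bandits-noisy-contexts}, the assumed lower bound gives $\bSigma_a\succeq c_e\bI$. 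Let $c$ denote the resulting positive constant.

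\emph{Step 1: confinement, condition (i).} Set $R_{\bbeta}=2C/c$, where $C\ge\sup_{\bbeta}\|\bphi_a(\bbeta)\|_2$ (for instance $C=M R_y$, or $R_y$ for OPE). For $\|\bbeta_a\|_2\ge R_{\bbeta}/2$ this gives the inward drift
\[
\langle\bbeta_a,\tilde\bh_a(\bbeta)\rangle\le C\|\bbeta_a\|_2-c\|\bbeta_a\|_2^2\le -\tfrac c2\|\bbeta_a\|_2^2 .
\]
Consider the Lyapunov function $V_t=\|\hat\bbeta_{t,a}\|_2^2$. Expanding the update gives
\[
V_t=V_{t-1}+2\eta_t1_{\{A_t=a\}}\langle\hat\bbeta_{t-1,a},\bh\rangle+\eta_t^2\|\bh\|_2^2 .
\]
The conditional expectation of the middle term is $2\eta_t\langle\hat\bbeta_{t-1,a},\tilde\bh_a(\hat\bbeta_{t-1})\rangle$, and the last term is at most $\eta_t^2\rho'(1+V_{t-1})$. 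By Robbins--Siegmund, $V_t$ converges a.s. and $\sum_t\eta_t(V_{t-1}-R_{\bbeta}^2/4)_+<\infty$. Together with $\sum\eta_t=\infty$, this forces $\limsup_t V_t\le R_{\bbeta}^2/4<R_{\bbeta}^2$. Some care is needed because $\langle\bbeta_a,\tilde\bh_a\rangle$ is only bounded above by a constant inside the ball, not by a negative quantity. I would handle this with the standard truncated supermartingale version, noting that increments inside the ball are $O(\eta_t)$ with a square-summable noise part.

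\emph{Step 2: dissipativity, condition (iv), the main obstacle.} For $\bbeta',\bbeta''\in\cB_{\bbeta}$ I would write
\[
\tilde\bh_a(\bbeta')-\tilde\bh_a(\bbeta'')=-\bSigma_a(\bbeta')(\bbeta'_a-\bbeta''_a)+\bigl[\bphi_a(\bbeta')-\bphi_a(\bbeta'')\bigr]-\bigl[\bSigma_a(\bbeta')-\bSigma_a(\bbeta'')\bigr]\bbeta''_a .
\]
The first term contributes at most $-c\|\bbeta'_a-\bbeta''_a\|_2^2$. The policy maps $\bbeta$ to the clipped allocation of $\brho_\gamma(\bs_{\bbeta}(\bx))$. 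On $\cB_{\bbeta}$ the scores are bounded by $R:=\sqrt K MR_{\bbeta}$, and $\mathrm{Clip}$ is $1$-Lipschitz (appendix definition). Therefore
\[
|\pi(a\mid\bx,\bbeta')-\pi(a\mid\bx,\bbeta'')|\le L_{\brho}(\gamma,R)\sqrt K M\|\bbeta'-\bbeta''\|_2 .
\]
Consequently the second and third terms are bounded by
\[
L_{\brho}(\gamma,R)\sqrt K M\,(C_b+L_{\bh}R_{\bbeta})\,\|\bbeta'-\bbeta''\|_2 ,
\]
where $C_b\ge\EE\|\bb\|_2$. Summing over $a$ and applying Cauchy--Schwarz,
\[
\langle\bbeta'-\bbeta'',\tilde\bh(\bbeta')-\tilde\bh(\bbeta'')\rangle\le-\bigl(c-K^{3/2}ML_{\brho}(C_b+L_{\bh}R_{\bbeta})\bigr)\|\bbeta'-\bbeta''\|_2^2 .
\]
The key point is that $R_{\bbeta}$ and $c$ do not depend on $\gamma$. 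Definition~\ref{def:smooth-allocation}(i) then lets us choose $\gamma_0$ with $\mu:=c/2>0$ for all $\gamma\ge\gamma_0$. For Example~\ref{ex::bandits-noisy-contexts}, this also requires $\gamma_0$ to be large enough that the $c_e$ hypothesis applies.

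\emph{Step 3: existence of an equilibrium.} Consider the map $T(\bbeta)=(\bSigma_a(\bbeta)^{-1}\bphi_a(\bbeta))_a$. It sends $\cB_{\bbeta}$ into itself, since each component has norm at most $C/c<R_{\bbeta}$, and it is continuous. Brouwer's theorem therefore gives a fixed point, which is a zero of $\tilde\bh$ in $\cB_{\bbeta}$; by dissipativity this zero is unique. With (i)--(iv) verified, Proposition~\ref{theorem:convergence-of-SGD} yields $\hat\bbeta^{\SGD}_t\to\bar\bbeta_\gamma$ in probability. Since $\pi(a\mid\bx,\cdot)$ is continuous and floored at $\pi_{\min}$, Theorem~\ref{lem::statistics-converge-implies-policy-converge} then gives scaled inverse-propensity convergence.
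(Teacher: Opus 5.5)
Your proposal follows the paper's proof essentially step for step. The ingredients match: the affine form $\bb-\bH\bbeta_a$, the inward drift $\langle\bbeta_a,\tilde\bh_a(\bbeta)\rangle\le C\|\bbeta_a\|_2-c\|\bbeta_a\|_2^2$ for confinement, the same three-term decomposition for dissipativity with $L_{\brho}(\gamma,R)\to0$ at a $\gamma$-independent radius, and Brouwer plus dissipativity for existence and uniqueness of the equilibrium.

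The one place you depart from the paper is confinement. The paper just invokes the Lyapunov-drift stability theorem of Borkar's book (Ch.~3); you try to prove it directly, and as written that step does not go through. Robbins--Siegmund cannot be applied to $V_t=\|\hat\bbeta_{t,a}\|_2^2$. Inside the ball, the drift $2\eta_t\langle\hat\bbeta_{t-1,a},\tilde\bh_a(\hat\bbeta_{t-1})\rangle$ is only controlled by the positive bound $\eta_t C^2/(2c)$, and this bound is not summable because $\sum_t\eta_t=\infty$. So neither the a.s.\ convergence of $V_t$ nor $\sum_t\eta_t(V_{t-1}-R_{\bbeta}^2/4)_+<\infty$ follows. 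A smaller slip: $C\|\bbeta_a\|_2-c\|\bbeta_a\|_2^2\le-\tfrac c2\|\bbeta_a\|_2^2$ needs $\|\bbeta_a\|_2\ge 2C/c$, not $C/c$. This is harmless, since only strict negativity outside radius $C/c$ is used.

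You do not need an excursion or truncation argument. Replace $V_t$ by the squared distance to a ball, $\Psi(\bu)=(\|\bu\|_2-r)_+^2$ with $r>C/c$. This function is $C^1$ with $2$-Lipschitz gradient $2(\|\bu\|_2-r)_+\bu/\|\bu\|_2$, and the gradient vanishes inside the ball. Combining the descent lemma, the inward drift, and $\|\bu\|_2^2\le 2\Psi(\bu)+2r^2$ gives
\[
\EE\bigl[\Psi(\hat\bbeta_{t,a})\mid\cH_{t-1}\bigr]\le(1+2\rho_{\bh}\eta_t^2)\,\Psi(\hat\bbeta_{t-1,a})+\rho_{\bh}(1+2r^2)\eta_t^2-2(cr-C)\,\eta_t\bigl(\|\hat\bbeta_{t-1,a}\|_2-r\bigr)_+ .
\]
Robbins--Siegmund now applies. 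It gives that $\Psi(\hat\bbeta_{t,a})$ converges a.s.\ and that $\sum_t\eta_t(\|\hat\bbeta_{t-1,a}\|_2-r)_+<\infty$. Together with $\sum_t\eta_t=\infty$, this forces the limit to be $0$, i.e.\ $\limsup_t\|\hat\bbeta_{t,a}\|_2\le r$ a.s.\ for every $r>C/c$.

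This only uses the growth bound $\EE\|\bh(\bX,Y(a);\bu)\|_2^2\le\rho_{\bh}(1+\|\bu\|_2^2)$. For the affine scores that bound holds for all $\bu$, not just on $\cB_{\bbeta}$. With this patch your argument is complete and more self-contained than the paper's citation.
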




\section{Simulation Studies}
\label{sec::simulation}

In this section, we conduct simulation studies to evaluate the proposed inference methods under three previously introduced inference targets: Target \ref{ex::misspecified-linear-bandits} (misspecified linear bandits with score function (\ref{eq::target-parameter-misspecified-linear-bandits})), Target \ref{ex::bandits-noisy-contexts} (linear bandits with noisy contexts with score function (\ref{eq::target-parameter-bandits-noisy-contexts})), and Target \ref{ex::ope} (off-policy evaluation with score function (\ref{eq::target-parameter-ope})).

For Target \ref{ex::bandits-noisy-contexts}, we design three simulation environments: \texttt{NC-Hard1}, \texttt{NC-Hard2}, \texttt{NC-Gaussian}. In these environments, the reward function w.r.t. the underlying true context is linear, and the model misspecification arises from noisy context. Specifically, \texttt{NC-Hard1}, \texttt{NC-Hard2} are deliberately designed challenging environments so that the ridge estimator either oscillates or tends to multiple limits under common behavior policies such as LinUCB. In \texttt{NC-Gaussian}, all variables and parameters are jointly Gaussian. For Targets \ref{ex::misspecified-linear-bandits} and \ref{ex::ope}, in addition to the three environments above, we also include two environments \texttt{MS-Polynomial} and \texttt{MS-Neural}, where the potential outcomes depend directly on the contexts through complex non-linear relationships: a polynomial reward function in \texttt{MS-Polynomial} and a neural network reward function in \texttt{MS-Neural}. 

We evaluate the inference methods using datasets generated by four distinct data-collection algorithms: pure random selection (\texttt{Random}), softmax (Boltzmann) exploration with the ridge estimator (\texttt{ridge + Softmax}), an $\varepsilon$-greedy algorithm with empirical mean reward of each arm and decaying minimum sampling probability $\pi_{\min,t} = 1/\sqrt{t+1}$ (\texttt{MAB-EG}), and softmax (Boltzmann) exploration with the stochastic gradient descent estimator (\texttt{SGD + Softmax}). Throughout, we use linear working models in the behavior policies, despite the true mean reward function $y(\bx,a) \coloneqq \EE[Y_t(a) \mid \bX_t = \bx]$ being non-linear, allowing us to assess the robustness of the inference methods under misspecifications. These algorithms are shown to have policy convergence in Section \ref{sec::policy-convergence}. The two smooth allocation policies, \texttt{ridge + Softmax} and \texttt{SGD + Softmax}, correspond to the Boltzmann exploration policies analyzed in Section~\ref{sec::policy-convergence}, with ridge and SGD used as the policy summary statistics, respectively. Both policies use a fixed inverse temperature $\gamma = 1$ and are clipped at a fixed floor $\pi_{\min} = 0.05$, matching the clipped smooth-allocation construction in Section~\ref{sec::policy-convergence}. In contrast, \texttt{MAB-EG} uses the decaying floor $\pi_{\min,t} = 1/\sqrt{t+1}$. The clipping prevents the propensities from becoming too small and is particularly important for \texttt{SGD + Softmax}, whose unregularized stochastic-gradient updates could otherwise saturate the softmax and drive some action probabilities toward zero.

\subsection{Results for Targets \ref{ex::misspecified-linear-bandits} and \ref{ex::bandits-noisy-contexts}}

We first examine the empirical coverage of nominal confidence intervals (95\%, 90\%, 80\%, 70\%, 60\%, and 50\%) for inference target \ref{ex::misspecified-linear-bandits} across the five environments, based on 2{,}500 Monte Carlo simulations. 
The full environment and algorithm settings are deferred to Appendix~\ref{apdx::simulation-details}. Figure \ref{fig::results-misspecified-linear-bandits} shows that the proposed inference method consistently achieves the desired coverage across all algorithms and environments.
The clipped smooth allocation policies (\texttt{ridge + Softmax}, \texttt{SGD + Softmax}) and \texttt{Random} are near nominal in every environment (their $95\%$ coverage lies between $0.93$ and $0.95$). The main deviations occur for \texttt{MAB-EG}, which uses a decaying exploration floor and therefore has heavier inverse-propensity weights. In \texttt{NC-Gaussian} and \texttt{MS-Polynomial}, \texttt{MAB-EG} mildly undercovers at the $95\%$ level, reflecting slower convergence of the normal approximation; in \texttt{NC-Hard1}, it is conservative, due to wider intervals from the plug-in variance estimator.
Similar trends appear for Target \ref{ex::bandits-noisy-contexts} in the three noisy context environments (Figure \ref{fig::results-bandits-noisy-contexts}, panels (a) and (b)).

We also track the variance of the IPW-Z estimator across algorithms and environments; the variances stabilize at comparable levels, with the smallest variances attained by the algorithms that keep action selection probabilities bounded away from zero, consistent with Section \ref{sec::boltzmann-policy-convergence}. We defer the details to Appendix \ref{apdx::simulation-details::additional-results}.

\begin{figure}[tb]
    \centering
    \includegraphics[width=1\textwidth]{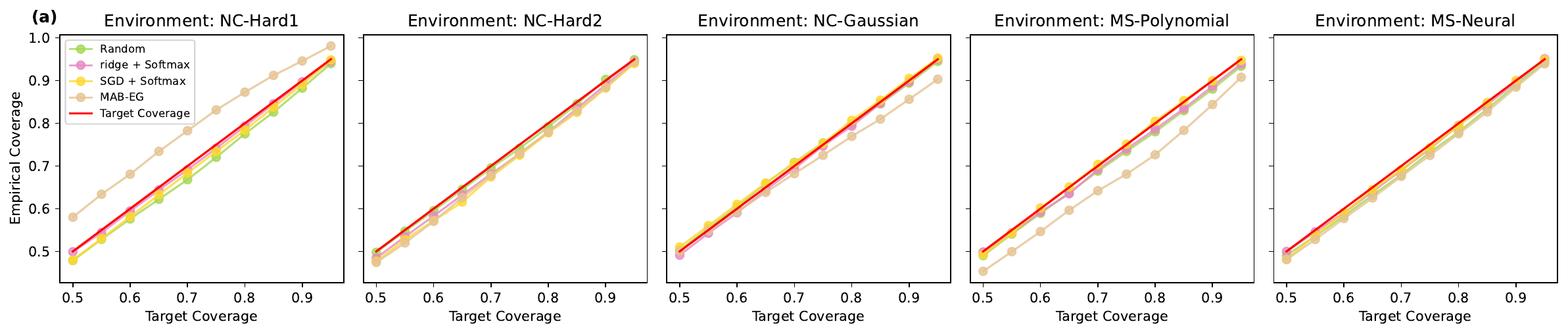}
    \includegraphics[width=1\textwidth]{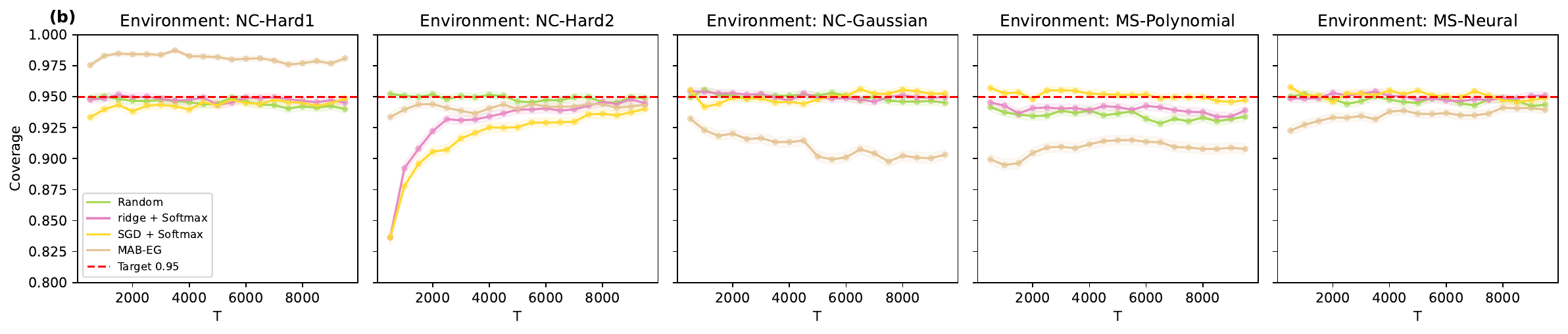}
    \caption{\textbf{(a)} Empirical coverages of 95\%, 90\%, 80\%, 70\%, 60\%, and 50\% confidence intervals vs. the target coverage for Target \ref{ex::misspecified-linear-bandits}. \textbf{(b)} Empirical coverages of 95\% confidence interval over 10,000 steps. Results averaged across 2,500 Monte Carlo simulations. Error bars/shaded bands denote $\pm 2$ Monte Carlo standard errors.}
    \label{fig::results-misspecified-linear-bandits}
\end{figure}

\begin{figure}[tb]
    \centering
    \includegraphics[width=0.8\textwidth]{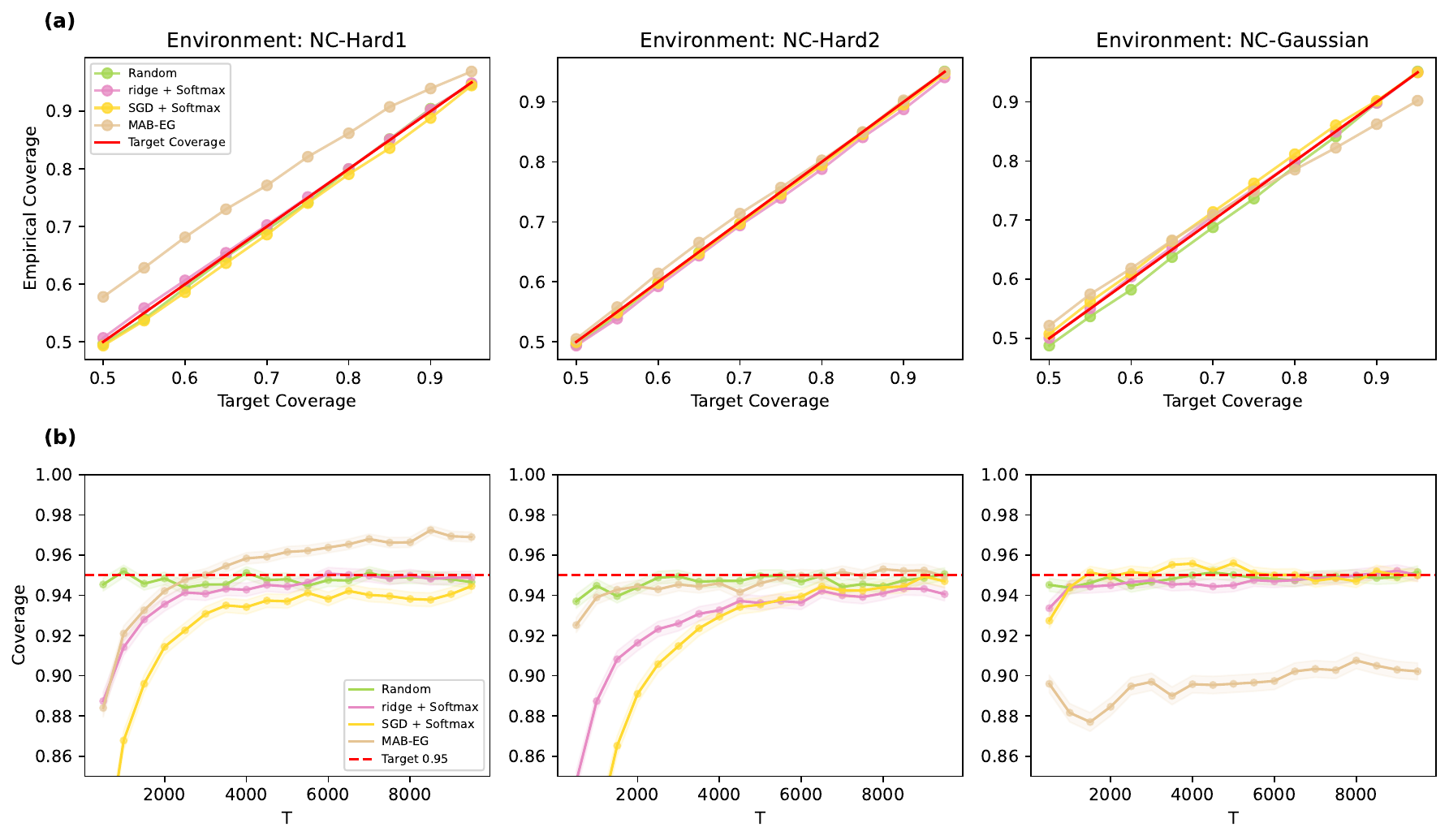}
    \caption{\textbf{(a)} Empirical coverages of 95\%, 90\%, 80\%, 70\%, 60\%, and 50\% confidence intervals vs. the target coverage for Target \ref{ex::bandits-noisy-contexts}. \textbf{(b)} Empirical coverages of 95\% confidence interval over 10,000 steps under three noisy context environments. {Error bars/shaded bands denote $\pm 2$ Monte Carlo standard errors.}}
    \label{fig::results-bandits-noisy-contexts}
\end{figure}

\subsection{Results for Target \ref{ex::ope}}\label{sec::simulations-ope}

In the OPE setting, we compare our proposed inference method with the CADR (Contextual Adaptive Doubly Robust) method \citep{bibaut2021post} and the Contextual Adaptive Weight (AW) method implemented via StableVar \citep{zhan2021off}. Both are stabilized doubly robust estimators, where we select the prediction model from a linear model, a tree-based model, or a dummy model that always outputs 0. We run all the inference methods on the same datasets collected by Boltzmann exploration w.r.t. Ridge regression estimator in five environments introduced above. In Figure \ref{fig::results-ope} (a), we show that CADR, AW and our proposed inference method all achieve empirical coverage close to the target coverage. In Figure \ref{fig::results-ope} (b), our proposed estimator has lower variance, beyond Monte Carlo error, under \texttt{NC-Hard2} and \texttt{MS-Polynomial}, and is statistically indistinguishable from CADR and AW in the others. 
We do not claim a uniform efficiency advantage. One possible explanation for the gains observed in these environments is the additional variability introduced by the estimated variance-stabilization terms and adaptively fitted regression models used in CADR and AW.


\begin{figure}[tb]
    \centering
    \includegraphics[width=1\textwidth]{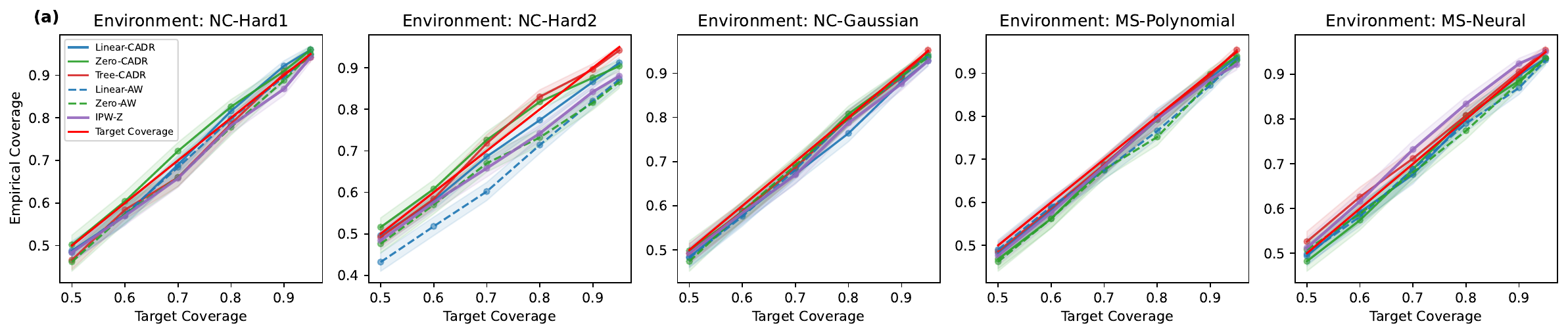}
    \includegraphics[width=1\textwidth]{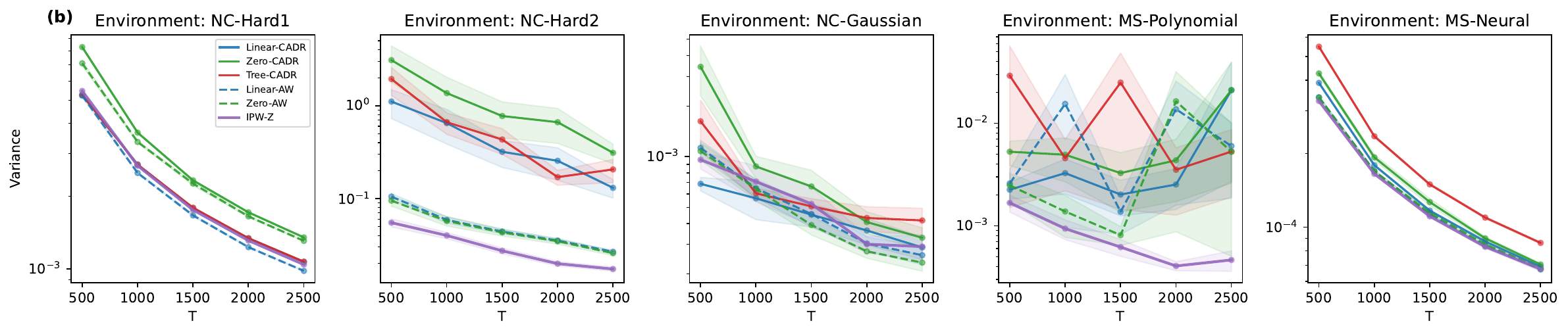}
    \caption{\textbf{(a)} Empirical coverage for confidence intervals based on CADR and AW through StableVar under different prediction models and our proposed inference method across five environments. \textbf{(b)} Monte Carlo estimates (based on 2,500 samples) of the variance of OPE target estimator based on CADR and AW through StableVar under different prediction models and our proposed inference method across five environments over 2,500 steps. Error bars denote $\pm 2$ Monte Carlo standard errors.}
    \label{fig::results-ope}
\end{figure}

\subsection{Real-Data Application: the HeartSteps V1 Mobile Health Study}\label{sec::simulations-heartsteps}

We close with a semi-synthetic study calibrated to real data, demonstrating that the proposed inference is valid in a setting drawn from an actual digital-intervention trial.
We build on the HeartSteps V1 simulation environment developed in \citep{guo2024online}.
HeartSteps V1 \citep{dempsey2015randomised,klasnja2015microrandomized,liao2016sample} is a 42-day mobile health trial, where participants are provided a mobile tracker and a mobile phone application.
One of the intervention components is a contextually-tailored physical activity suggestion that may be delivered at any of the five user-specified times during each day.
The delivery times are roughly separated by 2.5 hours.
The true context at time $t$ is $\bS_t\in\RR^{7}$, consisting of an intercept together with six logged covariates recorded in the trial: the step count in the $30$ minutes preceding the decision point, the cumulative step count, the ambient temperature at the decision point, the recent variability of the step count, an indicator for being at home, and the accumulated treatment burden, the \emph{dosage}, which summarizes the participant's recent treatment history.
The dosage, indexed below by $j$, is the covariate of scientific interest; the exact column definitions are given in Appendix~\ref{sec::simulations-heartsteps-details}.
Following \citep{guo2024online}, we sample the context $\bS_t$---including the dosage---i.i.d.\ from the pooled empirical distribution of the trial participants, collecting every decision point of all $37$ users into a single bank and drawing one record with replacement at each time $t$.
The potential outcome is linear in the true context, $Y_t(a) = \bS_t^\top \btheta_a^* + \eta_t$, with mean-zero reward noise $\eta_t$ and per-action parameters $\btheta_a^* \in \RR^{7}$ fixed at the generalized-estimating-equation fit of \citep{liao2016sample} on the real outcomes.
The agent does not observe the true context $\bS_t$; it observes only the noisy proxy $\bX_t = \bS_t + \bepsilon_t$, where the measurement error $\bepsilon_t$ perturbs only the dosage coordinate and is drawn i.i.d.\ with mean zero and variance $\sigma_\epsilon^2$, aligning the application with the noisy-context misspecification of Target~\ref{ex::bandits-noisy-contexts}.
This reflects practice in mobile health, where the burden is itself a noisy prediction rather than a directly observed quantity.

Our inferential target is the \emph{dosage-by-treatment interaction} $\tau^* = \theta_{1,j}^* - \theta_{0,j}^*$, where $j$ indexes the dosage coordinate: it quantifies how the accumulated burden moderates the effect of sending a suggestion, the scientifically central question of habituation in just-in-time interventions.
We collect data with the same four behavior policies as above (\texttt{Random}, \texttt{ridge + Softmax}, \texttt{MAB-EG}, and \texttt{SGD + Softmax}) and form confidence intervals for the interaction using the IPW-Z estimator, estimating the true-covariate second moment that enters the asymptotic variance from the observed contexts rather than treating it as known.
Figure~\ref{fig::results-heartsteps}(a) shows that the empirical coverage tracks the nominal level across all four behavior policies and all target levels, and panel (b) shows the $90\%$ coverage holding near the nominal value over $T$.
At the final horizon ($T = 10{,}000$, $2{,}000$ replications), the empirical $95\%$ coverage is within Monte Carlo error of the nominal level for every policy ($0.946$ for \texttt{Random}, $0.946$ for \texttt{ridge + Softmax}, $0.947$ for \texttt{MAB-EG}, and $0.951$ for \texttt{SGD + Softmax}).
The method is also robust to the amount of measurement error: as the dosage-noise variance increases from $0.1$ to $0.25$ to $0.5$, the $95\%$ coverage remains nominal ($0.947$, $0.947$, $0.956$) while the confidence intervals widen accordingly.
Thus the proposed inference remains valid on a real-data-calibrated adaptive experiment with measurement error, complementing the synthetic studies above.
The full details of the environment, the estimand and its score, the IPW-Z estimator and its variance estimator, and the behavior-policy configuration are deferred to Appendix~\ref{sec::simulations-heartsteps-details}.

\begin{figure}[tb]
    \centering
    \includegraphics[width=0.9\textwidth]{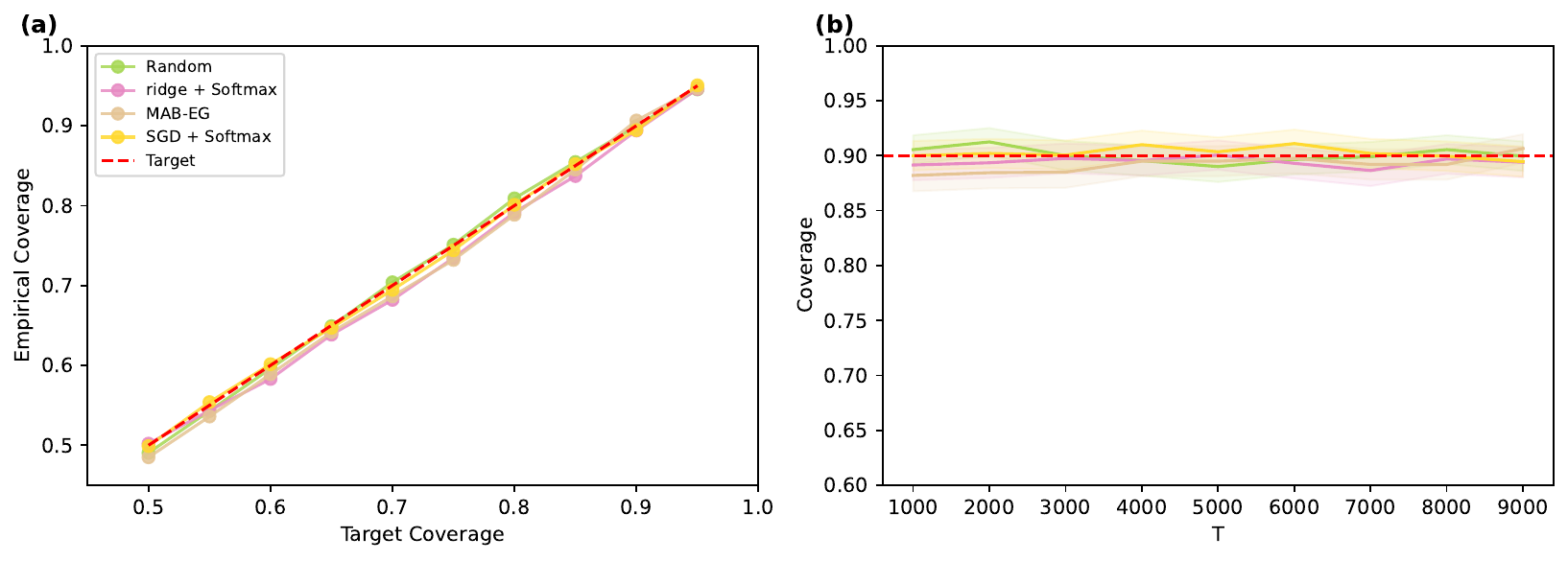}
    \caption{\textbf{(a)} Empirical coverage versus target level for the dosage-by-treatment interaction in the HeartSteps V1 environment, at the final horizon, across four behavior policies. \textbf{(b)} Empirical $90\%$ coverage over $T$. Results averaged across 2{,}000 Monte Carlo simulations; shaded bands denote $\pm 2$ Monte Carlo standard errors.}
    \label{fig::results-heartsteps}
\end{figure}

\newpage

\subsection{Acknowledgment} 
Yongyi Guo acknowledges support from the National Science Foundation under Grant DMS-2515285. Ziping Xu acknowledges support from the following NIH grants during his postdoctoral fellowship at Harvard University: NIH/NIDA P50DA054039, NIH/NIBIB and OD P41EB028242, NIH/NIDCR UH3DE028723, and NIH/NIA 5P30AG073107-03 GY3 Pilots.

\newpage

\newpage
\appendix

\section{Additional Technical Details}\label{appAA}

\subsection{The Clipping Operator}\label{apdx::clipping}

In this section, we present in details a useful transformation, 
$$\mathrm{Clip}: \RR^K\times\RR\rightarrow \RR^K,$$ which maps a probability distribution over $K$ discrete actions to a new distribution to ensure that each coordinate is lower bounded by $\pi_{\min}$. Specifically, for $\bm{\pi}\in [0, 1]^{K}$, define 
\begin{align}\label{eq::clip}
    \operatorname{Clip}(\bm{\pi}; \pi_{\min}) = \max\{\bm{\pi} - \nu^*(\bm{\pi}), \pi_{\min}\},
\end{align}
where $\nu^*(\bm{\pi})$ is defined as the unique value such that 
$$
q(\nu; \bm{\pi}) \coloneqq \sum_{a\in\cA} \max\{\bm{\pi}_a - \nu, \pi_{\min}\} = 1.
$$
This transformation adjusts $\bpi$ to remain a valid probability distribution while ensuring that each component is at least $\pi_{\min}$. The following lemma shows that this operation is in fact the $L_2$ projection onto the constrained simplex, providing a principled justification for its use. The proof is in Appendix \ref{apdx::proof-lem::clip-l2-projection}.
\begin{lemma}\label{lem::clip-l2-projection}
For a fixed $\pi_{\min}$, the mapping $\bm{\pi}\mapsto\operatorname{Clip}(\bm{\pi};\pi_{\min})$ is the $L_2$ projection of $\bm{\pi}$ onto the set 
$$\left\{ \bm{\pi} \in [0, 1]^{|\mathcal{A}|} \bigg| \sum_{a\in\cA} \bm{\pi}_a = 1, \bm{\pi}_a \geq \pi_{\min}\right\}.$$
\end{lemma}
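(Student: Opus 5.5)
The plan is to verify the KKT (first-order optimality) conditions of the convex projection problem
\[
\min_{\mathbf{p}}\ \tfrac12\|\mathbf{p}-\bm{\pi}\|_2^2\quad\text{s.t.}\quad \textstyle\sum_a p_a=1,\ p_a\ge \pi_{\min},
\]
and show that $\operatorname{Clip}(\bm{\pi};\pi_{\min})$ satisfies them. Here we assume $K\pi_{\min}\le 1$, so that the feasible set is nonempty. The upper bounds $p_a\le 1$ are then implied by the other constraints and can be dropped.

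\textbf{Step 1: $\nu^*$ is well defined.} The function $q(\nu;\bm{\pi})$ is continuous and nonincreasing in $\nu$. It is strictly decreasing wherever some $\bm{\pi}_a-\nu>\pi_{\min}$. As $\nu\to-\infty$ we have $q\to\infty$, and for $\nu$ large we have $q=K\pi_{\min}\le 1$. So a root of $q=1$ exists.

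Uniqueness needs a little care. If $K\pi_{\min}<1$, then at any root some coordinate satisfies $\bm{\pi}_a-\nu>\pi_{\min}$, so $q$ is strictly decreasing near the root and the root is unique. In the degenerate case $K\pi_{\min}=1$ the root may not be unique, but every root gives the same output $\operatorname{Clip}=\pi_{\min}\mathbf{1}$. This output is the only feasible point, so the lemma holds trivially in that case.

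\textbf{Step 2: KKT conditions.} Since the objective is strongly convex and the constraints are affine, a feasible point $\mathbf{p}$ is the unique minimizer if and only if there exist $\nu\in\RR$ and multipliers $\lambda_a\ge0$ such that
\[
p_a-\bm{\pi}_a+\nu-\lambda_a=0,\qquad \lambda_a(p_a-\pi_{\min})=0 .
\]
Take $\nu=\nu^*$, $p_a=\max\{\bm{\pi}_a-\nu^*,\pi_{\min}\}$, and $\lambda_a=p_a-\bm{\pi}_a+\nu^*=\max\{0,\ \pi_{\min}-\bm{\pi}_a+\nu^*\}$. We check the conditions one at a time:
\begin{itemize}
\item $\lambda_a\ge 0$ holds by construction.
\item Complementary slackness holds. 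If $\lambda_a>0$, then $\bm{\pi}_a-\nu^*<\pi_{\min}$, so $p_a=\pi_{\min}$. Otherwise $\lambda_a=0$.
\item Feasibility holds: $p_a\ge\pi_{\min}$ is immediate, and $\sum_a p_a=q(\nu^*;\bm{\pi})=1$ by the definition of $\nu^*$.
\end{itemize}

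The only delicate point is the well-definedness and uniqueness of $\nu^*$ in Step 1, handled by the case split above. The KKT verification in Step 2 is mechanical.
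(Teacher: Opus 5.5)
Your proof is correct and uses the same KKT characterization of the projection as the paper, but runs it in the opposite direction. The paper reads off from stationarity and complementary slackness that any minimizer must have the clipped form $\max\{\bm{\pi}_a+\nu,\pi_{\min}\}$, with $\nu$ pinned down by the sum constraint (necessity, which implicitly relies on the minimizer and its multipliers existing); you instead exhibit multipliers for $\operatorname{Clip}(\bm{\pi};\pi_{\min})$ and conclude by convexity and strong convexity (sufficiency). Your Step 1, which proves that $\nu^*$ exists and is unique and treats the degenerate case $K\pi_{\min}=1$, supplies a point the paper's proof leaves implicit (dropping the box constraints $[0,1]^{K}$ also uses $\pi_{\min}\ge 0$, which holds in the paper's setting).
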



{
\subsection{Stability of the Policy in \citep{chen2021statistical}}\label{apdx::convergence-epsilon-greedy-weighted-LS}
Consider behavior policies of the form (\ref{eq::policy-statistics}), where the statistics $\hat\bbeta_{t-1}$ consist of the IPW-Z estimators defined by (\ref{eq::estimating-equation-general}), together with additional algorithm parameters that converge over time:
\begin{equation}\label{eq::IPWZ-policy-plus-parameter}
    \pi(\cdot \mid \bX_t, \{\hat\btheta_a^{(t-1)}\}_{a \in \cA}, \bgamma_{t-1}).
\end{equation}
Here, $\{\bgamma_t\}_{t \geq 1}$ is a sequence of parameters governing the policy. For example, in the case of an $\epsilon$-greedy policy, $\bgamma_t$ can represent the exploration probability at time $t$. The following proposition shows that, under mild conditions, such behavior policies converge to a limiting policy parameterized by $\btheta^*=(\btheta_a^*)_{a\in\cA}$.

\begin{proposition}\label{thm::convergence-of-IPWZ-policy}
Under the assumptions of Theorem \ref{thm::asymptotic-normality-joint-general}, suppose the behavior policy at each time $t$ takes the form (\ref{eq::IPWZ-policy-plus-parameter}), where $\{\hat\btheta_a^{(t)}\}_{a\in\cA, t\geq 1}$ is the IPW-Z estimator defined by (\ref{eq::estimating-equation-general}), and $\{\bgamma_{t}\}_{t\geq 1}$ is a deterministic parameter sequence such that $\bgamma_{t}\in \RR^{d_{\gamma}}$,  $\lim_{t\rightarrow \infty}\bgamma_{t} = \bgamma^*$ for some $\bgamma^*\in\RR^{d_{\gamma}}$. Assume 
$
\pi:\cX\times \RR^{Kd}\times\RR^{d_{\gamma}}\rightarrow \Delta(\cA)
$
is continuous in its second and third argument at $(\btheta^*, \bgamma^*)$ for almost every $\bx\in\cX$ under the distribution of $\bX_t$. Then the behavior policy satisfies inverse propensity stability in Definition \ref{aspt:policy_convergence}, with $r_{a, t}\equiv 1$, $\rho(a, \bx) = 1/\bar\pi(a|\bx)$, and the limit policy $\bar\pi(a|\bx) = \pi(a|\bx, \btheta^*, \bgamma^*)$.
\end{proposition}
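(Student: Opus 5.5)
The plan is to reduce the statement to Theorem~\ref{lem::statistics-converge-implies-policy-converge}. We take as summary statistic the stacked vector $\hat\bbeta_{t-1}:=\big((\hat\btheta_a^{(t-1)})_{a\in\cA},\bgamma_{t-1}\big)\in\RR^{Kd+d_\gamma}$, which is $\cH_{t-1}$-measurable, so the policy has the form (\ref{eq::policy-statistics}). Condition (ii) of that theorem is exactly the assumed continuity of $\pi$ at $(\btheta^*,\bgamma^*)$ for a.e.\ $\bx$. Condition (i) is the joint convergence $\hat\bbeta_t\xrightarrow{p}(\btheta^*,\bgamma^*)$. Since $\bgamma_t\to\bgamma^*$ deterministically, this reduces to consistency $\hat\btheta_a^{(t)}\xrightarrow{p}\btheta_a^*$ for each $a$. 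Once (i)--(iii) hold, the theorem gives scaled inverse-propensity convergence with $r_{a,t}\equiv1$ and $\rho(a,\bx)=1/\pi(a\mid\bx,\btheta^*,\bgamma^*)$.

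\emph{Consistency without circularity.} One could read consistency off Theorem~\ref{thm::asymptotic-normality-joint-general}, since $b_{a,T}\to\infty$ makes $\hat\btheta_a^{(T)}-\btheta_a^*=O_p(1/b_{a,T})=o_p(1)$. However, that theorem already presupposes inverse-propensity stability, which is what we are trying to prove. So the consistency step will be redone using only the parts of the argument that do not use Definition~\ref{aspt:policy_convergence}; this is the main step.
\begin{enumerate}
\item For fixed $\btheta$ with $\|\btheta\|_2\le R_\theta$, write $\bG_T(\btheta)-\EE[\bg(\bX,Y(a);\btheta)]=\frac1T\sum_t\{\bZ_t(\btheta)-\EE[\bZ_t(\btheta)\mid\cH_{t-1}]\}$, using (\ref{eq::main-mtg}).
\item This is a martingale average whose second moment is bounded by $T^{-2}\sum_t\pi_{\min,t}^{-1}\sup_{\|\btheta\|\le R_\theta}\EE\|\bg(\bX,Y(a);\btheta)\|_2^2$. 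This uses Assumption~\ref{aspt:min-sampling-prob} and Assumption~\ref{aspt:boundedness-general}(ii), and it tends to $0$ by condition (ii) of Theorem~\ref{thm::asymptotic-normality-general}. Hence pointwise convergence holds.
\item Upgrade to uniform convergence on the ball $\{\|\btheta\|_2\le R_\theta\}$ by a finite-net argument. The Lipschitz envelope $\phi$ of Assumption~\ref{aspt:smoothness-general}(ii) gives $\|\bG_T(\btheta)-\bG_T(\btheta')\|_2\le\|\btheta-\btheta'\|_2\cdot\frac1T\sum_t w_a(A_t)\phi(\bX_t,Y_t)$. The last average has conditional mean $\EE\phi\le\sqrt{M_2'}$ and vanishing martingale fluctuation by the same variance bound, so it is $O_p(1)$.
\item Combine uniform convergence with $\bG_T(\hat\btheta_a^{(T)})=o_p(1)$ and the well-separation Assumption~\ref{aspt:identifiability-general}. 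The standard Z-estimation argument (as in \citep{van2000asymptotic}) then yields $\hat\btheta_a^{(T)}\xrightarrow{p}\btheta_a^*$.
\end{enumerate}
If instead the appendix proof of Theorem~\ref{thm::asymptotic-normality-general} already isolates consistency as a lemma not using stability, I will cite that lemma directly.

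\emph{The floor condition.} Condition (iii) of Theorem~\ref{lem::statistics-converge-implies-policy-converge} requires a constant floor. In the $\epsilon$-greedy setting of \citep{chen2021statistical} this holds because $\pi_{\min,t}\equiv\pi_{\min}>0$, and I will state the proposition in that regime (Remark~\ref{rmk::policy-convergence-min-prob} then applies). If only a decaying floor were available, convergence in probability of $1/\pi_t(a)$ would still follow from the continuous mapping theorem, provided $\pi(a\mid\bx,\btheta^*,\bgamma^*)>0$. Upgrading to $L_1$ would, however, require a separate uniform-integrability argument, and I expect this to be the delicate point. I would handle it by bounding $\EE[\pi_t(a)^{-1}1\{\pi_t(a)<\bar\pi(a\mid\bX_t)/2\}]\le\pi_{\min,t}^{-1}\PP(\cdot)$ and requiring the deviation probability to decay faster than $\pi_{\min,t}$. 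In the constant-floor case no such argument is needed, and the proof concludes by invoking Theorem~\ref{lem::statistics-converge-implies-policy-converge}.
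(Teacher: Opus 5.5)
Your proposal is correct and follows the paper's proof. You stack $(\hat\btheta^{(t)},\bgamma_t)$ as the summary statistic, obtain consistency from the part of the argument in Lemma~\ref{lem::consistency-general} that uses only uniform convergence and well-separation (not stability), and invoke Theorem~\ref{lem::statistics-converge-implies-policy-converge}. Your extra remarks are well taken: the hypothesis as stated is circular, and condition (iii) needs a constant floor. The paper's one-line proof leaves the floor implicit; it holds in the $\epsilon$-greedy application because $\epsilon_t\downarrow\epsilon_\infty>0$.
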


\begin{proof}
Let $\hat\bbeta_t = (\hat\btheta^{(t)}, \bgamma_t)$, where $\hat\btheta^{(t)} = (\hat\btheta_a^{(t)})_{a\in\cA}$. Using the same arguments in the proof of Lemma \ref{lem::consistency-general}, we deduce that $\hat\bbeta_{t}\xrightarrow{p} \bbeta^*:= (\btheta^*, \bgamma^*)$ (note that the proof of Lemma \ref{lem::consistency-general} does not require the behavior policy to converge), which implies that condition (i) of Theorem \ref{lem::statistics-converge-implies-policy-converge} holds in this setting. Under an additional condition of policy continuity, Theorem \ref{lem::statistics-converge-implies-policy-converge} can then be invoked to establish policy convergence.
\end{proof}

Now we analyze a behavior policy studied in \citep{chen2021statistical}, which combines an $\epsilon$-greedy algorithm with a weighted online LS estimator, in the setting of misspecified linear bandits. The policy can be written as
\begin{equation}\label{eq::policy-epsilon-greedy-weighted-LS}
    \pi_t(a|\bX_t, \cH_{t-1}) = (1-\epsilon_t)1_{\{(\hat\bbeta_a^{(t-1)}-\hat\bbeta_{1-a}^{(t-1)})^\top\bX_t>0\}} + \frac{\epsilon_t}{2}
\end{equation}
for $a\in\cA = \{0, 1\}$. Here, $\widehat{\bbeta}_a^{(t-1)}$ denotes the weighted online LS estimate of the reward parameter for arm $a$ using data up to time $t-1$, which is a special case of the IPW-Z estimator $\hat{\btheta}_a^{(t-1)}$ defined by (\ref{eq::estimating-equation-general}) with the score function $\bg$ given by (\ref{eq::target-parameter-misspecified-linear-bandits}). $\epsilon_t\in (0, 1)$ is a time-varying, nonincreasing exploration parameter such that $\lim_{t\rightarrow \infty}\epsilon_t =\epsilon_{\infty}>0$. 

It is straightforward to verify the above policy is of the form (\ref{eq::IPWZ-policy-plus-parameter}) with $\hat{\btheta}_a^{(t-1)} = \hat{\bbeta}_a^{(t-1)}$,  $\bgamma_{t-1} = \epsilon_t$, and 
$$
\pi(a|\bx, \{\btheta_a\}_{a\in\cA}, \bgamma)= (1-\bgamma)1_{\{(\btheta_a-\btheta_{1-a})^\top\bx>0\}} + \frac{\bgamma}{2}.
$$
The convergence of $\{\bgamma_{t}\}_{t\geq 1}$ is guaranteed by the convergence of $\{\epsilon_{t}\}_{t\geq 1}$. In addition, the function $\pi$ is continuous in $(\{\btheta_a\}_{a\in\cA}, \bgamma)$ at the point $(\{\btheta_a^*\}_{a\in\cA}, \epsilon_{\infty})$ for any $\bx\in\cX_0:= \{\bx: (\btheta_1^* - \btheta_0^*)^\top\bx \neq 0\}$. Here for $a\in\cA$, $\btheta_a^*$ is defined in (\ref{eq::theta-a-*}) with the score function $\bg$ given by (\ref{eq::target-parameter-misspecified-linear-bandits}). Note that $\cX_0^c$ is a Lebesgue null set, and from Assumption 1 of \citep{chen2021statistical}, we deduce that
$$
\PP(\cX_0^c) = 0.
$$
Combining the above arguments, we have verified the conditions of Proposition \ref{thm::convergence-of-IPWZ-policy}, which implies that the policy (\ref{eq::policy-epsilon-greedy-weighted-LS}) satisfies inverse-propensity stability in Definition \ref{aspt:policy_convergence}. 
}

\subsection{Nonconvergence of the Ridge Estimator When the Evolution Lacks a Fixed Point}\label{apdx::ridge-nonconvergence-with-no-fixed-point}


The following theorem states that, if the behavior policy is parameterized using the ridge estimator, and the policy mapping induces no fixed point in the evolution dynamics of the ridge statistics, then the ridge estimator will not converge. 

\begin{theorem}\label{theorem:necessary-condition-for-Ridge-convergence}
Assume the conditions of Proposition \ref{theorem:convergence-of-Ridge}. Suppose that the behavior policy takes the form $\pi_t(a|\bX_t, \cH_{t-1}) = \pi(a|\bX_t, \hat \bbeta_{t}^{\Ridge})$, where $\hat \bbeta_{t}^{\Ridge}$ is the ridge estimator defined in (\ref{eq:ridge-estimator}), and the policy mapping $\pi: \cX\times \RR^{Kd}\rightarrow\Delta(\cA)$ satisfies:
\begin{itemize}
\item $\pi(a \mid \bx, \bm{\beta})$ is a continuous function of $\bm{\beta}$ for any $\bx \in \cX$,

\item $\pi(a \mid \bx, \bm{\beta})>0$ for any $\bx \in \cX, \bm{\beta}\in\RR^{Kd}$,

\item There exists $\bar a\in\cA$, such that for any $\bm{\beta} = (\bbeta_1^\top, \ldots, \bbeta_K^\top)^\top\in \mathbb{R}^{Kd}$,  
    \begin{align}
    \bm{\beta}_{\bar a} \neq \bSigma_{\bar a}^{-1}(\bm{\beta}) \bm{\varphi}_{\bar a}(\bm{\beta}). \label{eq:necessary-condition-for-Ridge-convergence}
    \end{align}
Here $\bSigma_{a}(\bm{\beta}) = \EE_{\bX_t, A_t \sim \pi(\cdot \mid \bX_t, \bm{\beta})}[ 1_{\{A_t = a\}} \bX_t \bX_t^\top]$, and \\  $\bm{\varphi}_a(\bm{\beta}) = \EE_{\bX_t, A_t\sim \pi(\cdot \mid \bX_t, \bm{\beta}), Y_t}[ 1_{\{A_t = a\}} \bX_t Y_t ]$.
\end{itemize}
Then the ridge estimator $\hat \bbeta_{t}^{\Ridge}$ does not converge in probability as $t\rightarrow\infty$.
\end{theorem}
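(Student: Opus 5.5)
We argue by contradiction. Suppose $\hat\bbeta_t^{\Ridge}\xrightarrow{p}\bbeta^\dagger$ for some deterministic $\bbeta^\dagger\in\RR^{Kd}$. We will show that $\bbeta^\dagger$ must satisfy the fixed-point equation $\bbeta^\dagger_{a}=\bSigma_{a}^{-1}(\bbeta^\dagger)\bm{\varphi}_{a}(\bbeta^\dagger)$ for every $a$, in particular for $\bar a$, contradicting (\ref{eq:necessary-condition-for-Ridge-convergence}).

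\textbf{Step 1: normal equations.} Write the ridge estimator for arm $\bar a$ as
\[
\hat\bbeta^{\Ridge}_{t,\bar a}=\Big(\tfrac{\lambda}{t}\bI+\hat\bSigma_{t}\Big)^{-1}\hat{\bm\varphi}_{t},
\qquad
\hat\bSigma_t=\tfrac1t\sum_{\tau<t}1_{\{A_\tau=\bar a\}}\bX_\tau\bX_\tau^\top,\quad
\hat{\bm\varphi}_t=\tfrac1t\sum_{\tau<t}1_{\{A_\tau=\bar a\}}\bX_\tau Y_\tau .
\]

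\textbf{Step 2: martingale decomposition.} Center each summand by its conditional mean given $\cH_{\tau-1}$. By Assumption~\ref{aspt:unconfoundedness} and the i.i.d.\ structure, as in (\ref{eq::main-mtg}), these conditional means are $\bSigma_{\bar a}(\hat\bbeta_{\tau-1})$ and $\bm\varphi_{\bar a}(\hat\bbeta_{\tau-1})$, where $\hat\bbeta_{\tau-1}$ denotes the ridge statistic used by the policy at time $\tau$. The centered sums are martingale differences with bounded second moments, since $\|\bX_t\|_2\le M$, $|y|\le R_y$ and $\EE\eta^2\le\sigma_\eta^2$ by Assumption~\ref{aspt:Ridge_Conv}. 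Averaged over $t$, they therefore vanish in $L_2$. Hence
\[
\hat\bSigma_t-\tfrac1t\sum_{\tau<t}\bSigma_{\bar a}(\hat\bbeta_{\tau-1})\xrightarrow{p}0,
\qquad
\hat{\bm\varphi}_t-\tfrac1t\sum_{\tau<t}\bm\varphi_{\bar a}(\hat\bbeta_{\tau-1})\xrightarrow{p}0 .
\]

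\textbf{Step 3: continuity and Cesàro averaging.} Since $\pi(a\mid\bx,\cdot)$ is continuous and bounded by $1$, dominated convergence shows that $\bSigma_{\bar a}(\cdot)$ and $\bm\varphi_{\bar a}(\cdot)$ are continuous and uniformly bounded. Because $\hat\bbeta_{\tau}\xrightarrow{p}\bbeta^\dagger$, it follows that $\bSigma_{\bar a}(\hat\bbeta_{\tau-1})\to\bSigma_{\bar a}(\bbeta^\dagger)$ in probability and hence in $L_1$ by boundedness. Cesàro averages of $L_1$-convergent sequences converge in $L_1$, so $\hat\bSigma_t\xrightarrow{p}\bSigma_{\bar a}(\bbeta^\dagger)$, and likewise $\hat{\bm\varphi}_t\xrightarrow{p}\bm\varphi_{\bar a}(\bbeta^\dagger)$.

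\textbf{Step 4: nonsingularity and contradiction.} We need $\bSigma_{\bar a}(\bbeta^\dagger)$ to be nonsingular. The set $\{\bx:\|\bx\|\le M\}$ is compact, which is the natural route when $\cX$ is compact. If $\pi(\bar a\mid \bx,\bbeta^\dagger)$ is continuous and positive there, it is bounded below by some $c>0$. This gives $\bSigma_{\bar a}(\bbeta^\dagger)\succeq c\,\EE[\bX\bX^\top]\succeq c\sigma_{\min}^2\bI$. Without continuity in $\bx$, we would instead use that $\EE[\pi(\bar a\mid\bX,\bbeta^\dagger)\bX\bX^\top]$ is positive definite: for any unit $v$, $\pi(\bar a\mid\bX,\bbeta^\dagger)(v^\top\bX)^2=0$ a.s.\ would force $v^\top\bX=0$ a.s., contradicting (ii) of Assumption~\ref{aspt:Ridge_Conv}. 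By the continuous mapping theorem, $\hat\bbeta^{\Ridge}_{t,\bar a}\xrightarrow{p}\bSigma_{\bar a}^{-1}(\bbeta^\dagger)\bm\varphi_{\bar a}(\bbeta^\dagger)$. Uniqueness of limits in probability then yields the fixed-point identity for $\bar a$, a contradiction.

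\textbf{Main obstacle.} The delicate point is Step 3. We must pass from convergence in probability of the random statistic to convergence of the averaged conditional means, and we must do this for the statistic used by the policy, which is lagged by one index relative to $\hat\bbeta^{\Ridge}_t$. The lag is handled because $\hat\bbeta_{t}\xrightarrow{p}\bbeta^\dagger$ implies the same for the shifted sequence. The rest rests on boundedness plus dominated convergence.
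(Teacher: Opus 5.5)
Your proof is correct and follows the paper's argument essentially step for step: contradiction, a martingale LLN for the centered sums, continuity of $\bSigma_{\bar a}(\cdot)$ and $\bm\varphi_{\bar a}(\cdot)$ with Ces\`aro averaging, positive definiteness of $\bSigma_{\bar a}(\bbeta)$ from $\pi>0$ and Assumption~\ref{aspt:Ridge_Conv}(ii), then continuous mapping and uniqueness of limits. Your bounded-$L_1$ justification of the Ces\`aro step is more careful than the paper's bare assertion. The one thing to tighten is that you assume a deterministic limit $\bbeta^\dagger$, whereas ruling out convergence in probability means allowing a random limit, as the paper does. Your steps go through unchanged in that case, because $\bSigma_{\bar a}(\bbeta)$ is positive definite and the fixed-point inequality holds for every $\bbeta$.
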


The proof is given in Appendix \ref{apdx::proof-thm::necessary-condition-for-Ridge-convergence}.

\subsection{Asymptotic Stability}\label{apdx::asymptotic-stability}

\begin{definition}[Asymptotic Stability]
    \label{def::asymptotic-stability}
    The equilibrium point $x=0$ of $\dot{x} = f(x)$ is 
    \begin{itemize}
        \item stable if, for each $\varepsilon>0$, there is $\delta=\delta(\varepsilon)>0$ such that
        $$
            \|x(0)\|<\delta \Rightarrow\|x(t)\|<\varepsilon, \quad \forall t \geq 0
        $$
        \item unstable if it is not stable.
        \item asymptotically stable if it is stable and $\delta$ can be chosen such that
        $$
            \|x(0)\|<\delta \Rightarrow \lim _{t \rightarrow \infty} x(t)=0
        $$
    \end{itemize}
\end{definition}

\section{Proofs of Main Theorems}\label{appA}
We organize the proofs of the main theorems by sections in the main text.

    
    {
    \subsection{Proof of Theorem \ref{thm::asymptotic-normality-general}}\label{apdx::proof-thm::asymptotic-normality-general}
    
    We first prove the following lemma. Its proof is in Appendix \ref{apdx::proof-lem::consistency-general}.
    
    \begin{lemma}\label{lem::consistency-general}
        Under the assumptions of Theorem \ref{thm::asymptotic-normality-general}, there exists a sequence of estimators $\{\hat\btheta_a^{(T)}\}_{T\geq 1}$ such that (\ref{eq::estimating-equation-general}) holds, and $\|\hat\btheta_a^{(T)}\|_2\leq R_{\btheta}$, $\forall T$. In addition, for any such sequence, as $T\rightarrow \infty$, $\hat\btheta_a^{(T)}\xrightarrow{p}\btheta_a^*$.
    \end{lemma}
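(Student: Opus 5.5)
Our plan is to prove Lemma~\ref{lem::consistency-general} by the standard Z-estimation consistency argument. We reduce it to a uniform law of large numbers for the martingale-type average $\bG_T(\btheta)$ over the ball $\cB:=\{\|\btheta\|_2\le R_\theta\}$. Write $\bG(\btheta):=\EE[\bg(\bX,Y(a);\btheta)]$. The target is
\[
\sup_{\|\btheta\|_2\le R_\theta}\|\bG_T(\btheta)-\bG(\btheta)\|_2\xrightarrow{p}0 .
\]
Given this, existence and consistency follow quickly.

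\emph{Existence.} Take $\hat\btheta_a^{(T)}$ to be an (approximate, measurable) minimizer of $\|\bG_T(\btheta)\|_2$ over $\cB$. Since $\btheta_a^*\in\cB$, we have $\|\bG_T(\hat\btheta_a^{(T)})\|_2\le\|\bG_T(\btheta_a^*)\|_2+1/(Tb_{a,T})$. The task is then to show $\bG_T(\btheta_a^*)=o_p(1/b_{a,T})$, i.e., $\bG_T(\btheta_a^*)=o_p(S_{a,T}/T)$. By \eqref{eq::main-mtg}, $T\bG_T(\btheta_a^*)$ is a sum of martingale differences $\bZ_t(\btheta_a^*)$. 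Using Assumption~\ref{aspt:boundedness-general}(i), the conditional variance satisfies
\[
\EE[\|\bZ_t\|^2\mid\cH_{t-1}]\le d M_2\,\EE[1/\pi_t(a)\mid\cH_{t-1}].
\]
Scaled inverse-propensity convergence gives $\EE[1/\pi_t(a)]=(\EE\rho(a,\bX)+o(1))/r_{a,t}$. Hence $\EE\|T\bG_T(\btheta_a^*)\|^2=O(S_{a,T}^2)$, so $\bG_T(\btheta_a^*)=O_p(S_{a,T}/T)=O_p(1/b_{a,T})$. This is only $O_p$, not $o_p$. The fix is to let the estimator be defined as the exact or near root: I would instead show that the minimum of $\|\bG_T\|$ over $\cB$ is $o_p(1/b_{a,T})$ by a one-step Newton construction around $\btheta_a^*$, using Assumption~\ref{aspt:smoothness-general} and the consistency below, in the spirit of the paper's later asymptotic-normality argument. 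Alternatively, one can simply note that an approximate minimizer with tolerance $1/(Tb_{a,T})$ achieves $\|\bG_T\|\le \inf_\cB\|\bG_T\|+o(1/b_{a,T})$, and that the local linearization makes the infimum $o_p(1/b_{a,T})$ with probability tending to one.

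\emph{Uniform LLN.} Split $\bG_T(\btheta)-\bG(\btheta)=T^{-1}\sum_t[\bZ_t(\btheta)-\EE(\bZ_t(\btheta)\mid\cH_{t-1})]$, which is exact by \eqref{eq::main-mtg}. Pointwise, the second moment is bounded by
\[
T^{-2}\sum_t\EE[\pi_t(a)^{-1}\|\bg\|^2]\le T^{-2}\sum_t\pi_{\min,t}^{-1}\sup_{\cB}\EE\|\bg\|^2\to0,
\]
using Assumption~\ref{aspt:boundedness-general}(ii), Assumption~\ref{aspt:min-sampling-prob}, and condition (ii). For uniformity, cover $\cB$ by a finite $\delta$-net and control the oscillation via the envelope $\phi$ of Assumption~\ref{aspt:smoothness-general}(ii):
\[
\|\bG_T(\btheta)-\bG_T(\btheta')\|\le\delta\, T^{-1}\sum_t w_a(A_t)\phi(\bX_t,Y_t).
\]
Here the random average has conditional mean $\EE\phi\le\sqrt{M_2'}$, and its fluctuation vanishes by the same variance bound, so it is $O_p(1)$. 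The deterministic part $\bG$ is Lipschitz by the same envelope. Letting $\delta\to0$ gives the uniform LLN.

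\emph{Consistency.} The uniform LLN together with $\|\bG_T(\hat\btheta_a^{(T)})\|\to_p0$ gives $\|\bG(\hat\btheta_a^{(T)})\|\to_p0$. Well-separatedness (Assumption~\ref{aspt:identifiability-general}) then forces $\hat\btheta_a^{(T)}\to_p\btheta_a^*$. This conclusion needs only $o_p(1)$ rather than the rate, so it applies to any sequence satisfying \eqref{eq::estimating-equation-general}.

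\emph{Main obstacle.} The main obstacle is the existence claim at the sharp rate $o_p(1/b_{a,T})$, because the naive bound only gives $O_p$. I would first establish consistency of the exact minimizer of $\|\bG_T\|$, which needs only $o_p(1)$. I would then use a Taylor expansion with the Hessian envelope $\Phi$ and nonsingularity of $\bJ_a$, together with the Jacobian LLN $\hat{\dot{\bG}}_T\to\bJ_a$. These produce, with probability tending to one, a point in a shrinking neighborhood of $\btheta_a^*$ where $\bG_T$ vanishes up to $o_p(1/b_{a,T})$. Concretely, this follows via a Newton–Kantorovich/Brouwer argument applied to $\btheta\mapsto\btheta-\bJ_a^{-1}\bG_T(\btheta)$ on a ball of radius $C/b_{a,T}$.
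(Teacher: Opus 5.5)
Your proposal is correct and follows essentially the same route as the paper. The common steps are: a uniform law of large numbers over the ball $\{\|\btheta\|_2\le R_\theta\}$, obtained from a finite net, the envelope $\phi$, and the martingale second-moment bound driven by $\sum_t\pi_{\min,t}^{-1}/T^2\to 0$; existence, by comparing the minimizer of $\|\bG_T\|_2$ over that ball with the one-step Newton point $\btheta_a^*-\bJ_a^{-1}\bG_T(\btheta_a^*)$, whose residual is $o_p(1/b_{a,T})$ by a second-order Taylor expansion controlled by $\Phi$; and consistency, from well-separatedness.

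The differences are minor. You obtain $\bG_T(\btheta_a^*)=O_p(1/b_{a,T})$ by a direct second-moment computation using scaled inverse-propensity convergence, whereas the paper cites the CLT for $b_{a,T}\bG_T(\btheta_a^*)$. Also, neither your preliminary consistency of the exact minimizer nor the Brouwer fixed-point variant is needed once the Newton point is in hand.
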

    
    For the remainder of this proof, for notational convenience, we omit the dependence of $\hat\btheta_a^{(T)}$ on $T$ and write it as $\hat\btheta_a$. Let $\bG_T^{(i)}(\btheta)$ denote the $i$-th entry of $\bG_T(\btheta)$. By Taylor expansion, we have that for any $i\in\{1, \ldots, d\}$, there exists some $\tilde\btheta_{a, i}$ on the line segment between $\btheta_a^*$ and $\hat\btheta_a$ such that  
    \begin{align*}
    -\bG_T^{(i)}(\btheta_a^*)&= \bG_T^{(i)}(\hat\btheta_a) - \bG_T^{(i)}(\btheta_a^*)+o_p(1/b_{a, T})\\ 
    &= \langle\nabla\bG_T^{(i)}(\btheta_a^*), \hat\btheta_a-\btheta_a^*\rangle + \frac12(\hat\btheta_a-\btheta_a^*)^\top\nabla^2\bG_T^{(i)}(\tilde\btheta_{a, i})(\hat\btheta_a-\btheta_a^*)+o_p(1/b_{a, T}).
    \end{align*}
    Stacking the above expansions over the entries $i=1, \ldots, d$, we have 
    $$
    -\bG_T(\btheta_a^*)= \nabla\bG_T(\btheta_a^*)(\hat\btheta_a-\btheta_a^*) + \frac12\tilde{\bm{\delta}}_a(\hat\btheta_a-\btheta_a^*)+o_p(1/b_{a, T}),
    $$
    where 
    $$
    \tilde{\bm{\delta}}_a = 
    \begin{pmatrix}
    (\hat\btheta_a-\btheta_a^*)^\top\nabla^2\bG_T^{(1)}(\tilde\btheta_{a, 1})\\
    \vdots\\
    (\hat\btheta_a-\btheta_a^*)^\top\nabla^2\bG_T^{(d)}(\tilde\btheta_{a, d})
    \end{pmatrix}.
    $$
    By rearranging, we obtain 
    \begin{equation}\label{eq::taylor-expansion-general}
        \big[\nabla\bG_T(\btheta_a^*) + \frac12\tilde{\bm{\delta}}_a \big]\cdot b_{a, T}(\hat\btheta_a - \btheta_a^*) = -b_{a, T}\bG_T(\btheta_a^*) + o_p(1).
    \end{equation}
    Below we state the following lemmas, the proof of these lemmas are in Appendix \ref{apdx::proof-lem::convergence-true-derivative-general}, \ref{apdx::proof-lem::2nd-derivative-boundness-general}, and \ref{apdx::proof-lem::asymptotic-normality-true-G-general}, respectively.
    
    \begin{lemma}\label{lem::convergence-true-derivative-general}
        Under the assumptions of Theorem \ref{thm::asymptotic-normality-general}, as $T\rightarrow \infty$, $\nabla\bG_T(\btheta_a^*)\xrightarrow{p}\EE\nabla\bg(\bX_t, Y_t(a);\btheta_a^*)$.
    \end{lemma}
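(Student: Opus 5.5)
We need to prove Lemma \ref{lem::convergence-true-derivative-general}: $\nabla\bG_T(\btheta_a^*)\xrightarrow{p}\bJ_a=\EE\nabla\bg(\bX_t,Y_t(a);\btheta_a^*)$. The plan is a martingale law of large numbers, applied entrywise. Write
\[
\nabla\bG_T(\btheta_a^*)=\frac1T\sum_{t=1}^T \bW_t,\qquad \bW_t:=\frac{1_{\{A_t=a\}}}{\pi_t(A_t)}\nabla\bg(\bX_t,Y_t;\btheta_a^*).
\]
The same computation as in (\ref{eq::main-mtg}), applied to $\nabla\bg$ instead of $\bg$, gives $\EE[\bW_t\mid\cH_{t-1}]=\bJ_a$. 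It uses Assumption~\ref{aspt:unconfoundedness}, the identity $\EE[w_a(A_t)\mid\cH_{t-1},\bX_t]=1$, and the i.i.d. structure of $(\bX_t,Y_t(a))$. Integrability holds because $\|\nabla\bg\|\le\phi$ and $\EE\phi^2<\infty$ by Assumption~\ref{aspt:smoothness-general}(ii). Hence $\bxi_t:=\bW_t-\bJ_a$ is a martingale difference sequence with respect to $\{\cH_t\}$, and it suffices to show $\frac1T\sum_t\bxi_t\xrightarrow{p}\mathbf 0$.

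I would show this by bounding the second moment, using orthogonality of martingale increments:
\[
\EE\Big\|\frac1T\sum_{t=1}^T\bxi_t\Big\|_F^2=\frac1{T^2}\sum_{t=1}^T\EE\|\bxi_t\|_F^2\le\frac{2}{T^2}\sum_{t=1}^T\big(\EE\|\bW_t\|_F^2+\|\bJ_a\|_F^2\big).
\]
To bound $\EE\|\bW_t\|_F^2$, condition on $(\cH_{t-1},\bX_t)$ and use unconfoundedness:
\[
\EE\big[\|\bW_t\|_F^2\mid\cH_{t-1},\bX_t\big]=\frac{1}{\pi_t(a)}\,\EE\big[\|\nabla\bg(\bX_t,Y_t(a);\btheta_a^*)\|_F^2\mid\bX_t\big]\le\frac{d\,M_2'}{\pi_{\min,t}}.
\]
This step uses the conditional bound $\EE[\phi^2\mid\bX_t]\le M_2'$, Assumption~\ref{aspt:min-sampling-prob}, and $\|\cdot\|_F^2\le d\|\cdot\|_2^2$. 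Therefore
\[
\EE\Big\|\frac1T\sum_t\bxi_t\Big\|_F^2\lesssim \frac{\sum_{t\le T}\pi_{\min,t}^{-1}}{T^2}+\frac1T,
\]
and both terms tend to zero by Condition (ii) of Theorem~\ref{thm::asymptotic-normality-general}. Chebyshev's inequality then gives the claim.

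The main point needing care is the conditional-variance bound. Without the conditional moment bound $\EE[\phi^2\mid\bX]\le M_2'$, the inverse propensity could interact badly with large $\nabla\bg$. The assumption is exactly designed for this, so the step is routine. Policy stability (Definition~\ref{aspt:policy_convergence}) is not needed for this lemma. Only the minimum-probability rate enters, through $\sum_t\pi_{\min,t}^{-1}=o(T^2)$.
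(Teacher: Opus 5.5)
Your proof is correct and follows essentially the same route as the paper: write $\nabla\bG_T(\btheta_a^*)$ as an average of terms with conditional mean $\EE\nabla\bg(\bX_t,Y_t(a);\btheta_a^*)$, then apply orthogonality of martingale increments, Chebyshev, the identity $\EE[1_{\{A_t=a\}}/\pi_t(A_t)^2\mid\cH_{t-1},\bX_t]=1/\pi_t(a)\le\pi_{\min,t}^{-1}$, and Condition (ii). The differences are cosmetic: the paper works with bilinear forms $\bc^\top(\cdot)\bc'$ rather than the Frobenius norm, and it uses $\EE(v_t-\EE[v_t\mid\cH_{t-1}])^2\le\EE v_t^2$ instead of your cruder bound with the extra $1/T$ term. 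Also, because the deterministic floor $\pi_{\min,t}$ is pulled out, only the unconditional second moment of $\nabla\bg$ is needed, so the conditional bound $M_2'$ is not actually essential here.
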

    
    \begin{lemma}\label{lem::2nd-derivative-boundness-general}
        Under the assumptions of Theorem \ref{thm::asymptotic-normality-general}, $\sup_{\|\btheta - \btheta_a^*\|_2\leq \epsilon_0}\|\nabla^2\bG_T(\btheta)\|_1 = \cO_p(1)$. Here, for a tensor $\bB\in\RR^{d_1\times d_2\times d_3}$, we define $\|\bB\|_1 = \sum_{i\in[d_1], j\in[d_2], k\in[d_3]}|\bB_{i, j, k}|$.
    \end{lemma}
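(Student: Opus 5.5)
We need to prove Lemma \ref{lem::2nd-derivative-boundness-general}: $\sup_{\|\btheta-\btheta_a^*\|_2\le\epsilon_0}\|\nabla^2\bG_T(\btheta)\|_1=\cO_p(1)$.

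The plan is to dominate the supremum by a single nonnegative random average and bound its expectation, which is exactly where inverse probability weighting cancels.

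First, for every $\btheta$ with $\|\btheta-\btheta_a^*\|_2\le\epsilon_0$ and every coordinate $i$, we have
\[
\nabla^2\bG_T^{(i)}(\btheta)=\frac1T\sum_{t=1}^T w_a(A_t)\,\nabla^2\bg^{(i)}(\bX_t,Y_t;\btheta).
\]
On $\{A_t=a\}$ we have $Y_t=Y_t(a)$. The entrywise $\ell_1$ norm of a $d\times d$ matrix is at most $d\sqrt{d}$ times its spectral norm, since $\|\cdot\|_1\le d\|\cdot\|_F\le d\sqrt d\|\cdot\|_2$. Combining this with Assumption \ref{aspt:smoothness-general}(iii), we get, uniformly over the ball,
\[
\sup_{\|\btheta-\btheta_a^*\|_2\le\epsilon_0}\|\nabla^2\bG_T(\btheta)\|_1\le C_d\,U_T,\qquad U_T:=\frac1T\sum_{t=1}^T w_a(A_t)\,\Phi(\bX_t,Y_t(a)),
\]
with $C_d=d^{5/2}$. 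The key point is that the supremum is taken inside the sum through the envelope $\Phi$, so no uniform law of large numbers is needed.

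Second, we compute $\EE[U_T]$ using the same conditioning argument as in (\ref{eq::main-mtg}). Conditioning on $(\cH_{t-1},\bX_t)$ and using Assumption \ref{aspt:unconfoundedness}, we obtain
\[
\EE[w_a(A_t)\Phi(\bX_t,Y_t(a))\mid\cH_{t-1},\bX_t]=\EE[w_a(A_t)\mid\cH_{t-1},\bX_t]\cdot\EE[\Phi(\bX_t,Y_t(a))\mid\cH_{t-1},\bX_t].
\]
The first factor equals $1$. By the i.i.d.\ structure, the second factor has expectation $\EE[\Phi(\bX,Y(a))]<\infty$. Hence $\EE U_T=\EE\Phi(\bX,Y(a))$ for every $T$. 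Markov's inequality then gives $\PP(U_T>M)\le \EE\Phi/M$ uniformly in $T$, so $U_T=\cO_p(1)$ and the claim follows.

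The only care required is measurability and the Taylor-domain issue. $\Phi$ is assumed measurable with finite mean. Assumption \ref{aspt:min-sampling-prob} guarantees that $w_a$ is finite, though it is not otherwise needed, because the conditional weighting exactly cancels the propensity. We expect no real obstacle. The main subtle step is justifying that taking the supremum inside is legitimate, namely that $\Phi$ dominates $\nabla^2\bg^{(i)}$ for all $\btheta$ in the ball simultaneously. Assumption \ref{aspt:smoothness-general}(iii) states exactly this.
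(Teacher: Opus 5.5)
Your proposal is correct and matches the paper's proof: you bound the supremum by the single average $\frac1T\sum_{t}\frac{1_{\{A_t=a\}}}{\pi_t(A_t)}\Phi(\bX_t,Y_t(a))$ using Assumption~\ref{aspt:smoothness-general}(iii), use unconfoundedness to show its expectation equals $\mathbb{E}[\Phi(\bX_1,Y_1(a))]<\infty$, and finish with Markov's inequality. The only difference is the dimensional constant: your $d^{5/2}$ (from $\|B\|_{1,1}\le d^{3/2}\|B\|_2$ for each $d\times d$ slice) is the right one, whereas the paper's $d^2$ understates it by a factor $\sqrt d$, which does not matter for an $O_p(1)$ statement.
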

    
    \begin{lemma}\label{lem::asymptotic-normality-true-G-general}
        Under the assumptions of Theorem \ref{thm::asymptotic-normality-general}, as $T\rightarrow \infty$, $b_{a, T}\bG_T(\btheta_a^*)\xrightarrow{d} \cN(\mathbf{0}, \bar\bI_a)$, where $\bar\bI_a = \EE\big[\rho(a, \bX_t)\bg(\bX_t, Y_t(a);\btheta_a^*)\bg(\bX_t, Y_t(a);\btheta_a^*)^\top\big]$.
    \end{lemma}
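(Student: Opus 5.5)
The plan is a martingale central limit theorem applied to the normalized sum
\[
b_{a,T}\bG_T(\btheta_a^*)=\sum_{t=1}^T \bxi_{T,t},\qquad \bxi_{T,t}:=\frac{1}{S_{a,T}}\bZ_t(\btheta_a^*),
\]
where $\bZ_t(\btheta_a^*)=\frac{1_{\{A_t=a\}}}{\pi_t(A_t)}\bg(\bX_t,Y_t;\btheta_a^*)$. By (\ref{eq::main-mtg}) and the definition of $\btheta_a^*$, each $\bZ_t(\btheta_a^*)$ has conditional mean zero given $\cH_{t-1}$. Hence $\{\bxi_{T,t}\}_t$ is a martingale difference array with respect to $\cH_{t-1}$. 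By Cramér--Wold it suffices to fix $\bv\in\RR^d$ and verify two things for the scalar array $\bv^\top\bxi_{T,t}$: (a) the conditional variance converges in probability to $\bv^\top\bar\bI_a\bv$, and (b) a conditional Lyapunov condition.

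\textbf{Conditional variance.} Write $\bH(\bx):=\EE[\bg\bg^\top(\bX,Y(a);\btheta_a^*)\mid\bX=\bx]$. By unconfoundedness and the same conditioning as in (\ref{eq::main-mtg}),
\[
\EE[\bZ_t\bZ_t^\top\mid\cH_{t-1}]=\EE_{\bX_t}\!\left[\frac{\bH(\bX_t)}{\pi_t(a\mid\bX_t,\cH_{t-1})}\,\Big|\,\cH_{t-1}\right].
\]
So the conditional variance is $S_{a,T}^{-2}\sum_t r_{a,t}^{-1}V_t$, where $V_t:=\EE_{\bX_t}[(r_{a,t}/\pi_t)\bH(\bX_t)\mid\cH_{t-1}]$. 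Let $\bar V:=\EE[\rho(a,\bX)\bH(\bX)]=\bar\bI_a$. By Assumption \ref{aspt:boundedness-general}(i), $\|\bH\|\le M_2$, so
\[
\EE\|V_t-\bar V\|\le M_2\,\EE\left|\frac{r_{a,t}}{\pi_t}-\rho(a,\bX_t)\right|\to0
\]
by Definition \ref{aspt:policy_convergence}. The weights $w_{T,t}=r_{a,t}^{-1}/S_{a,T}^2$ sum to one. Since $S_{a,T}\to\infty$, $\max_{t\le t_0}w_{T,t}\to0$ for every fixed $t_0$. A Toeplitz-type argument (split into $t\le t_0$ and $t>t_0$) then gives $\EE\|\sum_t w_{T,t}(V_t-\bar V)\|\to0$, which is the $L_1$ and hence in-probability convergence needed for (a).

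\textbf{Lyapunov condition.} Bound $\sum_t\EE|\bv^\top\bxi_{T,t}|^4$. Using $\pi_t(a)\ge\pi_{\min,t}$ and $\EE[1_{\{A_t=a\}}/\pi_t^4\mid\cdot]=\pi_t^{-3}\le\pi_{\min,t}^{-3}$, we get
\[
\EE\|\bZ_t\|^4\le\pi_{\min,t}^{-3}\,\EE\|\bg(\bX,Y(a);\btheta_a^*)\|^4,
\]
which is finite by Assumption \ref{aspt:boundedness-general}(iii). Therefore
\[
\sum_t\EE\|\bxi_{T,t}\|^4\lesssim \frac{\sum_t\pi_{\min,t}^{-3}}{S_{a,T}^4}\to0
\]
by condition (ii). This $L^4$ bound also implies the conditional Lindeberg condition. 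With (a) and (b) in hand, the martingale CLT (Hall--Heyde) yields $b_{a,T}\bG_T(\btheta_a^*)\xrightarrow{d}\cN(\mathbf0,\bar\bI_a)$.

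\textbf{Main obstacle.} The delicate step is the variance stabilization. Definition \ref{aspt:policy_convergence} gives only $L_1$ convergence of the scaled inverse propensities at the random context $\bX_t$, whereas we need $L_1$ convergence of the conditional expectation $V_t$. This is why the uniform conditional bound $M_2$ on $\bH$ is used rather than just an unconditional moment. If $\rho$'s integrability were insufficient to make $\bar\bI_a$ finite, one would instead truncate $\bH$. The weighted averaging is also nontrivial, because the weights $r_{a,t}^{-1}$ may grow with $t$. The Toeplitz step relies only on the weights summing to one and on early weights vanishing, both of which follow from $S_{a,T}\to\infty$.
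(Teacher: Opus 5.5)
Your proposal is correct and follows essentially the same route as the paper's proof. Both use Cram\'er--Wold and a martingale CLT (the paper cites Dvoretzky's Theorem 2.2 where you cite Hall--Heyde), stabilize the conditional variance through the $M_2$ bound, the $L_1$ scaled inverse-propensity convergence and a Toeplitz argument, and obtain the Lindeberg condition from fourth moments via $\sum_t \pi_{\min,t}^{-3}/S_{a,T}^4 \to 0$. The only cosmetic difference is that you bound unconditional fourth moments and pass to the conditional Lindeberg sum by Markov's inequality, while the paper bounds the conditional fourth moments directly by the same deterministic quantity.
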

    
    We now derive the asymptotic distribution of $\hat\btheta_a$ from (\ref{eq::taylor-expansion-general}). First, since $\tilde\btheta_{a, i}$ is on the line segment between $\btheta_a^*$ and $\hat\btheta_a$, from Lemma \ref{lem::consistency-general} and Lemma \ref{lem::2nd-derivative-boundness-general}, we have $\forall i$,
    \begin{align*}
        \|\nabla^2\bG_T^{(i)}(\tilde\btheta_{a, i})\|_{1, 1} &\leq  \|\nabla^2\bG_T(\tilde\btheta_{a, i})\|_1\\
        &= \|\nabla^2\bG_T(\tilde\btheta_{a, i})\|_1\cdot1_{\{\|\tilde\btheta_{a, i}-\btheta_a^*\|_2\leq \epsilon_0\}} + \|\nabla^2\bG_T(\tilde\btheta_{a, i})\|_1\cdot1_{\{\|\tilde\btheta_{a, i}-\btheta_a^*\|_2> \epsilon_0\}}\\
        &\leq \sup_{\|\btheta - \btheta_a^*\|_2\leq \epsilon_0}\|\nabla^2\bG_T(\btheta)\|_1 + \|\nabla^2\bG_T(\tilde\btheta_{a, i})\|_1\cdot 1_{\{\|\hat\btheta_a-\btheta_a^*\|_2> \epsilon_0\}}= \cO_p(1).
    \end{align*}
    Here for a matrix $\bB\in\RR^{d_1\times d_2}$, we define $\|\bB\|_{1, 1} = \sum_{i\in[d_1], j\in[d_2]}|\bB_{i, j}|$. 
    
    Combine the above with Lemma \ref{lem::consistency-general} which implies $\hat\btheta_a - \btheta_a^* = o_p(1)$ and Lemma \ref{lem::convergence-true-derivative-general} which ensures convergence of $\nabla\bG_T(\btheta_a^*)$, we deduce that 
    \begin{equation}\label{eq::gradient-plus-convergence-general}
        \nabla\bG_T(\btheta_a^*) + \frac12\tilde{\bm{\delta}}_a\xrightarrow{p}\EE\nabla\bg(\bX_t, Y_t(a);\btheta_a^*).
    \end{equation}
    We further combine the above expression with Lemma \ref{lem::asymptotic-normality-true-G-general} and use Slutsky's theorem to obtain (\ref{eq::asymptotic-normality-general}).
    }

{
\subsection{Proof of Theorem \ref{thm::asymptotic-normality-joint-general}}\label{apdx::proof-thm::asymptotic-normality-joint-general}

We prove a more general version of Theorem \ref{thm::asymptotic-normality-joint-general} where the score function $\bg$ in (\ref{eq::theta-a-*}) for each arm can be different. Suppose that for any $a\in\cA$, $\btheta_a^*$ satisfies 
\begin{equation}\label{eq::theta-a-*-a}
    \EE\bg_a(\bX, Y(a);\btheta_a^*) = \mathbf{0}
\end{equation}
for a score function $\bg_a$. 

The following conditions extend Assumptions \ref{aspt:identifiability-general}, \ref{aspt:boundedness-general}, \ref{aspt:smoothness-general}, and \ref{aspt:min-sampling-prob} to the more general setting where the conditions apply not only to a single arm but simultaneously across all arms $a\in\cA$.

\begin{assumption}[Well-separated solution for all arms]\label{aspt:identifiability-general-joint}
    $\forall a\in\cA, \forall \epsilon>0$, \\ $\inf_{\|\btheta - \btheta_a^*\|_2>\epsilon}\|\EE\bg_a(\bX_t, Y_t(a);\btheta)\|_2>0$.
\end{assumption}

\begin{assumption}[Boundedness for all arms]\label{aspt:boundedness-general-joint}
    There exist constants $R_{\btheta}$, $M_2$ such that \\
    (i) $\forall a\in\cA$, $\|\EE[\bg_a(\bX_t, Y_t(a);\btheta_a^*)\bg_a(\bX_t, Y_t(a);\btheta_a^*)^\top|\bX_t]\|_2\leq M_2$, a.e. $\bX_t$; \\
    (ii) $\forall a\in\cA$, $\|\btheta_a^*\|_2<R_{\btheta}$, $\sup_{\|\btheta\|_2\leq R_{\btheta}}\EE\|\bg_a(\bX_t, Y_t(a);\btheta)\|_2^2<\infty$; \\
    (iii) $\forall a\in\cA$, $\EE \|\bg_a(\bX_t, Y_t(a);\btheta_a^*)\|_2^4<\infty$.
\end{assumption}

\begin{assumption}[Smoothness for all arms]\label{aspt:smoothness-general-joint}
(i)$\forall a\in\cA$, the function $\bg_a(\bx, y; \btheta)$ is twice differentiable with respect to $\btheta$, with $\EE\nabla\bg_a(\bX_t, Y_t(a);\btheta_a^*)$ nonsingular; \\
(ii) There exists a function $\phi$ such that $\forall a\in\cA$, $\forall \bx, y$, $\sup_{\|\btheta\|_2\leq R_{\btheta}}\|\nabla\bg_a(\bx, y; \btheta)\|_2\leq \phi(\bx, y)$, and $\EE[\phi(\bX_t, Y_t(a))^2|\bX_t]\leq M_2'$ a.e. $\bX_t$ for a constant $M_2'$;\\
(iii)There exists a constant $\epsilon_0>0$ and a function $\Phi$ such that $\forall a\in\cA$, \\
$\sup_{\|\btheta - \btheta_a^*\|_2\leq \epsilon_0, i\in[d]}\|\nabla^2\bg_a^{(i)}(\bx, y; \btheta)\|_2\leq\Phi(\bx, y)$ and $\EE\Phi(\bX_t, Y_t(a))<\infty$. Here $\bg_a^{(i)}(\bx, y; \btheta)$ denotes the $i$-th entry of $\bg_a(\bx, y; \btheta)$.
\end{assumption}

\begin{assumption}[Minimum sampling probability for all arms] \label{aspt:min-sampling-prob-joint}
$\forall a\in\cA$, $\pi_t(a)\geq \pi_{\min,t}$ almost surely for a deterministic positive sequence $\{\pi_{\min,t}\}_{t\geq 1}$.
\end{assumption}


Define $\bG_{a, T} := \frac1T\sum_{t=1}^T\frac1{\pi_t(A_t)}1_{\{A_t = a\}}\bg_a(\bX_t, Y_t;\btheta)$, and similar to (\ref{eq::estimating-equation-general}), we look for $\hat\btheta_a^{(T)}$ such that $\forall a\in\cA$,
\begin{equation}\label{eq::estimating-equation-general-A}
    \bG_{a, T}(\hat\btheta_a) = o_p(1/b_{a, T})
\end{equation}
as $T\rightarrow \infty$. We prove the following theorem, which is a generalization of Theorem \ref{thm::asymptotic-normality-joint-general}.

\begin{theorem}\label{thm::asymptotic-normality-general-joint-A}
    Under Assumptions \ref{aspt:unconfoundedness}, \ref{aspt:identifiability-general-joint}, \ref{aspt:boundedness-general-joint}, \ref{aspt:smoothness-general-joint}, and \ref{aspt:min-sampling-prob-joint}, suppose the behavior policy $\pi$ satisfies {scaled inverse-propensity convergence} with rate $\{r_{a, t}\}_{a\in\cA, t\geq 1}$ {and limit function $\rho$} in the sense of Definition \ref{aspt:policy_convergence}, so that $\bar\bI_a := \EE\big[\rho(a, \bX_t)\bg(\bX_t, Y_t(a);\btheta_a^*)\bg(\bX_t, Y_t(a);\btheta_a^*)^\top\big]$ is finite for all $a\in\cA$. In addition, for any arm $a$, Conditions (i) and (ii) in Theorem \ref{thm::asymptotic-normality-general} hold. Define $\bD_T = \mathrm{Diag}\big(b_{1, T} \bI_{d}, \ldots, b_{K, T} \bI_{d}\big)\in\RR^{(Kd)\times (Kd)}$. Then there exist estimators $\{\hat\btheta_a^{(T)}\}_{a\in\cA, T\geq 1}$ such that (\ref{eq::estimating-equation-general}) holds for each $a\in\cA$, and $\|\hat\btheta_a^{(T)}\|_2\leq R_\theta$ for all $a\in\cA, T\geq 1$. In addition,  any such estimators satisfy
\begin{equation}\label{eq::asymptotic-normality-joint-general-A}
    \bD_T(\hat\btheta^{(T)} - \btheta^*)\xrightarrow{d}\cN
    \left(
    \mathbf{0},\bSigma^*
    \right)
\end{equation}
as $T\rightarrow \infty$. Here $\bSigma^* = \diag(\bSigma_1^*, \ldots, \bSigma_K^*)$, where $\bSigma_{a}^*:= \bJ_a^{-1}\bar \bI_a \bJ_a^{-1, \top}$, and $$\bar\bI_a := \EE\big[\rho(a, \bX_t)\bg_a(\bX_t, Y_t(a);\btheta_a^*)\bg_a(\bX_t, Y_t(a);\btheta_a^*)^\top\big], \bJ_a:= \EE\nabla\bg_a(\bX_t, Y_t(a);\btheta_a^*).$$
\end{theorem}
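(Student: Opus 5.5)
The plan is to reduce Theorem~\ref{thm::asymptotic-normality-general-joint-A} to the single-arm argument already given for Theorem~\ref{thm::asymptotic-normality-general}, and then to add a joint martingale CLT for the stacked scores. The single-arm proof (Lemmas~\ref{lem::consistency-general}--\ref{lem::asymptotic-normality-true-G-general}) uses only the score of the arm in question. We therefore apply it verbatim with $\bg$ replaced by $\bg_a$, under Assumptions~\ref{aspt:identifiability-general-joint}--\ref{aspt:min-sampling-prob-joint}. This gives three things for each $a$:
\begin{itemize}
\item[(1)] existence of $\hat\btheta_a^{(T)}$ satisfying (\ref{eq::estimating-equation-general-A}) with $\|\hat\btheta_a^{(T)}\|_2\le R_\theta$, and consistency of any such sequence;
\item[(2)] the expansion (\ref{eq::taylor-expansion-general}), namely $\big[\nabla\bG_{a,T}(\btheta_a^*)+\tfrac12\tilde{\bm\delta}_a\big]\,b_{a,T}(\hat\btheta_a-\btheta_a^*)=-b_{a,T}\bG_{a,T}(\btheta_a^*)+o_p(1)$;
\item[(3)] the bracket converges in probability to $\bJ_a$, as in (\ref{eq::gradient-plus-convergence-general}).
\end{itemize}
Stacking over $a$ yields $\bM_T\,\bD_T(\hat\btheta^{(T)}-\btheta^*)=-\bD_T\bG_T^{\rm stack}(\btheta^*)+o_p(1)$ with $\bM_T\xrightarrow{p}\mathrm{diag}(\bJ_1,\ldots,\bJ_K)$, which is invertible. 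By Slutsky's theorem it then suffices to show
\[
\bD_T\bG^{\rm stack}_T(\btheta^*)\xrightarrow{d}\cN\big(\mathbf 0,\mathrm{diag}(\bar\bI_1,\ldots,\bar\bI_K)\big).
\]

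For the joint CLT we use the Cramér--Wold device. Fix $\bc=(\bc_a)_{a\in\cA}$ and consider $\sum_t \xi_{T,t}$ with
\[
\xi_{T,t}=\sum_a \frac{b_{a,T}}{T}\,\frac{1_{\{A_t=a\}}}{\pi_t(a)}\,\bc_a^\top\bg_a(\bX_t,Y_t;\btheta_a^*).
\]
By (\ref{eq::main-mtg}), each term is a martingale difference with respect to $\cH_t$. We then check the two conditions of the martingale CLT.
\begin{itemize}
\item \textbf{Conditional variance.} Because $1_{\{A_t=a\}}1_{\{A_t=a'\}}=0$ for $a\ne a'$, all cross terms vanish identically. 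This is the mechanism behind the block-diagonal limit. The conditional variance therefore reduces to $\sum_a$ of the single-arm conditional variances, which are
\[
\frac{1}{S_{a,T}^2}\sum_t \EE\Big[\frac{1}{\pi_t(a)}\,\bc_a^\top\,\EE[\bg_a\bg_a^\top\mid\bX_t]\,\bc_a \,\Big|\, \cH_{t-1}\Big].
\]
The proof of Lemma~\ref{lem::asymptotic-normality-true-G-general} already shows that each of these converges in probability to $\bc_a^\top\bar\bI_a\bc_a$. That argument rewrites $1/\pi_t(a)=r_{a,t}^{-1}\big(\rho(a,\bX_t)+o_{L_1}(1)\big)$, uses Assumption~\ref{aspt:boundedness-general-joint}(i) to bound the conditional second moment by $M_2$, and takes a weighted Cesàro average with weights $r_{a,t}^{-1}/S_{a,T}^2$.
\item \textbf{Lindeberg/Lyapunov.} The fourth-moment condition splits arm by arm, up to a constant $K^3$, by the elementary inequality $(\sum_a x_a)^4\le K^3\sum_a x_a^4$. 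Each arm's term is exactly the term controlled in the single-arm proof through condition~(ii), $\sum_t\pi_{\min,t}^{-3}/S_{a,T}^4\to0$, together with Assumption~\ref{aspt:boundedness-general-joint}(iii).
\end{itemize}

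The main obstacle is the conditional-variance step. We must confirm that the weighted Cesàro average converges even though the $L_1$ convergence in Definition~\ref{aspt:policy_convergence} is only along $t$, while the normalizing weights depend on $T$. Here I would invoke the already-proved single-arm statement directly rather than redo it. The genuinely new ingredients are the vanishing of cross-arm products, which is exact, and the Cramér--Wold reduction. Finally, the existence claim for the collection follows by choosing each $\hat\btheta_a^{(T)}$ separately as in Lemma~\ref{lem::consistency-general}.
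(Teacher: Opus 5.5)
Your proposal is correct and follows essentially the same route as the paper: per-arm linearization taken from the single-arm proof, stacking, Cram\'er--Wold, and the Dvoretzky martingale CLT. In that CLT the cross-arm terms vanish because of the indicators, the conditional variance is handled by citing the single-arm computation (\ref{eq::cond-var-general}), and the Lindeberg term is controlled by $\sum_t \pi_{\min,t}^{-3}/S_{a,T}^4 \to 0$. The differences are cosmetic. The paper first writes $b_{a,T}(\hat\btheta_a-\btheta_a^*) = -\bJ_a^{-1} b_{a,T}\bG_{a,T}(\btheta_a^*) + o_P(1)$ and applies Cram\'er--Wold to combinations that include $\bJ_a^{-1}$, whereas you apply it to the stacked score and invert the block-diagonal bracket via Slutsky. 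The paper also notes that the fourth-power cross terms vanish exactly, so you do not need $(\sum_a x_a)^4 \le K^3 \sum_a x_a^4$.
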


\begin{proof}[Proof of Theorem \ref{thm::asymptotic-normality-general-joint-A}]
First, the existence of $\{\hat\btheta_a^{(T)}\}_{a\in\cA, T\geq 1}$ is guaranteed by applying Theorem \ref{thm::asymptotic-normality-general} to each individual arm $a\in\cA$. Now suppose $\{\hat\btheta_a^{(T)}\}_{a\in\cA, T\geq 1}$ satisfies (\ref{eq::estimating-equation-general-A}) $\forall a\in\cA$, and $\|\hat\btheta_a^{(T)}\|_2\leq R_{\btheta}$ $\forall a\in\cA, T\geq 1$. For notational convenience, for the rest of this proof, we suppress the dependence of $\hat\btheta_a^{(T)}$ and $\hat\btheta^{(T)}$ on $T$ and denote them as $\hat\btheta_a$ and $\hat\btheta$ respectively. $\forall i\in[d]$, using Taylor expansion, we have 
\begin{align*}
    -\bG_{a, T}^{(i)}(\btheta_a^*)&= \bG_{a, T}^{(i)}(\hat\btheta_a) - \bG_{a, T}^{(i)}(\btheta_a^*)+o_p(1/b_{a, T})\\ 
    &= \langle\nabla\bG_{a, T}^{(i)}(\btheta_a^*), \hat\btheta_a-\btheta_a^*\rangle + R_{a, T}^{(i)}+o_p(1/b_{a, T})
\end{align*}
For a remainder term $R_{a, T}^{(i)}$. From the proof of Theorem \ref{thm::asymptotic-normality-general} we deduce that $R_{a, T}^{(i)} = o_P(1/b_{a, T})$. Thus, we stack over index $i$ and obtain that
$$
-\bG_{a, T}(\btheta_a^*) = \nabla\bG_{a, T}(\btheta_a^*)(\hat\btheta_a-\btheta_a^*) + o_P(1/b_{a, T})
$$
Combine the above with Lemma \ref{lem::2nd-derivative-boundness-general}, we have 
$$
b_{a, T}(\hat\btheta_a-\btheta_a^*) = -\bJ_a^{-1}\cdot b_{a, T}\bG_{a, T}(\btheta_a^*) +  o_P(1).
$$
We stack over action $a\in\cA$ and obtain 
\begin{equation}
\bD_t (\hat\btheta - \btheta^*) = -\mathrm{Diag}\big(\bJ_1^{-1}, \ldots, \bJ_K^{-1}\big) \cdot
\begin{pmatrix}
b_{1, T} \bG_{1, T}(\btheta_1^*) \\
\vdots\\
b_{K, T} \bG_{K, T}(\btheta_K^*)
\end{pmatrix}
 + o_P(1).
\end{equation}
In order to establish the desired asymptotic results, we only have to prove the asymptotic normality on RHS of the above. From Cramer-Wold Theorem, it suffices to show for any set of deterministic vectors $\{\bbeta_a\}_{a\in\cA}$, $\bbeta_a\in\RR^d$, 
\begin{equation}\label{eq::asymptotic-normality-joint-general-A-CW}
\sum_{a\in\cA}\bbeta_a^\top\bJ_a^{-1}\cdot b_{a, T}\bG_{a, T}(\btheta_a^*)\xrightarrow{d}\cN\big(0, \sum_a\bbeta_a^\top \bJ_a^{-1}\bar \bI_a\bJ_a^{-1, \top} \bbeta_a\big).
\end{equation}
Denote $\bZ_{t, a}= \frac{1_{\{A_t = a\}}}{\pi_t(A_t)}\bg_a(\bX_t, Y_t(a);\btheta_a^*)$. Then LHS of (\ref{eq::asymptotic-normality-joint-general-A-CW}) is equal to 
$$
\sum_a\bbeta_a^\top \bJ_a^{-1}b_{a, T}\cdot \frac1T\sum_{t=1}^T \bZ_{t, a} = \frac1T\sum_t\bar\bZ_t,
$$
where $\bar \bZ_t := \sum_ab_{a, T}\bbeta_a^\top \bJ_a^{-1}\bZ_{t, a}$. From \cite{dvoretzky1972asymptotic}, Theorem 2.2, (\ref{eq::asymptotic-normality-joint-general-A-CW}) is guaranteed by the following conditions:
\begin{gather}
   \EE [\bar Z_t|\cH_{t-1}] = 0\quad  \forall t\in[T],\label{eq::cond-exp-general-linear-combination}\\
   \frac1{T^2}\sum_{t\in[T]}\mathrm{Var}(\bar Z_t|\cH_{t-1}) \xrightarrow{p} \sum_{a\in\cA}\bbeta_a^\top \bJ_a^{-1}\bar\bI_a\bJ_a^{-1, \top}\bbeta_a,\label{eq::cond-var-general-linear-combination}\\
   \frac1{T^2}\sum_{t\in[T]}\EE\left[\bar Z_t^21_{\{|\bar Z_t|>T\delta\}}\Big|\cH_{t-1}\right]\xrightarrow{p} 0\quad \forall \delta>0.\label{eq::cond-lindeberg-general-linear-combination}
\end{gather}
Below we check these facts one by one.

\textbf{Check (\ref{eq::cond-exp-general-linear-combination})}: we have
$$
\EE [\bar Z_t|\cH_{t-1}] = \sum_a b_{a, T} \bbeta_a^\top \bJ_a^{-1}\EE[\bZ_{t, a}|\cH_{t-1}] = 0.
$$
Here we used (\ref{eq::main-mtg}).

\textbf{Check (\ref{eq::cond-var-general-linear-combination})}: we have 
\begin{align*}
\frac1{T^2}\sum_{t\in[T]}\mathrm{Var}(\bar Z_t|\cH_{t-1}) &= \frac1{T^2}\sum_t\EE[\bar Z_t^2|\cH_{t-1}]\\
& = \frac1{T^2}\sum_t\EE\Big[\big(\sum\nolimits_a b_{a, T}\bbeta_a^\top\bJ_a^{-1}\bZ_{t, a}\big)^2|\cH_{t-1}\Big] = \bar I_1 + \bar I_2,
\end{align*}
where 
\begin{align*}
\bar I_1 & = \frac1{T^2}\sum_t\sum_a\EE\Big[\big(b_{a, T}\bbeta_a^\top\bJ_a^{-1}\bZ_{t, a}\big)^2|\cH_{t-1}\Big],\\
\bar I_2 & = 
\frac2{T^2}\sum_t\sum_{a<a'}\EE\Big[
\big(b_{a, T}\bbeta_a^\top\bJ_a^{-1}\bZ_{t, a}\big)\big(b_{a', T}\bbeta_{a'}^\top\bJ_{a'}^{-1}\bZ_{t, a'}\big)
|\cH_{t-1}\Big].
\end{align*}
From (\ref{eq::cond-var-general}), we have $\bar I_1\xrightarrow{p} \sum_a\bbeta_a^\top \bJ_a^{-1}\bar \bI_a \bJ_a^{-1, \top}\bbeta_a$. For $\bar I_2$, note that for any $a\neq a'$, $\bZ_{t, a}\bZ_{t, a'}^\top = \mathbf{0}$ due to the indicators. Thus $\bar I_2=0$. Combining the arguments leads to the desired result.

\textbf{Check (\ref{eq::cond-lindeberg-general-linear-combination})}: fix any positive constant $\epsilon>0$. We have 
\begin{align*}
&\frac1{T^2}\sum_{t\in[T]}\EE\left[\bar Z_t^21_{\{|\bar Z_t|>T\epsilon\}}\Big|\cH_{t-1}\right]
 \leq \frac{1}{T^4\epsilon^2}\sum_t\EE[\bar Z_t^4|\cH_{t-1}]\\
=& \frac{1}{T^4\epsilon^2}\sum_t\EE\left[\big(\sum_a b_{a, T}\bbeta_a^\top \bJ_a^{-1}\bZ_{t, a}\big)^4\Big|\cH_{t-1}\right]\\
=& \frac{1}{T^4\epsilon^2}\sum_t\EE\left[\sum_a\big( b_{a, T}\bbeta_a^\top \bJ_a^{-1}\bZ_{t, a}\big)^4\Big|\cH_{t-1}\right]\\
=& \sum_a\frac{b_{a, T}^4}{T^4\epsilon^2}\sum_t\EE
\left[
\frac{1_{\{A_t = a\}}}{\pi_t(A_t)^4}\big(\bbeta_a^\top\bJ_a^{-1}\bg_a(\bX_t, Y_t(a);\btheta_a^*)\big)^4
\Big|\cH_{t-1}\right]\\
=& \sum_a\frac{1}{S_{a, T}^4\epsilon^2}\sum_t\EE
\left[
\EE_{A_t}\bigg[\frac{1_{\{A_t = a\}}}{\pi_t(A_t)^4}\Big|\cH_{t-1}, \bX_t\bigg]
\cdot
\EE_{Y_t(a)}\big[\big(\bbeta_a^\top\bJ_a^{-1}\bg_a(\bX_t, Y_t(a);\btheta_a^*)\big)^4
|\cH_{t-1}, \bX_t\big]
\Big|\cH_{t-1}\right]\\
=& \sum_a\frac{1}{S_{a, T}^4\epsilon^2}\sum_t\EE
\left[
\frac{1}{\pi_t(a)^3}
\cdot
\EE_{Y_t(a)}\big[\big(\bbeta_a^\top\bJ_a^{-1}\bg_a(\bX_t, Y_t(a);\btheta_a^*)\big)^4
|\cH_{t-1}, \bX_t\big]
\Big|\cH_{t-1}\right]\\
\leq & \sum_a\frac{1}{S_{a, T}^4\epsilon^2}\sum_t\pi_{\min,t}^{-3}\EE\big[\big(\bbeta_a^\top\bJ_a^{-1}\bg_a(\bX_t, Y_t(a);\btheta_a^*)\big)^4\big]\\
\leq & \sum_a\frac{\sum_{t=1}^T\pi_{\min,t}^{-3}}{\epsilon^2S_{a, T}^4}\cdot \max_a\|\bJ_{a}^{-1, \top}\bbeta_a\|_2^4 \cdot \max_{a}\EE\|\bg_a(\bX_1, Y_1(a);\btheta_a^*)\|_2^4\to 0.
\end{align*}
Here for the first inequality we use Chebyshev's Inequality. The second equality is due to the fact that all the cross terms in the expansion is exactly zero because each $\bZ_{t, a}$ contains the indicator term $1_{\{A_t= a\}}$. The last convergence is from the conditions in Theorem \ref{thm::asymptotic-normality-general-joint-A}, as well as Assumptions \ref{aspt:boundedness-general-joint} and \ref{aspt:smoothness-general-joint}.
\end{proof}
}
{
\subsection{Proof of Proposition \ref{thm::consistent-var-estimator-general}}\label{apdx::proof-thm::consistent-var-estimator-general}

We first prove that as $T\rightarrow \infty$,
\begin{equation}\label{eq::var-estimator-component-1-general}
\hat{\dot{\bG}}_{a, T}\xrightarrow{p}\EE[\nabla\bg(\bX_t, Y_t(a);\btheta_a^*)].
\end{equation}
From Lemma \ref{lem::convergence-true-derivative-general}, we deduce that 
$$
\frac1T\sum_{t=1}^T\frac1{\pi_t(A_t)}1_{\{A_t = a\}}\nabla\bg(\bX_t, Y_t(a);\btheta_a^*)\xrightarrow{p} \EE\nabla \bg(\bX_t, Y_t(a);\btheta_a^*).
$$
In addition, $\forall i$,
\begin{align}
&\left\|\frac1T\sum_{t=1}^T\frac1{\pi_t(A_t)}1_{\{A_t = a\}}\nabla\bg^{(i)}(\bX_t, Y_t(a);\hat\btheta_a^{(T)}) - \frac1T\sum_{t=1}^T\frac1{\pi_t(A_t)}1_{\{A_t = a\}}\nabla\bg^{(i)}(\bX_t, Y_t(a);\btheta_a^*)\right\|_2\nonumber\\
= & 
\left\|
\frac1T\bigg[
\sum_{t=1}^T\frac{1_{\{A_t = a\}}}{\pi_t(A_t)}
\int_{0}^1\nabla^2\bg^{(i)}(\bX_t, Y_t(a);\btheta_a^* + u(\hat\btheta_a^{(T)} - \btheta_a^*))\mathrm{d}u
\bigg]\cdot (\hat\btheta_a^{(T)} - \btheta_a^*)
\right\|_2\nonumber\\
\leq & \frac1T
\bigg[
\sum_{t=1}^T\frac{1_{\{A_t = a\}}}{\pi_t(A_t)}
\Big\|\int_{0}^1\nabla^2\bg^{(i)}(\bX_t, Y_t(a);\btheta_a^* + u(\hat\btheta_a^{(T)} - \btheta_a^*))\mathrm{d}u\Big\|_2
\bigg]\cdot \|\hat\btheta_a^{(T)} - \btheta_a^*\|_2\nonumber\\
\leq & \frac1T
\bigg[
\sum_{t=1}^T\frac{1_{\{A_t = a\}}}{\pi_t(A_t)}
\Phi(\bX_t, Y_t(a))
\bigg]\cdot \|\hat\btheta_a^{(T)} - \btheta_a^*\|_2 + o_p(1).\label{eq::consistent-var-G-1}
\end{align}
Here the last inequality uses Lemma \ref{lem::consistency-general} and Assumption \ref{aspt:smoothness-general}. Note that 
\begin{align*}
\EE\bigg[
\frac{1_{\{A_t = a\}}}{\pi_t(A_t)}
\Phi(\bX_t, Y_t(a))
\bigg]
&= 
\EE
\bigg[
\EE_{A_t}\bigg[
\frac{1_{\{A_t = a\}}}{\pi_t(A_t)}\Big|\cH_{t-1}, \bX_t
\bigg]
\cdot
\EE_{Y_t(a)}\Big[
\Phi(\bX_t, Y_t(a))
\Big|\cH_{t-1}, \bX_t
\Big]
\bigg]\\
& = 
\EE\Big[
\Phi(\bX_t, Y_t(a))
\Big]<\infty.
\end{align*}
Thus, we have 
$$
\frac1T
\bigg[
\sum_{t=1}^T\frac{1_{\{A_t = a\}}}{\pi_t(A_t)}
\Phi(\bX_t, Y_t(a))
\bigg] = O_p(1)
$$
Combine this and Lemma \ref{lem::consistency-general}, we deduce that the RHS of (\ref{eq::consistent-var-G-1}) is $o_p(1)$. Summarizing the above analysis leads to (\ref{eq::var-estimator-component-1-general}).

Next we prove that as $T\rightarrow \infty$, 
\begin{equation}\label{eq::var-estimator-component-2-general}
    \hat\bI_{a, T}\xrightarrow{p}\bar\bI_a.
\end{equation}
Denote $\bZ_t = \frac{1}{\pi_t(A_t)}1_{\{A_t = a\}}\bg(\bX_t, Y_t; \btheta_a^*)$, and $\hat\bZ_t^{(T)} = \frac{1}{\pi_t(A_t)}1_{\{A_t = a\}}\bg(\bX_t, Y_t; \hat \btheta_a^{(T)})$. Then (\ref{eq::var-estimator-component-2-general}) is equivalent to 
$$
\frac1{S_{a, T}^2}\sum_{t=1}^T\hat\bZ_t^{(T)}\hat\bZ_t^{(T), \top}\xrightarrow{p} \bar \bI_a,
$$
and in order to show this, it suffices to prove 
\begin{align}
    \frac1{S_{a, T}^2}\sum_{t=1}^T \hat\bZ_t^{(T)}\hat\bZ_t^{(T), \top}  - \frac1{S_{a, T}^2}\sum_{t=1}^T \bZ_t\bZ_t^{\top} &= o_p(1),\label{eq::var-estimator-component-2-1-general}\\
    \frac1{S_{a, T}^2}\sum_{t=1}^T \Big(\bZ_t\bZ_t^{\top} - \EE[\bZ_t\bZ_t^{\top}|\cH_{t-1}]\Big) &= o_p(1),\label{eq::var-estimator-component-2-2-general} \\
    \frac1{S_{a, T}^2}\sum_{t=1}^T \EE[\bZ_t\bZ_t^{\top}|\cH_{t-1}]&\xrightarrow{p}\bar\bI_a.\label{eq::var-estimator-component-2-3-general}
\end{align}
Below we check these facts one by one.

\textbf{Check (\ref{eq::var-estimator-component-2-1-general}):} For convenience, we define $\bg_t^* = \bg(\bX_t, Y_t(a);\btheta_a^*)$, and $\hat\bg_t^{(T)} = \bg(\bX_t, Y_t(a);\hat\btheta_a^{(T)})$. Then 
\begin{align}
\frac1{S_{a, T}^2}\sum_{t=1}^T \hat\bZ_t^{(T)}\hat\bZ_t^{(T), \top}  - \frac1{S_{a, T}^2}\sum_{t=1}^T \bZ_t\bZ_t^{\top}
= \frac1{S_{a, T}^2}\sum_{t=1}^T 
\frac{1_{\{A_t = a\}}}{\pi_t(A_t)^2}\big(\hat\bg_t^{(T)}\hat\bg_t^{(T), \top} - \bg_t^*\bg_t^{*, \top}\big) = \bI_1' + \bI_2' + \bI_3'. \label{eq::consistent-var-part1-decomposition}
\end{align}
Here
\begin{align*}
\bI_1'& = \frac1{S_{a, T}^2}\sum_{t=1}^T 
\frac{1_{\{A_t = a\}}}{\pi_t(A_t)^2}\bg_t^* \big(\hat\bg_t^{(T)} - \bg_t^*\big)^\top,\\
\bI_2'& = \frac1{S_{a, T}^2}\sum_{t=1}^T 
\frac{1_{\{A_t = a\}}}{\pi_t(A_t)^2}\big(\hat\bg_t^{(T)} - \bg_t^*\big) \big(\hat\bg_t^{(T)} - \bg_t^*\big)^\top,\\
\bI_3'& =  \frac1{S_{a, T}^2}\sum_{t=1}^T\frac{1_{\{A_t = a\}}}{\pi_t(A_t)^2}\big(\hat\bg_t^{(T)} - \bg_t^*\big)\bg_t^{*, \top}.
\end{align*}
We first analyze $\bI_1'$. We have
\begin{align}
\|\bI_1'\|_2 &= \frac1{S_{a, T}^2}\left\|\sum_{t=1}^T 
\frac{1_{\{A_t = a\}}}{\pi_t(A_t)^2}\bg(\bX_t, Y_t(a);\btheta_a^*)\big(\hat\btheta_a^{(T)} - \btheta_a^*\big)^\top\cdot \Big[\int_{0}^1\nabla \bg(\bX_t, Y_t(a);\btheta_a^* + u(\hat\btheta_a^{(T)} - \btheta_a^*))\mathrm{d}u\Big]^\top\right\|_2\nonumber\\
&\leq \frac1{S_{a, T}^2}
\bigg[
\sum_{t=1}^T \frac{1_{\{A_t = a\}}}{\pi_t(A_t)^2}\|\bg(\bX_t, Y_t(a);\btheta_a^*)\|_2
\cdot \phi(\bX_t, Y_t(a))
\bigg]\cdot \big\|\hat\btheta_a^{(T)} - \btheta_a^*\big\|_2\label{eq::consistent-var-I1'-1}
\end{align}
where we have used Assumption \ref{aspt:smoothness-general}. Denote $\phi_g^*(\bx, y) = \|\bg(\bx, y;\btheta_a^*)\|_2 \cdot \phi(\bx;y)$. Then From Assumptions \ref{aspt:boundedness-general} and \ref{aspt:smoothness-general} and Cauchy-Schwarz Inequality, we have 
$$
\EE[\phi_g^*(\bX_t, Y_t(a))|\bX_t]\leq \sqrt{M_2 M_2'} \quad a.e. \bX_t.
$$
Thus,
\begin{align}
&\frac1{S_{a, T}^2}\EE
\bigg[
\sum_{t=1}^T \frac{1_{\{A_t = a\}}}{\pi_t(A_t)^2}\phi_g^*(\bX_t, Y_t(a))
\bigg]\nonumber\\
= & \frac1{S_{a, T}^2}\sum_{t=1}^T\EE\left[
\EE\bigg[\frac{1_{\{A_t = a\}}}{\pi_t(A_t)^2}\Big|\cH_{t-1}, \bX_t\bigg]
\cdot\EE\Big[
\phi_g^*(\bX_t, Y_t(a))
\Big|\cH_{t-1}, \bX_t
\Big]
\right]\nonumber\\
\leq & \frac1{S_{a, T}^2}\sum_{t=1}^T\EE\left[
\frac1{\pi_t(a)}\cdot \sqrt{M_2M_2'}
\right]\nonumber\\
\leq &\frac{\sqrt{M_2M_2'}}{S_{a, T}^2}\left[
\sum_{t=1}^T\frac1{r_{a, t}}\EE\big[\rho(a,\bX_t)\big] +\sum_{t=1}^T\frac1{r_{a, t}} \EE\bigg[\frac{r_{a, t}}{\pi_t(a)} - \rho(a,\bX_t)\bigg]
\right]\nonumber\\
= & \sqrt{M_2M_2'}\cdot \EE\bigg[\frac1{\bar \pi(a|\bX_t)}\bigg] + o_p(1) = O_p(1).\label{eq::consistent-var-I1'-2}
\end{align}
Here the second-to-last equality uses {scaled inverse-propensity convergence} as well as Toeplitz Lemma, similar to the proof of Lemma \ref{lem::asymptotic-normality-true-G-general}. Combining (\ref{eq::consistent-var-I1'-1}) and (\ref{eq::consistent-var-I1'-2}), we deduce that $\|\bI_1'\|_2 = o_p(1)$. Using the same arguments, we also have $\|\bI_3'\|_2 = o_p(1)$.

The analysis for $\bI_2'$ is also similar. In fact similar to (\ref{eq::consistent-var-I1'-1}) we obtain that
$$
\|\bI_2'\| \leq \frac1{S_{a, T}^2}
\bigg[\sum_{t=1}^T \frac{1_{\{A_t = a\}}}{\pi_t(A_t)^2}\phi(\bX_t, Y_t(a))^2
\bigg]\cdot \big\|\hat\btheta_a^{(T)} - \btheta_a^*\big\|_2^2.
$$
Similar to (\ref{eq::consistent-var-I1'-2}) we will have 
$$
\frac1{S_{a, T}^2}
\bigg[\sum_{t=1}^T \frac{1_{\{A_t = a\}}}{\pi_t(A_t)^2}\phi(\bX_t, Y_t(a))^2
\bigg] = O_p(1).
$$
Combining the consistency of $\hat\btheta_a^{(T)}$, we deduce that $\|\bI_2'\|_2 = o_p(1)$. 

Finally, we plug in the analysis for $\bI_1'$, $\bI_2'$ and $\bI_3'$ to (\ref{eq::consistent-var-part1-decomposition}) and obtain the desired result.

\textbf{Check (\ref{eq::var-estimator-component-2-2-general}):} $\forall \bc\in\RR^d$, $\forall \delta>0$,
\begin{align}
& \PP\left(\bigg|\frac1{S_{a, T}^2}\sum_{t=1}^T[\bc^\top\bZ_t\bZ_t^\top\bc - \EE[\bc^\top\bZ_t\bZ_t^\top\bc|\cH_{t-1}]]\bigg|>\delta\right)\nonumber\\
\leq & \frac1{\delta^2{S_{a, T}^4}}\EE\left(\sum_{t=1}^T[\bc^\top\bZ_t\bZ_t^\top\bc - \EE[\bc^\top\bZ_t\bZ_t^\top\bc|\cH_{t-1}]]\right)^2\nonumber\\
 = & \frac1{\delta^2{S_{a, T}^4}}\sum_{t=1}^T\EE\left(\bc^\top\bZ_t\bZ_t^\top\bc - \EE[\bc^\top\bZ_t\bZ_t^\top\bc|\cH_{t-1}]\right)^2\nonumber\\
 \leq & \frac1{\delta^2{S_{a, T}^4}}\sum_{t=1}^T\EE\left(\bc^\top\bZ_t\bZ_t^\top\bc\right)^2\nonumber\\
 = & \frac1{\delta^2{S_{a, T}^4}} \sum_{t=1}^T \EE\left[
 \frac{1_{\{A_t = a\}}}{\pi_t(A_t)^4}\cdot \big(\bc^\top \bg(\bX_t, Y_t(a);\btheta_a^*)\big)^4
 \right]\nonumber\\
 = & \frac1{\delta^2{S_{a, T}^4}} \sum_{t=1}^T \EE\left[
 \EE\bigg[\frac{1_{\{A_t = a\}}}{\pi_t(A_t)^4}\Big|\cH_{t-1},\bX_t\bigg]
 \cdot \EE\Big[\big(\bc^\top \bg(\bX_t, Y_t(a);\btheta_a^*)\big)^4\Big|\cH_{t-1},\bX_t\Big]
 \right]\nonumber\\
 = & \frac1{\delta^2{S_{a, T}^4}} \sum_{t=1}^T \EE\left[
 \frac{1}{\pi_t(a)^3}
 \cdot \EE\Big[\big(\bc^\top \bg(\bX_t, Y_t(a);\btheta_a^*)\big)^4\Big|\cH_{t-1},\bX_t\Big]
 \right]\nonumber\\
 \leq & \frac1{\delta^2{S_{a, T}^4}}\bigg(\sum_{t=1}^T\pi_{\min,t}^{-3}\bigg)\cdot \EE[\big(\bc^\top \bg(\bX_t, Y_t(a);\btheta_a^*)\big)^2]
 \to 0
\end{align}
Here the first inequality is because of Chebyshev's Inequality. The second equality is due to the following fact: Let $v_{t}' = \bc^\top\bZ_t\bZ_t^\top\bc$. Then for $t_1<t_2$,
\begin{align*}
   & \EE (v_{t_1}' - \EE[v_{t_1}'|\cH_{t_1-1}])(v_{t_2}' - \EE[v_{t_2}'|\cH_{t_2-1}]) \\
   = & \EE\big[ \EE[(v_{t_1}' - \EE[v_{t_1}'|\cH_{t_1-1}])(v_{t_2}' - \EE[v_{t_2}'|\cH_{t_2-1}])\big|\cH_{t_2-1}]\big]\\
   = & \EE\big[(v_{t_1}' - \EE[v_{t_1}'|\cH_{t_1-1}])\cdot \EE[v_{t_2}' - \EE[v_{t_2}'|\cH_{t_2-1}]\big|\cH_{t_2-1}]\big]\\
   = & \EE\big[ (v_{t_1}' - \EE[v_{t_1}'|\cH_{t_1-1}])\cdot 0\big] = 0.
\end{align*}
The last convergence uses Assumption \ref{aspt:boundedness-general}. 

\textbf{Check (\ref{eq::var-estimator-component-2-3-general}):} In fact, (\ref{eq::var-estimator-component-2-3-general}) is a direct consequence of the analysis in (\ref{eq::cond-var-general}).
}

{
\subsection{Proof of Proposition \ref{prop::convergence-mab-no-pi-min}}\label{apdx::proof-prop::convergence-mab-no-pi-min}
Define $\cH_t^0:= \sigma(\{A_\tau, Y_\tau\}_{\tau = 1}^t)$ for $t\geq 1$. For any $a\in\cA$, $t\geq 1$ define $D_{a, t} = 1_{\{A_t = a\}}(Y_t(a) - \mu_a^*)$, $S_{a, t} = \sum_{\tau\leq t}D_{a, \tau}$. By unconfoundedness, 
$$
\EE[D_{a, t}|\cH_{t-1}^0]  = 1_{\{A_t = a\}}\cdot \EE\big[Y_t(a)-\mu_a^*|\cH_{t-1}^0\big] = 0.
$$
From the subgaussian assumption, $\forall \lambda$, $\forall \tau\geq 1$, $\EE\big[\exp(\lambda D_{a, \tau})|\cH_{\tau-1}^0, A_\tau\big]\leq \exp\big(\frac{\lambda^2\sigma_Y^2}{2}1_{\{A_\tau = a\}}\big)$. Iterating over $\tau$ using the tower property, we obtain that
$$
\EE\left[\exp(\lambda S_{a, t})|A_{1:t}\right]\leq \exp\left(\frac{\lambda^2\sigma_Y^2}{2}N_{a, t}\right).
$$
Apply the Chernoff bound condition on $A_{1:t}$, we obtain that $\forall x>0$, $\PP(S_{a, t}\geq x|A_{1:t})\leq \exp\big(-\lambda x +\frac{\lambda^2\sigma_Y^2}{2}N_{a, t}\big)$.
Thus, given any $n\in\NN$, on the event $\{N_{a, t} = n\}$, by letting $\lambda = \frac{x}{\sigma_Y^2n}$, we deduce that 
$$
\PP(S_{a, t}\geq x|A_{1:t}, N_{a, t} = n)\leq \exp\left(-\frac{x^2}{2\sigma_Y^2n}\right).
$$
Similarly, we combine the analysis of the event $\{S_{a, t}\leq -x\}$, and the fact that $S_{a, t} = N_{a, t}(\hat\mu_{a, t} - \mu_a^*)$, then we obtain
\begin{equation}\label{eq::mab-concentration-00}
   \PP(|\hat\mu_{a, t} - \mu_a^*|\geq \delta|A_{1:t}, N_{a, t} = n)\leq 2\exp\left(-\frac{n\delta^2}{2\sigma_Y^2}\right). 
\end{equation}
Therefore $\forall m\geq 1$, 
\begin{align}
&\PP(|\hat\mu_{a, t} - \mu_a^*|\geq \delta)
\leq \PP(|\hat\mu_{a, t} - \mu_a^*|\geq \delta|N_{a, t}\geq m) + \PP(N_{a, t}< m)\nonumber\\
\leq & 2\exp\left(-\frac{m\delta^2}{2\sigma_Y^2}\right) + \PP(N_{a, t}< m).\label{eq::mab-eps-greedy-concentration-0}
\end{align}

$\bullet$ \textbf{$\epsilon$-greedy.} Without loss of generality, we assume that the $\epsilon$-greedy policy is realized via an independent external randomization sequence $\{B_t, U_t\}_{t\geq 1}$, where $B_t\sim \mathrm{Bernoulli}(\epsilon_t)$ denotes the indicator of exploration at time $t$, $U_t\sim \mathrm{Unif}(\cA)$ denotes the arm to select if the algorithm explores, all independent. Choose $m_t = \frac1K\sum_{\tau\leq t} \epsilon_\tau$, and denote $M_{a, t} = \sum_{\tau\leq t}E_{a, \tau}$, where $E_{a, \tau} = 1_{\{B_\tau = 1, U_\tau = a\}}$. Then we have $N_{a, t}\geq M_{a,t}$ a.s. and that $\EE M_{a, t} = m_t$. Using the Chernoff bound we obtain that 
$$
\PP(N_{a, t}\leq \frac12m_t)\leq \PP(M_{a, t}\leq \frac12m_t)\leq \exp\left(-\frac18 m_t\right).
$$
Now we plug $m = \lfloor m_t/2\rfloor + 1$, $\delta = \Delta/3$ into (\ref{eq::mab-eps-greedy-concentration-0}) and get
$$
\PP(|\hat\mu_{a, t} - \mu_a^*|\geq \delta)\leq 2\exp\left(-\frac{m_t\Delta^2}{36\sigma_Y^2}\right) + \exp\left(-\frac18 m_t\right)\leq 3\exp\left(-c\sum_{\tau\leq t}\epsilon_\tau\right).
$$
Here $c = \frac1K \min\big\{\frac18, \frac{\Delta^2}{36\sigma_Y^2}\big\}$. This indicates that $\forall t\geq 1$, 
$$
\PP(\hat a_t\neq a^*)\leq 3K\exp\left(-c\sum_{\tau< t}\epsilon_\tau\right).
$$
Here $\hat a_t := \argmax_a \hat\mu_{a, t-1}$.

Now for the optimal arm $a^*$, 
\begin{align*}
\EE\left|\frac1{\pi_t(a|\cH_{t-1}^0)} - 1\right|
&\leq \left|\frac1{1-\frac{K-1}{K}\epsilon_t} - 1\right| + \bigg(\frac{K}{\epsilon_t} + 1\bigg)\PP(\hat a_t\neq a^*)\\
& \leq O(\epsilon_t) + 3K\bigg(\frac{K}{\epsilon_t} + 1\bigg)\exp\left(-c\sum_{\tau< t}\epsilon_\tau\right) = o(1).
\end{align*}
For any suboptimal arm $a\neq a^*$, 
\begin{align*}
\EE\left|\frac{\epsilon_t}{\pi_t(a|\cH_{t-1}^0)} - K\right|
&\leq \left|\frac{\epsilon_t}{1-\frac{K-1}{K}\epsilon_t} - K\right|\PP(\hat a_t\neq a^*)\\
& \leq O(\epsilon_t + 1) \PP(\hat a_t\neq a^*) = o(1).
\end{align*}
Thus the conclusion follows.

$\bullet$ \textbf{UCB.} Similar to the $\epsilon$-greedy case, we assume that the policy is realized via an independent external randomization sequence $\{B_t, U_t\}_{t\geq 1}$, where $B_t\sim \mathrm{Bernoulli}(K\pi_{\min,t})$ denotes the indicator of exploration at time $t$, $U_t\sim \mathrm{Unif}(\cA)$ denotes the arm to select if the algorithm explores, all independent. Define $m_t = \sum_{\tau\leq t} \pi_{\min,\tau}$, and denote $M_{a, t} = \sum_{\tau\leq t}E_{a, \tau}$, where $E_{a, \tau} = 1_{\{B_\tau = 1, U_\tau = a\}}$. Then we have $N_{a, t}\geq M_{a,t}$ a.s. and that $\EE M_{a, t} = m_t$. Using the Chernoff bound we obtain that 
\begin{equation}\label{eq::mab-arm-pull-control-no-min-prob}
\PP(N_{a, t}\leq \frac12m_t)\leq \PP(M_{a, t}\leq \frac12m_t)\leq \exp\left(-\frac18 m_t\right).    
\end{equation}
Now we plug $m = \lfloor m_t/2\rfloor + 1$, $\delta = \Delta/5$ into (\ref{eq::mab-eps-greedy-concentration-0}) and get
\begin{equation}\label{eq::mab-convergence-n-1}
    \PP(|\hat\mu_{a, t} - \mu_a^*|\geq \delta)\leq 2\exp\left(-\frac{m_t\Delta^2}{100\sigma_Y^2}\right) + \exp\left(-\frac18 m_t\right)\leq 3\exp\left(-c'\sum_{\tau\leq t}\pi_{\min,\tau}\right).
\end{equation}
Here $c' = \min\big\{\frac18, \frac{\Delta^2}{100\sigma_Y^2}\big\}$. 
In addition, $\exists T_0\in\NN$ s.t. $\forall t\geq T_0$, $\sqrt{C_{t+1}/(m_{t}/2)}\leq \Delta/4$. Thus, $\forall t\geq T_0$, 
\begin{equation}\label{eq::mab-convergence-n-2}
\PP\left(\sqrt{\frac{C_{t+1}}{N_{a, t}}}\geq\frac{\Delta}{4}\right)\leq \PP\left(N_{a, t}\leq \frac12 m_t\right)\leq \exp\left(-\frac18 m_t\right).
\end{equation}
Combining (\ref{eq::mab-convergence-n-1}) and (\ref{eq::mab-convergence-n-2}), taking a union bound over all actions, we deduce that $\forall t> T_0$,
$$
\PP(\tilde a_{t}\neq a^*)\leq 4K\exp\left(-c'\sum_{\tau< t}\pi_{\min,\tau}\right).
$$
Here $\tilde a_t := \argmax_a \hat\mu_{a, t-1} + \sqrt{C_t/N_{a, t-1}}$.

Now for the optimal arm $a^*$, 
\begin{align*}
\EE\left|\frac1{\pi_t(a|\cH_{t-1}^0)} - 1\right|
&\leq \left|\frac1{1-(K-1)\pi_{\min,t}} - 1\right| + \bigg(\frac1{\pi_{\min,t}} + 1\bigg)\PP(\tilde a_t\neq a^*)\\
& \leq O(\pi_{\min,t}) + 4K\bigg(\frac1{\pi_{\min,t}} + 1\bigg)\exp\left(-c'\sum_{\tau< t}\pi_{\min,\tau}\right) = o(1).
\end{align*}
For any suboptimal arm $a\neq a^*$, 
\begin{align*}
\EE\left|\frac{\pi_{\min,t}}{\pi_t(a|\cH_{t-1}^0)} - 1\right|
&\leq \left|\frac{\pi_{\min,t}}{1-(K-1)\pi_{\min,t}} - 1\right|\PP(\tilde a_t\neq a^*)\\
& \leq O(\pi_{\min,t} + 1) \PP(\tilde a_t\neq a^*) = o(1).
\end{align*}
Thus the conclusion follows.

$\bullet$ \textbf{TS.} To analyze the TS algorithm, we need to further quantify each $\mu_{a, t}^{\mathrm{post}}$ in addition to $\hat\mu_{a, t}$. From (\ref{eq::mab-concentration-00}) we deduce that 
\begin{align}
&\PP\left(\big|\mu_{a, t}^{\mathrm{post}} - \mu_a^*\big|\geq \delta\big|A_{1:t}, N_{a, t} = n\right)\nonumber\\
= & \PP\left(\left|\bigg(\frac1{\sigma_0^2} +\frac{N_{a, t}}{\sigma^2}\bigg)^{-1}\bigg(\frac{\mu_0}{\sigma_0^2} +\frac{N_{a, t}\hat\mu_{a, t}}{\sigma^2}\bigg) - \mu_a^*\right|\geq \delta\bigg|A_{1:t}, N_{a, t} = n \right)\nonumber\\
\leq & \PP\left(
\big|\hat\mu_{a, t}- \mu_a^*\big|\geq \bigg(\frac{\sigma^2}{n\sigma_0^2} + 1\bigg)\delta - \frac{\sigma^2}{n\sigma_0^2} \big|\mu_a^* - \mu_0\big|
\bigg|A_{1:t}, N_{a, t} = n \right)\nonumber\\
\leq & 2\exp\left(
-\frac{n}{2\sigma_Y^2}\bigg[\bigg(\frac{\sigma^2}{n\sigma_0^2} + 1\bigg)\delta - \frac{\sigma^2}{n\sigma_0^2}\big|\mu_a^* - \mu_0\big|\bigg]_+^2
\right).\label{eq::TS-post-mean-control-no-min-prob}
\end{align}

Now we proceed to prove (\ref{eq::mab-arm-pull-control-no-min-prob}) for the TS policy using a similar argument as in $\epsilon$-greedy and UCB. Specifically, define 
$$
q_t (a|\cH_{t-1}):= \frac{\pi_t^{TS}(a|\cH_{t-1}) - \pi_{\min,t}}{1-K\pi_{\min,t}}.
$$
Then it is easy to verify $q_t(\cdot|\cH_{t-1})$ defines a probability measure on $\cA$. In addition,
$$
\pi_t^{TS}(a|\cH_{t-1}) = K\pi_{\min}\mathrm{Unif}(\cA) + (1-K\pi_{\min}) q_t (a|\cH_{t-1}).
$$
Without loss of generality, assume that the action selection in TS is proceeded recursively as follows: for each $t$, condition on $\cH_{t-1}$, sample $V_t\sim q_t (\cdot|\cH_{t-1})$, and independently sample $B_t\sim \mathrm{Bernoulli}(K\pi_{\min,t})$, $U_t\sim \mathrm{Unif}(\cA)$. Now set 
$$
A_t = 
\begin{cases}
U_t\quad& \text{if } B_t = 1,\\
V_t\quad& \text{if } B_t = 0.
\end{cases}
$$
Then $A_t\sim \pi_t^{TS}(a|\cH_{t-1})$. In addition, $E_{a, t}:=1_{\{B_t = 1, U_t = a\}}\leq 1_{\{A_t = a\}}$, a.s. Thus $M_{a, t} := \sum_{\tau\leq t} E_{a, t}\leq N_{a, t}$ a.s. and we have $\EE M_{a, t} = \sum_{\tau\leq t}\pi_{\min,\tau}: = m_t$. Similar to the UCB case we obtain (\ref{eq::mab-arm-pull-control-no-min-prob}). 

Now fix a suboptimal arm $a\neq a^*$. Define the event $\cE_{a, t} = \cE_{a, t}^{(1)}\cap \cE_{a, t}^{(2)}$, where $\cE_{a, t}^{(1)} = \{\mu_{a^*, t}^{\mathrm{post}} \geq \mu_{a, t}^{\mathrm{post}} + 3\Delta/5\}$, $\cE_{a, t}^{(2)} = \big\{\max\{(\sigma_a^{\mathrm{post}})^2, (\sigma_{a^*}^{\mathrm{post}})^2\}\leq (1/\sigma_0^2 + m_t/(2\sigma^2))^{-1}\big\}$, both events are $\cH_{t}^0$-measurable. On $\cE_{a, t}$, when $\mu_a'$ and $\mu_{a^*}'$ are sampled jointly from the posterior up to time $t$, we have
\begin{align}
\PP_t^{\mathrm{post}}\left(\mu_a'\geq \mu_{a^*}' - \Delta/5|\cH_t^0\right)
& = \Phi\left(\Delta/5; \mu_{a^*, t}^{\mathrm{post}} - \mu_{a, t}^{\mathrm{post}}, (\sigma_a^{\mathrm{post}})^2 + (\sigma_{a^*}^{\mathrm{post}})^2 \right)\nonumber\\
&\leq \exp\left(
-\frac{\big(\mu_{a^*, t}^{\mathrm{post}} - \mu_{a, t}^{\mathrm{post}} - \Delta/5\big)^2}{2\big[(\sigma_a^{\mathrm{post}})^2 + (\sigma_{a^*}^{\mathrm{post}})^2\big]}
\right)\nonumber\\
& \leq \exp\left(
-\frac{\Delta^2}{25}\bigg(\frac{1}{\sigma_0^2} + \frac{m_t}{2\sigma^2}\bigg)
\right).\label{eq::TS-arm-a-opt-prob}
\end{align}
Here, $\Phi(z;\mu, \sigma^2)$ is defined as the CDF of $\cN(\mu, \sigma^2)$ evaluated at $z$.
Next, to prove that $\cE_{a, t}$ is a high probability event, we upper bound the probability of $\cE_{a, t}^{(1), c}$ and $\cE_{a, t}^{(2), c}$. First, from the definition of $\Delta$ we have 
\begin{equation}\label{eq::E-a-t-1-prob-0}
\PP(\cE_{a, t}^{(1), c})\leq \PP(|\mu_{a, t}^{\mathrm{post}} - \mu_a^*|\geq \Delta/5) + \PP(|\mu_{a^*, t}^{\mathrm{post}} - \mu_{a^*}^*|\geq \Delta/5).
\end{equation}
When $n\geq \frac{10\sigma^2}{\Delta\sigma_0^2}|\mu_a^* - \mu_0|$, we have $\frac{\Delta}{10} \geq \frac{\sigma^2}{n\sigma_0^2}\big|\mu_a^* - \mu_0\big|$ and thus by choosing $\delta = \Delta/5$ in (\ref{eq::TS-post-mean-control-no-min-prob}) we deduce that 
$$
\PP\left(\big|\mu_{a, t}^{\mathrm{post}} - \mu_a^*\big|\geq \Delta/5\big|A_{1:t}, N_{a, t} = n\right)\leq 2\exp\left(-\frac{\Delta^2n}{200\sigma_Y^2}\right).
$$
Thus $\forall m\geq \frac{10\sigma^2}{\Delta\sigma_0^2}|\mu_a^* - \mu_0|$,
\begin{align}
\PP\left(\big|\mu_{a, t}^{\mathrm{post}} - \mu_a^*\big|\geq \Delta/5\right)
&\leq \PP\left(\big|\mu_{a, t}^{\mathrm{post}} - \mu_a^*\big|\geq \Delta/5\big| N_{a, t} \geq m\right) + \PP(N_{a, t}< m)\nonumber\\
&\leq 2\exp\left(-\frac{\Delta^2m}{200\sigma_Y^2}\right) + \PP(N_{a, t}< m).\label{eq::TS-arm-mean-diff-bound-no-min-prob}
\end{align}
Define $m = \lfloor m_t/2\rfloor + 1$. Define $T_0' = \min\{t:m_t\geq \frac{20\sigma^2}{\Delta\sigma_0^2}\big|\mu_a^* - \mu_0\big|\}$. Then $\forall t\geq T_0'$, from (\ref{eq::mab-arm-pull-control-no-min-prob}) and (\ref{eq::TS-arm-mean-diff-bound-no-min-prob}), we deduce that 
$$
\PP\left(\big|\mu_{a, t}^{\mathrm{post}} - \mu_a^*\big|\geq \Delta/5\right)\leq 3\exp(-c''m_t),
$$
where $c'' = \min\{\frac18, \frac{\Delta^2}{400\sigma_Y^2}\}$.
Because this holds for any $a\in\cA$, we plug into (\ref{eq::E-a-t-1-prob-0}) and deduce that 
\begin{equation}\label{eq::E-a-t-1-prob}
\PP(\cE_{a, t}^{(1), c})\leq 6\exp(-c''m_t).
\end{equation}
At the same time, it is easy to obtain from (\ref{eq::mab-arm-pull-control-no-min-prob}) that 
\begin{equation}
\PP(\cE_{a, t}^{(2), c})\leq 2\exp\left(-\frac18 m_t\right).
\end{equation}
Combine the above fact, we obtain that $\PP(\cE_{a, t}^{c})\leq 8\exp(-c_1 m_t)$, where $c_1 = \min\{1/8, c''\}$.

Now define $\cE_t = \cap_{a\neq a^*} \cE_{a, t}$. Then $\forall t\geq T_0'$, $\PP(\cE_t^c)\leq 8K\exp(-c_1 m_t)$. In addition, on $\cE_t$, we deduce from (\ref{eq::TS-arm-a-opt-prob}) that $\forall a\neq a^*$, 
$$
\PP_t^{\mathrm{post}}\left(\mu_a'\geq \mu_{a^*}' - \Delta/5|\cH_t^0\right)\leq \exp\left(-\frac{\Delta^2m_t}{50\sigma^2}\right),
$$
and thus
$$
\PP_t^{\mathrm{post}}\left(A_{t+1}\neq a^*|\cH_t^0\right)\leq K\exp\left(-\frac{\Delta^2m_t}{50\sigma^2}\right).
$$
From exploration condition (ii), $\exists T_1'$ s.t. $\forall t\geq T_1'$, $K\exp\left(-\frac{\Delta^2m_t}{50\sigma^2}\right)<\pi_{\min,t+1}$. Using the property of the mapping $\mathrm{clip}(\cdot)$, when $t> T_2':=\max\{T_0', T_1'\}$, on $\cE_{t-1}$, 
$$
\pi_t^{TS}(a|\cH_{t-1})=
\begin{cases}
 \pi_{\min,t}\quad &\text{for }\forall a\neq a^*,\\
 1-(K-1)\pi_{\min,t}\quad &\text{for } a= a^*.
\end{cases}
$$
In addition, $\PP(\cE_{t-1}^{c})\leq 8K\exp(-c_1 m_{t-1})$.
Therefore $\forall a\neq a^*$, when $t> T_2'$, $\forall a\neq a^*$,
$$
\EE\left|\frac{\pi_{\min,t}}{\pi_t^{TS}(a|\cH_{t-1})} - 1\right|\leq O(\pi_{\min,t} + 1)\PP(\cE_{t-1}^c) = o(1).
$$
and for $a = a^*$,
$$
\EE\left|\frac{1}{\pi_t^{TS}(a|\cH_{t-1})} - 1\right|\leq O(\pi_{\min,t}) + (\frac{1}{\pi_{\min,t}} + 1)\PP(\cE_{t-1}^c) = o(1).
$$
Thus the conclusion follows.
}

\subsection{Analysis of MAB algorithms with a minimum sampling probability}\label{apdx::proof-lem::convergence-mab}

\begin{proposition}\label{lem::convergence-mab-pi-min}
Assume that the suboptimal gap $\Delta = \mu_{a^*}^* - \max_{a' \neq a^*}\mu_{a'}^* > 0$, and that \\
$\sup_{a\in\cA}\mathrm{Var}(Y(a))\leq \sigma_Y^2$ for a universal constant $\sigma_Y^2$. Suppose the exploration schedule satisfies $\pi_{\min,t}\equiv \pi_{\min}$, $\forall t\geq 1$. For the UCB policy, further assume that the sequence $\{C_t\}_{t\geq 1}$ satisfies $\lim_{t\rightarrow\infty}\frac{C_t}{t} = 0$, e.g., $C_t = 2\log t$ \citep{auer2010ucb}. 
Then the $\epsilon$-greedy, UCB, and Thompson sampling policies defined in \eqref{eq::policy-eps-greedy-mab}, \eqref{eq::policy-ucb-mab}, and \eqref{eq::policy-ts-mab-no-min-prob}, respectively, satisfy the scaled inverse-propensity convergence condition in Definition~\ref{aspt:policy_convergence}, with 
\begin{equation}
r_{a,t}\equiv 1,
\qquad
\rho(a,x) = 
\begin{cases}
\{1-(K-1)\pi_{\min}\bigr\}^{-1}, & a=a^*,\\
\pi_{\min}^{-1}, & a\neq a^*.
\end{cases}
\end{equation}
\end{proposition}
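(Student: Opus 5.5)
With the floor fixed at $\pi_{\min}$, every propensity lies in $[\pi_{\min},1]$. The inverse propensities are therefore bounded by $1/\pi_{\min}$, so $L_1$ convergence in Definition~\ref{aspt:policy_convergence} is equivalent to convergence in probability. The target limit is deterministic and context-free: $\pi_t(a^*)\to 1-(K-1)\pi_{\min}$ and $\pi_t(a)\to\pi_{\min}$ for $a\neq a^*$. It therefore suffices to show that the index used by each algorithm picks the optimal arm $a^*$ with probability tending to one. On that event the policy takes exactly these values; for Thompson sampling this also uses the clipping property below. We then obtain
\[
\EE\left|\frac{1}{\pi_t(a)}-\rho(a)\right|\le \frac{2}{\pi_{\min}}\,\PP(\text{wrong selection at } t)\to0 .
\]

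\textbf{Step 1: linear arm counts.} As in the proof of Proposition~\ref{prop::convergence-mab-no-pi-min}, realize each policy through independent exploration coins $B_t\sim\mathrm{Bernoulli}(K\pi_{\min})$ and $U_t\sim\mathrm{Unif}(\cA)$, with a mixture decomposition for TS. Then $N_{a,t}\ge M_{a,t}$, where $M_{a,t}$ is a sum of i.i.d.\ Bernoulli$(\pi_{\min})$ variables. Chernoff gives $\PP(N_{a,t}\le \pi_{\min}t/2)\le e^{-\pi_{\min}t/8}$.

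\textbf{Step 2: consistency of sample means with only a variance bound.} Subgaussianity is no longer assumed, so I replace the Chernoff argument by a martingale one. Set $S_{a,t}=\sum_{\tau\le t}1_{\{A_\tau=a\}}(Y_\tau-\mu_a^*)$. This is a martingale with $\EE S_{a,t}^2=\sigma_Y^2\EE N_{a,t}\le \sigma_Y^2 t$, so $S_{a,t}/t\to0$ in probability by Chebyshev. Combined with Step~1, $\hat\mu_{a,t}-\mu_a^*=(S_{a,t}/t)\cdot(t/N_{a,t})\xrightarrow{p}0$. Alternatively, a martingale strong law gives almost sure convergence, but convergence in probability suffices.

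\textbf{Step 3: each algorithm selects $a^*$ w.p.\ $\to1$.}
\begin{itemize}
\item \textbf{$\epsilon$-greedy.} Once all $|\hat\mu_{a,t-1}-\mu_a^*|<\Delta/2$, the argmax is $a^*$.
\item \textbf{UCB.} Additionally, $\sqrt{C_t/N_{a,t-1}}\le \sqrt{2C_t/(\pi_{\min}(t-1))}\to0$ with high probability because $C_t/t\to0$. Hence the bonuses are eventually below $\Delta/4$, and the UCB argmax is $a^*$.
\item \textbf{Thompson sampling.} The posterior means converge in probability to $\mu_a^*$, since the prior's influence is $O(1/N_{a,t})$. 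The posterior variances are $O(1/t)$ with high probability. A Gaussian tail bound as in (\ref{eq::TS-arm-a-opt-prob}) then gives $\bar\pi_t^{TS}(a)\le e^{-ct}$ for $a\neq a^*$ on a high-probability event. Once this falls below $\pi_{\min}$, the Clip operator of Lemma~\ref{lem::clip-l2-projection} returns exactly $\pi_{\min}$ for each suboptimal arm and $1-(K-1)\pi_{\min}$ for $a^*$. I will need to verify this Clip identity directly from (\ref{eq::clip}): all suboptimal coordinates are then clipped, and the remaining mass goes to $a^*$.
\end{itemize}

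\textbf{Main obstacle.} The main difficulty is Step~2: consistency of the adaptive sample means under only a second-moment assumption, uniformly enough in $t$ to control the index at time $t$. The martingale–Chebyshev bound combined with the linear lower bound on $N_{a,t}$ handles this, giving polynomial rather than exponential rates. Polynomial rates suffice here, since the prefactor $1/\pi_{\min}$ is a constant, unlike the decaying case.
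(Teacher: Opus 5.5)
Your proposal is correct and is essentially the paper's argument. Step~2 is exactly Lemma~\ref{lem::policy-with-no-contexts} (Chebyshev on the martingale $S_{a,t}$ plus a linear lower bound on $N_{a,t}$), and Step~3 unpacks, algorithm by algorithm, the paper's appeal to Theorem~\ref{lem::statistics-converge-implies-policy-converge}: convergence of the statistic plus continuity of the policy map at the limit, upgraded to $L_1$ by the fixed floor. Two differences are cosmetic: your coupling/Chernoff bound on $N_{a,t}$ replaces the paper's Chebyshev bound, and your Gaussian-tail plus exact-Clip computation for Thompson sampling makes explicit the continuity of $\bar\pi^{(3)}$ at $(\bmu^*,\mathbf{0}_{K\times K})$, which the paper only asserts. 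One bookkeeping slip in UCB: $|\hat\mu_{a,t-1}-\mu_a^*|<\Delta/2$ together with bonuses below $\Delta/4$ does not guarantee that $a^*$ has the largest index. Use a tighter tolerance, e.g.\ $\Delta/4$ for the means; this costs nothing since you have convergence in probability at every tolerance.
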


\emph{Proof of Proposition \ref{lem::convergence-mab-pi-min}.} We first present the following lemma. Its proof is in Appendix \ref{apdx::proof-lem::policy-with-no-contexts}.

\begin{lemma}\label{lem::policy-with-no-contexts}
Consider a setting where the behavior policy does not use the contexts, i.e.,
$$
\pi = \{
\pi_t(\cdot|\cH_{t-1}^0)\}_{t\geq 1},
$$ 
where $\cH_t^0:= \sigma(\{A_\tau, Y_\tau\}_{\tau = 1}^t)$ for $t\geq 1$. Assume that the potential outcomes have uniformly bounded variance, i.e., $\sup_{a\in\cA}\mathrm{Var}(Y(a))\leq \sigma_Y^2$ for some constant $\sigma_Y^2$. Suppose Assumption \ref{aspt:min-sampling-prob} is satisfied for all arm $a\in\cA$, and we assume unconfoundedness: $\forall t\in[T]$,
\begin{equation}\label{eq::unconfoundedness-no-context}
A_t\perp \{Y_t(a)\}_{a\in\cA}|\cH_{t-1}^0.
\end{equation}
Then as $t\rightarrow \infty$,
\begin{itemize}
    \item[(i)] For any deterministic sequence $\{C_t\}_{t\geq 1}$ such that $\lim_{t\rightarrow\infty}\frac{C_t}{t} = 0$,
    $\frac{C_t}{N_{a, t}}\xrightarrow{p} 0$ for any $a\in\cA$;
    \item[(ii)] $\hat \mu_{a, t}\xrightarrow{p} \mu_a^*$  for any $a\in\cA$.
\end{itemize}
\end{lemma}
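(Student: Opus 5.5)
For part (i), I will use the minimum sampling probability to compare $N_{a,t}$ with a sum of independent Bernoulli variables. Realize the policy with external randomization, exactly as in the proof of Proposition~\ref{prop::convergence-mab-no-pi-min}. At each step draw $B_t\sim\mathrm{Bernoulli}(K\pi_{\min})$ and $U_t\sim\mathrm{Unif}(\cA)$, independent of the past. If $B_t=1$, set $A_t=U_t$. Otherwise draw $A_t$ from the renormalized residual distribution $q_t(\cdot\mid\cH_{t-1}^0)=(\pi_t(\cdot)-\pi_{\min})/(1-K\pi_{\min})$. When $K\pi_{\min}=1$ the policy is uniform and this step is trivial.

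With this construction, $N_{a,t}\ge M_{a,t}:=\sum_{\tau\le t}1_{\{B_\tau=1,U_\tau=a\}}$. Here $M_{a,t}\sim\mathrm{Bin}(t,\pi_{\min})$, so $M_{a,t}/t\to\pi_{\min}$ in probability by the weak law of large numbers. Therefore $\PP(N_{a,t}<\pi_{\min}t/2)\to0$. On the complementary event,
\[
\frac{C_t}{N_{a,t}}\le \frac{2C_t}{\pi_{\min}t}\to0,
\]
which gives (i).

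For part (ii), I will use a martingale argument that needs only a variance bound; subgaussian tails are not available here. Set $D_{a,\tau}=1_{\{A_\tau=a\}}(Y_\tau(a)-\mu_a^*)$. By \eqref{eq::unconfoundedness-no-context} and the i.i.d. potential outcomes, $\EE[D_{a,\tau}\mid\cH_{\tau-1}^0]=0$ and $\EE[D_{a,\tau}^2\mid\cH_{\tau-1}^0]\le\sigma_Y^2$. Hence $S_{a,t}=\sum_{\tau\le t}D_{a,\tau}$ is a martingale with $\EE S_{a,t}^2\le\sigma_Y^2 t$, so $S_{a,t}/t\to0$ in $L_2$ and in probability. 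Then
\[
\hat\mu_{a,t}-\mu_a^*=\frac{S_{a,t}}{t}\cdot\frac{t}{N_{a,t}},
\]
and $t/N_{a,t}=O_p(1)$ by the bound from part (i). Multiplying gives $\hat\mu_{a,t}\to\mu_a^*$ in probability.

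The one delicate point is independence. For unconfoundedness to survive the extra randomization, $(B_t,U_t)$ and the draw from $q_t$ must be independent of the current potential outcomes. The sums $S_{a,t}$ must also remain martingales with respect to the enlarged filtration that includes $(B_\tau,U_\tau)$. Both hold because the auxiliary draws are exogenous. With that checked, the lemma is essentially a law of large numbers argument and should not be hard.
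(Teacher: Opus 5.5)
Your proposal is correct. Part (ii) is essentially the paper's argument: the paper also bounds the second moment of the martingale $S_{a,t}=N_{a,t}(\hat\mu_{a,t}-\mu_a^*)$ by $\sigma_Y^2 t$, applies Chebyshev, and combines this with a linear lower bound on $N_{a,t}$.

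The difference is in part (i). You re-realize the policy through exogenous draws $(B_t,U_t)$ and dominate $N_{a,t}$ from below by a $\mathrm{Bin}(t,\pi_{\min})$ variable. The paper instead applies Chebyshev directly to the martingale $N_{a,t}-\sum_{\tau\le t}\pi_\tau(a\mid\cH_{\tau-1}^0)$. Its increments are bounded by one and its compensator is at least $\pi_{\min}t$, which gives $\PP(N_{a,t}\le \pi_{\min}t/2)\le 4/(\pi_{\min}^2 t)$ on the original probability space.

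The paper's route removes exactly the issue you flag as the ``delicate point.'' There is no coupling to justify, no enlarged filtration, and no need to argue that the conclusions depend only on the joint law of $(A_\tau,Y_\tau)_{\tau\le t}$.

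Your coupling is nonetheless valid. The conclusions are distributional, so $t/N_{a,t}=O_p(1)$ transfers back from the coupled space. The martingale property of $S_{a,t}$ can be checked with respect to $\cH^0$ alone, so you never need the enlarged filtration. A Chernoff bound on the binomial would give exponential rather than $O(1/t)$ tails, but that extra strength is not needed here.

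Both arguments use a constant floor $\pi_{\min}$ rather than the general time-varying $\pi_{\min,t}$ allowed by Assumption~\ref{aspt:min-sampling-prob}. This restriction matters: since $C_t$ is only assumed to be $o(t)$, part (i) needs $N_{a,t}$ to grow linearly. That requires the running average of the floor to stay bounded away from zero, which is the regime in which the lemma is actually invoked.
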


\textbf{The $\epsilon$-greedy algorithm.} Consider the $\epsilon$-greedy policy defined by (\ref{eq::policy-eps-greedy-mab}). Define 
\begin{align*}
\pi^{(1)}_t(a|\{\hat\mu_{i, t-1}\}_{i\in\cA}) := \pi_t^{\epsilon\text{-greedy}}(a|\mathcal H_{t-1})=
\begin{cases}
    1-\frac{K-1}{K}\epsilon,\quad &\text{if }i = \argmax_i \hat\mu_{i, t-1},\\
    \frac1K\epsilon, \quad &\text{otherwise.}
\end{cases}
\end{align*}
Then $\pi^{(1)}$ satisfies Assumption \ref{aspt:min-sampling-prob} with $\pi_{\min} = \epsilon/K$. Under the setting of Proposition \ref{lem::convergence-mab-pi-min}, the bounded variance condition and the unconfoundedness condition in Lemma \ref{lem::policy-with-no-contexts} is also satisfied. Thus, according to part (i) of Lemma \ref{lem::policy-with-no-contexts}, condition (i) of Theorem \ref{lem::statistics-converge-implies-policy-converge} is satisfied with $\bbeta_t = (\hat\mu_{a, t})_{a\in\cA}$, $\bbeta^* = (\mu^*_{a})_{a\in\cA}$. In addition, condition (ii) of Theorem \ref{lem::statistics-converge-implies-policy-converge} holds for $\pi^{(1)}$ as long as $\Delta>0$. From Theorem \ref{lem::statistics-converge-implies-policy-converge}, we deduce that {scaled inverse-propensity convergence as in Definition \ref{aspt:policy_convergence}} holds.

\textbf{The UCB algorithm.} Consider the clipped UCB algorithm defined by (\ref{eq::policy-ucb-mab}). Let 
\begin{align*}
\pi^{(2)}(a|\{\hat\mu_{i, t-1}, C_t/N_{i, t-1}\}_{i\in\cA}) &:= \pi_t^{\text{UCB}}(a|\mathcal H_{t-1})\\
&=
\begin{cases}
    1-(K-1)\pi_{\min},\quad &\text{if }i = \argmax_i \left\{\hat\mu_{i, t-1} + \sqrt{\frac{C_t}{N_{i, t-1}}}\right\},\\
    \pi_{\min}, \quad &\text{otherwise.}
\end{cases}
\end{align*}
Then $\pi^{(2)}$ satisfies Assumption \ref{aspt:min-sampling-prob}, and similar to the previous case, the bounded variance condition and the unconfoundedness condition in Lemma \ref{lem::policy-with-no-contexts} also hold. According to part (i) and (ii) of Lemma \ref{lem::policy-with-no-contexts}, condition (i) of Theorem \ref{lem::statistics-converge-implies-policy-converge} is satisfied with $\bbeta_t = (\hat\mu_{a, t}, C_{t+1}/N_{a, t})_{a\in\cA}$, $\bbeta^* = (\mu^*_{a}, 0)_{a\in\cA}$. Also, condition (ii) of Theorem \ref{lem::statistics-converge-implies-policy-converge} holds for $\pi^{(2)}$ as long as $\Delta>0$. From Theorem \ref{lem::statistics-converge-implies-policy-converge}, we deduce that {scaled inverse-propensity convergence as in Definition \ref{aspt:policy_convergence}} holds.

\textbf{The TS algorithm.} Define 
\begin{align*}
\bar \pi^{(3)}(a|\bmu_{t-1}^{\mathrm{post}}, \bSigma_{t-1}^{\mathrm{post}}) =\bar\pi_t^{\text{TS}}(a|\mathcal H_{t-1}),
\end{align*}
where the dependence of $\bar \pi^{(3)}$ on $\bmu_{t-1}^{\mathrm{post}}$ and $\bSigma_{t-1}^{\mathrm{post}}$ is shown in Section \ref{sec::MAB-policy-convergence}. In addition, define 
$$
\pi^{(3)}(a|\bmu_{t-1}^{\mathrm{post}}, \bSigma_{t-1}^{\mathrm{post}})  = \left[\mathrm{Clip} \big(\bar \pi^{(3)}(i|\bmu_{t-1}^{\mathrm{post}}, \bSigma_{t-1}^{\mathrm{post}})_{i\in\cA}\big)\right]^{(a)},
$$
Here for a vector $\bv\in\RR^K$, $[\bv]^{(a)}$ denotes its $a$-th entry. From (\ref{eq::policy-ts-mab-no-min-prob}), we deduce that 
$$
\pi^{(3)}(a|\bmu_{t-1}^{\mathrm{post}}, \bSigma_{t-1}^{\mathrm{post}}) = \pi_t^{\mathrm{TS}}(a|\cH_{t-1}).
$$
We now show that conditions (i) and (ii) of Theorem \ref{lem::statistics-converge-implies-policy-converge} holds with $\pi^{(3)}$ as the behavior policy and with statistics $\bbeta_t = (\bmu_{t}^{\mathrm{post}}, \bSigma_{t}^{\mathrm{post}})$. First, $\pi^{(3)}$ satisfies Assumption \ref{aspt:min-sampling-prob}, the bounded variance condition and the unconfoundedness condition in Lemma \ref{lem::policy-with-no-contexts}. From Lemma \ref{lem::policy-with-no-contexts}, we obtain that $\forall a\in\cA$, $1/N_{a, t}\xrightarrow{p} 0$ and $\hat\mu_{a, t}\xrightarrow{p} \mu_a^*$. Combining these results together with (\ref{eq::ts-mab-posterior-mean-var}) and the continuous mapping theorem, we deduce that $\forall a\in\cA$, 
$$
\mu_{a, t}^{\mathrm{post}}\xrightarrow{p} \mu_a^*,\quad \sigma_{a, t}^{\mathrm{post}}\xrightarrow{p} 0.
$$
Note that entrywise convergence in probability implies joint convergence in probability in a finite dimensional Euclidean space, and thus from (\ref{eq::ts-mab-posterior-mean-var-joint}) we have 
$$
\bmu_{t}^{\mathrm{post}}\xrightarrow{p} \bmu^*,\quad \bSigma_{ t}^{\mathrm{post}}\xrightarrow{p} \mathbf{0}_{K\times K},
$$
where $\bmu^* = (\mu_a^*)_{a\in\cA}$. This implies condition (i) of Theorem \ref{lem::statistics-converge-implies-policy-converge} holds with the statistics $\bbeta_t$ and limit $\bbeta^* = (\bmu^*, \mathbf{0}_{K\times K})$.

In addition, it is not difficult to verify that $\bar\pi^{(3)}$ is continuous at $\bbeta^*$ as long as the suboptimality gap $\Delta>0$. Also,  $\mathrm{Clip}(\cdot)$ is continuous in its first argument. Therefore, the composite mapping $\pi^{(3)}$ is continuous at $\bbeta^*$, and condition (ii) of Theorem \ref{lem::statistics-converge-implies-policy-converge} holds.

In summary, we have verified both conditions (i) and (ii) of Theorem \ref{lem::statistics-converge-implies-policy-converge}. We deduce from the theorem that {scaled inverse-propensity convergence in Definition \ref{aspt:policy_convergence}} holds.


\subsection{Proof of Theorem \ref{lem::statistics-converge-implies-policy-converge}}\label{apdx::proof-lem::statistics-converge-implies-policy-converge}
    
    Fix any $a\in\cA$. Denote $f_{\bx}(\bbeta):=\pi(a|\bx, \bbeta)$. Then $\forall \epsilon, \delta>0$, 
\begin{align}
    &\PP(|\pi_t(a|\bX_t, \cH_{t-1}) - \bar\pi(a|\bX_t)|>\epsilon)\\
& = \PP(|\pi_t(a|\bX_t, \bbeta_{t-1}) - \pi(a|\bX_t, \bbeta^*)|>\epsilon)\nonumber\\
& = \PP(|f_{\bX_t}(\hat\bbeta_{t-1}) - f_{\bX_t}(\bbeta^*)|>\epsilon)\nonumber\\
& \leq \PP(\|\hat\bbeta_{t-1}-\bbeta^*\|_2\geq \delta) + \PP(\bbeta^*\in \cD(f_{\bX_t})) + \PP(\bbeta^*\in\cB_{\epsilon, \delta}(\bX_t)),\label{eq::policy-converge-decomposition}
\end{align}
Here $\forall \bx$, we define 
\begin{align*}
\cD(f_{\bx})&:= \{\bbeta: f_{\bx}\text{ is discontinuous at }\bbeta\},\\
\cB_{\epsilon, \delta}(\bx)& := \{\bbeta\notin \cD(f_{\bx}): \exists \bbeta',\text{ s.t.} \|\bbeta' - \bbeta\|_2<\delta, |f_{\bx}(\bbeta) - f_{\bx}(\bbeta')|>\epsilon\}.
\end{align*}
Notice that:
\begin{itemize}
    \item From condition (i), for the first term on the RHS of (\ref{eq::policy-converge-decomposition}), we have $\limsup_{t\rightarrow \infty} \PP(\|\hat\bbeta_{t-1}-\bbeta^*\|_2\geq \delta) = 0$,
    \item From condition (ii), for the second term on the RHS of (\ref{eq::policy-converge-decomposition}), we have $\PP(\bbeta^*\in \cD(f_{\bX_t})) = 0$ $\forall t$,
    \item Due to $\bX_t$ are i.i.d., the last two terms on the RHS of (\ref{eq::policy-converge-decomposition}) does not depend on time $t$.
\end{itemize}
Plugging in the above into (\ref{eq::policy-converge-decomposition}), we obtain that $\forall \epsilon, \delta>0$,
\begin{equation}\label{eq::policy-converge-decomposition-limit}
    \limsup_{t\rightarrow \infty}\PP(|\pi_t(a|\bX_t, \cH_{t-1}) - \bar\pi(a|\bX_t)|>\epsilon)\leq \PP(\bbeta^*\in\cB_{\epsilon, \delta}(\bX_t)).
\end{equation}
Finally, by definition of continuity, we have that $\forall \bx$, 
$$
\lim_{\delta\rightarrow 0} 1_{\{\bbeta^*\in\cB_{\epsilon, \delta}(\bx)\}} = 0.
$$
From the dominated convergence theorem, 
$$
\lim_{\delta\rightarrow 0}\PP(\bbeta^*\in \cB_{\epsilon, \delta}(\bX_t)) = \lim_{\delta\rightarrow 0}\EE1_{\{\bbeta^*\in \cB_{\epsilon, \delta}(\bX_t)\}} = 0.
$$
Plugging the above into (\ref{eq::policy-converge-decomposition-limit}) shows $\pi_t(a\mid\bX_t,\cH_{t-1})\xrightarrow{p}\bar\pi(a\mid\bX_t):=\pi(a\mid\bX_t,\bbeta^*)$. {It remains to upgrade this to the $L_1$ convergence of the inverse propensity required by Definition~\ref{aspt:policy_convergence} (with $r_{a,t}\equiv1$ and $\rho(a,\bx)=1/\bar\pi(a\mid\bx)$). By condition~(iii), $\pi_t(a\mid\bX_t,\cH_{t-1})\ge\pi_{\min}>0$, and the same bound passes to the limit $\bar\pi(a\mid\bX_t)\ge\pi_{\min}$; since $u\mapsto1/u$ is $\pi_{\min}^{-2}$-Lipschitz on $[\pi_{\min},1]$,
\[
    \Bigl|\tfrac{1}{\pi_t(a\mid\bX_t,\cH_{t-1})}-\tfrac{1}{\bar\pi(a\mid\bX_t)}\Bigr|\le\pi_{\min}^{-2}\bigl|\pi_t(a\mid\bX_t,\cH_{t-1})-\bar\pi(a\mid\bX_t)\bigr|\le 2\pi_{\min}^{-1},
\]
which converges to $0$ in probability and is uniformly bounded. Thus $\EE\bigl|1/\pi_t(a\mid\bX_t,\cH_{t-1})-1/\bar\pi(a\mid\bX_t)\bigr|\to0$, which is the claim.}

\subsection{Proof of Proposition \ref{theorem:convergence-of-Ridge}}\label{apdx::proof-thm::convergence-of-Ridge}

To explicitly show the dependence of our behavior policy on the smooth allocation family, we denote the clipped policy~\eqref{eq::clipped-allocation} by
\begin{align}
    \pi^{\gamma}(a \mid \bX_t, \hat\bbeta_{t-1}) = \bigl[\mathrm{Clip}\bigl(\brho_\gamma(s_{\hat\bbeta_{t-1}}(\bX_t));\pi_{\min}\bigr)\bigr]_a, \quad \text{where } s_{\bbeta}(\bx) = (\bbeta_1^\top \bx, \ldots, \bbeta_K^\top \bx),
\end{align}
which satisfies $\pi^{\gamma}(a\mid\bX_t,\hat\bbeta_{t-1})\ge\pi_{\min}$ for all scores.

\subsubsection{Stochastic approximation setup}

Define $\bX_{t, a} = \bX_t 1_{\{A_t = a\}}$, $Y_{t, a} = Y_t 1_{\{A_t = a\}}$, and
\begin{equation}
    \bPhi_{t, a} = \frac{1}{t-1} \left(\lambda I + \sum_{i=1}^{t-1} \bX_{i, a} \bX_{i, a}^\top\right), \quad \bvarphi_{t, a} = \frac{1}{t-1} \left(\sum_{i=1}^{t-1} \bX_{i, a} Y_{i, a}\right), \label{eq:sample-cov-revised}
\end{equation}
so that $\hat \bbeta_{t, a}^{\Ridge} = \bPhi_{t, a}^{-1} \bvarphi_{t, a}$. Let $\bm{\psi}_{t, a} = (\bvarphi_{t, a}^\top, \Vec(\bPhi_{t, a})^\top)^\top$ and $\bm{\psi}_{t} = (\bm{\psi}_{t, 1}^\top, \ldots, \bm{\psi}_{t, K}^\top)^\top$. The recursion takes the standard stochastic approximation form
\begin{align}
    \bm{\psi}_{t+1} = \bm{\psi}_{t} + \tfrac{1}{t} \left(\bm{g}(\bm{\psi}_{t}) - \bm{\psi}_{t} + \bm{M}_t \right),
\end{align}
where $\bg(\bm{\psi})$, $\bm{M}_t$ are defined as in the standard decomposition:
\begin{align}
    \bg_a(\bm{\psi}) &= (\boldsymbol{m}_a(\bm{\psi}), \mathrm{vec}(\bH_a(\bpsi))), \qquad
    \bM_{t, a} =
    \begin{pmatrix}
        \bX_{t, a} Y_{t,a} \\
        \Vec(\bX_{t, a}\bX_{t, a}^\top)
    \end{pmatrix} - \bg_a(\bm{\psi}_{t}),
\end{align}
and 
\begin{align*}
\boldsymbol{m}_a(\bm{\psi}) &= \EE[\pi^{\gamma}(a|\bX, \bbeta(\bpsi))\cdot \bX y(\bX, a)],\\
\bH_a(\bpsi) &= \EE[\pi^{\gamma}(a|\bX, \bbeta(\bpsi))\cdot \bX \bX^\top].
\end{align*}
Here $\bbeta(\bpsi)$ is a stacked vector of $\bbeta_a(\bpsi) = \bPhi_a^{-1}\bphi_a$ over $a\in\cA$.

Convergence follows from verifying the standard conditions on a high-probability event:
\begin{itemize}
    \item[(C.1)] $\sup_t \EE[\|\bm{\psi}_t\|^2_2] < \infty$;
    \item[(C.2)] $\|\bg(\bm{\psi}) - \bg(\bm{\psi}') \| \leq \kappa \|\bm{\psi} - \bm{\psi}' \|$ for some $\kappa < 1$;
    \item[(C.3)] $\EE[\bm{M}_t \mid \cH_{t-1}] = \mathbf{0}$ and $\EE[\|\bm{M}_t\|^2_2 \mid \cH_{t-1}] \leq C(1+ \|\bm{\psi}_t\|^2_2)$ a.s.
\end{itemize}

\subsubsection{Almost-sure confinement under the global floor}


Define the constants
\begin{align}\label{eq:constants-def}
    C_\varphi := M R_{y} + 4M\sigma_{\eta}\sqrt{dK}, \qquad
    R_{\bbeta} := \frac{2 C_\varphi}{\pi_{\min}\sigma_{\min}^2} + 1, \qquad
    R := \sqrt{K}\,M R_{\bbeta},
\end{align}
so that the \emph{self-consistency condition}
\begin{align}\label{eq:self-consistency}
    \frac{2 C_\varphi}{\pi_{\min} \sigma_{\min}^2} < R_{\bbeta}
\end{align}
holds by construction. 

Define the deterministic set
\[
\mathcal K
:=
\left\{
\bpsi=(\bvarphi_a^\top,\mathrm{vec}(\bPhi_a)^\top)_{a\in\mathcal A}^\top:
\lambda_{\min}(\bPhi_a)\ge \frac12\pi_{\min}\sigma_{\min}^2,\ 
\|\bvarphi_a\|_2\le C_{\varphi},\ \forall a\in\mathcal A
\right\}.
\]

For $t\ge1$ define the good event $\cE_t := \cE_{1,t} \cap \cE_{2,t}$, where
\begin{align}
    \cE_{1, t} &:= \left\{\lambda_{\min}(\bPhi_{\tau, a}) \geq \pi_{\min} \sigma_{\min}^2 - M^2 \sqrt{\frac{16\log(\tau K d)}{\tau-1}}, \;\; \forall\, \tau \geq t,\; \forall\, a \in \cA \right\}, \label{eq:good-event-1-revised} \\
    \cE_{2, t} &:= \left\{\|\bvarphi_{\tau, a}\|_2 \leq C_\varphi, \;\; \forall\, \tau \geq t,\; \forall\, a \in \cA\right\}. \label{eq:good-event-2-revised}
\end{align}

\begin{lemma}[Almost-sure confinement]\label{lem::good-event-master}
    There exists $t_{\mathrm{conc}} \in \NN$ (depending on $\pi_{\min}$, $\sigma_{\min}$, $M$, $K$, $d$) such that $\PP(\cE_{t_0}^c) \leq \frac{2}{t_0-1}$ for every $t_0 \geq t_{\mathrm{conc}}$. Consequently $\PP\bigl(\bigcup_{t_0\ge t_{\mathrm{conc}}}\cE_{t_0}\bigr)=1$: almost surely, $\lambda_{\min}(\bPhi_{t,a})\ge\tfrac12\pi_{\min}\sigma_{\min}^2$ and $\|\bvarphi_{t,a}\|_2\le C_\varphi$ for all $a\in\cA$ and all $t$ beyond some finite (random) time, and hence $\bm\psi_t\in\cK$ eventually almost surely.
\end{lemma}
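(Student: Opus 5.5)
Let $\tau$ range over times $\ge t_0$. I would bound $\PP(\cE_{1,t_0}^c)$ and $\PP(\cE_{2,t_0}^c)$ separately, each by $1/(t_0-1)$, using a union bound over $\tau\ge t_0$ and per-$\tau$ tail bounds of order $\tau^{-2}$ so that $\sum_{\tau\ge t_0}\tau^{-2}\lesssim 1/(t_0-1)$.

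\textbf{The eigenvalue event $\cE_{1,t_0}$.} Since $\lambda\bI\succeq 0$, it suffices to lower bound $\lambda_{\min}$ of $\frac{1}{\tau-1}\sum_{i<\tau}\bX_{i,a}\bX_{i,a}^\top$. Write it as
\[
\frac{1}{\tau-1}\sum_{i<\tau}\EE[\bX_{i,a}\bX_{i,a}^\top\mid\cH_{i-1}]+\frac{1}{\tau-1}\sum_{i<\tau}\bD_{i,a}.
\]
Here $\EE[\bX_{i,a}\bX_{i,a}^\top\mid\cH_{i-1}]=\EE[\pi_i(a\mid\bX_i,\cH_{i-1})\bX_i\bX_i^\top\mid\cH_{i-1}]\succeq\pi_{\min}\EE[\bX\bX^\top]\succeq\pi_{\min}\sigma_{\min}^2\bI$. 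This uses the clip floor and the fact that $\bX_i$ is independent of $\cH_{i-1}$. The terms $\bD_{i,a}$ form a bounded matrix martingale difference sequence with $\|\bD_{i,a}\|_2\le 2M^2$ (actually $\le M^2$ after centering PSD terms in $[0,M^2\bI]$). By matrix Azuma/Freedman,
\[
\PP\Bigl(\lambda_{\max}\bigl(-\tfrac{1}{\tau-1}\textstyle\sum\bD_{i,a}\bigr)\ge M^2\sqrt{16\log(\tau Kd)/(\tau-1)}\Bigr)\le d\exp(-2\log(\tau Kd))\le (\tau^2K^2d)^{-1}.
\]
A union over $a$ and $\tau\ge t_0$ gives a bound of at most $\sum_{\tau\ge t_0}\tau^{-2}\le 1/(t_0-1)$. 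The precise constant $16$ should be tuned to match the matrix Azuma variance proxy ($\sigma^2=(\tau-1)M^4$ scaled); that is routine.

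\textbf{The norm event $\cE_{2,t_0}$.} Decompose
\[
\bvarphi_{\tau,a}=\frac{1}{\tau-1}\sum_{i<\tau}\bX_{i,a}y(\bX_i,a)+\frac{1}{\tau-1}\sum_{i<\tau}\bX_{i,a}\eta_i(a).
\]
The first piece has norm at most $MR_y$ deterministically. The second is a vector martingale with conditional second moment per step at most $M^2\sigma_\eta^2$. Since only second moments of the noise are assumed, Azuma is unavailable, and this is the main obstacle: I need the sup over all $\tau\ge t_0$ at rate $1/(t_0-1)$.

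My first attempt is a maximal inequality on dyadic blocks. For $\tau\in[2^k,2^{k+1})$, Doob's $L^2$ inequality gives
\[
\PP\Bigl(\max_{\tau<2^{k+1}}\Bigl\|\textstyle\sum_{i<\tau}\bX_{i,a}\eta_i\Bigr\|_2\ge 4M\sigma_\eta\sqrt{dK}\,2^k\Bigr)\lesssim \frac{2^{k+1}M^2\sigma_\eta^2}{16M^2\sigma_\eta^2dK\,4^k}.
\]
Summing over blocks from $t_0$ and over $a$ yields a bound of order $1/(dt_0)\le 1/(t_0-1)$. The constant $4\sqrt{dK}$ in $C_\varphi$ appears designed for exactly this Chebyshev/Doob calculation, with the $\sqrt{K}$ absorbing the union over arms.

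\textbf{Conclusion.} Combining the two pieces gives $\PP(\cE_{t_0}^c)\le 2/(t_0-1)$ for $t_0\ge t_{\mathrm{conc}}$. Next choose $t_{\mathrm{conc}}$ so that $M^2\sqrt{16\log(\tau Kd)/(\tau-1)}\le\frac12\pi_{\min}\sigma_{\min}^2$ for all $\tau\ge t_{\mathrm{conc}}$; then on $\cE_{t_0}$ the stated eigenvalue and norm bounds hold. Finally, the events $\cE_{t_0}$ are increasing in $t_0$, so
\[
\PP\Bigl(\textstyle\bigcup_{t_0}\cE_{t_0}\Bigr)=\lim_{t_0}\PP(\cE_{t_0})=1.
\]
This gives eventual membership $\bpsi_t\in\cK$ almost surely.
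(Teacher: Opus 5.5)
Your proposal is correct and follows the paper's proof essentially step for step: the conditional floor $\pi_{\min}\sigma_{\min}^2\bI$ combined with matrix Azuma and a union bound over arms and $\tau\ge t_0$ handles $\cE_{1,t_0}$; a dyadic-peeling maximal inequality on the noise part of $\bvarphi_{\tau,a}$ handles $\cE_{2,t_0}$; and monotonicity of $\cE_{t_0}$ gives the almost-sure conclusion. The only difference is that you apply Doob's $L^2$ maximal inequality to the vector norm, whereas the paper uses Kolmogorov's inequality coordinate-wise with a union over $i\in[d]$. Your version lands directly on the stated constant $C_\varphi$, while a per-coordinate threshold controls the $\ell_2$ norm only up to a factor $\sqrt d$. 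One small fix: index your dyadic blocks by $\tau-1$ rather than $\tau$, which removes the off-by-one at $\tau=2^k$.
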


\begin{proof}
We first prove that $\cE_{2,t_0}$ holds with probability $\ge 1-1/(t_0-1)$. For $a \in \cA$ and $\tau \geq t_0$,
\begin{align}
    \left\|\bvarphi_{\tau, a}\right\|_2
    \leq M R_{y} + \frac{1}{\tau-1} \left\|\sum_{k = 1}^{\tau-1} \bX_{k,a} \eta_k\right\|_2,
    \qquad \eta_k := Y_k - y(\bX_k, A_k).
\end{align}
For the noise term, apply Kolmogorov's inequality coordinate-wise with a dyadic peeling argument: for the block $\sB_m = \{n : 2^m t_0 - 1 \leq n \leq 2^{m+1}t_0 - 1\}$,
\begin{align}
    \PP\!\left(\exists n \in \sB_m : \left|\tfrac{1}{n-1}\textstyle\sum_{k=1}^{n-1}(\bX_{k,a}\eta_k)_i\right| > \varepsilon\right)
    \leq \frac{8M^2\sigma_\eta^2}{2^m t_0\,\varepsilon^2}.
\end{align}
Summing over $m \geq 0$ and union-bounding over $i \in [d]$, $a \in [K]$ with $\varepsilon = 4M\sigma_\eta\sqrt{dK}$ gives $\PP(\cE_{2,t_0}^c) \leq 1/(t_0-1)$.

We then prove that $\cE_{1,t_0}$ holds with probability $\ge 1-1/(t_0-1)$. {Because  $\pi_t(a\mid\bX_t)\ge\pi_{\min}$ a.s.,}
\begin{align}\label{eq:conditioning-bound-clip}
    \EE[\bX_{k,a}\bX_{k,a}^{\top} \mid \cH_{k-1}]
    = \EE[\pi_k(a\mid\bX_k)\bX_k\bX_k^\top \mid \cH_{k-1}]
    \succeq \pi_{\min}\,\EE[\bX_k\bX_k^\top]\succeq \pi_{\min}\sigma_{\min}^2\bI.
\end{align}
The differences $\bX_{k,a}\bX_{k,a}^\top - \EE[\bX_{k,a}\bX_{k,a}^\top \mid \cH_{k-1}]$ satisfy $\|\cdot\|_2 \leq M^2$, so by Corollary~\ref{cor::matrix-azuma}, with probability $\geq 1 - 1/(t_0-1)$, simultaneously for all $a \in \cA$ and $\tau \geq t_0$,
\begin{align}\label{eq:concentration-unconditional}
    \left\|\frac{1}{\tau-1} \sum_{k = 1}^{\tau-1} \left(\bX_{k,a}\bX_{k,a}^{\top} - \EE[\bX_{k,a}\bX_{k,a}^{\top} \mid \cH_{k-1}]\right)\right\|_2 \leq M^2\sqrt{\frac{16\log(K\tau d)}{\tau-1}},
\end{align}
which together with (\ref{eq:conditioning-bound-clip}) yields $\lambda_{\min}(\bPhi_{\tau,a})\ge\pi_{\min}\sigma_{\min}^2 - M^2\sqrt{16\log(K\tau d)/(\tau-1)}$, i.e.\ $\cE_{1,t_0}$. A union bound gives $\PP(\cE_{t_0}^c)\le 2/(t_0-1)$.

Now choose $t_{\mathrm{conc}}$ so that $M^2\sqrt{16\log(K\tau d)/(\tau-1)} \leq \frac{1}{2}\pi_{\min}\sigma_{\min}^2$ for all $\tau \geq t_{\mathrm{conc}}$. Then on $\cE_{t_0}$ (any $t_0\ge t_{\mathrm{conc}}$), for all $\tau\ge t_0$,
\begin{align}\label{eq:eigenvalue-lower}
    \lambda_{\min}(\bPhi_{\tau,a}) \geq \tfrac{1}{2}\pi_{\min}\sigma_{\min}^2,
    \qquad
    \|\hat\bbeta_{\tau,a}\|_2\le\|\bPhi_{\tau,a}^{-1}\|_2\|\bvarphi_{\tau,a}\|_2\le\frac{2C_\varphi}{\pi_{\min}\sigma_{\min}^2}\overset{(\ref{eq:self-consistency})}{<}R_{\bbeta},
\end{align}
so $\bm\psi_\tau\in\cK$. Finally, the events $\cE_{t_0}$ increase in $t_0$ and $\PP(\cE_{t_0})\ge 1-2/(t_0-1)\to1$, so $\PP(\bigcup_{t_0\ge t_{\mathrm{conc}}}\cE_{t_0})=1$.
\end{proof}

\paragraph{Contraction property of $\bg$}
Note that the set $\cK$ is closed and convex, and by Lemma~\ref{lem::good-event-master} the iterates $\{\bm\psi_\tau\}$ lie in $\cK$ for all $\tau$ beyond some finite (random) time, almost surely.
\begin{lemma}\label{lem::contraction-revised}
    For $\gamma$ satisfying (\ref{eq:Ridge-Lip-condition}), the map $\bg$ is a contraction on $\cK$:
    \begin{equation}
        \|\bm{g}(\bm{\psi}) - \bm{g}(\bm{\psi}') \|_2 \leq \kappa \|\bm{\psi} - \bm{\psi}' \|_2, \quad \kappa < 1, \qquad \forall\, \bm{\psi}, \bm{\psi}' \in \cK.
    \end{equation}
    {Moreover, $\bg(\cK)\subseteq\cK$, so $\bg$ has a unique fixed point $\bm\psi^*\in\cK$.}
\end{lemma}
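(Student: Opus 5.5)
We need to prove Lemma \ref{lem::contraction-revised}: $\bg$ is a contraction on $\cK$ and $\bg(\cK)\subseteq\cK$. The contraction modulus must come from the smallness of $L_{\brho}(\gamma,R)$ in (\ref{eq:Ridge-Lip-condition}). The key observation is that $\bg(\bpsi)$ depends on $\bpsi$ only through $\bbeta(\bpsi)$, and it does so only via the policy $\pi^\gamma(a\mid\bX,\bbeta)$.

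\textbf{Reduction to a policy Lipschitz bound.} For $\bpsi,\bpsi'\in\cK$ write $\bbeta=\bbeta(\bpsi)$ and $\bbeta'=\bbeta(\bpsi')$. Then
\[
\|\bm m_a(\bpsi)-\bm m_a(\bpsi')\|_2\le MR_y\,\EE|\pi^\gamma(a\mid\bX,\bbeta)-\pi^\gamma(a\mid\bX,\bbeta')|,
\qquad
\|\bH_a(\bpsi)-\bH_a(\bpsi')\|_F\le M^2\,\EE|\pi^\gamma(a\mid\bX,\bbeta)-\pi^\gamma(a\mid\bX,\bbeta')|.
\]
The clipping operator is the $L_2$ projection onto a convex set (Lemma~\ref{lem::clip-l2-projection}), so it is $1$-Lipschitz. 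Hence
\[
\|\pi^\gamma(\cdot\mid\bx,\bbeta)-\pi^\gamma(\cdot\mid\bx,\bbeta')\|_2\le\|\brho_\gamma(s_{\bbeta}(\bx))-\brho_\gamma(s_{\bbeta'}(\bx))\|_2 .
\]
On $\cK$ we have $\|\bbeta_a\|_2\le 2C_\varphi/(\pi_{\min}\sigma_{\min}^2)<R_{\bbeta}$ by (\ref{eq:eigenvalue-lower}), so $\|s_{\bbeta}(\bx)\|_2\le\sqrt K M R_{\bbeta}=R$. The segment between two such score vectors stays in the ball of radius $R$, so the mean value theorem with Definition~\ref{def:smooth-allocation}(i) gives
\[
\|\brho_\gamma(s_{\bbeta}(\bx))-\brho_\gamma(s_{\bbeta'}(\bx))\|_2\le L_{\brho}(\gamma,R)\sqrt K M\,\|\bbeta-\bbeta'\|_2 .
\]

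\textbf{Lipschitz bound for the map $\bpsi\mapsto\bbeta(\bpsi)$ on $\cK$.} Write
\[
\bbeta_a-\bbeta_a'=\bPhi_a^{-1}(\bvarphi_a-\bvarphi_a')+\bPhi_a^{-1}(\bPhi_a'-\bPhi_a)\bPhi_a'^{-1}\bvarphi_a' .
\]
Using $\|\bPhi_a^{-1}\|\le 2/(\pi_{\min}\sigma_{\min}^2)$ and $\|\bvarphi_a'\|\le C_\varphi$, this yields
\[
\|\bbeta_a-\bbeta_a'\|_2\le \frac{2}{\pi_{\min}\sigma_{\min}^2}\|\bvarphi_a-\bvarphi_a'\|_2+\frac{4C_\varphi}{\pi_{\min}^2\sigma_{\min}^4}\|\bPhi_a-\bPhi_a'\|_2 .
\]
These are exactly the two terms inside the $\max$ in (\ref{eq:Ridge-Lip-condition}). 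Chaining the bounds gives
\[
\|\bg(\bpsi)-\bg(\bpsi')\|_2\le C\,L_{\brho}(\gamma,R)\,\|\bpsi-\bpsi'\|_2 ,
\]
with $C$ of order $\sqrt K M(MR_y+M^2)\max\{\cdot\}$. The extra $M\sigma_\eta$ slack in (\ref{eq:Ridge-Lip-condition}) absorbs any noise-related constants. Condition (\ref{eq:Ridge-Lip-condition}) then forces $\kappa\le 1/4<1$ after tracking the factor $4$. The main bookkeeping obstacle is keeping track of the $\sqrt K$ factors from stacking over arms and of the norms (Frobenius versus operator). I would bound crudely, using $\|\cdot\|_2\le\|\cdot\|_F$ throughout and summing over arms.

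\textbf{Invariance $\bg(\cK)\subseteq\cK$, then Banach.} Because $\pi^\gamma\ge\pi_{\min}$, we get
\[
\bH_a(\bpsi)\succeq\pi_{\min}\EE[\bX\bX^\top]\succeq\pi_{\min}\sigma_{\min}^2\bI\succeq\tfrac12\pi_{\min}\sigma_{\min}^2\bI .
\]
Also $\|\bm m_a(\bpsi)\|_2\le MR_y\le C_\varphi$, since $\pi^\gamma\le1$ and $|y|\le R_y$. Hence $\bg(\bpsi)\in\cK$. The set $\cK$ is closed in Euclidean space, so it is complete. The Banach fixed point theorem then gives a unique fixed point $\bpsi^*\in\cK$.

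The one delicate point I would double-check is that the mean value step needs the whole segment of scores to lie in the radius-$R$ ball. This holds because $\bbeta\mapsto s_{\bbeta}(\bx)$ is linear, so the segment's scores come from convex combinations of $\bbeta,\bbeta'$, whose norms stay within $R_{\bbeta}$.
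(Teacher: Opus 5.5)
Your proof is the paper's argument essentially step for step. You use the Lipschitz dependence of $\boldsymbol{m}_a$ and $\bH_a$ on the policy, nonexpansiveness of the clip as a projection, and the smooth-allocation bound on the radius-$R$ score ball. You also use the same two constants $2/(\pi_{\min}\sigma_{\min}^2)$ and $4C_\varphi/(\pi_{\min}^2\sigma_{\min}^4)$ for $\bpsi\mapsto\bbeta(\bpsi)$, invariance of $\cK$ from the floor $\pi_{\min}$, and Banach.

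The one step that does not close as written is the constant tracking. This matters because the lemma is asserted under the specific threshold (\ref{eq:Ridge-Lip-condition}). Your score bound carries a spurious $\sqrt K$; in fact $\|s_{\bbeta}(\bx)-s_{\bbeta'}(\bx)\|_2^2=\sum_a((\bbeta_a-\bbeta_a')^\top\bx)^2\le M^2\|\bbeta-\bbeta'\|_2^2$. Suppose you keep $\sqrt K M$ and then sum the per-arm bounds crudely over the $K$ arms, as you propose. You pick up a second $\sqrt K$, and the Lipschitz constant becomes of order $\sqrt2\,K M(MR_y+M^2)\max\{\cdot\}\,L_{\brho}(\gamma,R)$. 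Condition (\ref{eq:Ridge-Lip-condition}) only supplies $4\sqrt K$, so you get $\kappa\le\sqrt{2K}/4$, which does not certify $\kappa<1$ once $K>8$.

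There are two fixes. Either use the factor $M$, as the paper does, or sum over arms via Jensen:
\[\textstyle\sum_a\bigl(\EE|\pi^\gamma(a\mid\bX,\bbeta)-\pi^\gamma(a\mid\bX,\bbeta')|\bigr)^2\le\EE\|\pi^\gamma(\cdot\mid\bX,\bbeta)-\pi^\gamma(\cdot\mid\bX,\bbeta')\|_2^2.\]
Either way the constant is at most $\sqrt{2K}\,M(MR_y+M^2)\max\{\cdot\}\,L_{\brho}(\gamma,R)$, matching the paper.

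The resulting modulus is also $\kappa\le\sqrt2/4$, not $1/4$. The extra $\sqrt2$ comes from merging the $\bvarphi$- and $\bPhi$-blocks into $\|\bpsi-\bpsi'\|_2$ via $c_1u+c_2v\le\sqrt2\max\{c_1,c_2\}\sqrt{u^2+v^2}$. This is harmless, but your claim of $1/4$ is not what the bookkeeping gives.
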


\begin{proof}
    Fix $\bm\psi\in\cK$. By definition of $\cK$, $\|\bPhi_{a}^{-1}\|_2 \leq 2/(\pi_{\min}\sigma_{\min}^2)$ and $\|\bvarphi_{a}\|_2 \leq C_\varphi$, so $\|\bbeta_a\|_2=\|\bPhi_a^{-1}\bvarphi_a\|_2\le 2C_\varphi/(\pi_{\min}\sigma_{\min}^2)<R_{\bbeta}$ by (\ref{eq:self-consistency}); hence the scores satisfy $\|s_{\bbeta}(\bX)\|_2\le R$, and $\pi^{\gamma}(a\mid\bX,\bbeta)\ge\pi_{\min}$ by the global clip~\eqref{eq::clipped-allocation}. Thus we have 
    \begin{align*}
    \|\boldsymbol{m}_{a}(\bm{\psi}) -\boldsymbol{m}_{a}(\bm{\psi}')\|_2& \leq \EE\left[|\pi^{\gamma}(a \mid \bX_t, \bbeta(\bm{\psi})) - \pi^{\gamma}(a \mid \bX_t, \bbeta(\bm{\psi}'))|\right]\\
    & \leq L_{\brho}(\gamma, R) M(MR_y + M\sigma_\eta)\cdot \|\bbeta(\bm{\psi}) - \bbeta(\bm{\psi}')\|_2, \\
    \|\bH_{a}(\bm{\psi}) - \bH_{a}(\bm{\psi}')\|_2 &\leq \EE\left[|\pi^{\gamma}(a \mid \bX_t, \bbeta(\bm{\psi})) - \pi^{\gamma}(a \mid \bX_t, \bbeta(\bm{\psi}'))|\right]\\
    &\leq L_{\brho}(\gamma, R) M^3\|\bbeta(\bm{\psi}) - \bbeta(\bm{\psi}')\|_2.
    \end{align*}
    Here we have used the fact that the scores are bounded, $\{\brho_{\gamma}\}_\gamma$ is a smooth allocation family, the clip operator is 1-Lipshitz due to being a projection, and the boundedness assumptions.
    Also, using $\|\bPhi_a^{-1}\|_2, \|\bPhi_a^{'-1}\|_2 \leq 2/(\pi_{\min}\sigma_{\min}^2)$ and $\|\bvarphi_a'\|_2 \leq C_\varphi$, we have $\forall a\in\cA$,
    \begin{align}
        \|\bbeta_a(\bm{\psi}) - \bbeta_a(\bm{\psi}')\|_2 \leq \frac{2}{\pi_{\min}\sigma_{\min}^2} \|\bvarphi_a - \bvarphi_a'\|_2 + \frac{4 C_\varphi}{\pi^2_{\min}\sigma_{\min}^4}  \|\bPhi_a - \bPhi'_{a}\|_F .
    \end{align}
    Combining the above, the Lipschitz constant of $\bg$ is at most
    \begin{align}
        L_{\brho}(\gamma, R) \cdot \sqrt{2K}M^2 \cdot (R_{y} + \sigma_\eta + M)\max\!\left\{\frac{2}{\pi_{\min}\sigma_{\min}^2},\; \frac{4 C_\varphi}{\pi^2_{\min}\sigma_{\min}^4}\right\},
    \end{align}
    which is $< 1$ under (\ref{eq:Ridge-Lip-condition}).
    
    Finally, $\bg(\cK)\subseteq\cK$: for $\bm\psi\in\cK$, $a\in\cA$,  $\|\boldsymbol{m}_a(\bpsi)\|_2=\|\EE[\pi^{\gamma}(a\mid\bX,\bbeta(\bm\psi))
    \bX Y(a)]\|_2\le M(R_y+\sigma_\eta)\le C_\varphi$, and $\bH_a(\bpsi)=\EE[\pi^{\gamma}(a\mid\bX,\bbeta(\bm\psi))\bX\bX^\top]\succeq\pi_{\min}\sigma_{\min}^2\bI\succeq\tfrac12\pi_{\min}\sigma_{\min}^2\bI$. Thus $\bg$ maps the closed set $\cK$ into itself and is a contraction there, so by Banach's fixed-point theorem (Theorem~\ref{thm:banach}) it has a unique fixed point $\bm\psi^*\in\cK$.
\end{proof}

On $\cK$, the moment bounds $\|\bPhi_{\tau,a}\|_2 \leq M^2$ and $\|\bvarphi_{\tau,a}\|_2 \leq C_\varphi$ hold trivially, and $\EE[\|\bm{M}_{\tau,a}\|_2^2 \mid \cH_{\tau-1}] \leq M^2(R_y + \sigma_\eta)^2 + M^4$, verifying the moment bound (C.3) along the iterates.

\subsubsection{Finishing the proof}

Fix $\gamma\ge\gamma_0$, where $\gamma_0$ is the threshold from the contraction condition~\eqref{eq:Ridge-Lip-condition}; this fixes $R_{\bbeta}$, the clip floor $\pi_{\min}$, and $t_{\mathrm{conc}}$. We prove $\hat\bbeta_t^{\Ridge}\to\bar\bbeta_\gamma$ \emph{almost surely}, which gives convergence in probability for this fixed $\gamma$.

By Lemma~\ref{lem::good-event-master}, the event $G:=\bigcup_{t_0\ge t_{\mathrm{conc}}}\cE_{t_0}$ has $\PP(G)=1$, and on $\cE_{t_0}$ the iterates satisfy $\bm\psi_\tau\in\cK$ for all $\tau\ge t_0$, where (Lemma~\ref{lem::contraction-revised}) $\bg$ is a $\kappa$-contraction on $\cK$ with unique fixed point $\bm\psi^*=\bm\psi^*_\gamma\in\cK$ and the conditional moment bound (C.3) holds.

Write $\be_t:=\bm\psi_t-\bm\psi^*$. The recursion (\ref{eq:sample-cov-revised}) and $\bg(\bm\psi^*)=\bm\psi^*$ give
\[
    \be_{t+1}=\Bigl(1-\tfrac1t\Bigr)\be_t+\tfrac1t\bigl(\bg(\bm\psi_t)-\bg(\bm\psi^*)\bigr)+\tfrac1t\bM_t .
\]
Fix $t_0\ge t_{\mathrm{conc}}$ and let $\sigma:=\inf\{t\ge t_0:\bm\psi_t\notin\cK\}$ (with $\inf\emptyset=\infty$); then $\{\sigma>t\}$ is $\cH_{t-1}$-measurable and $\{\sigma=\infty\}\supseteq\cE_{t_0}$. On $\{\sigma>t\}$, $\bm\psi_t\in\cK$ gives $\|\bg(\bm\psi_t)-\bg(\bm\psi^*)\|_2\le\kappa\|\be_t\|_2$ (Lemma~\ref{lem::contraction-revised}, using $\bm\psi^*\in\cK$) and $\EE[\|\bM_t\|_2^2\mid\cH_{t-1}]\le C$, with $\|\be_t\|_2$ bounded on $\cK$. For the stopped process $\tilde\be_t:=\be_{t\wedge\sigma}$, using $(1-\tfrac{1-\kappa}{t})^2\le 1-\tfrac{2(1-\kappa)}{t}+\tfrac{(1-\kappa)^2}{t^2}$, on the whole probability space
\[
    \EE\bigl[\|\tilde\be_{t+1}\|_2^2\mid\cH_{t-1}\bigr]
    \le\|\tilde\be_t\|_2^2-\tfrac{2(1-\kappa)}{t}\|\tilde\be_t\|_2^2\,1_{\{\sigma>t\}}+\tfrac{C'}{t^2},
    \qquad t\ge t_0,
\]
where $C'$ absorbs the bounded $O(1/t^2)$ terms. By the Robbins--Siegmund theorem (Theorem~\ref{thm:robbins-siegmund}, with $V_t=\|\tilde\be_t\|_2^2$, $\beta_t=0$, $\xi_t=C'/t^2$, $\zeta_t=\tfrac{2(1-\kappa)}{t}\|\tilde\be_t\|_2^2\,1_{\{\sigma>t\}}$), $\|\tilde\be_t\|_2^2$ converges a.s.\ and $\sum_t \tfrac{1}{t}\|\tilde\be_t\|_2^2\,1_{\{\sigma>t\}}<\infty$ a.s. On $\{\sigma=\infty\}$ this reads $\sum_t \tfrac{1}{t}\|\be_t\|_2^2<\infty$, which with $\sum_t t^{-1}=\infty$ forces $\be_t\to0$. Hence $\bm\psi_t\to\bm\psi^*$ on $\cE_{t_0}$. Taking the union over $t_0\ge t_{\mathrm{conc}}$, $\bm\psi_t\to\bm\psi^*$ almost surely on $G$, i.e.\ almost surely.

Since $\hat\bbeta_{t,a}^{\Ridge} = \bPhi_{t,a}^{-1}\bvarphi_{t,a}$ is continuous in $\bm{\psi}_t$, the continuous mapping theorem gives $\hat\bbeta_t^{\Ridge} \xrightarrow{a.s.} \bar\bbeta_\gamma$, hence $\hat\bbeta_t^{\Ridge}\xrightarrow{p}\bar\bbeta_\gamma$; {scaled inverse-propensity convergence} then follows from Theorem~\ref{lem::statistics-converge-implies-policy-converge}. \qed

\subsection{Proof of Proposition \ref{theorem:convergence-of-SGD}}\label{apdx::proof-thm::convergence-of-SGD}

Similar to the above, for simplicity, we denote the clipped policy~\eqref{eq::clipped-allocation} by
\begin{align*}
    \pi^{\gamma}(a \mid \bX_t, \hat\bbeta_{t-1}) = \bigl[\mathrm{Clip}\bigl(\brho_\gamma(s_{\hat\bbeta_{t-1}}(\bX_t));\pi_{\min}\bigr)\bigr]_a, \quad \text{where } s_{\bbeta}(\bx) = (\bbeta_1^\top \bx, \ldots, \bbeta_K^\top \bx).
\end{align*}

The argument mirrors the proof of Proposition~\ref{theorem:convergence-of-Ridge}. Rewrite the update~\eqref{eq::SGD-update} in Robbins--Monro form:
\[
    \hat \bbeta_{t, a}^{\SGD} = \hat \bbeta_{t-1, a}^{\SGD} + \eta_t\,\tilde\bh_a(\hat\bbeta_{t-1}^{\SGD}) + \bM_{t,a}.
\]
The mean field and the martingale-difference noise are
\[
    \tilde{\bh}_a(\bbeta) := \EE\!\left[1_{\{A_t = a\}}\bh(\bX_t, Y_t(a); \bbeta_a) \mid \hat\bbeta_{t-1}^{\SGD} = \bbeta\right],
\]
\[
    \bM_{t,a} := \eta_t\bigl(1_{\{A_t=a\}}\bh(\bX_t, Y_t(a);\hat\bbeta_{t-1,a}^{\SGD}) - \tilde\bh_a(\hat\bbeta_{t-1}^{\SGD})\bigr).
\]
Stacking over $a\in\cA$ gives $\hat\bbeta_t^{\SGD}=\hat\bbeta_{t-1}^{\SGD}+\eta_t\,\tilde\bh(\hat\bbeta_{t-1}^{\SGD})+\bM_t$, with limiting o.d.e.\ $\dot\bbeta=\tilde\bh(\bbeta)$ and equilibrium set $\{\bbeta:\tilde\bh(\bbeta)=0\}$.

\emph{Step 1: $\tilde\bh$ is Lipschitz on $\cB_{\bbeta}$.} Fix $a$ and write $\pi^\gamma_i := \pi^{\gamma}(a\mid\bX_t,\bbeta_i)$ and $\bh_i := \bh(\bX_t, Y_t(a);\bbeta_{i,a})$. Split
\[
    \pi^\gamma_1\bh_1 - \pi^\gamma_2\bh_2
    = (\pi^\gamma_1 - \pi^\gamma_2)\,\bh_1 + \pi^\gamma_2\,(\bh_1 - \bh_2).
\]
Taking expectations and applying the triangle and Cauchy--Schwarz inequalities,
\[
    \|\tilde{\bh}_a(\bbeta_1) - \tilde{\bh}_a(\bbeta_2)\|_2
    \le \sqrt{\EE\bigl[(\pi^\gamma_1 - \pi^\gamma_2)^2\bigr]}\,\sqrt{\EE\|\bh_1\|_2^2}
      + \EE\|\bh_1 - \bh_2\|_2.
\]
The smooth-allocation Lipschitz bound, as in the proof of Proposition~\ref{theorem:convergence-of-Ridge}, gives the pointwise estimate
\[
    |\pi^\gamma_1 - \pi^\gamma_2|\le L_{\brho}(\gamma, \sqrt{K}MR_{\bbeta})\,M\,\|\bbeta_1-\bbeta_2\|_2,
\]
which for Boltzmann exploration reduces to the $2/\gamma$ bound used above. With the growth bound $\EE\|\bh_1\|_2^2\le\rho_{\bh}(1+R_{\bbeta}^2)$ and the Lipschitz bound $\EE\|\bh_1-\bh_2\|_2\le L_{\bh}\|\bbeta_1-\bbeta_2\|_2$,
\[
    \|\tilde{\bh}_a(\bbeta_1) - \tilde{\bh}_a(\bbeta_2)\|_2
    \le \Bigl(L_{\brho}(\gamma, \sqrt{K}MR_{\bbeta})\,M\,\sqrt{\rho_{\bh}(1+R_{\bbeta}^2)} + L_{\bh}\Bigr)\|\bbeta_1-\bbeta_2\|_2.
\]
Hence $\tilde\bh$ is Lipschitz with constant $L_{\tilde\bh}=\sqrt{|\cA|}\bigl(L_{\brho}(\gamma, \sqrt{K}MR_{\bbeta})\,M\,\sqrt{\rho_{\bh}(1+R_{\bbeta}^2)} + L_{\bh}\bigr)$.

\emph{Step 2: martingale-difference noise.} By construction $\EE[\bM_{t,a}/\eta_t\mid\cH_{t-1}]=0$. By $(a+b)^2\le 2a^2+2b^2$ and the growth bound (ii), whenever $\hat\bbeta_{t-1}^{\SGD}\in \cB_{\bbeta}$, 
\[
    \EE\bigl[\|\bM_t/\eta_t\|_2^2 \mid \cH_{t-1}\bigr]
    \le 2\sum_{a\in\cA}\EE\bigl[\|\bh(\bX_t, Y_t(a); \hat\bbeta_{t-1,a}^{\SGD})\|_2^2\mid\cH_{t-1}\bigr]
    \le 2|\cA|\,\rho_{\bh}\bigl(1+R_{\bbeta}^2\bigr).
\]

\emph{Step 3: a.s.\ eventual confinement.} The iterates enter and remain in $\cB_{\bbeta}$ almost surely. For $t_0\in\NN$, set
\[
    \sigma_{t_0}:=\inf\{t\ge t_0:\hat\bbeta_t^{\SGD}\notin\cB_{\bbeta}\}, \qquad \inf\emptyset=\infty .
\]
Condition (i) of Proposition~\ref{theorem:convergence-of-SGD} states $\limsup_t\|\hat\bbeta_{t,a}^{\SGD}\|_2<R_{\bbeta}$ a.s.\ for every $a$. On the probability-one event where this holds, each path satisfies $\sup_a\|\hat\bbeta_{t,a}^{\SGD}\|_2<R_{\bbeta}$ for all $t\ge t_0(\omega)$ with some finite $t_0(\omega)$. Hence $\hat\bbeta_t^{\SGD}\in\cB_{\bbeta}$ for all $t\ge t_0(\omega)$, so $\sigma_{t_0(\omega)}(\omega)=\infty$. Therefore
\[
    \PP\Bigl(\textstyle\bigcup_{t_0\in\NN}\{\sigma_{t_0}=\infty\}\Bigr)=1 .
\]
Proposition~\ref{prop::SGD-boundedness} verifies Condition (i) for the example scores: the global clip induces an inward Lyapunov drift, yielding $\limsup_t\|\hat\bbeta_{t,a}^{\SGD}\|_2\le R_0<R_{\bbeta}$ a.s.

\emph{Step 4: unique equilibrium.} The dissipativity assumption supplies $\mu>0$ with
\[
    \langle\bbeta-\bbeta',\,\tilde\bh(\bbeta)-\tilde\bh(\bbeta')\rangle\le-\mu\|\bbeta-\bbeta'\|_2^2, \qquad \bbeta,\bbeta'\in\cB_{\bbeta}.
\]
Two equilibria in $\cB_{\bbeta}$ give $0\le-\mu\|\bbeta-\bbeta'\|_2^2$, so the equilibrium in $\cB_{\bbeta}$ is unique; call it $\bar\bbeta=\bar\bbeta_\gamma$, with $\tilde\bh(\bar\bbeta)=0$.

\emph{Step 5: convergence at fixed $\gamma$.} Fix $\gamma\ge\gamma_0$, which fixes $R_{\bbeta}$, the clip floor $\pi_{\min}$, and $\bar\bbeta$. Set $\be_t:=\hat\bbeta_t^{\SGD}-\bar\bbeta$. Since $\tilde\bh(\bar\bbeta)=0$,
\[
    \be_t=\be_{t-1}+\eta_t\,\bd_{t-1}+\bM_t, \qquad \bd_{t-1}:=\tilde\bh(\hat\bbeta_{t-1}^{\SGD})-\tilde\bh(\bar\bbeta).
\]

Fix $t_0\in\NN$ and let $\sigma:=\sigma_{t_0}$ from Step 3. The event $\{\sigma\ge t\}=\{\hat\bbeta_s^{\SGD}\in\cB_{\bbeta},\ t_0\le s\le t-1\}$ is $\cH_{t-1}$-measurable. Define the stopped error $\tilde\be_t:=\be_{t\wedge\sigma}$, so that $\tilde\be_t-\tilde\be_{t-1}=(\be_t-\be_{t-1})\,1_{\{\sigma\ge t\}}$ for $t>t_0$.

On $\{\sigma\ge t\}$ the iterate $\hat\bbeta_{t-1}^{\SGD}\in\cB_{\bbeta}$. Expand $\|\be_t\|_2^2$ and take $\EE[\cdot\mid\cH_{t-1}]$; the cross terms in $\bM_t$ vanish in conditional mean, giving
\[
    \EE\bigl[\|\be_t\|_2^2\mid\cH_{t-1}\bigr]
    =\|\be_{t-1}\|_2^2+2\eta_t\langle\be_{t-1},\bd_{t-1}\rangle+\eta_t^2\|\bd_{t-1}\|_2^2+\EE\bigl[\|\bM_t\|_2^2\mid\cH_{t-1}\bigr].
\]
Dissipativity (Step 4) gives $\langle\be_{t-1},\bd_{t-1}\rangle\le-\mu\|\be_{t-1}\|_2^2$. Lipschitzness (Step 1) gives $\|\bd_{t-1}\|_2\le L_{\tilde\bh}D$, where $D:=\sup_{\bbeta\in\cB_{\bbeta}}\|\bbeta-\bar\bbeta\|_2\le 2\sqrt{K}R_{\bbeta}$. Step 2 gives $\EE[\|\bM_t\|_2^2\mid\cH_{t-1}]\le 2|\cA|\rho_{\bh}(1+R_{\bbeta}^2)\,\eta_t^2$. Hence, with $C':=L_{\tilde\bh}^2D^2+2|\cA|\rho_{\bh}(1+R_{\bbeta}^2)$,
\[
    \EE\bigl[\|\be_t\|_2^2\mid\cH_{t-1}\bigr]\le\|\be_{t-1}\|_2^2-2\mu\eta_t\|\be_{t-1}\|_2^2+C'\eta_t^2 \qquad\text{on } \{\sigma\ge t\}.
\]

Since $1_{\{\sigma\ge t\}}$ is $\cH_{t-1}$-measurable and $\tilde\be_t=\tilde\be_{t-1}$ on $\{\sigma<t\}$,
\[
    \EE\bigl[\|\tilde\be_t\|_2^2\mid\cH_{t-1}\bigr]\le\|\tilde\be_{t-1}\|_2^2-2\mu\eta_t\|\tilde\be_{t-1}\|_2^2\,1_{\{\sigma\ge t\}}+C'\eta_t^2 .
\]
Apply the Robbins--Siegmund theorem (Theorem~\ref{thm:robbins-siegmund}) with $V_t=\|\tilde\be_t\|_2^2$, $\beta_t=0$, $\xi_t=C'\eta_t^2$, and $\zeta_t=2\mu\eta_t\|\tilde\be_{t-1}\|_2^2\,1_{\{\sigma\ge t\}}$. Since $\sum_t\eta_t^2<\infty$, $\|\tilde\be_t\|_2^2$ converges a.s.\ and
\[
    \sum_t\eta_t\|\tilde\be_{t-1}\|_2^2\,1_{\{\sigma\ge t\}}<\infty \qquad\text{a.s.}
\]
On $\{\sigma_{t_0}=\infty\}$ this reads $\sum_t\eta_t\|\be_{t-1}\|_2^2<\infty$, which with $\sum_t\eta_t=\infty$ forces $\be_t\to0$. Since $t_0\in\NN$ was arbitrary and $\PP(\bigcup_{t_0}\{\sigma_{t_0}=\infty\})=1$ by Step 3, $\be_t\to0$ a.s., i.e.\ $\hat\bbeta_t^{\SGD}\to\bar\bbeta$ almost surely.

Therefore $\hat\bbeta_t^{\SGD}\xrightarrow{p}\bar\bbeta_\gamma$, and scaled inverse-propensity convergence (Definition~\ref{aspt:policy_convergence}) follows from Theorem~\ref{lem::statistics-converge-implies-policy-converge}. The four conditions are verified for the example score functions in Proposition~\ref{prop::SGD-boundedness}.

\subsection{Proof of Proposition \ref{prop::SGD-boundedness}}
    \label{appx::prop::SGD-boundedness}

We verify the four conditions of Proposition~\ref{theorem:convergence-of-SGD} for the three example scores. Each is affine in $\bbeta$,
\[
    \bh(\bx,y;\bbeta_a)=\bb_a(\bx,y)-\bA(\bx)\bbeta_a,
\]
with $\bA(\bx)=\bx\bx^\top$ (Example~\ref{ex::misspecified-linear-bandits}), $\bA(\bx)=\bx\bx^\top-\bSigma_e$ (Example~\ref{ex::bandits-noisy-contexts}), and $\bA(\bx)=\bI$ (Example~\ref{ex::ope}).
For Example~\ref{ex::ope}, $\bh$ is action-specific and scalar-valued; the same argument applies with $\bh_a$ replacing a common $\bh$.

For the Lipschitz condition: Since $\bh(\bx,y;\bbeta_a)-\bh(\bx,y;\bbeta_a')=-\bA(\bx)(\bbeta_a-\bbeta_a')$, the map $\bh$ is $L_{\bh}$-Lipschitz in $\bbeta$ with $L_{\bh}=\sup_{\|\bx\|_2\le M}\|\bA(\bx)\|_2<\infty$ under Assumption~\ref{aspt:Ridge_Conv}.
The growth and boundedness conditions, existence of equilibrium and the dissipativity condition are verified in order.

Now we first verify the growth condition of Proposition~\ref{theorem:convergence-of-SGD}, namely
        \begin{align}
            \EE[\|\bh(\bX_t, Y_t(a); \bbeta)\|_2^2] \leq \rho_{\bh}(1+\|\bbeta\|_2^2).
        \end{align}
        This assumption is immediately satisfied from the almost surely boundedness of $\|\bX_t\|_2$ and the boundedness of $\EE[Y_t^2]$ given by Assumption \ref{aspt:Ridge_Conv} for all three cases.

        \textbf{Example \ref{ex::bandits-noisy-contexts}.} Write 
        \begin{align}
            \|\bh(\bX_t, Y_t(a); \bbeta)\| \leq |Y_t(a)| \|\bX_t\| + (\|\bX_t\|^2 + \|\Sigma_e\|) \|\bbeta\|.
        \end{align}
        Squaring and taking expectation gives
        \begin{align}
&\EE\|\bh(\bX_t,Y_t(a);\bbeta)\|^2\\
\le& 2\EE\big[Y_t(a)^2\|\bX_t\|^2\big]+2\EE\big[(\|\bX_t\|^2+\|\Sigma_e\|)^2\big]\|\bbeta\|^2\\
\le& 2M^2(R_y^2 + \sigma_{\eta}^2)+2(M^2 + \|\bSigma_e\|)^2\|\bbeta\|^2 \\
\le& \max\{2M^2(R_y^2 + \sigma_{\eta}^2),\ 2(M^2 + \|\bSigma_e\|)^2\} (1+\|\bbeta\|^2).
        \end{align}
Example \ref{ex::misspecified-linear-bandits} is a special case of \ref{ex::bandits-noisy-contexts} with $\Sigma_e = 0$.

\textbf{Example \ref{ex::ope}: $\bh_a(\bx, y; \bbeta) = \pi^e(a \mid \bx)y-\bbeta$.} Here $\|\bh_a(\bX_t, Y_t(a); \bbeta)\| \leq |\pi^e(a \mid \bX_t)| |Y_t(a)| + \|\bbeta\|$. Since $0\leq\pi^e(\cdot \mid \bx) \leq 1$ for all $\bx$, 
\begin{align}
    \EE \|\bh_a(\bX_t, Y_t(a);\bbeta)\|^2 \leq 2 \EE[Y_t(a)^2] + 2\|\bbeta\|^2 \leq 2\max\{R_y^2 + \sigma_{\eta}^2, 1\} (1+\|\bbeta\|_2^2).
\end{align}

For the a.s.\ eventual confinement via inward drift: The affine mean field is $\tilde\bh_a(\bbeta)=\bvarphi_a(\bbeta)-\bSigma_a(\bbeta)\bbeta_a$, with
\[
\bvarphi_a(\bbeta)=\EE[\pi^{\gamma}(a\mid\bX_t,\bbeta)\,\bb_a(\bX_t,Y_t)], \qquad
    \bSigma_a(\bbeta)=\EE[\pi^{\gamma}(a\mid\bX_t,\bbeta)\,\bA(\bX_t)].
\]
The global clip gives $\bSigma_a(\bbeta)\succeq\underline c\,\bI$ uniformly in $\bbeta$, with $\underline c=\pi_{\min}\sigma_{\min}^2$ (Example~\ref{ex::misspecified-linear-bandits}), $\underline c=\pi_{\min}$ (Example~\ref{ex::ope}), and $\underline c=\lambda_{\min}\bigl(\EE[\pi^{\gamma}(a\mid\bX_t,\bbeta)(\bX_t\bX_t^\top-\bSigma_e)]\bigr)>0$ (Example~\ref{ex::bandits-noisy-contexts}, under the condition of the dissipativity step). Since $|\pi^{\gamma}|\le1$ and $\EE|Y_t|\le R_y+\sigma_\eta$,
        \[
            \|\bvarphi_a(\bbeta)\|_2\le\EE\|\bb_a(\bX_t,Y_t)\|_2\le B_\varphi, \qquad
            B_\varphi:=M(R_y+\sigma_\eta)
        \]
        ($B_\varphi=R_y+\sigma_\eta$ for Example~\ref{ex::ope}, where $\bb_a=\pi^e(a\mid\bX_t)Y_t$). Hence
        \[
            \langle\bbeta_a,\tilde\bh_a(\bbeta)\rangle
            =\langle\bbeta_a,\bvarphi_a(\bbeta)\rangle-\bbeta_a^\top\bSigma_a(\bbeta)\bbeta_a
            \le B_\varphi\|\bbeta_a\|_2-\underline c\,\|\bbeta_a\|_2^2,
        \]
        which is strictly negative for $\|\bbeta_a\|_2>R_0:=B_\varphi/\underline c$: the mean field points inward outside the deterministic ball of radius $R_0$. With the growth bound above and $\sum_t\eta_t^2<\infty$, the Lyapunov-drift stability theorem for stochastic approximation \citep[Ch.~3]{borkar2008stochastic} gives
        \[
            \limsup_{t\to\infty}\|\hat\bbeta_{t,a}^{\SGD}\|_2\le R_0\quad\text{a.s.}, \qquad a\in\cA.
        \]
        Thus, for any deterministic $R_{\bbeta}>R_0$, we have $\limsup_{t\to\infty}\|\hat\bbeta_{t,a}^{\SGD}\|_2\le R_0<R_{\bbeta}$ a.s., which is the confinement condition of Proposition~\ref{theorem:convergence-of-SGD}.

We next verify that there exists $R_{\bbeta}$ such that the limiting ODE has an equilibrium in \(\cB_{\bbeta}:= \{\bbeta\in\RR^{d_\beta}: \max_a\|\bbeta_a\|_2\leq R_{\bbeta}\} \). By the uniform curvature bound established above, there exists \(c>0\) such that $\bSigma_a(\bbeta)\succeq c \bI$ uniformly over \(a\in\mathcal A\) and \(\bbeta\in \cB_{\bbeta}\). Moreover, the preceding moment bounds imply that there exists a finite constant \(B_\phi\) such that $\sup_{a\in\mathcal A}\sup_{\bbeta\in \cB_{\bbeta}}\|\phi_a(\bbeta)\|_2\le B_\phi$. For each \(a\in\mathcal A\), define
\[
\bT_a(\bbeta)=\bSigma_a(\bbeta)^{-1}\bphi_a(\bbeta),
\qquad
\bT(\bbeta)=\bigl(\bT_a(\bbeta)\bigr)_{a\in\mathcal A}.
\]
Then, for every \(\bbeta\in \cB_{\bbeta}\),
\[
\|\bT_a(\bbeta)\|_2
\le
\|\bSigma_a(\bbeta)^{-1}\|_2\|\bphi_a(\bbeta)\|_2
\le
\frac{B_\phi}{c}.
\]
Choose \(R_{\bbeta}>B_\phi/c\). It follows that \(\bT\) maps the compact convex set $\cB_{\bbeta}$ into itself. In addition, \(\bT\) is continuous on $\cB_{\bbeta}$, since the clipped smooth-allocation policy is continuous in \(\bbeta\), the action-score map is continuous, and \(\bSigma_a(\bbeta)\) is uniformly nonsingular. Therefore, by Brouwer's fixed point theorem, there exists \(\bar\bbeta_\gamma\in \cB_{\bbeta}\) such that
\[
\bT(\bar\bbeta_\gamma)=\bar\bbeta_\gamma .
\]
Then, \(\bar\bbeta_\gamma\in \cB_{\bbeta}\) is an equilibrium of the limiting ODE.

It remains to verify the dissipativity condition. With the mean field $\tilde\bh_a(\bbeta)=\bvarphi_a(\bbeta)-\bSigma_a(\bbeta)\bbeta_a$ defined above, decompose
\[
    \tilde\bh_a(\bbeta)-\tilde\bh_a(\bbeta')
    = -\bSigma_a(\bbeta)(\bbeta_a-\bbeta_a')
      + \bigl[\bvarphi_a(\bbeta)-\bvarphi_a(\bbeta')\bigr]
      - \bigl[\bSigma_a(\bbeta)-\bSigma_a(\bbeta')\bigr]\bbeta_a'.
\]
Since the policy is clipped at the floor $\pi_{\min}>0$, we have $\bSigma_a(\bbeta)\succeq\underline{c}\,\bI$ uniformly in $\bbeta$, where
\[
    \underline{c}=\pi_{\min}\sigma_{\min}^2 \ \text{(Example~\ref{ex::misspecified-linear-bandits})}, \qquad
    \underline{c}=\pi_{\min} \ \text{(Example~\ref{ex::ope})},
\]
and $\underline{c}=\lambda_{\min}\bigl(\EE[\pi^{\gamma}(a\mid\bX_t,\bbeta)(\bX_t\bX_t^\top-\bSigma_e)]\bigr)$ for Example~\ref{ex::bandits-noisy-contexts}, which is positive when $\bSigma_e$ is small relative to $\bSigma_S$. (For Example~\ref{ex::misspecified-linear-bandits} and~\ref{ex::ope} this uses $\pi^{\gamma}(a\mid\bx,\bbeta)\ge\pi_{\min}$, and for Example~\ref{ex::misspecified-linear-bandits}, together with $\EE[\bX_t\bX_t^\top]\succeq\sigma_{\min}^2\bI$.) The last two terms depend on $\bbeta$ only through the allocation map, so each is bounded by $C\,L_{\brho}(\gamma,\sqrt{K}MR_{\bbeta})\|\bbeta-\bbeta'\|_2$ with $C=C(M,R_y,R_{\bbeta},|\cA|)$. Summing over $a$,
\[
    \langle\bbeta-\bbeta',\,\tilde\bh(\bbeta)-\tilde\bh(\bbeta')\rangle
    \le -\bigl(\underline{c}-C\,L_{\brho}(\gamma,\sqrt{K}MR_{\bbeta})\bigr)\|\bbeta-\bbeta'\|_2^2.
\]
This is the dissipativity condition with $\mu=\underline{c}-C\,L_{\brho}(\gamma,\sqrt{K}MR_{\bbeta})>0$ once $\gamma$ is large enough that $C\,L_{\brho}(\gamma,\sqrt{K}MR_{\bbeta})<\underline{c}$, which holds for all $\gamma\ge\gamma_0$ since $L_{\brho}(\gamma,\sqrt{K}MR_{\bbeta})\to0$. 

Summarizing from the above, all four conditions of Proposition~\ref{theorem:convergence-of-SGD} hold, and its conclusion follows.

\subsection{Proof of Theorem \ref{theorem:necessary-condition-for-Ridge-convergence}}\label{apdx::proof-thm::necessary-condition-for-Ridge-convergence}

We prove by contradiction. Suppose $\hat{\bm{\beta}}_{t}^{\Ridge}$ converge in probability, then there exists a random vector $\bar{\bbeta}$ such that $\hat{\bm{\beta}}_{t}^{\Ridge} \xrightarrow{p} \bar{\bbeta}$. Recall the definition of $\hat{\bm{\beta}}_{t}^{\Ridge}$ in (\ref{eq:ridge-estimator}):
\begin{align}
    \hat{\bm{\beta}}_{t, a}^{\Ridge} = \left(\lambda 
    \bI + \sum_{i=1}^{t-1} \bX_{i, a} \bX_{i, a}^\top\right)^{-1} \sum_{i=1}^{t-1} \bX_{i, a} Y_{i, a}.
\end{align}
Here we define $\bX_{t, a} = \bX_t 1_{\{A_t = a\}}$, and $Y_{t, a} = Y_t 1_{\{A_t = a\}}$.


We first note that $\bSigma_{a}(\bm{\beta}) = \EE_{\bX_t, A_t \sim \pi(\cdot \mid \bX_t, \bm{\beta})}[ 1_{\{A_t = a\}} \bX_t \bX_t^\top]$ is a continuous function of $\bm{\beta}$ given any $a \in \mathcal{A}$, because we assume that $\pi(a \mid \bx, \bm{\beta})$ is a continuous function of $\bm{\beta}$ given any $\bx \in \mathcal{X}$. From the continuous mapping theorem, we have $\bSigma_{a}(\hat{\bm{\beta}}_{t-1}^{\Ridge}) \xrightarrow{p} \bSigma_a(\bar{\bm{\beta}})$, and
\begin{align}
    \frac{1}{t} \left(\lambda \bI + \sum_{\tau=1}^{t-1} \bSigma_{a}(\hat{\bm{\beta}}_{\tau, a}^{\Ridge}) \right) \xrightarrow{p} \bSigma_a(\bar{\bm{\beta}}).
\end{align}
Further, since $\bX_{t, a}\bX_{t, a}^\top - \bSigma_{a}(\hat{\bm{\beta}}_{t}^{\Ridge})$ is a zero-mean martingale difference sequence with bounded second moment, by the law of large numbers (Lemma \ref{lem::law-of-large-numbers-for-martingale-difference-sequence})\citep{chow1967strong}, we have that 
\begin{align}
    \frac{1}{t} \sum_{\tau=1}^{t-1} \left(\bX_{\tau, a}\bX_{\tau, a}^\top - \bSigma_{a}(\hat{\bm{\beta}}_{\tau, a}^{\Ridge})\right) \xrightarrow{p} 0.
\end{align}
Combining the above, we deduce that 
\begin{align}
    \frac{1}{t} \left(\lambda \bI + \sum_{\tau=1}^{t-1} \bX_{\tau, a}\bX_{\tau, a}^\top \right) \xrightarrow{p} \bSigma_a(\bar{\bm{\beta}}).
\end{align}

Similar argument gives that 
\begin{align}
    \frac{1}{t} \sum_{\tau=1}^{t-1} \left(\bX_{\tau, a}Y_{\tau, a}\right) \xrightarrow{p} \bm{\varphi}_a(\bar{\bm{\beta}}).
\end{align}

Now we are ready to show the contradiction. First, from the assumptions, we have $\pi(a \mid \bx, \bbeta)>0$ for all $a$, $\bx$, $\bbeta$. Thus $\forall \bbeta$, 
\begin{align*}
\bSigma_{\bar a}(\bbeta) & = \EE_{\bX_t, A\sim \pi(\cdot|\bX_t, \bbeta)}[1_{\{A = \bar a\}}\bX_t\bX_t^\top]\\
& = \EE_{\bX_t}\left[\EE_{A\sim \pi(\cdot|\bX_t, \bbeta)}[1_{\{A = \bar a\}}\bX_t\bX_t^\top]\Big|\bX_t\right]\\
& = \EE_{\bX_t}\left[\bX_t\bX_t^\top\cdot \pi(\bar a \mid \bX_t, \bbeta)\right]\succ \mathbf{0}.
\end{align*}
The last expression above is true because otherwise, there exists a constant vector $\bv\in\RR^d$ s.t. $\bv^\top \EE_{\bX_t}[\bX_t\bX_t^\top\cdot \pi(\bar a \mid \bX_t, \bbeta)]\bv = 0$, which is equivalent to $\EE (\bv^\top \bX_t)^2\cdot \pi(\bar a \mid \bX_t, \bbeta) = 0$, and implies that $(\bv^\top \bX_t)^2\cdot \pi(\bar a \mid \bX_t, \bbeta) = 0$ almost surely. Because $\pi$ is always positive, it has to be that $\bv^\top \bX_t = 0$ almost surely, which contradicts Assumption \ref{aspt:Ridge_Conv}.

Since the mapping $\bM \mapsto \bM^{-1}$ is continuous on the set of positive definite matrices, by the continuous mapping theorem, we deduce that 
\begin{align}
    \left(\frac{1}{t} \left(\lambda \bI + \sum_{\tau=1}^{t-1} \bX_{\tau, {\bar a}}\bX_{\tau, {\bar a}}^\top \right) \right)^{-1} \xrightarrow{p} \left(\bSigma_{{\bar a}}(\bar{\bm{\beta}})\right)^{-1}.
\end{align}
Further, we have 
\begin{align}
    \hat{{\bbeta}}_{t, \bar a}^{\Ridge} \xrightarrow{p} \left(\bSigma_{\bar a}(\bar{\bm{\beta}})\right)^{-1} \bm{\varphi}_{\bar a}(\bar{\bm{\beta}}).
\end{align}
At the same time, because we also assume that   $\hat{\bm{\beta}}_{t}^{\Ridge} \xrightarrow{p} \bar{\bbeta}$, it has to be true that 
$$
\left(\bSigma_{\bar a}(\bar{\bm{\beta}})\right)^{-1} \bm{\varphi}_{\bar a}(\bar{\bm{\beta}}) = \bar{\bbeta}_{\bar a},\quad a.s.
$$
which contradicts (\ref{eq:necessary-condition-for-Ridge-convergence}).




\section{Proofs of Auxiliary Lemmas}\label{appB}

    \subsection{Proof of Lemma \ref{lem::clip-l2-projection}}\label{apdx::proof-lem::clip-l2-projection}

{
    We first show that $\operatorname{Clip}(\bm{\pi})$ is the $L_2$ projection of $\bm{\pi}$ onto the set $\{ \bm{\pi} \in [0, 1]^{|\mathcal{A}|} \mid \sum_{a} \bm{\pi}_a = 1, \bm{\pi}_a \geq \pi_{\min}\}$.

    The optimization problem is given by
    \begin{align}
        \min_{\bm{\pi}' \in [0, 1]^{|\mathcal{A}|}} \frac{1}{2}\|\bm{\pi} - \bm{\pi}'\|_2^2 \quad \text{s.t.} \quad \sum_{a} \bm{\pi}'_a = 1, \quad \bm{\pi}'_a \geq \pi_{\min}.
    \end{align}
    The Lagrangian is given by
    \begin{align}
        \mathcal{L}(\bm{\pi}', \nu, \mu) = \frac{1}{2}\|\bm{\pi} - \bm{\pi}'\|_2^2 + \nu \left(1 - \sum_{a} \bm{\pi}'_a\right) + \bm{\mu}^\top (\bm{\pi}' - \pi_{\min}).
    \end{align}
    The KKT conditions are given by
    \begin{align}
        \nabla_{\bm{\pi}'} \mathcal{L}(\bm{\pi}', \nu, \bm{\mu}) = \bm{\pi} - \bm{\pi}' - \nu \mathbf{1} - \bm{\mu} = 0, \\
        \nu \left(1 - \sum_{a} \bm{\pi}'_a\right) = 0, \\
        \bm{\mu} \left(\bm{\pi}' - \pi_{\min}\right) = 0, \\
        \bm{\mu} \geq 0, \quad \bm{\pi}' \geq \pi_{\min}, \quad
        \nu \mathbf{1} = 1.
    \end{align}

    From the optimality condition, we have:
    \begin{align}
        \bm{\pi}'_a=\bm{\pi}_a+\nu+\mu_a
    \end{align}

    Complementary slackness indicates that for each $a$ :
    \begin{itemize}
        \item if $\bm{\pi}'_a>\pi_{\min}$, then $\mu_a=0$, thus:
        \begin{align}
            \bm{\pi}'_a=\bm{\pi}_a+\nu
        \end{align}
        \item if $\bm{\pi}'_a=\pi_{\min}$, then:
        \begin{align}
            \pi_{\min}=\bm{\pi}_a+\nu+\mu_a, \quad \mu_a \geq 0 \Rightarrow \bm{\pi}_a+\nu \leq \pi_{\min}
        \end{align}
    \end{itemize}

    Hence, the solution takes the form:
    \begin{align}
        \bm{\pi}'_a=\max \left(\bm{\pi}_a+\nu, \pi_{\min}\right)
    \end{align}
    with the constraint that $\sum_{a=1}^{|\mathcal{A}|} \bm{\pi}'_a=1$.

    By taking the derivative of $\mathcal{L}(\bm{\pi}', \nu, \bm{\mu})$ w.r.t. $\nu$, we have that $\nu$ is the minimum value that satisfies the KKT conditions.

}

{
    
    \subsection{Proof of Lemma \ref{lem::consistency-general}}\label{apdx::proof-lem::consistency-general}
    
    We first prove that for $R_{\btheta}$ stated in Assumption \ref{aspt:boundedness-general}, 
    \begin{equation}\label{eq::uniform-convergence-general}
        \sup_{\|\btheta\|_2\leq R_{\btheta}}\|\bG_T(\btheta) - \EE\bG_T(\btheta)\|_2\xrightarrow{p} 0
    \end{equation}
    as $T\rightarrow\infty$. In fact, we only need to prove 
    \begin{equation}\label{eq::uniform-convergence-by-entry-general}
        \sup_{\|\btheta\|_2\leq R_{\btheta}}\|\bG_T^{(i)}(\btheta) - \EE\bG_T^{(i)}(\btheta)\|_2\xrightarrow{p} 0
    \end{equation}
    for all $i = 1, \ldots, d$. Here $\bG_T^{(i)}(\btheta)$ denotes the $i$-th entry of $\bG_T(\btheta)$.
    
    Fix any $\epsilon>0$. Let $\bTheta_\epsilon = \{\btheta_j: j = 1, \ldots, N_\epsilon\}$ be an $\epsilon$-net of the set $\bTheta := \overline{\cB(\mathbf{0}, R_{\btheta})}$ with finite cardinality $N_\epsilon$. This means that $\forall \btheta\in\bTheta$, $\exists j\in[N_\epsilon]$ such that $\|\btheta - \btheta_j\|_2\leq \epsilon$. Then from Assumption \ref{aspt:smoothness-general}, we have 
    \begin{align*}
    |\bg^{(i)}(\bX_t, Y_t(a); \btheta) - \bg^{(i)}(\bX_t, Y_t(a); \btheta_j)| 
    &\leq \|\bg(\bX_t, Y_t(a); \btheta) - \bg(\bX_t, Y_t(a); \btheta_j)\|_2\\
    &\leq \left\|\int_{0}^1\nabla \bg(\bX_t, Y_t(a);\btheta_j + u(\btheta - \btheta_j))\mathrm{d}u\cdot (\btheta - \btheta_j)\right\|_2\\
    &\leq \phi(\bX_t, Y_t(a))\cdot \|\btheta - \btheta_j\|_2\\
    &\leq \epsilon\phi(\bX_t, Y_t(a)).
    \end{align*}
    Here $\bg^{(i)}(\bX_t, Y_t(a); \btheta)$ denotes the $i$-th entry of $\bg(\bX_t, Y_t(a); \btheta)$, for any $\btheta$. Thus, 
    \begin{equation}\label{eq::gi-nested-bounds}
        l_j(\bX_t, Y_t(a))\leq \bg^{(i)}(\bX_t, Y_t(a); \btheta)\leq u_j(\bX_t, Y_t(a)),
    \end{equation}
    where we define $l_j(\bX_t, Y_t(a)) = \bg^{(i)}(\bX_t, Y_t(a); \btheta_j) - \epsilon\phi(\bX_t, Y_t(a))$, $u_j(\bX_t, Y_t(a)) = \bg^{(i)}(\bX_t, Y_t(a); \btheta_j) + \epsilon\phi(\bX_t, Y_t(a))$.
    
    Now, notice that $\bG_T(\btheta) = \frac1T\sum_{t=1}^T\bZ_t(\btheta)$, where $\bZ_t(\btheta) = \frac{1}{\pi_t(A_t)}1_{\{A_t = a\}}\bg(\bX_t, Y_t; \btheta)$. We have 
    \begin{align*}
        \EE[\bZ_t(\btheta)|\cH_{t-1}]
        & = \EE_{\bX_t}\big[\EE[\bZ_t(\btheta)|\cH_{t-1}, \bX_t]\big|\cH_{t-1}\big]\\
        & = \EE_{\bX_t}\bigg[\EE_{A_t\sim \pi_t}\Big[\frac{1}{\pi_t(A_t)}1_{\{A_t = a\}}\Big|\cH_{t-1}, \bX_t\Big]\cdot \EE_{Y_t(a)}[\bg(\bX_t, Y_t(a); \btheta)|\cH_{t-1}, \bX_t]\bigg|\cH_{t-1}\bigg]\\
        & = \EE_{\bX_t}\big[\EE_{Y_t(a)}[\bg(\bX_t, Y_t(a); \btheta)|\cH_{t-1}, \bX_t]\big|\cH_{t-1}\big]\\
        & = \EE\bg(\bX_t, Y_t(a); \btheta).
    \end{align*}
    Here in the second equation, we use Assumption \ref{aspt:unconfoundedness}. This implies that 
    \begin{equation}\label{eq::GT-expectation}
        \EE\bG_T(\btheta) =  \EE\bg(\bX_t, Y_t(a); \btheta) = \EE[\bZ_t(\btheta)|\cH_{t-1}],
    \end{equation}
    $\forall t\in[T]$. Combining (\ref{eq::gi-nested-bounds}), we deduce that 
    \begin{align}
    \sup_{\btheta\in\bTheta}\{\bG_T^{(i)}(\btheta) - \EE\bG_T^{(i)}(\btheta)\}
    & = \sup_{\btheta\in\bTheta}\frac1T\sum_{t=1}^T
    \bigg\{\frac{1}{\pi_t(A_t)}1_{\{A_t = a\}}\bg^{(i)}(\bX_t, Y_t; \btheta)\nonumber \\
    &\quad\quad\quad\quad\quad\quad - \EE\bigg[\frac{1}{\pi_t(A_t)}1_{\{A_t = a\}}\bg^{(i)}(\bX_t, Y_t; \btheta)\bigg|\cH_{t-1}\bigg]\bigg\}\nonumber\\
    & \leq \max_{j\in[N_\epsilon]} \frac1T\sum_{t=1}^T
    \bigg\{\frac{1}{\pi_t(A_t)}1_{\{A_t = a\}}u_j(\bX_t, Y_t(a))\nonumber\\
    &\quad\quad\quad\quad\quad\quad\quad - \EE\bigg[\frac{1}{\pi_t(A_t)}1_{\{A_t = a\}}l_j(\bX_t, Y_t(a))\bigg|\cH_{t-1}\bigg]\bigg\}\nonumber\\
    & \leq \Delta_1 + \Delta_2,\label{eq::maxima-mtg-decomposition}
    \end{align}
    where 
    $$
    \Delta_1 := \max_{j\in[N_\epsilon]} \frac1T\sum_{t=1}^T
    \left\{\frac{1}{\pi_t(A_t)}1_{\{A_t = a\}}u_j(\bX_t, Y_t(a)) - \EE\bigg[\frac{1}{\pi_t(A_t)}1_{\{A_t = a\}}u_j(\bX_t, Y_t(a))\bigg|\cH_{t-1}\bigg]\right\},
    $$
    $$
    \Delta_2:= \max_{j\in[N_\epsilon]} \frac1T\sum_{t=1}^T
    \EE\bigg[\frac{1}{\pi_t(A_t)}1_{\{A_t = a\}}\big(u_j(\bX_t, Y_t(a))-l_j(\bX_t, Y_t(a))\big)\bigg|\cH_{t-1}\bigg].
    $$
    We first analyze $\Delta_1$. Define $U_{t, j} = \frac{1_{\{A_t = a\}}}{\pi_t(A_t)}u_j(\bX_t, Y_t(a))$, and $M_{t, j} = U_{t, j} - \EE[U_{t, j}|\cH_{t-1}]$. Then we have 
    \begin{equation}\label{eq::Delta1_bound}
        \Delta_1\leq \sum_{j\in[N_\epsilon]}\left| \frac1T\sum_{t=1}^T
    M_{t, j}\right|.
    \end{equation}
    For any $\epsilon'>0$,
    \begin{align}
    & \PP\left(\bigg|\frac1T\sum_{t=1}^T
    M_{t, j}\bigg|>\epsilon'\right)
    \leq \frac1{T^2\epsilon'^2}\EE\left(\sum_{t=1}^T
    M_{t, j}\right)^2
    = \frac1{T^2\epsilon'^2}\sum_{t=1}^T\EE M_{t, j}^2\nonumber\\
    \leq & \frac1{T^2\epsilon'^2}\sum_{t=1}^T\EE U_{t, j}^2
     = \frac1{T^2\epsilon'^2}\sum_{t=1}^T\EE_{\cH_{t-1}, \bX_t} \EE[U_{t, j}^2|\cH_{t-1}, \bX_t] \nonumber\\
    \leq & \frac1{T^2\epsilon'^2}\sum_{t=1}^T\EE_{\cH_{t-1}, \bX_t}
    \left[
    \EE_{A_t\sim \pi_t}\bigg[\frac{1_{\{A_t = a\}}}{\pi_t(A_t)^2}\bigg|\cH_{t-1}, \bX_t\bigg]
    \cdot
    \EE_{Y_t(a)}\bigg[u_j(\bX_t, Y_t(a))^2\bigg|\cH_{t-1}, \bX_t\bigg]
    \right]\nonumber\\
    = & \frac1{T^2\epsilon'^2}\sum_{t=1}^T\EE_{\cH_{t-1}, \bX_t}
    \left[
   \frac1{\pi_t(a)}\cdot 
    \EE_{Y_t(a)}\bigg[u_j(\bX_t, Y_t(a))^2\bigg|\cH_{t-1}, \bX_t\bigg]
    \right]\nonumber\\
    \leq& \frac1{T^2\epsilon'^2}\sum_{t=1}^T\pi_{\min,t}^{-1}
    \cdot \EE[u_j^2(\bX_t, Y_t(a))]\nonumber\\
    \leq & \frac{\sum_{t=1}^T \pi_{\min,t}^{-1}}{\epsilon'^2T^2}\cdot \EE[\bg^{(i)}(\bX_1, Y_1(a);\btheta_j) + \epsilon \bphi(\bX_1, Y_1(a))]^2\nonumber\\
    \leq & \frac{\sum_{t=1}^T \pi_{\min,t}^{-1}}{\epsilon'^2T^2}\cdot \big\{2\EE[\bg^{(i)}(\bX_1, Y_1(a);\btheta_j)^2] + 2\epsilon^2\EE[\bphi(\bX_1, Y_1(a))]^2\big\}\nonumber\\
    \leq & \frac{\sum_{t=1}^T \pi_{\min,t}^{-1}}{\epsilon'^2T^2} (2M_3 + 2\epsilon^2 M_2')\rightarrow 0.
    \label{eq::single-theta-u-convergence}
    \end{align}
    Here $M_3:= \sup_{\|\btheta\|_2\leq R_{\btheta}}\EE\|\bg(\bX_t, Y_t(a);\btheta)\|_2^2$. Above, the first inequality is due to Chebyshev's inequality. The first equality is due to the following fact: for $t_1<t_2$,
    \begin{align*}
       & \EE [M_{t_1, j}M_{t_2, j}]\\
       = & \EE\big[ \EE[(U_{t_1, j} - \EE[U_{t_1, j}|\cH_{t_1-1}])(U_{t_2, j} - \EE[U_{t_2, j}|\cH_{t_2-1}])\big|\cH_{t_2-1}]\big]\\
       = & \EE\big[ (U_{t_1, j} - \EE[U_{t_1, j}|\cH_{t_1-1}])\cdot \EE[U_{t_2, j} - \EE[U_{t_2, j}|\cH_{t_2-1}]\big|\cH_{t_2-1}]\big]\\
       = & \EE\big[ (U_{t_1, j} - \EE[U_{t_1, j}|\cH_{t_1-1}])\cdot 0\big] = 0.
    \end{align*}
    The second to last inequality is due to the fact that $(a+b)^2\leq 2(a^2+b^2)$ for any $a, b\in\RR$. The final inequality uses Assumption \ref{aspt:boundedness-general} and Assumption \ref{aspt:smoothness-general}, which implies $M_2'<\infty$ and $M_3<\infty$.
    
    Combining (\ref{eq::Delta1_bound}) and (\ref{eq::single-theta-u-convergence}), we obtain that $\Delta_1\xrightarrow{p}0$.
    
    Next, we analyze $\Delta_2$. Note that $\forall j$, 
    \begin{align*}
    &\frac1T\sum_{t=1}^T
    \EE\bigg[\frac{1}{\pi_t(A_t)}1_{\{A_t = a\}}\big(u_j(\bX_t, Y_t(a))-l_j(\bX_t, Y_t(a))\big)\bigg|\cH_{t-1}\bigg]\\
    =& \EE_{\bX_t}
    \left[
    \EE_{A_t}\bigg[\frac{1_{\{A_t = a\}}}{\pi_t(A_t)}\bigg|\cH_{t-1}, \bX_t\bigg]
    \cdot
    \EE_{Y_t(a)}
    \big[
    u_j(\bX_t, Y_t(a)) - l_j(\bX_t, Y_t(a))|\cH_{t-1}, \bX_t
    \big]
    \bigg|\cH_{t-1}\right]\\
    = & \EE_{\bX_t}
    \left[
    \EE_{Y_t(a)}
    \big[
    u_j(\bX_t, Y_t(a)) - l_j(\bX_t, Y_t(a))|\cH_{t-1}, \bX_t
    \big]
    \Big|\cH_{t-1}\right]\\
    = & \EE[u_j(\bX_t, Y_t(a)) - l_j(\bX_t, Y_t(a))].
    \end{align*}
    Thus,
    \begin{align*}
    \Delta_2\leq & \max_j\frac1T\sum_{t=1}^T\EE[u_j(\bX_t, Y_t(a)) - l_j(\bX_t, Y_t(a))]\\
    = & 2\epsilon \EE[\phi(\bX_1, Y_1(a))]\\
    \leq & 2\epsilon \sqrt{\EE[\phi(\bX_1, Y_1(a))^2]} \leq 2\epsilon \sqrt{M_2'}.
    \end{align*}
    
    Plugging in the analysis of $\Delta_1$ and $\Delta_2$ into (\ref{eq::maxima-mtg-decomposition}), we obtain that $\forall \epsilon>0$, 
    $$
    \PP\left(\sup_{\btheta\in\bTheta}\{\bG_T^{(i)}(\btheta) - \EE\bG_T^{(i)}(\btheta)\}>2\epsilon \sqrt{M_2'}\right)\rightarrow 0.
    $$
    This implies that $\forall \epsilon>0$, 
    $$
    \PP\left(\sup_{\btheta\in\bTheta}\{\bG_T^{(i)}(\btheta) - \EE\bG_T^{(i)}(\btheta)\}>\epsilon\right)\rightarrow 0.
    $$
    Similarly, we have 
    $$
    \PP\left(\sup_{\btheta\in\bTheta}\{-\bG_T^{(i)}(\btheta) + \EE\bG_T^{(i)}(\btheta)\}>\epsilon\right)\rightarrow 0.
    $$
    Therefore, we have proved (\ref{eq::uniform-convergence-by-entry-general}). Further, (\ref{eq::uniform-convergence-general}) is proved.
    
    Now we return to prove the main results. Define 
    $$
    \hat\btheta_a^{(T)} = \argmin_{\|\btheta\|_2\leq R_{\btheta}}\|\bG_T(\btheta)\|_2,
    $$
    and 
    $$\tilde\btheta_a^{(T)} = \btheta_a^* - [\nabla\bar \bG(\btheta_a^*)]^{-1}\bG_T(\btheta_a^*).$$ 
    Here $\bar\bG(\btheta) := \EE \bG_T(\btheta) = \EE \bg(\bX_t, Y_t(a);\btheta)$ due to (\ref{eq::GT-expectation}). From Assumption \ref{aspt:smoothness-general}, $\nabla\bar \bG(\btheta_a^*)$ is invertible. Combining Lemma \ref{lem::asymptotic-normality-true-G-general}, we have 
    \begin{equation}\label{eq::exist-z-estimator-consistency}
        \tilde\btheta_a^{(T)} - \btheta_a^* = O_p(1/b_{a, T}).
    \end{equation}
    In addition, from Taylor expansion, we have 
    \begin{align}
    \bG_T(\tilde\btheta_a^{(T)})
    & = \bG_T(\btheta_a^*) + \nabla\bG_T(\btheta_a^*)(\tilde\btheta_a^{(T)} - \btheta_a^*) + o_p(\|\tilde\btheta_a^{(T)} - \btheta_a^*\|_2)\nonumber\\
    & = \bG_T(\btheta_a^*) - \nabla\bG_T(\btheta_a^*)[\nabla\bar \bG(\btheta_a^*)]^{-1}\bG_T(\btheta_a^*) + o_p(1/b_{a, T})\nonumber\\
    & = \bG_T(\btheta_a^*) - (1 + o_p(1))\bG_T(\btheta_a^*) + o_p(1/b_{a, T})\nonumber\\
    & = o_p(1/b_{a, T}).\label{eq::exist-z-estimator}
    \end{align}
    Here the second equality is from the definition of $\tilde\btheta_a^{(T)}$ and due to (\ref{eq::exist-z-estimator-consistency}). The third equality is obtained from the following two facts: From Lemma \ref{lem::convergence-true-derivative-general}, 
    $\nabla\bG_T(\btheta_a^*)\xrightarrow{p}\EE\nabla\bg(\bX_t, Y_t(a);\btheta_a^*)$; From (\ref{eq::GT-expectation}) and Assumption \ref{aspt:smoothness-general}, $\nabla \bar \bG(\btheta_a^*)  = \frac{\partial}{\partial \btheta}\EE\bG_T(\btheta)|_{\btheta =\btheta_a^*}= \frac{\partial}{\partial \btheta}\EE \bg(\bX_t, Y_t(a);\btheta)|_{\btheta =\btheta_a^*} = \EE \frac{\partial}{\partial \btheta}\bg(\bX_t, Y_t(a);\btheta)|_{\btheta =\btheta_a^*}$ nonsingular. Here the exchangeability between expectation and differentiation can be obtained by standard arguments using dominated convergence theorem, see e.g. Theorem 16.8 in \citep{billingsley1995probability}. The last equality is obtained from Lemma \ref{lem::asymptotic-normality-true-G-general}.
    
    Combining (\ref{eq::exist-z-estimator-consistency}) (which implies $\PP(\|\tilde\btheta_a^{(T)}\|_2>R_{\btheta})\rightarrow 0$) and (\ref{eq::exist-z-estimator}), we deduce that 
    \begin{align*}
        \|\bG_T(\hat\btheta_a^{(T)})\|_2 &= \|\bG_T(\hat\btheta_a^{(T)})\|_21_{\{\|\tilde\btheta_a^{(T)}\|_2\leq R_{\btheta}\}} + \|\bG_T(\hat\btheta_a^{(T)})\|_21_{\{\|\tilde\btheta_a^{(T)}\|_2> R_{\btheta}\}}\\
        & \leq \|\bG_T(\tilde\btheta_a^{(T)})\|_2 + o_p(1/b_{a, T}) = o_p(1/b_{a, T}).
    \end{align*}
    Thus we have proved the first part of Lemma \ref{lem::consistency-general}. 
    
    Next, for any sequence $\hat\btheta_a^{(T)}$ such that $\hat\btheta_a^{(T)}\leq R_{\btheta}$ and (\ref{eq::estimating-equation-general}) holds, we proceed to prove $\hat\btheta_a^{(T)}\xrightarrow{p}\btheta_a^*$. According to Assumption \ref{aspt:identifiability-general}, for any $\epsilon>0$, we have 
    $$
    \delta_\epsilon:=\inf_{\|\btheta - \btheta_a^*\|_2>\epsilon}\|\EE\bg(\bX_t, Y_t(a); \btheta)\|_2>0.
    $$
    From (\ref{eq::GT-expectation}), we deduce that 
    $$
    \inf_{\|\btheta - \btheta_a^*\|_2>\epsilon}\|\EE\bG_T( \btheta)\|_2 = \inf_{\|\btheta - \btheta_a^*\|_2>\epsilon}\|\EE\bg(\bX_t, Y_t(a);\btheta)\|_2\geq \delta_{\epsilon}.
    $$
    Combine (\ref{eq::uniform-convergence-general}), we have 
    \begin{align*}
    \inf_{\btheta\in\bTheta, \|\btheta - \btheta_a^*\|_2>\epsilon} \|\bG_T(\btheta)\|_2\geq \inf_{\|\btheta - \btheta_a^*\|_2>\epsilon}\|\EE\bG_T( \btheta)\|_2 - \sup_{\btheta\in\bTheta}\|\bG_T(\btheta) - \EE\bG_T(\btheta)\|_2\geq \delta_{\epsilon} - o_p(1).
    \end{align*}
    Here $\bTheta = \overline{\cB(\mathbf{0}, R_{\btheta})}$. This implies 
    \begin{equation}\label{eq::consistency1}
    \lim_{T\rightarrow\infty}\PP\left(\inf_{\btheta\in\bTheta, \|\btheta - \btheta_a^*\|_2>\epsilon} \|\bG_T(\btheta)\|_2\leq \frac12\delta_{\epsilon}\right)=0.
    \end{equation}
    At the same time, from the property of $\hat\btheta_a^{(T)}$, we have for any $\epsilon'>0$,
    $$
    \lim_{T\rightarrow\infty}\PP\left(\|\bG_{T}(\hat\btheta_a^{(T)})\|_2>\epsilon'/b_{a, T}\right)=0.
    $$
    Thus, 
    \begin{equation}\label{eq::consistency2}
    \lim_{T\rightarrow\infty}\PP\left(\|\bG_{T}(\hat\btheta_a^{(T)})\|_2>\frac12\delta_{\epsilon}\right)= 0.
    \end{equation}
    We combine (\ref{eq::consistency1}) and (\ref{eq::consistency2}) and get
    \begin{align*}
    \PP\big(\|\hat\btheta_a^{(T)} - \btheta_a^*\|_2>\epsilon\big)
    & =  \PP\left(\|\hat\btheta_a^{(T)} -\btheta_a^*\|_2>\epsilon, \|\bG_{T}(\hat\btheta_a^{(T)})\|_2\leq \frac12\delta_{\epsilon}\right)\\
    &\quad + \PP\left(\|\hat\btheta_a^{(T)} - \btheta_a^*\|_2>\epsilon, \|\bG_{T}(\hat\btheta_a^{(T)})\|_2>\frac12\delta_{\epsilon}\right)\\
    &\leq \PP\left(\inf_{\btheta\in\bTheta, \|\btheta - \btheta_a^*\|_2>\epsilon} \|\bG_T(\btheta)\|_2\leq \frac12\delta_{\epsilon}\right) + \PP\left(\|\bG_{T}(\hat\btheta_a^{(T)})\|_2>\frac12\delta_{\epsilon}\right)\\
    &\rightarrow 0
    \end{align*}
    as $T\rightarrow \infty$. As the above holds for any $\epsilon>0$, the consistency of $\hat\btheta_a^{(T)}$ is proved.
    }

    {
    \subsection{Proof of Lemma \ref{lem::convergence-true-derivative-general}}\label{apdx::proof-lem::convergence-true-derivative-general}
    
    Recall that $\nabla\bG_T(\btheta_a^*) = \frac1T\sum_{t=1}^T\bV_t$, where\\ $\bV_t:=\frac{1}{\pi_t(A_t)}1_{\{A_t = a\}}\nabla\bg(\bX_t, Y_t(a);\btheta_a^*)$. We have for any nonrandom vectors $\bc, \bc'\in\RR^d$, 
    \begin{align*}
    \EE[c^\top\bV_t c'|\cH_{t-1}]
    & = \EE_{\bX_t}\left[\EE_{A_t\sim \pi_t, Y_t(a)}[\bc^\top\bV_t\bc'|\cH_{t-1}, \bX_t]\bigg|\cH_{t-1}\right]\\
    & = \EE_{\bX_t}\bigg[\EE_{A_t\sim \pi_t}\bigg[\frac{1}{\pi_t(A_t)}1_{\{A_t = a\}}\bigg|\cH_{t-1}, \bX_t\bigg]\cdot\\
    &\quad \quad\quad \quad\EE_{Y_t(a)}[\bc^\top\nabla\bg(\bX_t, Y_t(a);\btheta_a^*)\bc'|\cH_{t-1}, \bX_t]\bigg|\cH_{t-1}\bigg]\\
    & = \EE_{\bX_t}\left[1\cdot \EE_{Y_t(a)}[\bc^\top\nabla\bg(\bX_t, Y_t(a);\btheta_a^*)\bc'|\cH_{t-1}, \bX_t]\bigg|\cH_{t-1}\right]\\
    & = \bc^\top\EE[\nabla\bg(\bX_t, Y_t(a);\btheta_a^*)]\bc'.
    \end{align*}
    Here the second inequality uses Assumption \ref{aspt:unconfoundedness}. From the above we deduce that $\forall \delta>0$,
    \begin{align}
    &\PP\left(|\bc^\top[\nabla\bG_T(\btheta_a^*) - \EE\nabla\bg(\bX_t, Y_t(a);\btheta_a^*)]\bc'|>\delta\right)
    =  \PP\left(\bigg|\frac1T\sum_{t=1}^T[\bc^\top\bV_t\bc' - \EE[\bc^\top\bV_t\bc'|\cH_{t-1}]]\bigg|>\delta\right)\nonumber\\
    \leq & \frac1{\delta^2T^2}\EE\left(\sum_{t=1}^T[\bc^\top\bV_t\bc' - \EE[\bc^\top\bV_t\bc'|\cH_{t-1}]]\right)^2
     = \frac1{\delta^2T^2}\sum_{t=1}^T\EE\left(\bc^\top\bV_t\bc' - \EE[\bc^\top\bV_t\bc'|\cH_{t-1}]\right)^2\nonumber\\
     \leq & \frac1{\delta^2T^2}\sum_{t=1}^T\EE\left(\bc^\top\bV_t\bc' \right)^2
     =\frac1{\delta^2T^2}\sum_{t=1}^T\EE\left[\frac{1_{\{A_t = a\}}}{\pi_t(A_t)^2}\cdot \big(\bc^\top \nabla \bg(\bX_t, Y_t(a);\btheta_a^*)\bc\big)^2\right]\nonumber\\
     = & \frac1{\delta^2T^2}\sum_{t=1}^T \EE_{\cH_{t-1}, \bX_t}
     \left[
     \EE_{A_t}\bigg[\frac{1_{\{A_t = a\}}}{\pi_t(A_t)^2}\bigg|\cH_{t-1}, \bX_t\bigg]
     \cdot 
     \EE_{Y_t(a)}
     \Big[
     \big(\bc^\top \nabla \bg(\bX_t, Y_t(a);\btheta_a^*)\bc\big)^2
     \Big|\cH_{t-1}, \bX_t\Big]
     \right]\nonumber\\
     = & 
     \frac1{\delta^2T^2}\sum_{t=1}^T \EE_{\cH_{t-1}, \bX_t}
     \left[
     \frac{1}{\pi_t(a)}
     \cdot 
     \EE_{Y_t(a)}
     \Big[
     \big(\bc^\top \nabla \bg(\bX_t, Y_t(a);\btheta_a^*)\bc\big)^2
     \Big|\cH_{t-1}, \bX_t\Big]
     \right]\nonumber\\
     \leq & \frac1{\delta^2T^2}\sum_{t=1}^T \pi_{\min,t}^{-1}\cdot \EE\big(\bc^\top \nabla \bg(\bX_1, Y_1(a);\btheta_a^*)\bc\big)^2\to 0.
     \label{eq::convergence-gradient-general-cc'}
    \end{align}
    Here the first inequality is because of Chebyshev's inequality. The second equality is due to the following fact: Let $v_{t} = \bc^\top\bV_t\bc'$. Then for $t_1<t_2$,
    \begin{align*}
       & \EE (v_{t_1} - \EE[v_{t_1}|\cH_{t_1-1}])(v_{t_2} - \EE[v_{t_2}|\cH_{t_2-1}]) \\
       = & \EE\big[ \EE[(v_{t_1} - \EE[v_{t_1}|\cH_{t_1-1}])(v_{t_2} - \EE[v_{t_2}|\cH_{t_2-1}])\big|\cH_{t_2-1}]\big]\\
       = & \EE\big[(v_{t_1} - \EE[v_{t_1}|\cH_{t_1-1}])\cdot \EE[v_{t_2} - \EE[v_{t_2}|\cH_{t_2-1}]\big|\cH_{t_2-1}]\big]\\
       = & \EE\big[ (v_{t_1} - \EE[v_{t_1}|\cH_{t_1-1}])\cdot 0\big] = 0.
    \end{align*}
    The last convergence uses Assumption \ref{aspt:smoothness-general}. 
    
    Finally, because (\ref{eq::convergence-gradient-general-cc'}) holds for any $\bc, \bc'$, we conclude our proof.
    }

    {
    \subsection{Proof of Lemma \ref{lem::2nd-derivative-boundness-general}}\label{apdx::proof-lem::2nd-derivative-boundness-general}
    
    Note that $\nabla^2\bG_T(\btheta) = \frac1T\sum_{t=1}^T\frac{1_{\{A_t = a\}}}{\pi_t(A_t)}\nabla^2\bg(\bX_t, Y_t(a); \btheta)$. Thus,
\begin{align*}
\sup_{\|\btheta - \btheta_a^*\|_2\leq \epsilon_0}\|\nabla^2\bG_T(\btheta)\|_1
& = \sup_{\|\btheta - \btheta_a^*\|_2\leq \epsilon_0}
\left\|
\frac1T\sum_{t=1}^T\frac{1_{\{A_t = a\}}}{\pi_t(A_t)}\nabla^2\bg(\bX_t, Y_t(a); \btheta)
\right\|_1\\
&\leq \sup_{\|\btheta - \btheta_a^*\|_2\leq \epsilon_0}\frac1T\sum_{t=1}^T\frac{1_{\{A_t = a\}}}{\pi_t(A_t)}
\|\nabla^2\bg(\bX_t, Y_t(a); \btheta)\|_1\\
&\leq \sup_{\|\btheta - \btheta_a^*\|_2\leq \epsilon_0}\frac1T\sum_{t=1}^T\frac{1_{\{A_t = a\}}}{\pi_t(A_t)} d^2\sup_{i\in[d]}\|\nabla^2\bg^{(i)}(\bX_t, Y_t(a);\btheta)\|_2\\
&\leq \frac{d^2}{T}\sum_{t=1}^T\frac{1_{\{A_t = a\}}}{\pi_t(A_t)} \Phi(X_t, Y_t(a)).
\end{align*}
Here we have used Assumption \ref{aspt:smoothness-general}.

Note that 
\begin{align*}
\EE\left[
\frac{1_{\{A_t = a\}}}{\pi_t(A_t)} \Phi(X_t, Y_t(a))
\right]
&=
\EE_{\cH_{t-1},\bX_t}
\left[
\EE_{A_t}
\bigg[\frac{1_{\{A_t = a\}}}{\pi_t(A_t)}\bigg|\cH_{t-1},\bX_t\bigg]
\cdot 
\EE_{Y(a)}
\big[\Phi(X_t, Y_t(a))\big|\cH_{t-1},\bX_t\big]
\right]\\
& = \EE\Phi(\bX_1, Y_1(a)),
\end{align*}
thus
$$
\EE\left[\sup_{\|\btheta - \btheta_a^*\|_2\leq \epsilon_0}\|\nabla^2\bG_T(\btheta)\|_1\right]
\leq d^2\EE\Phi(\bX_1, Y_1(a))<\infty.
$$
And the conclusion follows.
    }
    {
    \subsection{Proof of Lemma \ref{lem::asymptotic-normality-true-G-general}}\label{apdx::proof-lem::asymptotic-normality-true-G-general}
    
    We have $\bG_T(\btheta_a^*) = \frac1T\sum_{t=1}^T\bZ_t$, where we define \\
    $\bZ_t:= \frac{1}{\pi_t(A_t)}1_{\{A_t = a\}}\bg(\bX_t, Y_t(a);\btheta_a^*)$. From the Cramer-Wold theorem, in order to show the desired asymptotic normality, it suffices to show that for any $\bc\in\RR^d$, $\bc^\top\cdot b_{a, T}\bG_T(\btheta_a^*) = \bc^\top\cdot\frac{b_{a, T}}{T}\sum_{t=1}^T\bZ_t\xrightarrow{d} \cN(\mathbf{0}, \bc^\top\bar\bI_a\bc)$.
    
    From  \citep{dvoretzky1972asymptotic}, Theorem 2.2, the above asymptotic result can be obtained by ensuring
    \begin{gather}
       \EE [\bZ_t|\cH_{t-1}] = \mathbf{0}\quad  \forall t\in[T],\label{eq::cond-exp-general}\\
       \frac1{S_{a, T}^2}\sum_{t\in[T]}\mathrm{Var}(\bc^\top\bZ_t|\cH_{t-1}) \xrightarrow{p} \bc^\top \bar\bI_a \bc,\label{eq::cond-var-general}\\
       \frac1{S_{a, T}^2}\sum_{t\in[T]}\EE\left[(\bc^\top \bZ_t)^21_{\{|\bc^\top\bZ_t|>{S_{a, T}}\delta\}}\Big|\cH_{t-1}\right]\xrightarrow{p} 0\quad \forall \delta>0.\label{eq::cond-lindeberg-general}
    \end{gather}
    Below we check these facts one by one.
    
    \textbf{Check (\ref{eq::cond-exp-general})}: We have
    \begin{align}
    \EE [\bZ_t|\cH_{t-1}] 
    &= \EE_{\bX_t}\left[\EE_{A_t\sim \pi_t, Y_t(a)}[\bZ_t|\cH_{t-1}, \bX_t]|\cH_{t-1}\right]\nonumber\\
    & = \EE_{\bX_t}\bigg[\EE_{A_t\sim \pi_t}\bigg[\frac{1}{\pi_t(A_t)}1_{\{A_t = a\}}\Big|\cH_{t-1}, \bX_t\bigg]\nonumber\\
    &\quad\quad\quad\quad\cdot \EE_{Y_t(a)}[\bg(\bX_t, Y_t(a);\btheta_a^*)|\cH_{t-1}, \bX_t]\bigg|\cH_{t-1}\bigg]\nonumber\\
    & = \EE_{\bX_t}\left[1\cdot \EE_{Y_t(a)}[\bg(\bX_t, Y_t(a);\btheta_a^*)|\cH_{t-1}, \bX_t]\bigg|\cH_{t-1}\right]\nonumber\\
    & = \EE[\bg(\bX_t, Y_t(a);\btheta_a^*)] = \mathbf{0}.\nonumber
    \end{align}
    Here, the second equality is because of Assumption \ref{aspt:unconfoundedness}. 
    
    \textbf{Check (\ref{eq::cond-var-general})}: Based on (\ref{eq::cond-exp-general}), 
    \begin{align}
    \frac1{S_{a, T}^2}\sum_{t\in[T]}\mathrm{Var}(\bc^\top\bZ_t|\cH_{t-1}) 
    &= \frac1{S_{a, T}^2}\sum_{t\in[T]}\EE [\bc^\top \bZ_t\bZ_t^\top \bc|\cH_{t-1}]\nonumber\\
    & = \frac1{S_{a, T}^2}\sum_{t\in[T]}\bc^\top \EE [ \bZ_t\bZ_t^\top|\cH_{t-1}]\bc.\label{eq::cond-var-1-general}
    \end{align}
    Define $\bV_T = \frac1{S_{a, T}^2}\sum_{t\in[T]}\EE[\bZ_t\bZ_t^\top|\cH_{t-1}]$, $\bM(\bX_t) = \EE[\bg(\bX_t, Y_t(a);\btheta_a^*)\bg(\bX_t, Y_t(a);\btheta_a^*)^\top | \bX_t]$. Then  
    \begin{align}
    \bc^\top\EE [ \bZ_t\bZ_t^\top|\cH_{t-1}] \bc
    & = \bc^\top\EE_{\bX_t}\big[\EE_{A_t\sim \pi_t, Y_t(a)}[\bZ_t\bZ_t^\top|\cH_{t-1}, \bX_t]\big|\cH_{t-1}\big]\bc\nonumber\\
    & = \bc^\top\EE_{\bX_t}\bigg[\EE_{A_t\sim \pi_t}\bigg[\frac{1}{\pi_t^2(A_t)}1_{\{A_t = a\}}\Big|\cH_{t-1}, \bX_t\bigg]\cdot\nonumber\\
    & \quad\quad\EE_{Y_t(a)}[\bg(\bX_t, Y_t(a);\btheta_a^*)\bg(\bX_t, Y_t(a);\btheta_a^*)^\top|\cH_{t-1}, \bX_t]\bigg|\cH_{t-1}\bigg]\bc\nonumber\\
    & = \bc^\top\EE_{\bX_t}\left[\frac{1}{\pi_t(a)}\cdot \bM(\bX_t)\bigg|\cH_{t-1}\right]\bc\label{eq::cond-var-2-general}
    \end{align}
    We deduce that 
    \begin{equation}
        \bc^\top\bV_t \bc = \bc^\top\bA_t \bc + \bc^\top\bB_t \bc,
    \end{equation}
    where
    \begin{align*}
    \bA_T & = \frac1{S_{a, T}^2}\sum_{t\in[T]}\frac{1}{r_{a, t}}\EE\left[\rho(a, \bX_t)\bM(\bX_t)\right] = \bar\bI_a,\\
    \bB_T & = \frac1{S_{a, T}^2}\sum_{t\in[T]}\frac{1}{r_{a, t}}\EE
    \left[
    \bigg(\frac{r_{a, t}}{\pi_t(a)} - \rho(a, \bX_t)\bigg)\bM(\bX_t) \bigg| \cH_{t-1}
    \right].
    \end{align*}
    Define $D_{t} = \bc^\top \bigg(\frac{r_{a, t}}{\pi_t(a)} - \rho(a, \bX_t)\bigg)\bM(\bX_t)\bc$, from {scaled inverse-propensity convergence} as well as Assumption \ref{aspt:boundedness-general}, we deduce that
    $|D_t|\leq M_2\left|\frac{r_{a, t}}{\pi_t(a)} - \rho(a, \bX_t)\right|\cdot \|\bc\|_2^2$ and $\EE|\bD_t|\to 0$. At the same time, we have
    $$
    \EE \big| \EE[D_{t}|\cH_{t-1}]\big|\leq 
    \EE \EE\big[|D_{t}|\big|\cH_{t-1}\big]=\EE |D_{t}|,
    $$
    Thus, we deduce that $\EE[D_t|\cH_{t-1}]\xrightarrow{L_1} 0$.
    Notice that
    $$
    \bc^\top\bB_T\bc = \frac1{S_{a, T}^2}\sum_{t=1}^T\frac1{r_{a, t}}\EE[D_t|\cH_{t-1}] = \sum_{t=1}^T \frac{1/r_{a, t}}{\sum_{t'=1}^T 1/r_{a, t'}}\cdot \EE[D_t|\cH_{t-1}].
    $$
    Using Toeplitz Lemma and the fact that $\lim_{T\to \infty} S_{a, T} = \infty$, we deduce that $\bc^\top\bB_T\bc\xrightarrow{L_1} 0$. Combine the above arguments, and notice that convergence in $L_1$ imply convergence in probability, we conclude (\ref{eq::cond-var-general}).
    
    \textbf{Check (\ref{eq::cond-lindeberg-general})}: We have 
\begin{align*}
&\frac1{S_{a, T}^2}\sum_{t\in[T]}\EE\left[(\bc^\top \bZ_t)^21_{\{|\bc^\top\bZ_t|>{S_{a, T}}\delta\}}\Big|\cH_{t-1}\right]\leq \frac1{S_{a, T}^2}\sum_{t=1}^T \frac{1}{\delta^2S_{a, T}^2}\EE[(\bc^\top \bZ_t)^4|\cH_{t-1}]\\
=& \frac1{\delta^2 S_{a, T}^4}\sum_{t=1}^T\EE
\left[
\frac{1_{\{A_t=a\}}}{\pi_t(A_t)^4}\big(\bc^\top \bg(\bX_t, Y_t(a);\btheta_a^*)\big)^4\bigg|\cH_{t-1}
\right]\\
=& \frac1{\delta^2 S_{a, T}^4}\sum_{t=1}^T\EE
\left[
\EE_{A_t}\bigg[\frac{1_{\{A_t = a\}}}{\pi_t(A_t)^4}\bigg|\cH_{t-1}, \bX_t\bigg]
\cdot
\EE_{Y(a)}
\big[
\big(\bc^\top \bg(\bX_t, Y_t(a);\btheta_a^*)\big)^4
|\cH_{t-1}, \bX_t\big]
\bigg|\cH_{t-1}\right]\\
= & \frac1{\delta^2 S_{a, T}^4}\sum_{t=1}^T
\EE\left[
\frac{1}{\pi_t(a)^3}\cdot 
\EE_{Y(a)}
\big[
\big(\bc^\top \bg(\bX_t, Y_t(a);\btheta_a^*)\big)^4
|\cH_{t-1}, \bX_t\big]
\bigg|\cH_{t-1}\right]\\
\leq & \frac1{\delta^2 S_{a, T}^4}\sum_{t=1}^T \pi_{\min,t}^{-3}\EE\big[
\big(\bc^\top \bg(\bX_t, Y_t(a);\btheta_a^*)\big)^4\big]\to 0.
\end{align*}
Here the first inequality uses Chebyshev's Inequality. The last inequality is due to Assumption \ref{aspt:min-sampling-prob}. The final convergence is deduced from the conditions in Theorem \ref{thm::asymptotic-normality-general}.
    }
    
\subsection{Proof of Lemma \ref{lem::policy-with-no-contexts}}\label{apdx::proof-lem::policy-with-no-contexts}

{
By Chebyshev's inequality, $\forall \delta>0$, 
\begin{align}
    &\PP\bigg(\bigg|N_{i, t} - \sum_{i=1}^t\EE[1_{\{A_\tau = i\}}|\cH_{\tau-1}^0]\bigg|\geq \delta\bigg)\nonumber\\
    \leq & \frac1{\delta^2}\EE\bigg(\sum_{\tau=1}^t\big(1_{\{A_\tau = i\}} - \EE[1_{\{A_\tau = i\}}|\cH_{\tau-1}^0]\big)\bigg)^2\nonumber\\
    =& \frac1{\delta^2}\sum_{\tau=1}^t\EE\big(1_{\{A_\tau = i\}} - \EE[1_{\{A_\tau = i\}}|\cH_{\tau-1}^0]\big)^2\leq \frac{t}{\delta^2}.\label{eq::concentration-Nit}
\end{align}
In addition, given the minimum sampling probability $\pi_{\min}$, we have 
\begin{equation}\label{eq::policy-no-context-sum-e-1}
\sum_{\tau = 1}^t\EE[1_{\{A_\tau = i\}}|\cH_{\tau-1}^0]\geq \pi_{\min} t.
\end{equation}
Thus, by setting $\delta = \pi_{\min}t / 2$ in (\ref{eq::concentration-Nit}), we combine with (\ref{eq::policy-no-context-sum-e-1}) and deduce that 
\begin{equation}\label{eq::Nit-lower-bound}
    \PP\bigg(N_{i, t}\leq \frac{\pi_{\min}t}{2}\bigg)\leq \frac{4}{\pi_{\min}^2t}.
\end{equation}
This implies that 
$$
\PP\bigg(\frac{C_t}{N_{i, t}}\geq \frac{2C_t}{\pi_{\min}t}\bigg)\leq \frac{4}{\pi_{\min}^2t},
$$
which proves statement (i). 

At the same time, $\forall \delta>0$,
\begin{align}
&\PP\bigg(N_{i, t}\bigg|\hat\mu_{i, t} - \mu_i^*\bigg|\geq \delta\bigg)\nonumber\\
    =&\PP\bigg(\bigg|\sum_{i=1}^t 1_{\{A_\tau = i\}}Y_{\tau} - \sum_{i=1}^t\EE[1_{\{A_\tau = i\}}Y_{\tau}|\cH_{\tau-1}^0, A_\tau]\bigg|\geq \delta\bigg)\nonumber\\
    \leq & \frac1{\delta^2}\EE\bigg(\sum_{i=1}^t \big(1_{\{A_\tau = i\}}Y_{\tau} - \EE[1_{\{A_\tau = i\}}Y_{\tau}|\cH_{\tau-1}^0, A_\tau]\big)\bigg)^2\nonumber\\
    =& \frac1{\delta^2}\sum_{\tau=1}^t\EE1_{\{A_\tau = i\}}\big(Y_\tau(i) - \mu_i^*\big)^2\leq \frac{\sigma_Y^2t}{\delta^2}.\label{eq::concentration-hatmuit}
\end{align}
Here we have used the definition of $N_{i, t}$ and $\hat\mu_{i, t}$, as well as the fact that due to the unconfoundedness assumption,
$$
\EE[1_{\{A_\tau = i\}}Y_{\tau}|\cH_{\tau-1}^0, A_\tau] = 1_{{\{A_\tau = i\}}}\mu_i^*.
$$
Combining (\ref{eq::Nit-lower-bound}) and (\ref{eq::concentration-hatmuit}), we obtain that $\forall \delta>0$,
$$
\PP\bigg(|\hat\mu_{i, t} - \mu_i^*|\geq \frac{2\delta}{\pi_{\min} t}\bigg)\leq \frac{\sigma_Y^2t}{\delta^2} + \frac{4}{\pi_{\min}^2 t}.
$$
Let $\delta' = \frac{2\delta}{\pi_{\min} t}$, and we obtain that 
$$
\PP\bigg(|\hat\mu_{i, t} - \mu_i^*|\geq \delta'\bigg)\leq \frac{4\sigma_Y^2}{\delta'^2t} + \frac{4}{\pi_{\min}^2 t}\rightarrow 0
$$
as $t\rightarrow \infty$. Thus, statement (ii) is proved.
}

\section{Additional Technical Lemmas}
\begin{lemma}[Matrix Azuma \citep{tropp2012user}]
    \label{lem::matrix-azuma}

    Consider a finite adapted sequence $\{X_k\}_{k = 1}^t$ of self-adjoint $d\times d$ matrices with respect to the filtration $\{\mathcal{F}_k\}_{k = 1}^t$, and a fixed sequence $\{A_k\}_{k = 1}^t$ of self-adjoint matrices that satisfy
    \begin{align}
        \EE[A_k \mid \mathcal{F}_{k-1}] = 0, \quad \text{and} \quad X_k^2 \preceq A_k^2 \text{ a.s.}
    \end{align}
    Compute the variance parameter 
    \begin{align}
        \sigma_t^2 = \left\|\sum_{i = 1}^t A_i^2 \right\|_2.
    \end{align}
    Then, for all $\epsilon \geq 0$, 
    \begin{align}
        \PP\left( \left\|\sum_{k = 1}^t X_k \right\|_2 \geq \epsilon \right) \leq d \exp\left(-\frac{\epsilon^2}{8\sigma_t^2}\right).
    \end{align}
\end{lemma}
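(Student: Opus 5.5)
The statement to prove is the matrix Azuma inequality (Lemma~\ref{lem::matrix-azuma}). I read the hypotheses in the intended form: the $X_k$ form a martingale difference sequence, so $\EE[X_k\mid\cF_{k-1}]=0$, and each $X_k^2\preceq A_k^2$ for a fixed self-adjoint $A_k$. The plan is Tropp's matrix Laplace transform method, which has three ingredients: a matrix Chernoff bound, iterated conditioning controlled by Lieb's concavity theorem, and a symmetrization bound on the conditional matrix cumulant generating function.

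\textbf{Step 1 (Laplace transform).} Put $Y_t=\sum_{k\le t}X_k$. For $\theta>0$, monotonicity of the trace exponential and Markov's inequality give
\[
\PP(\lambda_{\max}(Y_t)\ge\epsilon)\le e^{-\theta\epsilon}\,\EE\,\mathrm{tr}\exp(\theta Y_t).
\]
I handle $\lambda_{\max}(-Y_t)$ the same way and take a union bound. Since $\|Y_t\|_2=\max\{\lambda_{\max}(Y_t),\lambda_{\max}(-Y_t)\}$, this controls the operator norm, at the cost of a factor $2$ that can be absorbed into the constants.

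\textbf{Step 2 (iterated conditioning via Lieb).} Lieb's theorem says $H\mapsto\mathrm{tr}\exp(L+\log H)$ is concave on positive definite matrices. Combined with Jensen's inequality conditional on $\cF_{t-1}$, it gives
\[
\EE\,\mathrm{tr}\exp(\theta Y_t)\le \EE\,\mathrm{tr}\exp\bigl(\theta Y_{t-1}+\log\EE[e^{\theta X_t}\mid\cF_{t-1}]\bigr).
\]
Suppose $\log\EE[e^{\theta X_k}\mid\cF_{k-1}]\preceq g(\theta)A_k^2$. Because $A_k$ is deterministic, monotonicity of $\mathrm{tr}\exp$ lets me peel off one summand at a time and obtain
\[
\EE\,\mathrm{tr}\exp(\theta Y_t)\le\mathrm{tr}\exp\Bigl(g(\theta)\sum_k A_k^2\Bigr)\le d\,e^{g(\theta)\sigma_t^2}.
\]

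\textbf{Step 3 (conditional cgf bound).} This is the main obstacle. The scalar argument $e^x\le 1+x+\dots$ does not transfer directly, because $X_k$ and $A_k$ need not commute. I would use Tropp's symmetrization. Let $\varepsilon$ be an independent Rademacher sign and $X_k'$ a conditionally independent copy. By Jensen, $\EE e^{\theta X_k}\preceq\EE e^{\theta(X_k-X_k')}$, and $X_k-X_k'$ has the same law as $\varepsilon(X_k-X_k')$. Then apply the transfer rule:
\begin{itemize}
\item for symmetric $Z$, $\EE e^{\varepsilon Z}=\cosh Z\preceq e^{Z^2/2}$;
\item $(X_k-X_k')^2\preceq 2(X_k^2+X_k'^2)\preceq 4A_k^2$.
\end{itemize}
Together these yield $\log\EE[e^{\theta X_k}\mid\cF_{k-1}]\preceq 2\theta^2A_k^2$ up to an operator-concavity step for $\log$. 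The delicate point is that the inequality $e^{Z^2/2}\preceq e^{2\theta^2A_k^2}$ is not monotone in general, so I would take the trace or logarithm before comparing. Following Tropp, I expect to settle for the constant $g(\theta)=2\theta^2$, which is what produces the $8$ in the exponent.

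\textbf{Step 4 (conclude).} Combining Steps 1–3 gives $\PP(\lambda_{\max}(Y_t)\ge\epsilon)\le d\exp(-\theta\epsilon+2\theta^2\sigma_t^2)$. Optimizing at $\theta=\epsilon/(4\sigma_t^2)$ gives the bound $d\,e^{-\epsilon^2/(8\sigma_t^2)}$.
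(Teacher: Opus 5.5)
The paper gives no proof of this lemma; it only cites Tropp. Your outline follows Tropp's Laplace-transform method, and you were right to read the hypothesis as $\EE[X_k\mid\cF_{k-1}]=0$: for fixed $A_k$ we have $\EE[A_k\mid\cF_{k-1}]=A_k$, so the printed condition would force $A_k=0$. There are, however, two genuine gaps.

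\textbf{The factor 2 in Step~1 cannot be absorbed.} The prefactor $d$ and the constant $8$ are fixed, and with $\|\cdot\|_2$ on the left the inequality as printed is false. Take $d=t=1$, $A_1=1$, and $X_1$ a Rademacher sign. Then $\EE X_1=0$, $X_1^2=A_1^2$ and $\sigma_1^2=1$. At $\epsilon=1$ the left side is $\PP(|X_1|\ge 1)=1$, while the right side is $e^{-1/8}<1$. What your argument (and Tropp's theorem) actually yields is $\PP\bigl(\lambda_{\max}(\sum_k X_k)\ge\epsilon\bigr)\le d\,e^{-\epsilon^2/(8\sigma_t^2)}$. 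Hence the operator-norm bound is $2d\,e^{-\epsilon^2/(8\sigma_t^2)}$. You should state and prove that version; it only changes a logarithm in the paper's later use in Corollary~\ref{cor::matrix-azuma}.

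\textbf{Step~3 does not deliver the bound Step~2 needs.} Step~2 consumes the semidefinite bound $\log\EE[e^{\theta X_k}\mid\cF_{k-1}]\preceq 2\theta^2A_k^2$, and Step~3 does not establish it. Its first move, $\EE e^{\theta X_k}\preceq\EE e^{\theta(X_k-X_k')}$ ``by Jensen'', would require operator convexity of the matrix exponential, which fails. Also, as you noticed yourself, $\tfrac{\theta^2}{2}Z^2\preceq 2\theta^2A_k^2$ does not pass through $\exp$ in the semidefinite order.

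The repair, which is Tropp's route, is never to bound the cgf of $X_k$ in the semidefinite order. Instead, symmetrize inside the trace, where convexity and monotonicity are available, and apply Lieb's theorem only to the Rademacher sign. Let $H$ be $\cF_{k-1}$-measurable and self-adjoint, let $X_k'$ be a conditionally independent copy of $X_k$, and set $Z=X_k-X_k'$. Then
\begin{align*}
\EE\bigl[\operatorname{tr}e^{H+\theta X_k}\mid\cF_{k-1}\bigr]
&\le\EE\bigl[\operatorname{tr}e^{H+\varepsilon\theta Z}\mid\cF_{k-1}\bigr]
\le\EE\bigl[\operatorname{tr}\exp\bigl(H+\log\cosh(\theta Z)\bigr)\mid\cF_{k-1}\bigr]\\
&\le\EE\bigl[\operatorname{tr}\exp\bigl(H+\tfrac{\theta^2}{2}Z^2\bigr)\mid\cF_{k-1}\bigr]
\le\operatorname{tr}\exp\bigl(H+2\theta^2A_k^2\bigr).
\end{align*}
The steps are justified as follows.
\begin{itemize}
\item The first step is Jensen for the convex map $Y\mapsto\operatorname{tr}\exp(H+\theta X_k-\theta Y)$, using $\EE[X_k'\mid\cF_{k-1},X_k]=0$. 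It also uses that $Z$ and $\varepsilon Z$ have the same conditional law.
\item The second step is Lieb's theorem plus Jensen in $\varepsilon$ alone, with $Z$ held fixed; note that $\EE_\varepsilon e^{\varepsilon\theta Z}=\cosh(\theta Z)$.
\item The third step is the scalar inequality $\log\cosh x\le x^2/2$ applied to the single matrix $\theta Z$.
\item The fourth step uses $Z^2\preceq 2X_k^2+2X_k'^2\preceq 4A_k^2$, which holds because $(X_k+X_k')^2\succeq 0$. It then uses monotonicity of $B\mapsto\operatorname{tr}\exp(H+B)$.
\end{itemize}
The final bound is deterministic given $H$, and $A_k$ is fixed. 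So you can peel off the summands one at a time to get $\EE\operatorname{tr}e^{\theta Y_t}\le d\,e^{2\theta^2\sigma_t^2}$. Your Step~4 then gives the constant $8$, for $\lambda_{\max}$.
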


Applying the union bound on all $t' \in [t]$, we have the following corollary.

\begin{corollary}
    \label{cor::matrix-azuma}
    Under the same conditions as in Lemma \ref{lem::matrix-azuma}, we have that with a probability at least $1-\delta/(t-1)$, for all $\tau \in t, t+1, \ldots$,
    \begin{align}
        \frac{1}{\tau}\left\|\sum_{k = 1}^{\tau} X_k \right\|_2 \leq \sqrt{\frac{16 \sigma_{\tau}^2}{\tau^2} \log\left(\frac{\tau d}{\delta} \right)}.
    \end{align}
\end{corollary}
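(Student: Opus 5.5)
The statement is Corollary~\ref{cor::matrix-azuma}. I would derive it from Lemma~\ref{lem::matrix-azuma}, applied at each fixed horizon $\tau$, followed by a union bound over all $\tau\ge t$.

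\textbf{Step 1: fix a horizon and invert the tail.} Fix $\tau$ and apply Lemma~\ref{lem::matrix-azuma} to the finite adapted sequence $\{X_k\}_{k=1}^{\tau}$, with variance parameter $\sigma_\tau^2=\|\sum_{i\le\tau}A_i^2\|_2$. This gives
\[
\PP\Bigl(\bigl\|\textstyle\sum_{k=1}^{\tau}X_k\bigr\|_2\ge\epsilon_\tau\Bigr)\le d\exp\bigl(-\epsilon_\tau^2/(8\sigma_\tau^2)\bigr).
\]
Choose $\epsilon_\tau$ so that the right-hand side equals $\delta/\tau^2$:
\[
\epsilon_\tau^2=8\sigma_\tau^2\log(d\tau^2/\delta)\le 16\sigma_\tau^2\log(\tau d/\delta),
\]
where the inequality holds as long as $\delta\le d$ (which we may assume). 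Dividing by $\tau$ gives exactly the displayed bound,
\[
\tfrac1\tau\bigl\|\textstyle\sum_{k\le\tau}X_k\bigr\|_2\le\sqrt{16\sigma_\tau^2\tau^{-2}\log(\tau d/\delta)},
\]
with failure probability at most $\delta/\tau^2$.

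\textbf{Step 2: union bound over $\tau\ge t$.} The total failure probability is at most
\[
\sum_{\tau\ge t}\delta/\tau^2\le\delta\int_{t-1}^\infty x^{-2}\,dx=\delta/(t-1).
\]
So with probability at least $1-\delta/(t-1)$, the bound holds simultaneously for every $\tau\ge t$.

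\textbf{Main obstacle.} The only real subtlety is Step 1. The inverted Azuma bound only delivers a $\log(d\tau^2/\delta)$ factor, and I need to verify it is dominated by $2\log(\tau d/\delta)$. Since $\log(d\tau^2/\delta)\le\log(d^2\tau^2/\delta^2)=2\log(\tau d/\delta)$ whenever $d/\delta\ge1$, this holds.

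A second point to check is applicability. Lemma~\ref{lem::matrix-azuma} is stated for a fixed finite horizon, but the sequence is adapted to one filtration for all $\tau$, so applying it separately for each $\tau$ is legitimate. Also, as written, the lemma's hypothesis $\EE[A_k\mid\cF_{k-1}]=0$ should be read as $\EE[X_k\mid\cF_{k-1}]=0$, the martingale-difference condition. That is how it is used in Lemma~\ref{lem::good-event-master}, with $A_k=M^2 I$ and $\sigma_\tau^2\le M^4\tau$. There one takes $\delta=1$ and a union over the $K$ actions, which yields the $\log(K\tau d)$ rate and the $1/(t_0-1)$ probability.
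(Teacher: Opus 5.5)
Your proposal is correct and follows the paper's argument. You apply Lemma~\ref{lem::matrix-azuma} at each fixed horizon $\tau$ with failure probability $\delta/\tau^2$, then take a union bound using $\sum_{\tau\ge t}\tau^{-2}\le 1/(t-1)$. Your explicit inversion of the tail is a worthwhile addition, because it exposes the requirement $\delta\le d$ that the paper's one-line proof leaves implicit. That requirement holds in Lemma~\ref{lem::good-event-master}, but your aside there is slightly off: $\delta=1$ per action with a union over actions gives $\log(\tau d)$ and failure $K/(t_0-1)$. To get the $\log(K\tau d)$ factor with total failure $1/(t_0-1)$, take $\delta=1/K$ per action, or apply the corollary once in dimension $Kd$ with $\delta=1$.
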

\begin{proof}
    The proof is to apply the union bound on all $\tau \in t, t+1, \ldots$ with each event having a probability of at least $1-\delta/\tau^2$. The total failure probability is at most $\sum_{\tau = t}^{\infty}\frac{1}{\tau^2} \leq \frac{\delta}{t-1}$.
\end{proof}

\begin{lemma}[Law of large numbers for martingale difference sequence \citep{chow1967strong}]
    \label{lem::law-of-large-numbers-for-martingale-difference-sequence}
    Let $Y_n = \sum_{t=1}^n \bX_t$ be a martingale difference sequence, such that 
    \begin{align}
        \sum_{t=1}^{\infty} \EE\left[|\bX_t|^{2\alpha}\right] / k^{1+\alpha} < \infty.
    \end{align}
    Then,
    \begin{align}
        \frac{1}{n} \sum_{t=1}^n Y_t \xrightarrow{a.s.} 0.
    \end{align}
\end{lemma}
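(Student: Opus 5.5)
We read the garbled statement as the standard form of Chow's strong law. Let $\{\bX_t\}$ be a martingale difference sequence with respect to a filtration $\{\cF_t\}$, let $S_n=\sum_{t=1}^n \bX_t$, and assume $\sum_{t\ge1}\EE|\bX_t|^{2\alpha}/t^{1+\alpha}<\infty$ for some $\alpha\ge 1$. The claim is then $S_n/n\to0$ almost surely. The restriction $\alpha\ge1$ is the only regime used in the paper, for instance in the proof of Theorem~\ref{theorem:convergence-of-Ridge}-type arguments with bounded second moments where $\alpha=1$. The plan is a dyadic blocking argument combining a maximal moment inequality with Borel--Cantelli. For vector-valued $\bX_t$ we argue coordinatewise, so we take the differences to be scalar.

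\textbf{Step 1 (maximal moment bound).} Set $p=2\alpha\ge2$. By Doob's $L^p$ maximal inequality followed by Burkholder's square-function inequality,
\[
\EE\max_{k\le n}|S_k|^{p}\le C_p\,\EE\Big(\sum_{t\le n}\bX_t^2\Big)^{\alpha}.
\]
Since $\alpha\ge1$, Hölder's (equivalently Jensen's) inequality gives $\big(\sum_{t\le n}\bX_t^2\big)^{\alpha}\le n^{\alpha-1}\sum_{t\le n}|\bX_t|^{2\alpha}$. Combining the two,
\[
\EE\max_{k\le n}|S_k|^{p}\le C_p\,n^{\alpha-1}\sum_{t\le n}\EE|\bX_t|^{2\alpha}.
\]

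\textbf{Step 2 (blocks and Borel--Cantelli).} Fix $\varepsilon>0$ and consider the event $E_m=\{\max_{k\le 2^{m+1}}|S_k|>\varepsilon 2^{m}\}$. Markov's inequality and Step 1 give
\[
\PP(E_m)\le C_p\,\varepsilon^{-2\alpha}\,2^{(m+1)(\alpha-1)}\,2^{-2\alpha m}\sum_{t\le 2^{m+1}}\EE|\bX_t|^{2\alpha}\lesssim \varepsilon^{-2\alpha}\,2^{-m(1+\alpha)}\sum_{t\le 2^{m+1}}\EE|\bX_t|^{2\alpha}.
\]
Sum over $m$ and interchange the order of summation. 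A fixed $t$ appears only in the terms with $2^{m+1}\ge t$, and the resulting geometric tail is $\sum_{m:\,2^{m+1}\ge t}2^{-m(1+\alpha)}\lesssim t^{-(1+\alpha)}$. Therefore
\[
\sum_m\PP(E_m)\lesssim \varepsilon^{-2\alpha}\sum_t \EE|\bX_t|^{2\alpha}/t^{1+\alpha}<\infty.
\]
By Borel--Cantelli, only finitely many $E_m$ occur almost surely. For $2^m< n\le 2^{m+1}$ we have $|S_n|/n\le \max_{k\le2^{m+1}}|S_k|/2^m$, so $\limsup_n |S_n|/n\le\varepsilon$ almost surely. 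Letting $\varepsilon\downarrow0$ along a countable sequence completes the proof.

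\textbf{Main obstacle.} The delicate step is Step 1. Burkholder's inequality must be applied with the correct exponent, and the Hölder step is exactly where $\alpha\ge1$ is needed to produce the factor $n^{\alpha-1}$, which combines with $2^{-2\alpha m}$ to give the matching weight $t^{-(1+\alpha)}$. For $\alpha<1$ the von Bahr--Esseen inequality would only give weights of order $t^{-2\alpha}$, so we would restrict to $\alpha\ge1$. In the paper's application the differences have bounded second moments, so one can take $\alpha=1$. In that case there is a simpler route: $\sum_t \bX_t/t$ is an $L^2$-bounded martingale, hence converges almost surely, and Kronecker's lemma gives $S_n/n\to0$.
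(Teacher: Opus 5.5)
The paper gives no proof of this lemma. It only quotes it from \citep{chow1967strong}, and the printed version is garbled: $Y_n$ is called a martingale difference sequence although it is defined as a partial sum, the denominator index is $k$ rather than $t$, the conclusion averages the partial sums, and no range for $\alpha$ is stated. Your reading is the intended result: $\{X_t\}$ a martingale difference sequence, $S_n=\sum_{t\le n}X_t$, $\sum_t \EE|X_t|^{2\alpha}/t^{1+\alpha}<\infty$ for some $\alpha\ge 1$, and $S_n/n\to 0$ almost surely. Your argument proves it correctly, and the exponent bookkeeping works out:
\begin{itemize}
\item Doob, Burkholder and the power-mean inequality give $\EE\max_{k\le n}|S_k|^{2\alpha}\le C_\alpha n^{\alpha-1}\sum_{t\le n}\EE|X_t|^{2\alpha}$.
\item Markov at level $\varepsilon 2^m$ produces the factor $2^{-m(1+\alpha)}$.
\item The geometric tail over $\{m: 2^{m+1}\ge t\}$ is $O(t^{-(1+\alpha)})$.
\item The dyadic sandwich $|S_n|/n\le \max_{k\le 2^{m+1}}|S_k|/2^m$ plus Borel--Cantelli finishes.
\end{itemize}
Finiteness of each $\EE|X_t|^{2\alpha}$, which Burkholder needs, is implicit in the summability hypothesis.

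The other standard route feeds the same moment bound into a H\'ajek--R\'enyi-type maximal inequality for the nonnegative submartingale $|S_n|^{2\alpha}$ with weights $n^{-2\alpha}$, then sums by parts. That gives a direct tail bound on $\sup_{k\ge m}|S_k|/k$. Your dyadic blocking avoids the weighted maximal inequality and uses only Doob, Burkholder, Markov and Borel--Cantelli, at no cost beyond constants.

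Two corrections to your commentary. First, $\alpha\ge 1$ is not just a limitation of your technique: the conclusion is false for every $\alpha\in(0,1)$. Take independent $X_t$ equal to $\pm t$ with probability $1/(2t)$ each and $0$ otherwise. Then $\EE|X_t|^{2\alpha}/t^{1+\alpha}=t^{\alpha-2}$ is summable. But $|X_t|=t$ infinitely often by the second Borel--Cantelli lemma, so $X_n/n=S_n/n-\frac{n-1}{n}\cdot\frac{S_{n-1}}{n-1}$ does not tend to $0$, and hence neither does $S_n/n$. So Chow's hypothesis $r=2\alpha\ge 2$ is genuinely needed.

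Second, the paper invokes this lemma in the proof of Theorem~\ref{theorem:necessary-condition-for-Ridge-convergence}, not Proposition~\ref{theorem:convergence-of-Ridge}, whose proof uses Robbins--Siegmund instead. In that proof the summands are bounded or have bounded second moments. So your $\alpha=1$ route suffices there: $\sum_t X_t/t$ is an $L^2$-bounded martingale, hence converges almost surely, and Kronecker's lemma gives $S_n/n\to0$.
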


\begin{theorem}[Stochastic approximation convergence (Theorem 2 \citep{borkar2008stochastic})]
\label{thm:sac_converge}
    Let $\{x_n\}_{n \geq 0}$ be the stochastic approximation process in $\mathbb{R}^d$ given by 
    \begin{align}
    \bx_{n+1}=\bx_n+a(n)\left[\bh\left(x_n\right)+\bM_{n+1}\right], n \geq 0
    \end{align}
    with prescribed $\bx_0$ with the following assumptions holding:
    \begin{itemize}
        \item The map $\bh: \mathbb{R}^d \mapsto \mathbb{R}^d$ is Lipschitz: $\|\bh(\bx) - \bh(y)\| \leq L \|\bx-\by\|$ for some $0 < L < \infty$.
        \item Stepsizes $\{a(n)\}$ are positive scalars satisfying
        $$
            \sum_n a(n) = \infty, \sum_n a(n)^2 < \infty
        $$
        \item $\{\bM_n\}$ is a martingale difference sequence with respect to the increasing family of $\sigma$-fields 
        $$
            \mathcal{F}_n \stackrel{\text { def }}{=} \sigma\left(\bx_m, \bM_m, m \leq n\right)=\sigma\left(\bx_0, \bM_1, \ldots, \bM_n\right), n \geq 0.
        $$
        That is 
        $$
            \EE[\bM_{n+1} \mid \cF_n] = 0 \text{ a.s. }, n \geq 0.
        $$
        \item Furthermore, $\{\bM_n\}$ are square-integrable with
        $$
            \EE[\|\bM_{n+1}\|^2 \mid \cF_{n}] \leq K (1+ \|\bx_n\|^2) \text{ a.s.,} n \geq 0.
        $$
        for some constant $K$.
        \item {\emph{(Stability.)} The iterates remain bounded almost surely, i.e.\ $\sup_{n\geq 0}\|\bx_n\|<\infty$ a.s.}
    \end{itemize}
    {Then the sequence $\{\bx_n\}$ converges almost surely to a (possibly sample-path-dependent) compact connected internally chain transitive invariant set of the limiting o.d.e.\ $\dot{\bx}(t)=\bh(\bx(t))$, $t\geq 0$. In particular, if this o.d.e.\ has a \emph{globally asymptotically stable equilibrium} $\bx^*$ (equivalently, $\{\bx^*\}$ is its only internally chain transitive invariant set), then $\bx_n\to\bx^*$ almost surely.}
\end{theorem}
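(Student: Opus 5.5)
The plan is the classical o.d.e.\ method. First, I will place the iterates on a continuous time axis: set $t(0)=0$, $t(n)=\sum_{m<n}a(m)$, and let $\bar\bx(\cdot)$ be the piecewise-linear interpolation with $\bar\bx(t(n))=\bx_n$. Because $\sum_n a(n)=\infty$, this covers all of $[0,\infty)$. Next, I will handle the noise. Let $\bzeta_n=\sum_{m<n}a(m)\bM_{m+1}$. This is an $\cF_n$-martingale, and its conditional quadratic variation is at most $\sum_m a(m)^2K(1+\|\bx_m\|^2)$. That sum is finite almost surely, using the stability hypothesis $\sup_n\|\bx_n\|<\infty$ together with $\sum_n a(n)^2<\infty$. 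A martingale with a.s.\ finite quadratic variation converges almost surely (localize by stopping when $\|\bx_n\|$ exceeds a level $N$, then let $N\to\infty$). Hence $\bzeta_n$ converges a.s., and in particular $\sup_{k\ge n}\|\bzeta_k-\bzeta_n\|\to0$.

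Second, I will show that $\bar\bx$ is an asymptotic pseudotrajectory of the flow. For each $s$, let $\bx^s(\cdot)$ solve $\dot\bx=\bh(\bx)$ with $\bx^s(s)=\bar\bx(s)$. I will show that for every fixed $T>0$, $\lim_{s\to\infty}\sup_{u\in[s,s+T]}\|\bar\bx(u)-\bx^s(u)\|=0$ a.s. To do this, write $\bar\bx(t(n+k))-\bar\bx(t(n))$ as the Riemann sum $\sum a(m)\bh(\bx_m)$ plus the martingale increment $\bzeta_{n+k}-\bzeta_n$, and compare it with the integral form of the o.d.e. Three error terms arise. The discretization error is $O(\sum_{m\ge n}a(m)^2)$, using the Lipschitz bound and boundedness of the iterates. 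The interpolation error between grid points has the same order. The noise term tends to zero by the first step. A discrete Gronwall inequality with constant $e^{LT}$ then gives the uniform estimate on $[s,s+T]$.

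Third, I will identify the limit set. Since the iterates are bounded and $\bar\bx$ is an asymptotic pseudotrajectory, the Benaïm–Hirsch limit-set theorem applies. It says that the $\omega$-limit set $\bigcap_{t}\overline{\bar\bx([t,\infty))}$ is nonempty, compact, connected, invariant and internally chain transitive. Compactness and connectedness follow from boundedness and continuity of $\bar\bx$. Invariance and chain transitivity come from shadowing: the pseudotrajectory property allows any two limit points to be joined by $(\epsilon,T)$-chains. I will cite this step rather than reprove it. For the final clause, suppose $\bx^*$ is globally asymptotically stable. Then any compact invariant internally chain transitive set must equal $\{\bx^*\}$. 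To see this, take a Lyapunov function from the converse Lyapunov theorem and note that chain recurrence inside a compact invariant set forces it onto the zero level set of that function. Hence $\bx_n\to\bx^*$ almost surely.

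I expect the main obstacle to be the second step, namely controlling the accumulated error uniformly on moving windows almost surely. The difficulty is coordinating three ingredients: the growth bound on the noise, which only gives a quadratic-variation estimate that is finite once stability is assumed; the vanishing of $\sum_{m\ge n}a(m)^2$; and the Gronwall constant $e^{LT}$, which must stay fixed while $s\to\infty$. I will handle this by fixing $T$, working pathwise on the probability-one event where both $\sup_n\|\bx_n\|<\infty$ and $\bzeta_n$ converges, and letting $n\to\infty$ last.
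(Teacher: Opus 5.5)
Your outline is correct and takes the same approach as the source. The paper gives no argument of its own beyond citing Theorem~2 of Borkar (2008), and that proof is the o.d.e.\ method you describe: a.s.\ convergence of the noise martingale $\sum_m a(m)\bM_{m+1}$ (localized using stability and $\sum_n a(n)^2<\infty$), tracking of the interpolated trajectory on windows $[s,s+T]$ via discrete Gronwall, and Bena\"{\i}m's limit-set theorem for internal chain transitivity; your converse-Lyapunov argument that $\{\bx^*\}$ is then the only compact internally chain transitive set correctly supplies the one direction of the ``equivalently'' that is actually needed.
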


\begin{theorem}[Banach fixed-point theorem]\label{thm:banach}
Let $\cK$ be a nonempty closed subset of $\RR^d$ and $T:\cK\to\cK$ a contraction, i.e.\ there is $\kappa\in[0,1)$ with $\|T(\bx)-T(\by)\|_2\le\kappa\|\bx-\by\|_2$ for all $\bx,\by\in\cK$. Then $T$ has a unique fixed point $\bx^*\in\cK$, and $T^n(\bx_0)\to\bx^*$ for every $\bx_0\in\cK$.
\end{theorem}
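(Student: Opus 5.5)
The plan is the standard Picard-iteration argument, using only that $\cK$ is a closed subset of the complete space $\RR^d$. Fix an arbitrary $\bx_0\in\cK$ (nonempty by assumption) and define $\bx_{n+1}:=T(\bx_n)$. Since $T$ maps $\cK$ into itself, every $\bx_n$ lies in $\cK$. The first step is the one-step bound: by induction from the contraction property,
\[
\|\bx_{n+1}-\bx_n\|_2\le\kappa^n\|\bx_1-\bx_0\|_2 .
\]

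Next I would show that $\{\bx_n\}$ is Cauchy. For $m>n$, the triangle inequality and a geometric sum give
\[
\|\bx_m-\bx_n\|_2\le\sum_{k=n}^{m-1}\kappa^k\|\bx_1-\bx_0\|_2\le\frac{\kappa^n}{1-\kappa}\|\bx_1-\bx_0\|_2 ,
\]
and the right-hand side tends to $0$ as $n\to\infty$ because $\kappa<1$. Since $\RR^d$ is complete, $\bx_n\to\bx^*$ for some $\bx^*\in\RR^d$. Because $\cK$ is closed and every $\bx_n\in\cK$, the limit satisfies $\bx^*\in\cK$. This is the only place closedness is used, and it is essential: without it the limit could leave the domain of $T$.

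To see that $\bx^*$ is a fixed point, note that a contraction is Lipschitz and hence continuous on $\cK$. Passing to the limit in $\bx_{n+1}=T(\bx_n)$ therefore gives $\bx^*=T(\bx^*)$. Equivalently, $\|T(\bx^*)-\bx^*\|_2\le\kappa\|\bx^*-\bx_n\|_2+\|\bx_{n+1}-\bx^*\|_2\to0$. This also shows $T^n(\bx_0)=\bx_n\to\bx^*$, and since $\bx_0\in\cK$ was arbitrary, the convergence holds from every starting point.

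For uniqueness, suppose $\by^*\in\cK$ is another fixed point. Then $\|\bx^*-\by^*\|_2=\|T(\bx^*)-T(\by^*)\|_2\le\kappa\|\bx^*-\by^*\|_2$, which forces $\|\bx^*-\by^*\|_2=0$ because $\kappa<1$. So the limit of the iterates is the same fixed point regardless of $\bx_0$. I do not expect a genuine obstacle; the only step requiring care is the Cauchy estimate, specifically keeping the geometric tail bound uniform in $m$.
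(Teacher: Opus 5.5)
Your proposal is correct: it is the standard Picard-iteration proof. The geometric Cauchy bound, closedness of $\cK$ to keep the limit in the domain, continuity of the contraction to identify the limit as a fixed point, and the one-line uniqueness argument are all sound. The paper gives no proof of its own here; it cites this as the standard Banach fixed-point theorem and uses it in Lemma~\ref{lem::contraction-revised}, so your argument supplies exactly the textbook reasoning being relied on.
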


\begin{theorem}[Robbins--Siegmund \citep{robbins1971convergence}]\label{thm:robbins-siegmund}
Let $\{\cF_n\}_{n\ge0}$ be a filtration and let $V_n,\beta_n,\xi_n,\zeta_n\ge0$ be nonnegative $\cF_n$-measurable random variables with
\[
    \EE[V_{n+1}\mid\cF_n]\le(1+\beta_n)\,V_n+\xi_n-\zeta_n\quad\text{a.s.}, \qquad n\ge0.
\]
Then on the event $\bigl\{\sum_n\beta_n<\infty\bigr\}\cap\bigl\{\sum_n\xi_n<\infty\bigr\}$, the sequence $V_n$ converges almost surely to a finite limit and $\sum_n\zeta_n<\infty$ almost surely.
\end{theorem}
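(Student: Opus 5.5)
The plan is the classical normalization-and-localization argument, which turns the almost-supermartingale inequality into a genuine nonnegative supermartingale. Set $P_0:=1$ and $P_n:=\prod_{k=0}^{n-1}(1+\beta_k)$. Because each $\beta_k$ is $\cF_k$-measurable, $P_{n+1}$ is $\cF_n$-measurable and $P_n\ge 1$. Define the normalized quantities
\[
V_n':=\frac{V_n}{P_n},\qquad \xi_n':=\frac{\xi_n}{P_{n+1}},\qquad \zeta_n':=\frac{\zeta_n}{P_{n+1}}.
\]
Dividing the hypothesis by the $\cF_n$-measurable factor $P_{n+1}=(1+\beta_n)P_n$ gives
\[
\EE[V_{n+1}'\mid\cF_n]\le V_n'+\xi_n'-\zeta_n'.
\]
Consequently the process
\[
U_n:=V_n'+\sum_{k<n}\zeta_k'-\sum_{k<n}\xi_k'
\]
is a supermartingale with respect to $\{\cF_n\}$.

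The main obstacle is that $U_n$ need not be bounded below, since $\sum_k\xi_k'$ is finite only on the event of interest, not uniformly. I will handle this by localization. For $a\in\NN$, let
\[
\tau_a:=\inf\Bigl\{n:\textstyle\sum_{k\le n}\xi_k'>a\Bigr\}.
\]
This is a stopping time, and $\{\tau_a>n\}$ is $\cF_n$-measurable because $\xi_k'$ is $\cF_k$-measurable. By the choice of $\tau_a$, $\sum_{k<n\wedge\tau_a}\xi_k'\le a$. Since $V'$ and $\zeta'$ are nonnegative, it follows that $U_{n\wedge\tau_a}+a\ge 0$. The stopped process $U_{n\wedge\tau_a}+a$ is therefore a nonnegative supermartingale, and by the supermartingale convergence theorem it converges a.s. to a finite limit.

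Next I recover convergence on the target event. On $\{\tau_a=\infty\}$ the partial sums $\sum_{k<n}\xi_k'$ converge, because they are nondecreasing and bounded by $a$. Hence $V_n'+\sum_{k<n}\zeta_k'$ converges to a finite limit. Both pieces are nonnegative and the second is nondecreasing, so $\sum_k\zeta_k'<\infty$, and then $V_n'$ itself converges. Since $\xi_k'\le\xi_k$, we have $\{\sum_n\xi_n<\infty\}\subseteq\bigcup_a\{\tau_a=\infty\}$, so these conclusions hold a.s. on $\{\sum_n\xi_n<\infty\}$.

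Finally, on $\{\sum_n\beta_n<\infty\}$, $P_n$ increases to a finite limit $P_\infty\in[1,\infty)$, because $\log P_n\le\sum_k\beta_k$. Then $V_n=P_nV_n'$ converges to a finite limit, and
\[
\sum_n\zeta_n=\sum_n P_{n+1}\zeta_n'\le P_\infty\sum_n\zeta_n'<\infty.
\]
Intersecting the two events gives the statement. The only delicate points are the predictability used in the normalization and the lower bound obtained through the stopping time; the rest is routine.
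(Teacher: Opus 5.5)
Your proof is correct and is essentially the original Robbins--Siegmund argument: the paper gives no proof of its own and cites \citep{robbins1971convergence}, which uses the same normalization by $\prod_{k<n}(1+\beta_k)$, the same compensated supermartingale, and the same stopping time on the partial sums of $\xi_k'$. One point needs tightening: no integrability of $V_0$ is assumed, so $U_{n\wedge\tau_a}+a$ is a supermartingale only in the generalized nonnegative sense; either invoke the convergence theorem in that form, or further restrict to the $\cF_0$-measurable events $\{V_0\le c\}$, $c\in\NN$, on which everything is integrable.
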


\begin{theorem}[Borkar--Meyn stability criterion (Theorem~7 of \citealp{borkar2008stochastic}; \citealp{borkar2000ode})]\label{thm:borkar-meyn}
Consider the stochastic approximation $\bx_{n+1}=\bx_n+a(n)\bigl(\bh(\bx_n)+\bM_{n+1}\bigr)$ under the Lipschitz map, stepsize, and square-integrable martingale-difference conditions of Theorem~\ref{thm:sac_converge}. For $c\ge1$ define the scaled fields $\bh_c(\bx):=\bh(c\bx)/c$, and suppose $\bh_c\to\bh_\infty$ as $c\to\infty$ uniformly on compact sets, for some continuous $\bh_\infty:\RR^d\to\RR^d$. If the o.d.e.\ $\dot{\bx}(t)=\bh_\infty(\bx(t))$ has the origin as its unique globally asymptotically stable equilibrium, then $\sup_{n\ge0}\|\bx_n\|<\infty$ almost surely.
\end{theorem}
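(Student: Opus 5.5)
We would follow the classical Borkar--Meyn scaling argument. First, pass to continuous time: set $t(0)=0$, $t(n)=\sum_{k<n}a(k)$, and let $\bar\bx(\cdot)$ be the piecewise-linear interpolation of $\{\bx_n\}$ with $\bar\bx(t(n))=\bx_n$. Fix a horizon $T>0$, to be chosen below, and define $T_0=0$ and $T_{m+1}=\min\{t(n): t(n)\ge T_m+T\}$. This splits the time axis into blocks of length in $[T,T+\sup_n a(n)]$. On the $m$-th block, rescale by $r(m):=\max\{\|\bar\bx(T_m)\|,1\}$ and define $\hat\bx(t)=\bar\bx(t)/r(m)$ for $t\in[T_m,T_{m+1})$. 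The rescaled iterates satisfy
\[
\hat\bx_{n+1}=\hat\bx_n+a(n)\bigl(\bh_{r(m)}(\hat\bx_n)+\bM_{n+1}/r(m)\bigr),
\]
with $\|\hat\bx(T_m)\|\le 1$. Each $\bh_c$ is $L$-Lipschitz with $\|\bh_c(\mathbf 0)\|\le\|\bh(\mathbf 0)\|$ for $c\ge1$, and $\EE[\|\bM_{n+1}/r(m)\|^2\mid\cF_n]\le K(1+\|\hat\bx_n\|^2)$.

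\emph{Step 1 (uniform boundedness of the rescaled process).} Using the discrete Gronwall inequality on each block, we will show $\sup_t\EE\|\hat\bx(t)\|^2\le C_T$ for a constant $C_T$ depending only on $T$, $L$, $K$ and $\|\bh(\mathbf 0)\|$. Consequently the rescaled noise martingale $\zeta_n=\sum_{k<n}a(k)\bM_{k+1}/r(m(k))$ has quadratic variation bounded by $K(1+C_T)\sum_k a(k)^2<\infty$. Hence $\zeta_n$ converges a.s., and its increments over any window of length $T+1$ vanish a.s. Combined with Lipschitz continuity and $a(n)\to0$, a second Gronwall argument gives $\sup_{t\in[T_m,T_{m+1}]}\|\hat\bx(t)-\bx^m(t)\|\to0$ a.s. Here $\bx^m$ solves $\dot\bx=\bh_{r(m)}(\bx)$ on $[T_m,T_{m+1}]$ with $\bx^m(T_m)=\hat\bx(T_m)$. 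The same Gronwall bound also yields $\sup_m\sup_{t}\|\hat\bx(t)\|<\infty$ a.s.

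\emph{Step 2 (contraction at large scales).} Because the origin is globally asymptotically stable for $\dot\bx=\bh_\infty(\bx)$, and $\bh_\infty$ is continuous (indeed $L$-Lipschitz, being a limit of $L$-Lipschitz maps), the flow is uniformly attracted on the compact unit ball. We therefore choose $T$ so that every $\bh_\infty$-trajectory started in $\{\|\bx\|\le1\}$ lies in the ball of radius $1/8$ at all times in $[T,T+1]$. Uniform convergence $\bh_c\to\bh_\infty$ on compacts, together with Gronwall over a horizon of length $T+1$, gives $c_0$ such that for $c\ge c_0$ the $\bh_c$-trajectories from the unit ball are within $1/4$ of the origin on $[T,T+1]$. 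Combining with Step 1, a.s. for all large $m$ with $r(m)\ge c_0$ we get $\|\hat\bx(T_{m+1}^-)\|\le 1/2$, that is, $\|\bar\bx(T_{m+1})\|\le \tfrac12\|\bar\bx(T_m)\|$.

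\emph{Step 3 (conclusion).} If $\|\bar\bx(T_m)\|<c_0$, Step 1's block bound shows $\|\bar\bx(T_{m+1})\|\le C'c_0$ for a fixed $C'$. If instead $\|\bar\bx(T_m)\|\ge c_0$, the norm halves at the next block start. Thus $\sup_m\|\bar\bx(T_m)\|<\infty$ a.s., and the within-block bound $\sup_t\|\hat\bx(t)\|<\infty$ extends this to $\sup_n\|\bx_n\|<\infty$ a.s.

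The main obstacle is Step 1: the rescaling factor changes from block to block, so the noise is not a single clean martingale. We would handle this by noting that $r(m)$ is $\cF_{n}$-measurable at each block start, so $\bM_{k+1}/r(m(k))$ is still a martingale difference, and bounding its conditional variance via the $L_2$ bound on $\hat\bx$ from the discrete Gronwall step.
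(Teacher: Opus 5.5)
Your proposal is correct and is essentially the standard Borkar--Meyn scaling proof. The paper gives no proof of its own and only cites Theorem~7 of Borkar's monograph, whose argument runs through exactly your steps: block rescaling by $r(m)=\max\{\|\bar\bx(T_m)\|,1\}$, the $L^2$ bound via discrete Gronwall, a.s.\ convergence of the rescaled martingale, tracking of the $\bh_{r(m)}$-flow, the $(c_0,T)$ lemma from global asymptotic stability, and the halving recursion. Two minor cosmetic points: your constant $C'$ in Step~3 is the sample-path-dependent, a.s.\ finite bound from Step~1 rather than a deterministic one, and block lengths lie in $[T,T+1]$ only once $a(n)\le 1$, which holds for all large $n$.
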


\begin{theorem}[Lyapunov global asymptotic stability {\citep[Theorem~4.1]{khalil2002nonlinear}}]\label{thm:lyapunov-gas}
Let $\bx=\mathbf0$ be an equilibrium of $\dot{\bx}(t)=\bm{f}(\bx(t))$ with $\bm{f}$ locally Lipschitz. Suppose there is a continuously differentiable $V:\RR^d\to\RR$ with $V(\mathbf0)=0$, $V(\bx)>0$ for $\bx\neq\mathbf0$, $V$ radially unbounded ($V(\bx)\to\infty$ as $\|\bx\|_2\to\infty$), and $\dot V(\bx)=\nabla V(\bx)^\top\bm{f}(\bx)<0$ for all $\bx\neq\mathbf0$. Then the origin is globally asymptotically stable.
\end{theorem}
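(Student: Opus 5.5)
The plan is the classical Lyapunov argument, run in three stages: global existence and boundedness of trajectories, Lyapunov stability of the origin, and global attractivity. Throughout, $\bm{f}$ is locally Lipschitz, so every $\bx(0)\in\RR^d$ generates a unique maximal solution. Along any solution, $\frac{d}{dt}V(\bx(t))=\nabla V(\bx(t))^\top\bm{f}(\bx(t))=\dot V(\bx(t))\le 0$, with equality only at $\mathbf0$. Hence $t\mapsto V(\bx(t))$ is nonincreasing on the interval of existence.

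\textbf{Boundedness and global existence.} Fix $\bx(0)$ and set $c_0:=V(\bx(0))$. The sublevel set $\Omega_{c_0}:=\{\bx: V(\bx)\le c_0\}$ is closed by continuity of $V$. It is bounded because $V$ is radially unbounded: if it were unbounded it would contain points of arbitrarily large norm with $V\le c_0$, a contradiction. So $\Omega_{c_0}$ is compact. It is positively invariant by monotonicity of $V(\bx(t))$, so the trajectory stays in a compact set and cannot blow up. The standard extension theorem for locally Lipschitz ODEs then gives existence for all $t\ge0$.

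\textbf{Stability.} Given $\varepsilon>0$, let $\alpha:=\min_{\|\bx\|_2=\varepsilon}V(\bx)$, which is positive since $V>0$ off the origin and the sphere is compact. Because $V(\mathbf0)=0$ and $V$ is continuous, there is $\delta\in(0,\varepsilon)$ with $V(\bx)<\alpha$ whenever $\|\bx\|_2<\delta$. For such an initial condition, $V(\bx(t))<\alpha$ for all $t\ge0$. So the trajectory can never reach the sphere $\|\bx\|_2=\varepsilon$, and by connectedness of the path it stays in the open $\varepsilon$-ball. This is exactly the stability clause of Definition~\ref{def::asymptotic-stability}.

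\textbf{Global attractivity.} This is the step I expect to be the main obstacle, since there is no rate and $\dot V$ is only negative pointwise, not uniformly. Since $V(\bx(t))$ is nonincreasing and bounded below by $0$, it converges to some $c\ge0$; the goal is to show $c=0$.
\begin{itemize}
\item Suppose $c>0$. By continuity of $V$ at $\mathbf0$ there is $r>0$ with $V<c$ on $\{\|\bx\|_2<r\}$, so the trajectory stays in the compact shell $S:=\{r\le\|\bx\|_2\}\cap\Omega_{c_0}$.
\item On $S$ the continuous function $\dot V$ is strictly negative, hence $\dot V\le-\gamma$ for some $\gamma>0$.
\item Integrating gives $V(\bx(t))\le c_0-\gamma t\to-\infty$, which contradicts $V\ge0$. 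Therefore $c=0$.
\end{itemize}
Finally, $V(\bx(t))\to0$ forces $\bx(t)\to\mathbf0$. Otherwise some subsequence would stay outside a small ball inside the compact set $\Omega_{c_0}$, where $V$ is bounded below by a positive constant. The argument applies to every initial condition, so together with stability this yields global asymptotic stability of the origin.
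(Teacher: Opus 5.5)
Your proposal is correct and is essentially the standard argument in Khalil. The paper itself gives no proof of this result and relies on that citation. As you outline, radial unboundedness makes the sublevel sets of $V$ compact and positively invariant, which gives global existence; stability follows from the positive minimum of $V$ on a sphere; and $V(x(t))\to 0$ is obtained by contradiction from a uniform bound $\dot V\le-\gamma<0$ on a compact shell, with compactness of $\Omega_{c_0}$ then forcing $x(t)\to 0$.
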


\section{Additional Details on Simulation Studies}
\label{apdx::simulation-details}

\subsection{Environment Settings}
\label{apdx::simulation-details::environment-settings}

We give the details of the simulation environment settings. 

The first type of environments is the noisy contextual linear bandit environment including \texttt{NC-Hard1}, \texttt{NC-Hard2}, \texttt{NC-Gaussian}. Each environment has a ground-truth parameter $\btheta_a^*$. At each time $t$, the following variables are generated:
\begin{equation}\label{eq::model-1}
    \begin{aligned}
        \text{True context: } & \bS_t \sim \cD_S,\\
        \text{Predicted context: } & f(\bX_t) = \bS_t + \bepsilon_t, \text{ where } \bepsilon_t \sim \cD_{\epsilon}(\cdot \mid \bS_t),\\
        \text{Reward: } & Y_t = \langle\btheta_{A_t}^*, \bS_t\rangle + \eta_t, \text{ where } \eta_t \sim \cD_{\eta},
    \end{aligned}.
\end{equation}
where $A_t$ is the algorithm-chosen action. Recall that $\Sigma_S = \EE[\bS_t \bS_t^{\top}]$, $\Sigma_e$ is the covariance matrix of $\bepsilon_t$ (assumed to be independent of $\bS_t$), and $\Sigma_{\eta}$ is the covariance matrix of $\eta_t$.

Both hard environments have one-dimensional context and true parameters ($d = 1$), two actions $|\cA| = 2$, and two contexts $\cS = \{0, -1\}$. The context distribution $\cD_S$ is uniform over $\cS$. \texttt{hard-1} and \texttt{hard-2} have the true parameters $\btheta_0^* = (3, 1)$ and $\btheta_1^* = (-3, -1)$ respectively. They further share the same prediction error distribution $\cD_{\epsilon}(\cdot \mid \bS_t)$, given by (\ref{eq::model-1-error-distribution}). We set the reward noise $\cD_{\eta} = \cN(0, \sigma_{\eta}^2)$.
\begin{align}
\begin{array}{ll}
            \PP(f(\bX_t) = 1 \mid \bS_t = 0) = 2/3 & \quad \PP(f(\bX_t) = -2 \mid \bS_t = 0) = 1/3 \\
            \PP(f(\bX_t) = -2 \mid \bS_t = -1) = 2/3 & \quad \PP(f(\bX_t) = 1 \mid \bS_t = -1) = 1/3.
\end{array}
\label{eq::model-1-error-distribution}
\end{align}

For the \texttt{NC-Gaussian} environment, we randomly sample the true parameters $\btheta_a^*$ from $\cN(0, \Sigma_{\btheta})$ for each $a\in\cA$ independently. We choose $\cD_S = \cN(0, \Sigma_S)$, $\cD_{\epsilon}(\cdot \mid \bS_t) = \cN(0, \Sigma_e)$, $\cD_{\eta} = \cN(0, \sigma_{\eta}^2)$, respectively.

The second type of environments is the misspecified contextual linear bandit environment including \texttt{MC-Polynomial}, \texttt{MC-Neural}. In these two environments, we have $|\cA| = 2$, $d = 1$. The context is sampled from $\cD_X = \cN(0, \Sigma_X)$. In the \texttt{MC-Polynomial} environment, the true reward function is given by 
$$
    y(\bx, a) = \langle\btheta_{a, 1}^*, \bx\rangle + \langle\btheta_{a, 2}^*, \bx^2\rangle + \dots + \langle\btheta_{a, d}^*, \bx^d\rangle,
$$
where $d$ is the degree of the polynomial. The true parameters $\btheta_{a, i}^*$ are randomly sampled from $\cN(0, \Sigma_{\btheta})$ for each $a\in\cA$ and $i\in\{1, 2, \ldots, d\}$ independently. In the \texttt{MC-Neural} environment, the true reward function is given by a two layer neural network with one hidden layer of size $d$.
$$
    y(\bx, a) = \operatorname{ReLU}(\langle\btheta_{a}^*, \bx\rangle),
$$
where $\operatorname{ReLU}(x) = \max(0, x)$ is the ReLU activation function.

The true parameters $\btheta_{a}^*$ are randomly sampled from $\cN(0, \Sigma_{\btheta})$ for each $a\in\cA$ independently. We choose $\cD_X = \cN(0, \Sigma_X)$, $\cD_{\eta} = \cN(0, \sigma_{\eta}^2)$, respectively.

\subsection{Additional Information on OPE}
\label{apdx::simulation-details::ope}

In the OPE setting, we compare the proposed inference method with the CADR (Contextual Adaptive Doubly Robust) method  \citep{bibaut2021post} under various choice of prediction model including linear model, tree-based model, and a dumpy model that always outputs 0. We run CADR on the same dataset collected by Boltzmann exploration w.r.t. Ridge regression in five environments introduced above. 

To implement the CADR method, we define the following functions:
\begin{align}
    \Psi\left(g, Q_Y\right):= \EE_{A_t \sim g(A_t \mid \bX_t)}[Q_Y(A_t, \bX_t)].
\end{align}
\begin{align}
    D^{\prime}(g, \bar{Q})(x, a, y):=\frac{g^*(a \mid x)}{g(a \mid x)}(y-\bar{Q}(a, x))+\int \bar{Q}\left(a^{\prime}, x\right) g^*\left(a^{\prime} \mid x\right) d \mu_{\mathcal{A}}\left(a^{\prime}\right).
\end{align}
\begin{align}
    D(g, \bar Q)(x, a, y) = D'(g, \bar Q)(x, a, y) - \Psi\left(g, \bar Q\right).
\end{align}

Let $g_1, \dots, g_T$ be the logging policy that collects the data, and $g^*$ be the target policy that we aim to evaluate. 

For each step $t = 1, \dots, T$, the CADR method computes the following quantities:
\begin{itemize}
    \item Train $\hat Q_{t-1}: \cX \times \cA \to \RR$ on the dataset $((\bX_s, A_s, Y_s))_{s = 1}^{t-1}$ using the outcome regression estimator.
    \item Set $D'_{t, s} = D(g_s, \hat Q_{t-1})(\bX_t, A_t, Y_t)$ for each $s = t, \dots, T$.
    \item Set 
    \begin{align}
        \hat \sigma_t^2 = \frac{1}{t-1} \sum_{s = 1}^{t-1} \frac{g_t(A_s \mid \bX_s)}{g_s(A_s \mid \bX_s)} (D'_{t, s})^2 - \left(\frac{1}{t-1} \sum_{s = 1}^{t-1} \frac{g_t(A_s \mid \bX_s)}{g_s(A_s \mid \bX_s)} D'_{t, s} \right)^2.
    \end{align}
\end{itemize}
In the end, the CADR method outputs the following estimate:
\begin{align}
    \hat \Psi_T = \frac{\Gamma_T}{T} \sum_{t=1}^T \hat \sigma_t^{-1} D'_{t, t}, \text{ where } \Gamma_T = \left(\frac{1}{T}\sum_{t=1}^T \hat \sigma_t^{-1}\right)^{-1}.
\end{align}
and confidence interval
\begin{align}
    \text{CI}_{\alpha} = [\hat \Psi_T \pm \xi_{1-\alpha/2} \Gamma_T / \sqrt{T}].
\end{align}

\subsection{Additional Results: Variance of the IPW-Z Estimator}
\label{apdx::simulation-details::additional-results}

Figure \ref{fig::variance-misspecified-linear-bandits} reports the variance of the IPW-Z estimator over 10,000 steps for Target \ref{ex::misspecified-linear-bandits}, complementing the coverage results in Figure \ref{fig::results-misspecified-linear-bandits}.
The policies that keep the action-selection probabilities bounded away from zero---\texttt{Random} and the clipped smooth allocation policies \texttt{ridge + Softmax} and \texttt{SGD + Softmax}---have stable variances at comparable levels across all environments, consistent with Section \ref{sec::boltzmann-policy-convergence}, which establishes that preventing the action-selection probabilities from becoming too small controls the variance.
\texttt{MAB-EG}, whose decaying floor $\pi_{\min,t}$ lets the probabilities vanish, attains the smallest variance in some environments (\texttt{NC-Hard2}, \texttt{MS-Neural}) but the largest finite-sample variance, by one to two orders of magnitude, in the environments designed to destabilize the fit (\texttt{NC-Hard1}, \texttt{NC-Gaussian}).
The decaying floor samples \texttt{MAB-EG}'s suboptimal arms at the vanishing rate $r_{a,t}=\pi_{\min,t}$ (Proposition~\ref{prop::convergence-mab-no-pi-min}), which lowers its asymptotic variance but, through larger small-sample weights, widens its finite-sample fluctuations (Figure \ref{fig::results-misspecified-linear-bandits}(b)); the rate $\alpha=1/2$ still meets the conditions of Theorem~\ref{thm::asymptotic-normality-general}.

\begin{figure}[tb]
    \centering
    \includegraphics[width=1\textwidth]{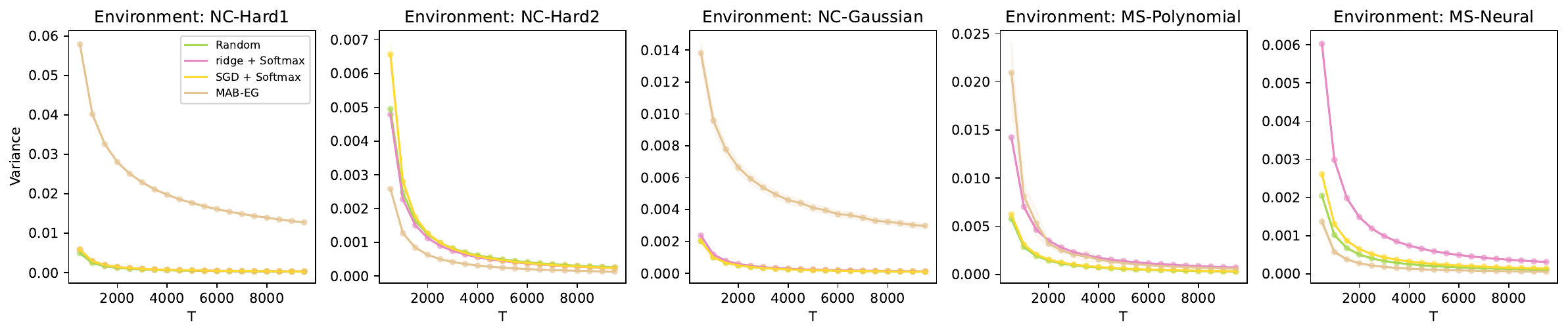}
    \caption{Variance of the IPW-Z estimator over 10,000 steps for Target \ref{ex::misspecified-linear-bandits}. Results averaged across 2,500 Monte Carlo simulations. Shaded bands denote $\pm 2$ Monte Carlo standard errors.}
    \label{fig::variance-misspecified-linear-bandits}
\end{figure}

\subsection{Details of the HeartSteps V1 Real-Data Application}\label{sec::simulations-heartsteps-details}

This subsection records the full construction of the HeartSteps V1 study of Section~\ref{sec::simulations-heartsteps}, so that the application is reproducible and the modeling choices are explicit; for completeness we restate the relevant estimating equations and variance formulas rather than only referencing them.
The environment adapts the semi-synthetic simulator of \citep{guo2024online}, calibrated to the HeartSteps V1 micro-randomized trial \citep{klasnja2015microrandomized, liao2016sample}.

\paragraph{Environment and covariates.}
HeartSteps V1 is a $42$-day mobile health study in which, at up to five decision points per day, the algorithm chooses whether to deliver a contextually tailored physical-activity suggestion ($A_t\in\{0,1\}$) and observes the log step count over the following $30$ minutes as the reward $Y_t$.
Following \citep{guo2024online}, contexts are drawn i.i.d.\ from the pooled empirical distribution of the trial participants: we collect every decision point of all $37$ users into a single bank and, at each time $t$, sample one row with replacement.
The design vector is $\bX_t=(1,\;\texttt{sum\_step},\;\texttt{jbsteps30pre},\;\texttt{dec.temperature},\;\texttt{sd\_steps60},\;\texttt{home.location},\;\texttt{dosage})\in\RR^{7}$, where the first six entries are real logged covariates and $\texttt{dosage}$ is the accumulated treatment burden.
The four continuous covariates ($\texttt{sum\_step}$, $\texttt{jbsteps30pre}$, $\texttt{dec.temperature}$, $\texttt{dosage}$) are standardized to zero mean and unit variance across the pooled bank, and the standardization shift and scale are folded into the reward parameters so that the reward is unchanged; the two indicator covariates are left untransformed.

\paragraph{Outcome and measurement-error model.}
The potential outcome is linear in the unobserved true context $\bS_t$,
\[
  Y_t(a) = \bS_t^\top\btheta_a^* + \eta_t,
\]
with mean-zero reward noise of variance $\sigma_\eta^2 = 1$.
The algorithm does not observe $\bS_t$; it observes the noisy proxy
\[
  \bX_t = \bS_t + \bepsilon_t, \qquad \EE[\bepsilon_t] = \mathbf{0}, \quad \mathrm{Cov}(\bepsilon_t) = \bSigma_e,
\]
where the measurement error perturbs only the (standardized) $\texttt{dosage}$ coordinate, indexed by $j$ with standard basis vector $\be_j$:
\[
  \bepsilon_t = \varepsilon_t\,\be_j, \qquad \varepsilon_t \text{ mean zero with variance } \sigma_e^2,
  \qquad\text{so}\qquad \bSigma_e = \sigma_e^2\,\be_j\be_j^\top .
\]
We report $\sigma_e^2\in\{0.1,0.25,0.5\}$ on the standardized scale, with $\sigma_e^2=0.1$ as the default.
This reflects practice in mobile health, where the burden is itself a noisy prediction rather than a directly observed quantity.

\paragraph{Inferential target.}
The per-action parameter $\btheta_a^*$ is the solution of the moment condition
\[
  \EE\big[\bg(\bX_t, Y_t(a); \btheta_a^*)\big] = \mathbf{0},
  \qquad
  \bg(\bx, y; \btheta) = \bx y - (\bx\bx^\top - \bSigma_e)\,\btheta ,
\]
the noisy-context score of Target~\ref{ex::bandits-noisy-contexts}, which corrects the design for measurement error through the term $\bx\bx^\top-\bSigma_e$.
Writing $\bSigma_S = \EE[\bX_t\bX_t^\top] - \bSigma_e = \EE[\bS_t\bS_t^\top]$ for the true-covariate second moment, which is positive definite, the solution is unique,
\[
  \btheta_a^* = \bSigma_S^{-1}\,\EE[\bX_t\,Y_t(a)] .
\]
The parameters $\btheta_a^*$ are fixed at the generalized-estimating-equation fit of \citep{liao2016sample} on the real outcomes.
Our target is the \emph{dosage-by-treatment interaction}, the contrast on the dosage coordinate,
\[
  \tau^* = \be_j^\top(\btheta_1^* - \btheta_0^*) = \theta_{1,j}^* - \theta_{0,j}^* ,
\]
which measures how the accumulated burden moderates the effect of sending a suggestion, capturing habituation in just-in-time interventions.

\paragraph{IPW-Z estimator.}
Given a dataset $\{(\bX_t, A_t, Y_t)\}_{t=1}^T$ collected by a behavior policy $\pi$, we form the inverse-probability-weighted Z-estimator.
For a fixed action $a$, define
\[
  \bZ_t(\btheta) = \frac{1_{\{A_t = a\}}}{\pi_t(A_t)}\,\bg(\bX_t, Y_t; \btheta),
  \qquad
  \bG_T(\btheta) = \frac1T\sum_{t=1}^T \bZ_t(\btheta),
\]
and take $\hat\btheta_a^{(T)}$ to be an approximate root, $\bG_T(\hat\btheta_a^{(T)}) \approx \mathbf{0}$.
The inverse weights $1/\pi_t(A_t)$ remove the bias from adaptive sampling, so that $\bG_T(\btheta_a^*)$ is an average of a martingale-difference sequence with mean $\mathbf{0}$, which underlies consistency.
The contrast is estimated by $\hat\tau = \hat\theta_{1,j}^{(T)} - \hat\theta_{0,j}^{(T)}$.

\paragraph{Asymptotic variance and its estimate.}
By Theorem~\ref{thm::asymptotic-normality-general}, $b_{a,T}(\hat\btheta_a^{(T)} - \btheta_a^*)\xrightarrow{d}\cN(\mathbf{0},\bSigma_a^*)$, where for the bounded-propensity policies used here $b_{a,T} = \sqrt{T}$, and
\[
  \bSigma_a^* = \bJ_a^{-1}\,\bar\bI_a\,\bJ_a^{-1},
  \qquad
  \bJ_a = \EE\big[\nabla\bg(\bX_t, Y_t(a); \btheta_a^*)\big] = -\bSigma_S,
  \qquad
  \bar\bI_a = \EE\big[\rho(a,\bX_t)\,\bg\bg^\top\big],
\]
since $\nabla\bg(\bx,y;\btheta) = -(\bx\bx^\top - \bSigma_e)$.
We do not assume the bread $\bSigma_S$ is known; we estimate it from the observed contexts by the marginal plug-in
\[
  \widehat\bSigma_S = \frac1T\sum_{t=1}^T \bX_t\bX_t^\top - \bSigma_e ,
\]
which is consistent for $\bSigma_S = \EE[\bX_t\bX_t^\top] - \bSigma_e$.
The key point is that the contexts $\bX_t$ are \emph{exogenous} to the behavior policy. The algorithm selects the action $A_t$, not the context, so $\widehat\bSigma_S$ is an average of i.i.d.\ terms and converges at the $\sqrt{T}$ rate without any inverse-probability weighting, in contrast to the outcome moment $\bG_T$ whose adaptivity-induced bias is what the weights correct.
By Slutsky's theorem, substituting $\widehat\bSigma_S$ for $\bSigma_S$ in the sandwich below leaves the limiting distribution, and hence the asymptotic coverage, unchanged.
In our study $\widehat\bSigma_S$ is computed from all $T=10{,}000$ collected contexts, so its estimation error is $O_p(T^{-1/2})$, about $1\%$ in relative terms. This error enters $\sqrt{T}\,(\hat\btheta_a^{(T)}-\btheta_a^*)$ only through a higher-order term of order $O_p(T^{-1/2})$ that vanishes asymptotically, so at this horizon the plug-in is effectively equivalent to using the true $\bSigma_S$.
The meat $\bar\bI_a$ is estimated by Monte Carlo from $n=2{,}000$ fresh draws $(\bS_m, \bX_m = \bS_m + \bepsilon_m)$ of the context distribution: with
\[
  \bh_m = (\bX_m\bX_m^\top - \bSigma_e)\,\hat\btheta_a^{(T)} - \bX_m\bS_m^\top\,\hat\btheta_a^{(T)},
\]
we set
\[
  \hat{\bar\bI}_a = \frac1n\sum_{m=1}^n \frac{1}{\pi(a\mid\bX_m)}\Big(\bh_m\bh_m^\top + \sigma_\eta^2\,\bX_m\bX_m^\top\Big),
  \qquad
  \widehat\bSigma_a = \widehat\bSigma_S^{-1}\,\hat{\bar\bI}_a\,\widehat\bSigma_S^{-1} .
\]
The first term of $\hat{\bar\bI}_a$ captures the measurement-error contribution and the second the reward noise.
This Monte Carlo form of the meat $\hat{\bar\bI}_a$ uses the true context $\bS_m$ and so is available in simulation; on real data one would instead use the data-based estimator of Proposition~\ref{thm::consistent-var-estimator-general},
\[
  \widehat\bSigma_a = \hat{\dot\bG}_{a,T}^{-1}\,\hat\bI_{a,T}\,\hat{\dot\bG}_{a,T}^{-1},
  \quad
  \hat{\dot\bG}_{a,T} = \frac1T\sum_{t=1}^T \frac{1_{\{A_t=a\}}}{\pi_t(A_t)}\nabla\bg(\bX_t, Y_t; \hat\btheta_a^{(T)}),
  \quad
  \hat\bI_{a,T} = \frac1{S_{a,T}^2}\sum_{t=1}^T \frac{1_{\{A_t=a\}}}{\pi_t(A_t)^2}\bg\bg^\top,
\]
which targets the same $\bSigma_a^*$.
The variance of $\hat\btheta_a^{(T)}$ is estimated by $\widehat\bSigma_a/T$, and because the per-action estimators are constructed from disjoint time points and are asymptotically independent (Theorem~\ref{thm::asymptotic-normality-joint-general}), the contrast variance is the sum of the two per-action variances on coordinate $j$,
\[
  \widehat{\mathrm{Var}}(\hat\tau) = \frac1T\,\be_j^\top\big(\widehat\bSigma_1 + \widehat\bSigma_0\big)\be_j .
\]
A level-$(1-\alpha)$ confidence interval for $\tau^*$ is $\hat\tau \pm z_{1-\alpha/2}\sqrt{\widehat{\mathrm{Var}}(\hat\tau)}$, with $z_{1-\alpha/2}$ the standard normal quantile.

\paragraph{Behavior policies and configuration.}
We collect data with the same four behavior policies as in the synthetic studies: \texttt{Random}; \texttt{ridge + Softmax}, softmax (Boltzmann) exploration with the ridge estimator and inverse temperature $\gamma=1$; \texttt{MAB-EG}, the $\varepsilon$-greedy policy on the empirical per-arm means; and \texttt{SGD + Softmax}, softmax exploration with the same inverse temperature on a stochastic-gradient estimator.
All three exploiting policies---\texttt{MAB-EG} and the two softmax policies (\texttt{ridge + Softmax}, \texttt{SGD + Softmax})---clip their action-selection probabilities to a fixed floor $\pi_{\min}=0.2$; only \texttt{Random}, which already samples uniformly, is unclipped.
Both softmax policies assign, before clipping,
\[
  \pi_t(a\mid\bX_t) \propto \exp\big(\gamma\,\langle\hat\bbeta_{a,t},\bX_t\rangle\big),
\]
with $\hat\bbeta_{a,t}$ the ridge estimate (for \texttt{ridge + Softmax}) or the SGD estimate (for \texttt{SGD + Softmax}); we set $\gamma=1$ to keep the allocation smooth and clip the result at $\pi_{\min}=0.2$ to bound the inverse-propensity weights $1/\pi_t(A_t)$.
The clipping is essential for \texttt{SGD + Softmax}: the stochastic-gradient score is unregularized and, unlike the ridge score, would otherwise saturate the softmax at the calibrated reward scale, driving the propensities toward $0$ and breaking Conditions (i)--(ii) of Theorem~\ref{thm::asymptotic-normality-general}.
Each configuration uses horizon $T=10{,}000$, $2{,}000$ Monte Carlo replications, and coverage checks every $1{,}000$ steps.

\paragraph{Remarks on the modeling choices.}
Two design choices are needed for the variance estimator to be well behaved at this scale, and are worth recording.
First, pooling the contexts across users (rather than replaying a single user's trajectory) avoids rank-deficient designs that arise because some indicator covariates are constant within a user, which would make $\bSigma_S$ singular.
Second, standardizing the continuous covariates removes the near-collinearity between the $\texttt{dosage}$ covariate and the intercept, which otherwise makes the dosage coordinate of $\bSigma_a^* = \bSigma_S^{-1}\bar\bI_a\bSigma_S^{-1}$ ill-conditioned; standardization reduces the condition number of $\bSigma_S$ by roughly two orders of magnitude.
Neither choice affects the inferential target $\tau^*$, which remains the GEE-calibrated dosage-by-treatment interaction.

\section{Examples}\label{seq::examples}


In this section, we revisit the three examples from Section \ref{sec::problem-setup} and establish statistical inference guarantees by applying Theorem \ref{thm::asymptotic-normality-general} and \ref{thm::asymptotic-normality-joint-general} to their corresponding score functions.

Throughout this section, we assume that, for every action \(a\in\cA\), the behavior policy
satisfies scaled inverse-propensity convergence with rate \(\{r_{a,t}\}_{t\ge1}\) and limit
function \(\rho(a,\cdot)\), together with the remaining policy-side conditions of Theorem \ref{thm::asymptotic-normality-joint-general}, including Assumption \ref{aspt:min-sampling-prob}. We use the notation
\[
S_{a,T}^{2}:=\sum_{t=1}^{T}\frac{1}{r_{a,t}},
\qquad
b_{a,T}:=\frac{T}{S_{a,T}},
\qquad
\bD_T:=\operatorname{Diag}
\bigl(b_{1,T}\bI_d,\ldots,b_{K,T}\bI_d\bigr)
\]
from Section~\ref{sec::inference-guarantee}. 

\subsection{Misspecified Linear Bandits}\label{sec::ex1}

We first consider Example \ref{ex::misspecified-linear-bandits}, where the target parameter $\btheta_a^*$ is defined by (\ref{eq::theta-a-*}) with the score function $\bg$ in (\ref{eq::target-parameter-misspecified-linear-bandits}). A natural estimator for $\btheta_a^*$ that satisfies (\ref{eq::estimating-equation-general}) is 
\begin{equation}
\hat\btheta_a^{(T)} = \left(\frac1T\sum_{t=1}^T\frac{1_{\{A_t = a\}}}{\pi_t(A_t)}\bX_t\bX_t^\top\right)^{-1}\left(\frac1T\sum_{t=1}^T\frac{1_{\{A_t = a\}}}{\pi_t(A_t)}\bX_tY_t\right).
\end{equation}

In this setting, Assumptions \ref{aspt:identifiability-general}-\ref{aspt:smoothness-general}, which form part of the conditions for Theorem \ref{thm::asymptotic-normality-joint-general}, are implied by the following regularity assumption.
\begin{assumption}\label{aspt:ex1}
There exist constants $R_\theta, M, M_Y>0$ such that $\sup_{a\in\cA}\|\btheta_a^*\|_2< R_\theta$, $\|\bX_t\|_2\leq M$ a.s., and $\sup_{a\in\cA}
\EE\!\left[|Y_t(a)|^4\mid\bX_t\right]
\le M_Y$ a.s. Moreover, $\bSigma_X:= \EE[\bX_t\bX_t^\top]$ is invertible. 
\end{assumption}

With Assumption \ref{aspt:ex1} in place, together with the remaining conditions of Theorem \ref{thm::asymptotic-normality-joint-general}, the joint asymptotic normality of $\hat\btheta^{(T)} = \big((\hat\btheta_1^{(T)})^\top, \ldots, (\hat\btheta_K^{(T)})^\top\big)^\top$ and a consistent estimator for its asymptotic variance follow directly from Theorem \ref{thm::asymptotic-normality-joint-general} and Proposition \ref{thm::consistent-var-estimator-general}, as stated below.

\begin{corollary}\label{cor::ex1}
Suppose Assumptions \ref{aspt:unconfoundedness} and \ref{aspt:ex1} hold, and suppose the common
policy-side conditions stated at the beginning of this appendix hold for every
\(a\in\cA\). Then
\[
\bD_T\bigl(\widehat{\btheta}^{(T)}-\btheta^*\bigr)
\xrightarrow{d}
\cN\!\left(
\boldsymbol 0,
\operatorname{diag}
\bigl(\bSigma_1^*,\ldots,\bSigma_K^*\bigr)
\right),
\]
where $\bSigma_a^*
=
\bSigma_X^{-1}\bar \bI_a^{\mathrm{lin}}\bSigma_X^{-1}$, and
$\bar \bI_a^{\mathrm{lin}}
:=
\EE\!\left[
\rho(a,\bX_t)
\bigl(Y_t(a)-\bX_t^\top\btheta_a^*\bigr)^{2}
\bX_t\bX_t^\top
\right]$.

A consistent estimator of \(\bSigma_a^*\) is $\widehat\bSigma_{a,T}
=
\widehat \bJ_{a,T}^{-1}
\widehat \bI_{a,T}
\widehat \bJ_{a,T}^{-1}$,
where
\begin{equation}\label{eq::estimator-asympt-var-ex1}
\widehat \bJ_{a,T}
:=
\frac{1}{T}
\sum_{t=1}^{T}
\frac{1_{\{A_t=a\}}}{\pi_t(A_t)}
\bX_t\bX_t^\top, \quad \widehat \bI_{a,T}
:=
\frac{1}{S_{a,T}^{2}}
\sum_{t=1}^{T}
\frac{1_{\{A_t=a\}}}{\pi_t(A_t)^2}
\bigl(Y_t-\bX_t^\top\widehat{\btheta}_a^{(T)}\bigr)^2
\bX_t\bX_t^\top.  
\end{equation}
\end{corollary}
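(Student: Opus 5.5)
The plan is to derive Corollary~\ref{cor::ex1} directly from Theorem~\ref{thm::asymptotic-normality-joint-general} and Proposition~\ref{thm::consistent-var-estimator-general}, specialized to the linear score $\bg(\bx,y;\btheta)=\bx(y-\bx^\top\btheta)$. The work is to check that Assumption~\ref{aspt:ex1} implies Assumptions~\ref{aspt:identifiability-general}--\ref{aspt:smoothness-general} for every $a\in\cA$, and then to identify the abstract objects $\bJ_a$, $\bar\bI_a$, $\hat{\dot\bG}_{a,T}$ and $\hat\bI_{a,T}$ with the displayed ones. I would first record the structural facts. The score is affine in $\btheta$, with $\nabla\bg(\bx,y;\btheta)=-\bx\bx^\top$ and $\nabla^2\bg\equiv 0$. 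Also $\EE[\bg(\bX,Y(a);\btheta)]=\EE[\bX Y(a)]-\bSigma_X\btheta$, so $\btheta_a^*=\bSigma_X^{-1}\EE[\bX Y(a)]$.

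Verifying the assumptions should be routine. For Assumption~\ref{aspt:identifiability-general}, I would use $\|\EE\bg(\btheta)\|_2=\|\bSigma_X(\btheta-\btheta_a^*)\|_2\ge\lambda_{\min}(\bSigma_X)\|\btheta-\btheta_a^*\|_2$, which is positive since $\bSigma_X$ is invertible. For Assumption~\ref{aspt:boundedness-general}, conditional Jensen and the bound $\EE[Y(a)^4\mid\bX]\le M_Y$ give $\EE[(Y(a)-\bX^\top\btheta)^2\mid\bX]\le 2M_Y^{1/2}+2M^2\|\btheta\|_2^2$. Multiplying by $\|\bX\|_2^2\le M^2$ yields (i), and (ii) follows the same way uniformly over $\|\btheta\|_2\le R_\theta$. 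For (iii) I would bound the fourth moment by $M^4\cdot 8(M_Y+M^4R_\theta^4)$. For Assumption~\ref{aspt:smoothness-general}, I would take $\phi\equiv M^2$ and $\Phi\equiv 0$, and note that $\bJ_a=-\bSigma_X$ is nonsingular. The policy-side conditions, namely Assumption~\ref{aspt:min-sampling-prob}, scaled inverse-propensity convergence, and Conditions (i)--(ii), are assumed at the start of the appendix.

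Theorem~\ref{thm::asymptotic-normality-joint-general} then applies to any estimators satisfying (\ref{eq::estimating-equation-general}) with $\|\hat\btheta_a^{(T)}\|_2\le R_\theta$. I must check that the explicit weighted least-squares estimator qualifies. Whenever $\hat\bJ_{a,T}$ is invertible, it solves $\bG_T(\hat\btheta_a^{(T)})=\mathbf 0$ exactly. Lemma~\ref{lem::convergence-true-derivative-general} gives $\hat\bJ_{a,T}=-\nabla\bG_T\xrightarrow{p}\bSigma_X$, so invertibility holds with probability tending to one. On that event $\hat\btheta_a^{(T)}=\bSigma_X^{-1}\EE[\bX Y(a)]+o_p(1)$, since the numerator average also converges by the same martingale argument. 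Hence $\|\hat\btheta_a^{(T)}\|_2<R_\theta$ with probability tending to one.

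The main obstacle is the literal norm constraint ``for all $T$''. I would handle it by truncation: redefine $\hat\btheta_a^{(T)}$ to be the argmin construction of Lemma~\ref{lem::consistency-general} off the high-probability event. This changes the estimator only on events of vanishing probability, and therefore does not affect convergence in distribution. Finally I would substitute into the formulas. The sandwich is $\bSigma_a^*=\bSigma_X^{-1}\bar\bI_a\bSigma_X^{-1}$ with $\bar\bI_a=\bar\bI_a^{\mathrm{lin}}$, since $\bg\bg^\top=(Y-\bX^\top\btheta)^2\bX\bX^\top$. For the estimator, $\hat{\dot\bG}_{a,T}=-\hat\bJ_{a,T}$, and the two sign flips cancel in Proposition~\ref{thm::consistent-var-estimator-general}, which gives consistency of $\widehat\bSigma_{a,T}$.
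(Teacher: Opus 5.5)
Your proposal is correct and follows the paper's route. The paper states that Assumption~\ref{aspt:ex1} implies Assumptions~\ref{aspt:identifiability-general}--\ref{aspt:smoothness-general} for the affine score and then invokes Theorem~\ref{thm::asymptotic-normality-joint-general} and Proposition~\ref{thm::consistent-var-estimator-general}, which is exactly your argument. Your truncation step for the explicit weighted least-squares estimator, which is well defined and satisfies $\|\hat\btheta_a^{(T)}\|_2\le R_\theta$ only with probability tending to one, fills in a detail the paper leaves implicit. The same vanishing-probability modification also transfers consistency to $\widehat\bSigma_{a,T}$.
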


\begin{remark}\label{rmk::ex1-simple-var-est}
In Corollary \ref{cor::ex1}, a simpler consistent estimator for the asymptotic variance is obtained by replacing \(\widehat \bJ_{a,T}\) by $\hat\bSigma_X := \frac1T\sum_{t=1}^T\bX_t\bX_t^\top$. both matrices consistently estimate $\bSigma_X$.
\end{remark}

In the special case where the behavior policy is an $\epsilon$-greedy policy combined with the weighted online LS estimator, Corollary \ref{cor::ex1} yields the same asymptotic normality result as \citep{chen2021statistical}; see also the discussion following Theorem \ref{thm::asymptotic-normality-general}. The variance estimator used in \citep{chen2021statistical} corresponds to the simplified plug-in form described in Remark \ref{rmk::ex1-simple-var-est}. 

\subsection{Bandits with Noisy Contexts}\label{sec::ex2}

We now revisit Example \ref{ex::bandits-noisy-contexts}. Recall that the potential outcome satisfies $Y_t(a) = \bS_t^\top \btheta_a^* + \eta_t$ with unobserved state $\bS_t$, while the observed context is the noisy proxy $\bX_t = \bS_t + \bepsilon_t$. The noise $\bepsilon_t$ satisfies $\EE[\bepsilon_t|\bS_t] = \mathbf{0}$, and $\mathrm{Var}(\bepsilon_t|\bS_t) = \bSigma_e$ for a constant matrix $\bSigma_e\in\RR^{d\times d}$. No parametric assumptions are imposed on the distribution of $\bepsilon_t$. In addition, assume $\mathbb{E}[\eta_t\mid\bS_t,\bX_t]=0$ and define $\sigma_\eta^2=\mathrm{Var}(\eta_t\mid\bS_t,\bX_t)$.

The score function reduces to (\ref{eq::target-parameter-bandits-noisy-contexts}). An estimator of $\btheta_a^*$ satisfying (\ref{eq::estimating-equation-general}) is
\begin{equation}
\hat\btheta_a^{(T)} = \left[\frac1T\sum_{t=1}^T\frac{1_{\{A_t = a\}}}{\pi_t(A_t)}(\bX_t\bX_t^\top-\bSigma_e)\right]^{-1}\left(\frac1T\sum_{t=1}^T\frac{1_{\{A_t = a\}}}{\pi_t(A_t)}\bX_tY_t\right).
\end{equation}

To infer the target parameter $\btheta_a^*$ via Theorem \ref{thm::asymptotic-normality-joint-general}, we impose the following regularity condition, which guarantees Assumptions \ref{aspt:identifiability-general}–\ref{aspt:smoothness-general} required for the theorem. 

\begin{assumption}\label{aspt:ex2}
There exist constants $R_\theta, M, M_\eta > 0$ such that $\sup_a\|\btheta_a^*\|_2< R_\theta$, \\$\max\{\|\bX_t\|_2, \|\bS_t\|_2\}\leq M$ a.s., $\EE[\eta_t^4|\bS_t, \bX_t]\leq M_\eta$ a.s. In addition, $\bSigma_S: = \EE[\bS_t\bS_t^\top]$ is invertible.
\end{assumption}

The following corollary, obtained from Theorem \ref{thm::asymptotic-normality-joint-general} and Proposition \ref{thm::consistent-var-estimator-general}, provides inference for $\{\btheta_a^*\}_{a\in\cA}$ in this setting. 

\begin{corollary}\label{cor::ex2}
Suppose Assumptions~\ref{aspt:unconfoundedness} and~\ref{aspt:ex2} hold, and suppose the
common policy-side conditions stated at the beginning of this appendix hold for every
\(a\in\cA\). Then
\[
\bD_T\bigl(\widehat{\btheta}^{(T)}-\btheta^*\bigr)
\xrightarrow{d}
\cN\!\left(
\boldsymbol 0,
\operatorname{diag}
\bigl(\bSigma_1^*,\ldots,\bSigma_K^*\bigr)
\right),
\]
where $\bSigma_a^*=
\bSigma_S^{-1}\bar \bI_a^{\mathrm{NC}}\bSigma_S^{-1}$, 
with $\bar \bI_a^{\mathrm{NC}}
:=
\EE\!\left[
\rho(a,\bX_t)
\left(
\bh_a(\bX_t,\bS_t)\bh_a(\bX_t,\bS_t)^\top
+
\sigma_{\eta}^2
\bX_t\bX_t^\top
\right)
\right]$, and $\bh_a(\bx,\bs)
:=
(\bx\bx^\top-\bSigma_e)\btheta_a^*
-
\bx\bs^\top\btheta_a^*$.

A consistent estimator of \(\bSigma_a^*\) is $\widehat\bSigma_{a,T}
=
\widehat \bJ_{a,T}^{-1}
\widehat \bI_{a,T}
\widehat \bJ_{a,T}^{-1}$,
where
\begin{equation}\label{eq::estimator-asympt-var-ex2}
\widehat \bJ_{a,T}
:=
\frac{1}{T}
\sum_{t=1}^{T}
\frac{1_{\{A_t=a\}}}{\pi_t(A_t)}
\bigl(\bX_t\bX_t^\top-\bSigma_e\bigr),\quad
\widehat \bI_{a,T}
:=
\frac{1}{S_{a,T}^{2}}
\sum_{t=1}^{T}
\frac{1_{\{A_t=a\}}}{\pi_t(A_t)^2}
\left[
\bigl(\bX_t\bX_t^\top-\bSigma_e\bigr)
\widehat{\btheta}_a^{(T)}
-
\bX_tY_t
\right]^{\otimes2}. 
\end{equation}
\end{corollary}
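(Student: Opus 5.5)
The plan is to specialize Theorem~\ref{thm::asymptotic-normality-joint-general} and Proposition~\ref{thm::consistent-var-estimator-general} to the affine score $\bg(\bx,y;\btheta)=\bx y-(\bx\bx^\top-\bSigma_e)\btheta$. The proof has two parts: verify Assumptions~\ref{aspt:identifiability-general}--\ref{aspt:smoothness-general} from Assumption~\ref{aspt:ex2}, then compute $\bJ_a$ and $\bar\bI_a$ explicitly.

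\textbf{Checking the assumptions.} We have $\nabla\bg=-(\bx\bx^\top-\bSigma_e)$ and $\nabla^2\bg\equiv0$, so Assumption~\ref{aspt:smoothness-general}(ii)--(iii) hold with $\phi=M^2+\|\bSigma_e\|_2$ and $\Phi=0$.

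For the Jacobian, conditional on $\bS_t$ we have $\EE[\bX_t\bX_t^\top\mid\bS_t]=\bS_t\bS_t^\top+\bSigma_e$. Hence
\[
\bJ_a=\EE[\nabla\bg]=-\bSigma_S,
\]
which is nonsingular by assumption.

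Because the score is affine, $\EE\bg(\bX_t,Y_t(a);\btheta)=\bSigma_S(\btheta_a^*-\btheta)$. This uses $\EE[\bX_tY_t(a)]=\EE[(\bS_t+\bepsilon_t)(\bS_t^\top\btheta_a^*+\eta_t)]=\bSigma_S\btheta_a^*$, which follows from $\EE[\bepsilon_t\mid\bS_t]=0$ and $\EE[\eta_t\mid\bS_t,\bX_t]=0$. Two consequences follow:
\begin{itemize}
\item[(a)] $\btheta_a^*$ is indeed the root of the estimating equation;
\item[(b)] well-separation holds with infimum at least $\lambda_{\min}(\bSigma_S)\epsilon$.
\end{itemize}

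For the moment conditions, substitute $Y_t(a)=\bS_t^\top\btheta_a^*+\eta_t$ to get
\[
\bg(\bX_t,Y_t(a);\btheta_a^*)=-\bh_a(\bX_t,\bS_t)+\bX_t\eta_t.
\]
The first term is bounded, since $\|\bX_t\|,\|\bS_t\|\le M$ and $\|\btheta_a^*\|<R_\theta$. The second term has conditional fourth moment bounded by $M^4M_\eta$. This gives Assumption~\ref{aspt:boundedness-general}(i)--(iii); the supremum over $\|\btheta\|\le R_\theta$ is handled the same way using the affine form.

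The estimator displayed before the assumption solves $\bG_T=\boldsymbol 0$ exactly whenever the weighted matrix is invertible. That matrix converges in probability to $\bSigma_S$ by Lemma~\ref{lem::convergence-true-derivative-general}, so it is invertible with probability tending to one, and (\ref{eq::estimating-equation-general}) holds. One also needs $\|\hat\btheta_a^{(T)}\|\le R_\theta$. This holds with probability tending to one by consistency, and Theorem~\ref{thm::asymptotic-normality-joint-general}'s conclusion is unaffected by modifying the estimator on vanishing events.

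\textbf{Computing the meat.} The meat is
\[
\bar\bI_a=\EE[\rho(a,\bX_t)\bg\bg^\top].
\]
Since $\rho(a,\bX_t)$, $\bh_a(\bX_t,\bS_t)$ and $\bX_t$ are all $(\bS_t,\bX_t)$-measurable, we condition on $(\bS_t,\bX_t)$. The cross term involving $\eta_t$ then vanishes because $\EE[\eta_t\mid\bS_t,\bX_t]=0$, and the $\eta_t^2$ term becomes $\sigma_\eta^2\bX_t\bX_t^\top$. This yields $\bar\bI_a^{\mathrm{NC}}$, and therefore
\[
\bSigma_a^*=\bSigma_S^{-1}\bar\bI_a^{\mathrm{NC}}\bSigma_S^{-1}.
\]

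\textbf{Consistency of the variance estimator.} Proposition~\ref{thm::consistent-var-estimator-general} applies directly: $\hat{\dot\bG}_{a,T}$ equals $-\widehat\bJ_{a,T}$, and the sign cancels in the sandwich.

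\textbf{Main obstacle.} The delicate point is the unknown conditional variance. Assumption~\ref{aspt:boundedness-general}(i) requires a bound on the conditional second moment given $\bX_t$ alone, not given $(\bS_t,\bX_t)$. Here the bound follows from the tower property, because the conditional-on-$(\bS_t,\bX_t)$ bounds are uniform. The remaining steps are routine.
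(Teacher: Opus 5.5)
Your proposal is correct and follows the paper's intended route. The paper gives no separate proof: it asserts that Assumption~\ref{aspt:ex2} implies Assumptions~\ref{aspt:identifiability-general}--\ref{aspt:smoothness-general} for the affine score and then invokes Theorem~\ref{thm::asymptotic-normality-joint-general} and Proposition~\ref{thm::consistent-var-estimator-general}. Your verification of those assumptions, the computations $\bJ_a=-\bSigma_S$ and $\bar\bI_a=\bar\bI_a^{\mathrm{NC}}$ (via conditioning on $(\bS_t,\bX_t)$), and your handling of the invertibility of $\widehat\bJ_{a,T}$ and of the $\|\hat\btheta_a^{(T)}\|_2\le R_\theta$ constraint by modifying the estimator on vanishing events fill in details the paper leaves implicit.
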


\subsection{Off-policy Evaluation (OPE) with Adaptively Collected Data} 

We now consider Example \ref{ex::ope}, where the target parameter $V^* = \sum_{a\in\cA}\btheta_a^*$, and $\btheta_a^* = \EE [\pi^e(a|\bX_t) Y_t]$ solves (\ref{eq::theta-a-*}) with an arm-specific score function (\ref{eq::target-parameter-ope}). We estimate $V^*$ by 
$\hat V^{(T)} = \sum_{a\in\cA}\hat\btheta_a^{(T)}$,
where 
\begin{equation}
\hat\btheta_a^{(T)} = \left(\frac1T\sum_{t=1}^T\frac{1_{\{A_t = a\}}}{\pi_t(A_t)}\right)^{-1}\left(\frac1T\sum_{t=1}^T\frac{1_{\{A_t = a\}}}{\pi_t(A_t)}\pi^e(a|\bX_t) Y_t\right)
\end{equation}
satisfies (\ref{eq::estimating-equation-general}). 

The asymptotic distribution of $\hat V^{(T)}$ can be derived from the joint distribution of $\{\hat\btheta_a^{(T)}\}_{a\in\cA}$ via Theorem \ref{thm::asymptotic-normality-joint-general}, enabling statistical inference. Specifically, we impose the following condition, which encompasses Assumptions \ref{aspt:identifiability-general}–\ref{aspt:smoothness-general} required for the theorem. 

\begin{assumption}
\label{aspt:ex3}
There exist constants \(R_\theta,M_Y>0\) such that $\sup_{a\in\cA}|\btheta_a^*|<R_\theta$\\
and $\sup_{a\in\cA}
\EE\!\left[|Y_t(a)|^4\mid\bX_t\right]
\le M_Y$ a.s.
\end{assumption}

\begin{corollary}[Off-policy evaluation]\label{cor:ope-example}
Suppose Assumptions~\ref{aspt:boundedness-general} and~\ref{aspt:ex3} hold, and suppose the policy-side conditions of Theorem~\ref{thm::asymptotic-normality-joint-general} hold for every action \(a\in\cA\). Let $\bar \bI_a^{\mathrm{OPE}}
:=
\EE\!\left[
\rho(a,\bX_t)
\left(
\pi^e(a\mid\bX_t)Y_t(a)-\btheta_a^*
\right)^{2}
\right]$. Then Theorem~\ref{thm::asymptotic-normality-joint-general} gives
\[
\operatorname{Diag}(b_{1,T},\ldots,b_{K,T})
\bigl(\widehat\btheta^{(T)}-\btheta^*\bigr)
\xrightarrow{d}
\cN\!\left(
\boldsymbol 0,
\operatorname{Diag}
\bigl(\bar \bI_1^{\mathrm{OPE}},\ldots,
      \bar \bI_K^{\mathrm{OPE}}\bigr)
\right).
\]
Further, let \(b_T\to\infty\) be a deterministic sequence and let \(\cA_0\subseteq\cA\) be nonempty such that
\[
\frac{b_T}{b_{a,T}}\to \lambda_a\in(0,\infty),
\qquad a\in\cA_0,
\]
and
\[
\frac{b_T}{b_{a,T}}\to 0,
\qquad a\notin\cA_0.
\]
Then
\[
b_T(\widehat V^{(T)}-V^*)
\xrightarrow{d}
\cN(0,\sigma_V^2),
\]
where $\sigma_V^2
=
\sum_{a\in\cA_0}
\lambda_a^2\bar \bI_a^{\mathrm{OPE}}$.

A consistent estimator of \(\sigma_V^2\) is $\widehat\sigma_{V,T}^2
=
\sum_{a\in\cA}
\left(\frac{b_T}{b_{a,T}}\right)^2
\widehat\Sigma_{a,T}^{\mathrm{OPE}},$
where $\widehat\Sigma_{a,T}^{\mathrm{OPE}}
=
\widehat J_{a,T}^{-2}\widehat I_{a,T}$, with
\begin{equation}
\widehat J_{a,T}
=
\frac{1}{T}
\sum_{t=1}^T
\frac{\mathbf 1\{A_t=a\}}{\pi_t(A_t)}, \quad 
\widehat I_{a,T}
=
\frac{1}{S_{a,T}^2}
\sum_{t=1}^T
\frac{1_{\{A_t=a\}}}{\pi_t(A_t)^2}
\left(
\pi^e(a\mid\bX_t)Y_t
-
\widehat\theta_a^{(T)}
\right)^2.   
\end{equation}
\end{corollary}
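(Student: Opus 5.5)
The plan is to reduce Corollary~\ref{cor:ope-example} to Theorem~\ref{thm::asymptotic-normality-joint-general}, applied in its action-specific-score form (Theorem~\ref{thm::asymptotic-normality-general-joint-A}) with scalar scores $g_a(\bx,y;\theta)=\pi^e(a\mid\bx)y-\theta$, $d=1$. After that, a linear contrast and Slutsky's theorem give the statement for $\hat V^{(T)}$.

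\textbf{Assumptions.} I would verify Assumptions~\ref{aspt:identifiability-general-joint}--\ref{aspt:smoothness-general-joint} from Assumption~\ref{aspt:ex3}.
\begin{itemize}
\item Identifiability: the population moment is $\EE[\pi^e(a\mid\bX)Y(a)]-\theta$, which is affine with slope $-1$. It has the unique root $\theta_a^*$ and is well separated.
\item Derivatives: $J_a=-1$ is nonsingular, $\phi\equiv1$, and $\Phi\equiv0$.
\item Moments: bounded conditional second and fourth moments follow from $0\le\pi^e\le1$, the bound $\EE[|Y(a)|^4\mid\bX]\le M_Y$, and $|\theta_a^*|<R_\theta$.
\end{itemize}
The policy-side conditions are assumed. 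The displayed estimator solves $G_{a,T}(\hat\theta_a)=0$ exactly, so (\ref{eq::estimating-equation-general}) holds.

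\textbf{Joint limit.} The estimator must also satisfy $|\hat\theta_a^{(T)}|\le R_\theta$. This holds with probability tending to one by Lemma~\ref{lem::consistency-general}-type arguments. Alternatively, I would note that the proof of the theorem only needs the bound eventually in probability. Theorem~\ref{thm::asymptotic-normality-joint-general} then gives the first display, with $\Sigma_a^*=J_a^{-1}\bar I_aJ_a^{-1}=\bar I_a^{\mathrm{OPE}}$ because $J_a=-1$.

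\textbf{Value estimator.} Write
\[
b_T(\hat V^{(T)}-V^*)=\sum_a\frac{b_T}{b_{a,T}}\,b_{a,T}(\hat\theta_a-\theta_a^*).
\]
The weight vector $(b_T/b_{a,T})_a$ converges to $(\lambda_a1_{\{a\in\cA_0\}})_a$. By the joint convergence, the vector $(b_{a,T}(\hat\theta_a-\theta_a^*))_a$ is tight. Slutsky's theorem and the continuous mapping theorem then give the limit $\sum_{a\in\cA_0}\lambda_aZ_a$ with independent $Z_a\sim\cN(0,\bar I_a^{\mathrm{OPE}})$. Independence holds because the limit covariance is diagonal, so the limit variance is $\sigma_V^2$.

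\textbf{Variance estimator.} For the variance estimator, Proposition~\ref{thm::consistent-var-estimator-general}, applied per arm, gives $\hat\Sigma^{\mathrm{OPE}}_{a,T}\xrightarrow{p}\bar I_a^{\mathrm{OPE}}$. Here $\hat{\dot G}_{a,T}=-\hat J_{a,T}$, and the sign disappears on squaring. Hence $\hat\Sigma^{\mathrm{OPE}}_{a,T}=O_p(1)$. Multiplying by $(b_T/b_{a,T})^2\to\lambda_a^21_{\{a\in\cA_0\}}$ and summing over arms gives $\hat\sigma^2_{V,T}\xrightarrow{p}\sigma_V^2$.

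\textbf{Expected obstacle.} The only delicate step is the bookkeeping at the start. The corollary cites Assumption~\ref{aspt:boundedness-general} instead of a full smoothness assumption, so I need to check that the trivially affine score meets every regularity condition, in particular the $\sup_{\|\theta\|\le R_\theta}$ second-moment bound. I also need to justify the $R_\theta$-ball constraint for the closed-form estimator. Everything else is a direct application of earlier results.
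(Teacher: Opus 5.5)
Your proposal is correct and follows the paper's route. The paper treats this corollary as a direct consequence of the action-specific joint theorem (Theorem~\ref{thm::asymptotic-normality-general-joint-A}) with $\bJ_a=-1$, combined with the linear combination $\widehat V^{(T)}=\sum_a\widehat\btheta_a^{(T)}$ and Proposition~\ref{thm::consistent-var-estimator-general} applied arm by arm, which is what you spell out, including the Slutsky step with weights $b_T/b_{a,T}$. For this affine score you can also skip the $R_\theta$-ball bookkeeping: on $\{\widehat J_{a,T}>0\}$ the identity $\widehat\btheta_a^{(T)}-\btheta_a^*=\widehat J_{a,T}^{-1}\bG_{a,T}(\btheta_a^*)$ holds exactly, so Lemma~\ref{lem::convergence-true-derivative-general} and the martingale CLT step in the proof of Theorem~\ref{thm::asymptotic-normality-general-joint-A} give the joint limit directly.
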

Similar to Example \ref{ex::misspecified-linear-bandits}, the variance estimator can be simplified by replacing $\widehat J_{a,T}$ with 1, since $\widehat J_{a,T}\xrightarrow{p} 1$ in this setting.

Finally, we note that \citet{zhan2021off,bibaut2021post} propose alternative estimators for the same OPE target by adding adaptive normalization weights to doubly robust estimators \citep{jiang2016doubly}. These methods can reduce variance when the adaptively fitted reward regression is stable and well aligned with the true reward function. However, in adaptive data collection regimes, guaranteeing such regression convergence and alignment can be difficult, especially when the reward model is misspecified or the behavior policy changes over time.

Our approach takes a different route. Rather than relying on a consistently convergent reward regression, we decompose the value target across actions and apply the IPW-Z framework to each component. This yields a simple inference procedure whose validity is driven by inverse-propensity stability and the effective sample sizes of the sampled actions. As a result, our theory naturally accommodates action-specific sampling rates and can allow substantially faster decay of sampling probabilities than the regimes typically handled by existing OPE methods. In the fixed-scale case, where action probabilities remain bounded away from zero, the result reduces to the familiar \(\sqrt T\)-normal limit.

This simplicity comes with practical advantages: the method does not require auxiliary data, sample splitting, or consistent estimation of a conditional reward model, and it remains applicable to general Z-estimation targets beyond OPE. In Section~\ref{sec::simulations-ope}, we show across five simulation environments that this simple estimator is consistently competitive with the approaches of \citep{zhan2021off,bibaut2021post}---whether their regression component is chosen as a zero function, a linear model, or a flexible tree model---and in several cases yields clearer gains in stability and efficiency. These findings suggest that, in adaptive settings, avoiding unstable regression-based variance reduction can be as important as exploiting it.

\bibliographystyle{unsrtnat} 
\bibliography{main}       
\end{document}